\newcommand{\revision}[1]{#1}
\theoremstyle{plain}
\newtheorem{theorem}{Theorem}[section]
\newtheorem{proposition}[theorem]{Proposition}
\newtheorem{lemma}[theorem]{Lemma}
\newtheorem{claim}[theorem]{Claim}
\newtheorem{corollary}[theorem]{Corollary}
\newtheorem{example}{Example}[section]
\theoremstyle{definition}
\newtheorem{definition}{Definition}[section]
\newtheorem{assumption}{Assumption}[section]
\theoremstyle{remark}
\newtheorem{remark}{Remark}[section]
\DeclareMathOperator{\Tr}{Tr}
\newcommand{\Proj}[2]{\mrm{Proj}_{#1}{#2}}
\newcommand{\gpi}[1]{g^{(#1)}}
\newcommand{\esti}[1]{\hat g^{(#1)}_{n_1}}
\newcommand{\estiRaw}[1]{\hat g^{(#1)}_{u,n_1}}
\newcommand{\data}{\cD^{(n_1)}}
\newcommand{\dataSI}{\cD_{s1}^{(n_1)}}
\newcommand{\dataSII}{\cD_{s2}^{(n_2)}}
\newcommand{\SkipNOTE}[1]{}
\newcommand{\beff}{{\bar b}}
\newcommand{\fApprox}[1][m]{f^\dagger_{#1}}
\newcommand{\gApprox}[1][m]{g^\dagger_{#1}}
\newcommand{\Ztest}{Z^{n_2}}
\newcommand{\itI}{{\em (i) }}
\newcommand{\itII}{{\em (ii) }}
\newcommand{\itIII}{{\em (iii) }}
\let\rabak\>
\renewcommand{\>}{\rangle}
\title{Fast Instrument Learning with Faster Rates}
\author{%
Ziyu Wang,~  
Yuhao Zhou,~ 
Jun Zhu\thanks{JZ is the corresponding author. Code available at \url{https://github.com/meta-inf/fil}.} \\ 
Dept.~of Comp.~Sci.~and Tech., BNRist Center, State Key Lab for Intell. Tech. \& Sys., \\
Institute for AI,
  Tsinghua-Bosch Joint Center for ML, Tsinghua University
\\
\texttt{\small \{wzy196,yuhaoz.cs\}@gmail.com, 
dcszj@tsinghua.edu.cn}
}
\begin{document}

\maketitle

\begin{abstract}
We investigate nonlinear instrumental variable (IV) regression given high-dimensional instruments. We propose a simple algorithm which combines kernelized IV methods and an arbitrary, adaptive regression algorithm, accessed as a black box. Our algorithm enjoys faster-rate convergence and adapts to the dimensionality of informative latent features, while avoiding an expensive minimax optimization procedure, which has been necessary to establish similar guarantees. It further brings the benefit of flexible machine learning models to quasi-Bayesian uncertainty quantification, likelihood-based model selection, and model averaging. Simulation studies demonstrate the competitive performance of our method.
\end{abstract}

\doparttoc %
\faketableofcontents %

\section{Introduction}\label{sec:intro}

\todo{disable todonotes and the extra margin}
Instrumental variable (IV) analysis is widely used for causal inference \citep{angrist2008mostly,brookhart2010instrumental,burgess2017review}. 
Given confounded observational data, 
IV analysis identifies the causal effect through the use of {\em instruments}. 
Nonlinear IV regression is typically defined by the following conditional moment restrictions (CMRs): 
\begin{equation}\label{eq:npiv}
\EE(\by-f_0(\bx)\mid \bz)=0\quad a.s.~[P(dz)]
\end{equation}
where $f_0$ is the causal effect function of interest, 
and $\bx,\by,\bz$ denote the observed treatment, response and instrument, respectively. Similar CMR problems also appear in other applications of causal statistics and machine learning \citep[see, e.g.,][for examples]{chen_estimation_2012,liao_provably_2020}.

Starting from \cite{hartford2017deep}, 
recent works have demonstrated great promise in applying flexible machine learning (ML) methods to IV regression. Modern ML methods are appealing due to 
their {\em adaptivity} to the {\em informative latent structure} in data \citep{bach2017breaking}: they may adapt to the low dimensionality of the informative latent features even if the observed input is high-dimensional and its signal-to-noise ratio is low. 
Sample complexity gaps have been established in such settings, between deep models based on neural networks (NNs) or trees, and linear models such as fixed-form kernels \citep{wei2019regularization,ghorbani2019limitations,schmidt-hieber_nonparametric_2020}. 
In IV regression, such adaptivity will be highly desirable when the observed instruments are high-dimensional, which is prevalent in applications such as genomics \citep{lin_regularization_2015}, and may also 
arise from the general desire to use structured data %
as instrument.  
Following previous work in the parametric setting \citep{singh2020machine,chen_mostly_2021}, we refer to this problem of learning informative latents in %
instruments as {\em instrument learning}. It generalizes %
the classical problem of instrument selection \citep{okui2011instrumental,carrasco2012regularization,belloni2012sparse}. 

Comparing with standard supervised learning, IV regression is more challenging, due to the need to estimate a conditional expectation operator which defines %
\eqref{eq:npiv}. 
Consequently, establishing adaptivity guarantees becomes more difficult. 
While many recent works have demonstrated promising empirical results using deep models, they are often used as heuristics \citep[e.g.,][]{hartford2017deep}, or justified with crude {\em slow-rate} analyses, which %
establish convergence rates that saturate at $\Omega(n^{-1/4})$ \citep[e.g.,][]{xu_learning_2020}. This is in contrast to {\em faster rates} which approach $n^{-1/2}$ as the regularity of model improves. 
The only exception is a minimax formulation of IV estimation \citep{liao_provably_2020,bennett2019deep,dikkala_minimax_2020,muandet_dual_2020}. \cite{dikkala_minimax_2020} establish %
faster rate convergence for this formulation, for models with local Rademacher complexity bounds. Though local Rademacher analysis %
covers many adaptive ML procedures \citep[e.g.,][]{schmidt-hieber_nonparametric_2020,syrgkanis_estimation_2020},  
it still does not fully explain the success of modern ML approaches, with prominent alternatives %
including implicit regularization \citep{neyshabur2014search} and PAC-Bayesian analyses \citep{alquier2021user}. 
From a practical perspective, minimax optimization is computationally expensive, yet it cannot be avoided in the framework of \cite{dikkala_minimax_2020}, unless we instantiate their method with the less flexible kernel models. 
It requires additional hyperparameter tuning, which can be challenging in causal problems where validation is indirect and more difficult. 
It also prevents the use of such flexible models for {\em uncertainty quantification}, or {\em inference}, for which reliable methods have only been developed for linear nonparametric models \citep{kato_quasi-bayesian_2013,chen_optimal_2015,wang2021quasibayesian}. 

This work bridges the gap between sharp theoretical guarantees and robust, practical implementation. We assume our prior knowledge about the causal effect function $f_0$ is characterized by a reproducing kernel Hilbert space (RKHS) $\cH$, and focus on the flexibility of conditional expectation estimation. 
This is often possible, because the treatment variable is determined by the problem at hand, and thus has a fixed dimensionality.\footnote{Our method can also be applied when $\bx$ contains high-dimensional exogenous covariates; see Appendix~\ref{app:exo-algo}.} 
We then present a surprisingly simple algorithm, with faster rate guarantees which in many cases match the best known in literature. 
The algorithm defines the conditional expectation estimates using a learned kernel, the basis of which is %
defined by applying adaptive regression algorithms to random draws from a Gaussian process (GP) prior. %
Given this learned first-stage kernel and $\cH$, we can estimate $f_0$ using kernelized IV methods \citep{dikkala_minimax_2020,muandet_dual_2020,singh_kernel_2020}, which have closed-form solutions and can be efficiently approximated (e.g.,~with Nystr\"om \cite{dikkala_minimax_2020}). 
Our method allows easy hyperparameter tuning, and exhibits competitive performance in simulations. 
It accesses the regression algorithm as a black box, thus allowing for the use of any ML methods and %
benefits from their established theoretical guarantees. 
It also enables fast quasi-Bayesian uncertainty quantification \citep{kato_quasi-bayesian_2013,chernozhukov_mcmc_2003,florens_nonparametric_2012} with improved flexibility. 

Our algorithm connects to many ideas in literature. Most notably, it 
can be viewed as an infinite-dimensional generalization of \cite{singh2020machine,chen_mostly_2021}, which consider a linear outcome model with fixed dimensionality, and use ML methods to learn the {\em optimal instruments} \citep{kelejian1971two,chamberlain_asymptotic_1987}. 
Our setting requires different analyses, for defining an infinite-dimensional estimation target and quantifying errors with its intrinsic complexity. Additionally, 
analysis of the resultant IV estimator is complicated by 
the ill-posedness of infinite dimensional IV models \cite{horowitz_ill-posed_2014}. 
A more subtle distinction is in the choice of basis: while for finite-dimensional function spaces we can pick any set of basis (i.e., features) and apply the black-box regressor separately, in our case seemingly obvious choices of basis lead to inferior results (Appendix~\ref{rmk:iv-alternative}).  
Our analysis also connects to the multi-task learning literature, to which we make technical contributions. Section~\ref{sec:related-work} discusses related work in detail. 

The remaining of the paper is organized as follows: Section~\ref{sec:setup} reviews background knowledge. 
Section~\ref{sec:iv-to-klearn} 
introduces the instrument learning problem, and reduces it to a general kernel learning problem. We solve the latter problem in Section~\ref{sec:main}, and return to IV %
in Section~\ref{sec:iv} with our main results. 
We review related work in Section~\ref{sec:related-work}, and present numerical experiments in Section~\ref{sec:experiments}. 

\section{Notations and Setup}\label{sec:setup}

\paragraph{Notations} 
Denote the joint data distribution as $P(dz\times dx\times dy)$, its marginal distributions as $P(dx),P(dz)$, etc., and their support as $\cX,\cZ$. 
For functions of observed variables (e.g., $\bx$ or $\bz$), $\|\cdot\|_2$ denotes the $L_2$ norm w.r.t.~the respective marginal data distribution. $\|\cdot\|_\infty$ denotes the $L_\infty$ norm. 
We use the notation $[m]:=\{1,\ldots,m\}$. %
Boldface ($\bx,\by,\bz$) emphasizes the denotation of random variables. For any kernel $k$, $\mc{GP}(0,k)$ refers to the ``standard Gaussian process'' \citep{van2008reproducing} with zero mean, and covariance defined by $k$.
$\lesssim,\gtrsim,\asymp$ represent (in)equalities up to constants; the hidden constants will not depend on any sample size. $\tilde \cO(\cdot)$ denotes inequality up to logarithm factors.

\paragraph{Problem Setup} %
Nonparametric IV regression (NPIV) is formulated as \eqref{eq:npiv} \citep{newey2003instrumental,horowitz_applied_2011}. Introduce the {\em conditional expectation operator} 
$E: L_2(P(dx))\to L_2(P(dz)), f\mapsto \EE(f(\bx)\mid\bz=\cdot)$, and define $g_0(z) := \EE(\by\mid \bz=z)$. We can then express \eqref{eq:npiv} as a linear inverse problem:
\begin{equation}\label{eq:npiv-inv-problem}
E f_0 = g_0,
\end{equation}
where we observe $g_0$ up to regression error. 
NPIV deviates from standard inverse problems in its need to estimate both $f_0$ and 
$E$. Following conventions in the two stage least square method \citep{angrist2008mostly}, %
we refer to the modeling of $f_0$ as the \emph{second stage}, and that of 
$E$ -- or equivalently, that of $Ef$ for all $f$ in a hypothesis space -- as the \emph{first stage}. 
The following assumption describes the setup in full detail: 
\begin{assumption}[NPIV]\label{ass:npiv} {\em(i)}
The data variables $\bx,\by,\bz$ satisfy \eqref{eq:npiv}, and $\by$ is bounded by $B$. {\em(ii)}
We observe two sets of i.i.d.~samples, with matching (marginal) distributions: $\dataSII := \{(z_i,x_i,y_i): i\in [n_2]\}$, %
$\dataSI := \{(\tilde z_i,\tilde x_i): i\in [n_1]\}$. 
\end{assumption}
\vspace{-0.1em}
We impose \itII since such additional samples are sometimes available, as discussed in e.g.~\cite{singh_kernel_2020}. %
If only $n$ samples from the joint distribution are available, we can set $n_1=n_2=n/2$. 

In the main text, we assume that our prior knowledge about $f_0$ is fully characterized by an RKHS $\cH$, in the following sense. 
\begin{assumption}[second stage RKHS]\label{ass:s2}
{\em(i)} $\cX$ is a bounded subset of $\RR^{d_x}$; the reproducing kernel $k_x$ of $\cH$ is bounded and continuous. %
{\em(ii)} The integral operator $T_x: f\mapsto \int k_x(x,\cdot)f(x)P(dx)$ has eigenvalues 
$\lambda_i(T_x) \lesssim i^{-(b+1)}$, for some $b>0$.  
{\em(iii)} {\em One of the following} holds true:
\begin{enumerate}[label={\em(iii)}.\alph*,leftmargin=*]
    \item\label{it:kernel-scheme}\!({\underline{``kernel scheme''}}):  $f_0\in\cH$.
    \item\label{it:gp-scheme}\!({\underline{``GP scheme''}}): $b>1$; and for all $n$, $\exists \fApprox[n]\in\cH$ s.t.~$%
\|\fApprox[n]-f_0\|_\cH\lesssim n^{\frac{1/2}{b+1}}, ~
\|\fApprox[n]-f_0\|_2\lesssim n^{-\frac{b/2}{b+1}}.
$%
\end{enumerate}
\end{assumption}
\vspace{-0.1em}
In the above, 
\itI and \itII are common technical assumptions: \itI ensures the existence of Mercer's representation, and \itII is a complexity measure, \revision{with a larger value of $b$ indicating a smaller hypothesis space}. %
\itIII requires $\cH$ is correctly specified for regression; its two cases cover the different assumptions in standard RKHS-based estimation and GP modeling. {\ref{it:kernel-scheme}} is intuitive. {\ref{it:gp-scheme}} is standard in the posterior contraction literature \citep{van_der_vaart_rates_2008}; 
it roughly requires $f_0$ to be (at least) as regular as %
``typical'' samples from $\mc{GP}(0, k_x)$, in the sense of \cite[Theorem 2.1]{van_der_vaart_rates_2008}. This is different from \ref{it:kernel-scheme} because when $\cH$ is infinite dimensional, almost all GP samples fall out of $\cH$ \cite{driscoll1973reproducing,van_der_vaart_information_2011}. 
Our algorithm applies to both settings, but analysis of the 
GP scheme requires additional effort. It is useful as it allows for quasi-Bayesian uncertainty quantification using a $\mc{GP}(0,k_x)$ prior \cite{wang2021quasibayesian}. 

\revision{
The RKHS assumption has been employed in a thread of recent work \citep{singh_kernel_2020,muandet_dual_2020,zhang_maximum_2020,wang2021quasibayesian}, and generalizes the sieve method in literature \citep{newey2003instrumental,blundell_semi-nonparametric_2007,chen_estimation_2012}.
} 
It is most reasonable when $\bx$ has moderate dimensions; for example, when $f_0$ satisfies certain $L_2$-Sobolev regularity conditions, we can set $k_x$ to be a suitable Mat\'ern kernel (Example~\ref{ex:matern}-\ref{ex:f0}). 
Appendix~\ref{app:exo-algo} studies a more general setting, where $\bx$ and $\bz$ include additional, high-dimensional exogenous covariates. 
\revision{
Nonetheless, the assumption will be less reasonable when the treatment variable is high-dimensional and variable selection is needed for it. 
}

NPIV is typically an ill-posed inverse problem \citep{horowitz_applied_2011}. We now quantify the degree of ill-posedness:
\begin{assumption}\label{ass:mildly-ill-posed}
The operator $E$ is compact, with singular values 
$s_i(E)\asymp i^{-p}$, where $p>0$. 
\end{assumption}
\vspace{-0.1em}
Such \emph{mildly ill-posed} settings \cite{cavalier2008nonparametric} match our polynomial eigendecay assumption for the kernel. 
In the \emph{severely ill-posed} setting where the decay of $s_i(E)$ is exponential, kernels with a similar eigendecay should be used. %
While the analyses of the two settings share many ideas, 
the Bayesian inverse problem literature typically restricts to the former for technical reasons \citep{knapik_bayesian_2011,knapik2016bayes}. %

\section{From Instrument to Kernel Learning}\label{sec:iv-to-klearn}

As we assume $\cH$ is a correctly specified second-stage model, it remains to determine the first stage. 
In this section, we show that an ideal first stage model can be defined using another RKHS $\cI$, determined by $\cH$ and $E$. 
Although its kernel $k_z$ has an unknown form, we demonstrate that we can access noisy samples from $\mc{GP}(0, k_z)$, which, as Section~\ref{sec:main} below shows, enable efficient learning of $\cI$. %
This can be viewed as instrument learning, as $\cI$ will only depend on the informative features in $\bz$ (Example~\ref{ex:informative-latents}). 

Let us first consider the GP scheme (Assumption \ref{ass:s2} \ref{it:gp-scheme})
which roughly requires $f_0$ to be similar to typical samples from $\mc{GP}(0, k_x)$. From a Bayesian perspective, an ideal prior for $E f_0$ should match the distribution of $E f$, for $f\sim\mc{GP}(0, k_x)$. %
This distribution is ``almost equivalent'' to another GP: %
\begin{lemma}[proof in Appendix~\ref{app:proof-sec-iv-to-kl}]\label{lem:iv-to-kl-1}
Denote by $[g]_\sim$ the $L_2$ equivalence class of $g$.\footnote{Recall the $L_2$ space is not a function space, and consists of equivalence classes of functions. 
Note that for readability, we may occasionally ignore this distinction in the main text, and use (a version of) $E$ to also denote the corresponding map between function spaces. All such denotations can be made unambiguous (Remark~\ref{rmk:regular-cond-exp}), and all null set ambiguities in this section can be removed under mild additional assumptions (Lemma~\ref{lem:iv-to-kl-further}).} 
Under Assumptions~\ref{ass:npiv}, \ref{ass:s2}, there exists a kernel $k_z$, with integral operator $T_z = E T_x E^\top$,
s.t.~for $f\sim\mc{GP}(0,k_x),g\sim\mc{GP}(0, k_z)$, $[g]_\sim$ has the same distribution as $E[f]_\sim$. 
\end{lemma}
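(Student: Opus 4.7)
The plan is to construct $k_z$ explicitly via the Mercer decomposition of $k_x$ and the Karhunen--Lo\`eve representation of $\mathcal{GP}(0, k_x)$, then verify that its integral operator coincides with $E T_x E^\top$ and that pushing the prior through $E$ produces the same Gaussian law on $L_2(P(dz))$.

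First, since $k_x$ is bounded and continuous on the bounded domain $\mathcal{X}$, Mercer's theorem gives $k_x(x, x') = \sum_i \lambda_i\, e_i(x) e_i(x')$, with $\{e_i\}$ an $L_2(P(dx))$-orthonormal eigensystem of $T_x$ and $\lambda_i = \lambda_i(T_x) \lesssim i^{-(b+1)}$, so $\sum_i \lambda_i < \infty$. The Karhunen--Lo\`eve theorem then represents $f \sim \mathcal{GP}(0, k_x)$ as the $L_2(P(dx))$-convergent series $f = \sum_i \sqrt{\lambda_i}\, \xi_i\, e_i$, with $\xi_i \stackrel{\text{iid}}{\sim} \mathcal{N}(0,1)$.

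Next, applying $E$ as a bounded operator $L_2(P(dx)) \to L_2(P(dz))$ term by term yields
\begin{equation*}
E[f]_\sim = \sum_i \sqrt{\lambda_i}\, \xi_i\, [E e_i]_\sim,
\end{equation*}
a centered Gaussian random element of $L_2(P(dz))$. Choosing arbitrary Borel representatives of each $E e_i$, I define
\begin{equation*}
k_z(z, z') := \sum_i \lambda_i\, (E e_i)(z)\, (E e_i)(z'),
\end{equation*}
which is symmetric, positive semidefinite, and measurable. A short calculation, using $T_x e_i = \lambda_i e_i$ and the adjoint pairing $\langle e_i, E^\top \phi\rangle_{L_2(P(dx))} = \langle E e_i, \phi\rangle_{L_2(P(dz))}$, shows that for any $\phi \in L_2(P(dz))$, both $T_z \phi$ and $E T_x E^\top \phi$ equal $\sum_i \lambda_i \langle e_i, E^\top \phi\rangle\, E e_i$ in $L_2(P(dz))$. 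Hence $T_z = E T_x E^\top$ as required.

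Finally, to conclude the distributional equality, I observe that $E[f]_\sim$ is a centered Gaussian element of $L_2(P(dz))$ whose covariance operator is exactly $T_z$, while $[g]_\sim$ induced by $g \sim \mathcal{GP}(0, k_z)$ is also a centered Gaussian element of $L_2(P(dz))$ with covariance operator $T_z$; since a centered Gaussian measure on a separable Hilbert space is determined by its covariance operator, the two laws coincide. The main technical subtlety throughout is that $E$ is only canonically defined on $L_2$ equivalence classes, so the pointwise values $(E e_i)(z)$ used to define $k_z$ depend on a choice of representative; this is harmless because different choices differ on $P(dz)$-null sets, leaving the operator $T_z$ and the $L_2$-equivalence class $[g]_\sim$ unchanged, which is precisely the level at which the statement is phrased.
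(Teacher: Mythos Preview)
Your argument is correct and reaches the same conclusion by a more explicit route than the paper. The paper proceeds abstractly: it first notes that $[f]_\sim \sim N(0, T_x)$ as a Gaussian measure on $L_2(P(dx))$, so $E[f]_\sim \sim N(0, E T_x E^\top)$ immediately from the definition of Gaussian measure and boundedness of $E$; then, observing that $E\iota_x$ is Hilbert--Schmidt (hence $E T_x E^\top$ is trace-class), it invokes \cite[Theorem~3.10]{steinwart_mercers_2012} to produce a measurable kernel $k_z$ with exactly that integral operator. You instead build $k_z$ explicitly from the Mercer basis as $\sum_i \lambda_i\,(E e_i)(z)(E e_i)(z')$ and pass through the Karhunen--Lo\`eve expansion of $f$.

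The paper's route is cleaner in that it offloads the kernel-existence question to a cited result and avoids any series-convergence bookkeeping. Your pointwise definition of $k_z$ needs the small additional remark that $\int \sum_i \lambda_i (E e_i)(z)^2\,P(dz) \le \sum_i \lambda_i < \infty$ forces the defining series to converge $P(dz)$-a.e., so that $k_z$ is well-defined (and PSD as a limit of PSD kernels) off a null set; your closing paragraph addresses the representative-choice ambiguity but not this convergence point. In exchange, your construction makes the eigenstructure of $k_z$ visible from the outset, whereas the paper re-derives an ONB for the resulting RKHS separately when proving Lemma~\ref{lem:optimal-I}.
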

Informally, the lemma shows that $\mc{GP}(0,k_z)$ matches the distribution of $E f$.  
It is thus intuitive that $k_z$ could be a good choice for the first stage. 
The following lemma further motivates its use in the kernel scheme:  
its \itI shows that $\cI$ fulfills the conditions in previous work \cite{dikkala_minimax_2020,singh_kernel_2020}: the restriction of $E$ on $\cH$ has image contained in $\cI$, and is a bounded linear map %
to $\cI$. \itII shows that $\cI$, as a set of functions, cannot be made smaller while maintaining \itI. 
\begin{lemma}[proof in Appendix~\ref{app:proof-sec-iv-to-kl}]\label{lem:optimal-I}
Let $\cI$ be the RKHS defined by $k_z$. 
Under Assumptions~\ref{ass:npiv}, \ref{ass:s2}, {\em(i)}
for any $f\in\cH$, there exists $g\in\cI$ s.t.~$[g]_\sim = E[f]_\sim$; {\em(ii)}
for any $g\in\cI$, there exists $f\in\cH$ satisfying the above. 
In both cases, we have $\|f\|_\cH = \|g\|_\cI$. 
\end{lemma}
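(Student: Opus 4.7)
The core tool is the operator-range characterization of RKHS: whenever $T = AA^*$ for a bounded operator $A: H \to L_2$, the RKHS whose integral operator is $T$ equals the range $A(H)$ (viewed as a space of pointwise-defined functions through a suitable choice of representatives), equipped with the norm $\|Au\|_\cI = \|P_{(\ker A)^\perp} u\|_H$. Applied here to $A := E T_x^{1/2} : L_2(P(dx)) \to L_2(P(dz))$, Lemma~\ref{lem:iv-to-kl-1} gives $AA^* = E T_x E^\top = T_z$, so $\cI$ coincides with $A(L_2(P(dx)))$ under this norm. Recall also that $T_x^{1/2}$ is an isometry from $(\ker T_x)^\perp \subset L_2(P(dx))$ onto $\cH$: every $f \in \cH$ equals $T_x^{1/2} u$ for a unique $u \in (\ker T_x)^\perp$ with $\|u\|_{L_2} = \|f\|_\cH$.

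For part (i), given $f \in \cH$ I take the corresponding $u$ and set $g := Au \in \cI$. Then $[g]_\sim = E[T_x^{1/2} u]_\sim = E[f]_\sim$, and the norm estimate gives $\|g\|_\cI \le \|u\|_{L_2} = \|f\|_\cH$. To reach the claimed equality I would replace $f$ by its $\cH$-orthogonal projection onto $(\ker(E|_\cH))^\perp$, which does not change $E[f]_\sim$ but brings the corresponding $u$ into $(\ker A)^\perp$, so that $\|g\|_\cI = \|u\|_{L_2} = \|f\|_\cH$. For part (ii), given $g \in \cI$ the operator-range characterization yields $u \in (\ker A)^\perp$ with $Au = g$ and $\|u\|_{L_2} = \|g\|_\cI$; setting $f := T_x^{1/2} u$ gives $f \in \cH$ with $\|f\|_\cH = \|u\|_{L_2} = \|g\|_\cI$ and $E[f]_\sim = [Au]_\sim = [g]_\sim$, as required.

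The main obstacle is justifying the operator-range characterization rigorously, so that the image of $A$ is genuinely an RKHS of pointwise-defined functions whose reproducing kernel equals the $k_z$ produced by Lemma~\ref{lem:iv-to-kl-1}, rather than merely an inner-product space of $L_2$ classes. I would establish this via a Mercer-type eigendecomposition of $T_z$, which is available because Assumptions~\ref{ass:s2}\,(i)--(ii) and \ref{ass:mildly-ill-posed} supply the needed boundedness, continuity, and compactness. A related subtlety, already flagged in the footnote after Lemma~\ref{lem:iv-to-kl-1}, is the careful bookkeeping between honest functions and $L_2$ equivalence classes: one must align the concrete representatives used here with those implicit in the construction of $k_z$, so that the identity $[g]_\sim = E[f]_\sim$ is well-posed free of null-set ambiguities.
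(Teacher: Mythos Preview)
Your approach is correct and essentially mirrors the paper's: both identify $\cI$ with the range of $E\iota_x$ (equivalently $A=ET_x^{1/2}$) equipped with the quotient norm, the paper via an explicit eigenexpansion of $\iota_x^\top E^\top E\iota_x$ and you via the abstract operator-range characterization $T_z=AA^*$. Your observation about part~(i) is apt: without projecting $f$ onto $(\ker E|_\cH)^\perp$ one only gets $\|g\|_\cI\le\|f\|_\cH$, and indeed the paper's own proof yields $\|g\|_\cI^2=\sum_i\langle f,e_i^x\rangle_\cH^2$ with $\{e_i^x\}$ merely an ONS, so the same inequality; the downstream uses in the paper only require this inequality.
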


We now demonstrate that $\cI$ only depends on the informative latent features. 
\begin{example}[informative latent structure]\label{ex:informative-latents}
Let $\Phi: \cZ\to\bar\cZ$ be a {\em feature extractor} that maps the observed instruments $\bz$ to latent features $\bar\bz:=\Phi(\bz)$, s.t.~$\EE(f(\bx)\mid\bz)=\EE(f(\bx)\mid\Phi(\bz))$ for all $L_2$-integrable $f$. 
Then %
we can apply Lemma~\ref{lem:iv-to-kl-1}, with $E$ replaced by $\bar E: f\mapsto \EE(f(\bx)\mid \bar\bz)\in L_2(P(d\bar z))$, leading to a latent-space
RKHS $\bar\cI$ with kernel $\bar k_z$. $\bar\cI$ induces the input-space RKHS 
$$
\cI := \{g = \bar g\circ \Phi: \bar g\in\bar\cI\}, ~~\|\bar g\circ\Phi\|_\cI := \|\bar g\|_{\bar\cI}; ~~ k_z(z,z') = \bar k_z(\Phi(z),\Phi(z')).
$$
The above $k_z$ satisfies Lemma~\ref{lem:iv-to-kl-1}-\ref{lem:optimal-I}.\footnote{There may be multiple kernels satisfying Lemma~\ref{lem:iv-to-kl-1}, but they are equivalent up to null sets (Claim~\ref{claim:ex-fs}); the ambiguity can be removed under mild assumptions (Lemma~\ref{lem:iv-to-kl-further}).} 
Observe that $\cI$ perfectly approximates $\{E f:f\in\cH\}$, but its complexity only depends on $\bar\cI$. 
In particular, 
$k_z$ has the same Mercer eigenvalues as $\bar k_z$ (Claim~\ref{claim:ex-fs}), %
the decay of which is a standard complexity measure \citep[e.g.,][Ch.~7]{steinwart2008support}.

\end{example}

While $k_z$ has ideal properties, it cannot be used directly as it involves the unknown operator $E$. Instead, we need to construct an %
approximation from data. 
Our main insight is that {\em we can effectively draw noisy samples from $\mc{GP}(0,k_z)$}; as we develop in Section~\ref{sec:main}, such samples enable the approximation of $k_z$. 
To see how the noisy samples are obtained, consider $f\sim\mc{GP}(0,k_x)$. By Lemma~\ref{lem:iv-to-kl-1}, $g = E f$ is $L_2$-equivalent to clean samples from $\mc{GP}(0,k_z)$; and we have $f(\bx) = g(\bz) + (f(\bx)-(Ef)(\bz))$, 
where the latter term is unpredictable given $\bz$, \revision{and from this perspective can be viewed as noise}. Thus, if we apply any regression algorithm to $f\sim\mc{GP}(0,k_x)$, with $\bz$ as input, we will recover a ``denoised'' sample from $\mc{GP}(0,k_z)$, up to regression errors. 

In the informative latent feature setting, optimal regression error can only be achieved by methods that adapt to such structures \citep{wei2019regularization,ghorbani2019limitations,schmidt-hieber_nonparametric_2020}. Approximating $\cI$ with such ``denoised'' samples can then be viewed as a knowledge distillation procedure, which results in a compact representation of the adaptive regression algorithm. This is particularly beneficial in the NPIV setting: as discussed in the introduction, 
using a learned kernel eliminates the need of minimax optimization in estimation, and allows the (indirect) use of adaptive methods for uncertainty quantification.

\vspace{-0.3em}
\section{Black-Box Kernel Learning}\label{sec:main}

In this section, we address the problem of kernel learning given noisy GP samples. %
As our results apply to more general settings, we first state the assumptions with full generality. 

\begin{assumption}[RKHS]\label{ass:s1-rkhs}
There exist a continuous function $\Phi:\cZ\to \bar\cZ$, and a reproducing kernel $\bar k_z$ over $\bar\cZ$, s.t.
{\em(i)} the random variable $\bar \bz=\Phi(\bz)$ is supported on a bounded subset of $\RR^{d_l}$; $\bar k_z$ is bounded. 
{\em(ii)} The eigenvalues of the integral operator $T_{\bar z}: \bar g\mapsto \int \bar k_z(\bar z, \cdot) g(\bar z) P(d\bar z)$ 
satisfy $\lambda_i(T_{\bar z}) \lesssim i^{-(\beff+1)}$, for some $\beff>0$. 
{\em(iii)} $\bar g\sim\mc{GP}(0,\bar k_z)$ have finite sup norm with probability 1.
\end{assumption}

The above assumption applies to a latent-space kernel $\bar k_z$. %
As shown in Example~\ref{ex:informative-latents}, $\Phi$ and $\bar k_z$ induce an input-space kernel $k_z$, and RKHS $\cI$, which inherit the assumed regularity conditions. Our goal is to estimate $k_z$. This is harder than the estimation of $\bar k_z$, as it also involves $\Phi$. 

All conditions for $\bar k_z$ are satisfied by Mat\'ern kernels with a sutiable order; see Appendix~\ref{app:technical}. Appendix~\ref{app:regularity} discusses its applicability in the IV setting, where $\cI$ is defined as in Section~\ref{sec:iv-to-klearn}. Briefly, 
{\em(ii)} always holds for $\beff\ge \max\{b, 2p-1\}$, and if $\cH$ is further correctly specified in the sense of Assumption~\ref{ass:qb-1}, $\beff = b+2p$. 
{\em(i)} and {\em(iii)} hold under mild technical assumptions. 

We will ``denoise'' noisy $\mc{GP}(0,k_z)$ samples using a regression oracle, which is specified below: 
\begin{assumption}[regression oracle]\label{ass:s1-oracle}
Let $\cD^{(n_1)} := \{(\tilde z_i, g(\tilde z_i)+e_i)\}$ be $n_1$ iid replications of the rvs $(\bz,g(\bz)+\be)$, 
s.t.~$\EE(\be\mid \bz)=0$ and $g(\bz)+\be$ has a $1$-subgaussian distribution. Then 
the oracle returns estimator $\hat g_{u,n_1}$ s.t.
$
\EE_{g\sim\mc{GP}(0,k_z)}\EE_{\data} \|\hat g_{u,n_1} - g\|_2^2 \le \xi_{n_1}^2,
$
 for some $\xi_{n_1}\to 0.$
\end{assumption}
In the IV setting, we have $g=Ef \sim\mc{GP}(0,k_z)$, and $\EE(\be\mid\bz)= \EE(f(\bx)-(Ef)(\bz)\mid\bz)=0$; the subgaussian condition is verified by Lemma~\ref{lem:borell-tis}.

To provide some intuition on adaptivity, %
we instantiate the assumption with the DNN model in \cite{schmidt-hieber_nonparametric_2020}, and compare the resulted $\xi_n$ with fixed-form kernels: %
\begin{example}[adaptivity of DNN oracles]\label{ex:dnn-oracles}
Let $\cZ\subset \RR^{d_z}$, 
$\Phi: \cZ \to \bar\cZ$ be $\beta_1$-H\"older regular, 
$\bar\cI$ be a Mat\'ern-$\beta_2$ RKHS, and $\beta_1,\beta_2 \ge 1$. Let the regression oracle return a $\epsilon_{opt}^2$-approximate empirical risk minimizer for the 
model in \cite{schmidt-hieber_nonparametric_2020}. Then for any $\epsilon>0$, it holds that (see Appendix~\ref{app:derivation-examples} for derivations)
\begin{equation}\label{eq:xi-n-dnn}
\xi_n =\tilde \cO\Big(n^{-\frac{\beta_1}{2\beta_1+d_z}} + n^{-\frac{\beta_2-\epsilon}{2\beta_2+d_l}} + \epsilon_{opt}\Big) =: 
\tilde \cO\Big(\epsilon_{fea,n} + n^{-\frac{\beta_2-\epsilon}{2\beta_2+d_l}} + \epsilon_{opt}\Big)
\end{equation} %
In the above, $\epsilon_{fea,n}$ characterizes the hardness of feature learning, i.e., learning $\Phi$. The second term characterizes that of kernelized regression given the optimal features: it matches the optimal regression rate if we {\em had} full knowledge about $\Phi$, or equivalently, $\cI$, and would be attainable by kernel ridge regression (KRR) using $\cI$. %

As long as $\epsilon_{opt}$ is small, \eqref{eq:xi-n-dnn} will match the minimax rate up to logarithms. When $\nicefrac{\beta_1}{d_z} < \nicefrac{\beta_2}{d_l}$, the minimax rate is 
$
\epsilon_{fea,n} \gg n^{-\beta_2/(2\beta_2+d_l)}, 
$
meaning that the hardness of feature learning cannot be overlooked. 
Otherwise, the rate $\xi_n$ nearly matches the %
rate given full knowledge of the unknown $\cI$, up to the infinitesimal $\epsilon>0$; 
this is realistic when, e.g., $d_z\gg d_l$ and $\Phi$ is linear ($\beta_1=\infty$). %

We are interested in the high-dimensional regime where $d_z \gg d_l$. In this case, 
fixed-form Mat\'ern or RBF kernels could only attain the rate of $\cO\big(n^{-\frac{\min\{\beta_1,\beta_2\}}{2\min\{\beta_1,\beta_2\}+d_z}}\big)$, which  
can always be much worse than \eqref{eq:xi-n-dnn}, {\em regardless of the hardness of feature learning}. This comparison suggests 
that fixed-form kernels cannot adapt to the latent feature structure %
to avoid the curse of dimensionality.\footnote{\cite{schmidt-hieber_nonparametric_2020} establishes formal lower bounds. 
Also, for small $\beta_2$, we can replace $d_z$ with a manifold dimensionality of $\cZ$, but it can still be much larger than $d_l$.}
\end{example}

We now define the approximate RKHS. 
Let
$\{\gpi{j}:j\in[m]\}$ be i.i.d.~samples from the GP prior, 
and $\estiRaw{j}$ be the respective estimate returned by the regression oracle, constructed from the shared dataset 
$
\data = \{(\tilde z_i, \gpi{j}(\tilde z_i)+\epsilon^{(j)}_i): i\in [n_1], j\in [m]\}
$ where $\epsilon^{(j)}_i$ are subgaussian, mean-zero noise. 
Let $\esti{j} := \min\{C_k \log n, \estiRaw{j}(\cdot)\}$, %
where $C_k$ is a constant determined by $\cI$.
Define $\hat G_n(z) := (\esti{1}(z),\ldots,\esti{m}(z))$. Our approximate RKHS is defined as 
\vspace{-0.1em}
\begin{equation}\label{eq:approx-rkhs}
\tilde\cI := \big\{g(z) =\theta^\top\hat G_{n_1}(z)\text{ for some }\theta\in\RR^m\big\},
~~\text{with norm }\|g\|_{\tilde\cI} := \sqrt{m}\|\theta\|_2.
\end{equation}
As $\tilde\cI$ is a finite-dimensional linear space, it is an RKHS. %
We can check that %
$\|g\|_\infty\le C_k\log n_1\|g\|_{\tilde\cI}$.

\vspace{-0.3em}
\paragraph{Theoretical Results}
Under a given model,  
regression error is decomposed into approximation and estimation (i.e., generalization) errors. 
We first present the approximation error bound:
\begin{theorem}[proof in Appendix~\ref{app:proof-prop-approx}]\label{prop:approx} 
Under Assumptions~\ref{ass:s1-rkhs}, \ref{ass:s1-oracle}, 
there exists a universal constant $c_r>0$, and an event $E_{n_1}$ determined by $\gpi{1\ldots m}$ and $\data$ with $\PP_{\data} E_{n_1}\to 1$, on which 
for any $g^*\in L_2(P(dz))$, there exists $\tilde g^*\in\tilde\cI$ s.t.
\begin{align}
\|\tilde g^*\|_{\tilde\cI} &\le c_r\|\Proj{m'}{g^*}\|_\cI, \label{eq:approx-norm-bound}\\ 
\|\tilde g^* - g^*\|_2 &\le c_r\|\Proj{m'}{g^*}\|_{\cI}(\xi_{n_1} + m^{-(\beff+1)/2}) \sqrt{\log n_1}
+ \|g^* - \Proj{m'}{g^*}\|_2,\label{eq:approx-bound}
\end{align}
where $m'=[m/2]$, and $\mrm{Proj}_{m'}$ denotes the projection onto the top $m'$ Mercer eigenfunctions of $k_z$.
\end{theorem}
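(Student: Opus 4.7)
The plan is to represent $g^*$ through the Mercer expansion of $k_z$, construct $\tilde g^*$ as a minimum-norm least-squares combination of the oracle outputs $\esti{j}$, and then decompose the error into an ``oracle'' part and a ``GP approximation'' part, each of which is controlled by standard concentration.

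\textbf{Construction.} By Mercer's theorem (under Assumption~\ref{ass:s1-rkhs} items (i)--(ii)), write $k_z(z,z')=\sum_i\lambda_i\phi_i(z)\phi_i(z')$ with $\{\phi_i\}$ orthonormal in $L_2(P(dz))$ and $\lambda_i\lesssim i^{-(\beff+1)}$. The GP samples admit Karhunen--Lo\`eve expansions $\gpi{j}=\sum_i\sqrt{\lambda_i}\,\xi^{(j)}_i\phi_i$ with $\xi^{(j)}_i\stackrel{\text{iid}}{\sim}N(0,1)$. Let $c_i:=\langle g^*,\phi_i\rangle_{L_2}$, so that $\Proj{m'}{g^*}=\sum_{i\le m'}c_i\phi_i$ and $\|\Proj{m'}{g^*}\|_\cI^2=\sum_{i\le m'}c_i^2/\lambda_i$. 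Denoting by $\Xi_{m'}\in\RR^{m\times m'}$ the Gaussian matrix with entries $\xi^{(j)}_i$ ($i\le m'$) and $D_{m'}=\diag(\sqrt{\lambda_i})_{i\le m'}$, I would take the candidate
\[
\theta^*:=\tfrac{1}{m}\,\Xi_{m'}\bigl(\tfrac{1}{m}\Xi_{m'}^\top\Xi_{m'}\bigr)^{-1}D_{m'}^{-1}c_{m'},\qquad \tilde g^*:=(\theta^*)^\top\hat G_{n_1}.
\]
By design, $(\theta^*)^\top G_{\mathrm{low}}=\Proj{m'}{g^*}$ exactly, where $G_{\mathrm{low}}=(\sum_{i\le m'}\sqrt{\lambda_i}\xi^{(j)}_i\phi_i)_j$.

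\textbf{Norm bound.} Since $m'=\lfloor m/2\rfloor$, a standard Gaussian-matrix concentration bound (e.g.~Marchenko--Pastur or matrix Bernstein) gives an event $E^{(1)}$ with $\PP E^{(1)}\to 1$ on which $\|(m^{-1}\Xi_{m'}^\top\Xi_{m'})^{-1}\|_{op}\le 2$. On $E^{(1)}$,
\[
\|\theta^*\|_2^2=\tfrac{1}{m}c_{m'}^\top D_{m'}^{-1}\bigl(m^{-1}\Xi_{m'}^\top\Xi_{m'}\bigr)^{-1}D_{m'}^{-1}c_{m'}\lesssim \tfrac{1}{m}\|\Proj{m'}{g^*}\|_\cI^2,
\]
yielding $\|\tilde g^*\|_{\tilde\cI}=\sqrt{m}\|\theta^*\|_2\lesssim\|\Proj{m'}{g^*}\|_\cI$, which is \eqref{eq:approx-norm-bound}.

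\textbf{Error decomposition.} Split
\[
\tilde g^*-g^*=\underbrace{(\theta^*)^\top(\hat G_{n_1}-G)}_{A}+\underbrace{(\theta^*)^\top G_{\mathrm{high}}}_{C}+\underbrace{\Proj{m'}{g^*}-g^*}_{D},
\]
with $G_{\mathrm{high}}(z)=(\sum_{i>m'}\sqrt{\lambda_i}\xi^{(j)}_i\phi_i(z))_j$. Term $D$ contributes $\|g^*-\Proj{m'}{g^*}\|_2$ directly. For $A$, Cauchy--Schwarz in the Euclidean index gives $\|A\|_2\le \|\theta^*\|_2\,\bigl(\sum_j\|\esti{j}-\gpi{j}\|_2^2\bigr)^{1/2}$. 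Since the clipping threshold $C_k\log n_1$ exceeds the sup norm of GP samples with high probability (by Borell--TIS, using Assumption~\ref{ass:s1-rkhs}(iii)), clipping only decreases the $L_2$ error of the oracle up to a lower-order term, so Assumption~\ref{ass:s1-oracle} still yields $\EE\|\esti{j}-\gpi{j}\|_2^2\lesssim\xi_{n_1}^2$ per $j$. Markov's inequality then produces an event $E^{(2)}$, $\PP E^{(2)}\to 1$, on which $\sum_j\|\esti{j}-\gpi{j}\|_2^2\lesssim m\xi_{n_1}^2\log n_1$, and combining with the norm bound gives $\|A\|_2\lesssim \xi_{n_1}\sqrt{\log n_1}\,\|\Proj{m'}{g^*}\|_\cI$. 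For $C$, the key independence structure is $\xi_i\indep\theta^*$ for $i>m'$, so conditional on $\theta^*$, $\xi_i^\top\theta^*\sim N(0,\|\theta^*\|_2^2)$, and
\[
\|C\|_2^2=\sum_{i>m'}\lambda_i(\xi_i^\top\theta^*)^2,\qquad \EE\bigl(\|C\|_2^2\mid\theta^*\bigr)=\|\theta^*\|_2^2\sum_{i>m'}\lambda_i\lesssim \|\theta^*\|_2^2(m')^{-\beff}.
\]
One more Markov step produces $E^{(3)}$ with $\PP E^{(3)}\to 1$ on which $\|C\|_2\lesssim \|\Proj{m'}{g^*}\|_\cI\,m^{-(\beff+1)/2}\sqrt{\log n_1}$. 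Setting $E_{n_1}=E^{(1)}\cap E^{(2)}\cap E^{(3)}$ and summing the three terms yields \eqref{eq:approx-bound}.

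\textbf{Main obstacle.} The subtlest point is the interplay of the three independent sources of randomness (the GP coefficients $\{\xi^{(j)}_i\}$, the regression data $\data$, and the oracle's internal randomness), especially since $\theta^*$, $\hat G_{n_1}$, and $G_{\mathrm{high}}$ are not jointly independent: $\theta^*$ is measurable w.r.t.~$\Xi_{m'}$, and $\hat G_{n_1}$ depends on the \emph{full} GP paths through $\data$. I expect the delicate step to be verifying that Assumption~\ref{ass:s1-oracle} survives the clipping and extends to all $m$ regression tasks simultaneously, and that the conditioning needed to exploit $\xi_i\indep\theta^*$ for $i>m'$ does not break the oracle bound. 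Carefully separating the filtration into ``low-frequency coordinates'', ``high-frequency coordinates'', and ``data'' should suffice; everything else is a tail-integration exercise.
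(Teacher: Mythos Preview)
Your proposal is essentially identical to the paper's proof: the same minimum-norm least-squares construction (your $\theta^*$ coincides with the paper's $\tilde e=\Xi(\Xi^\top\Xi)^{-1}\bar e_*$), the same Gaussian singular-value bound on $\Xi$ for the $\tilde\cI$-norm, the same three-term decomposition into oracle error, high-frequency GP tail, and projection residual, and the same Cauchy--Schwarz/Markov treatment of each piece, including the Borell--TIS step to show clipping is harmless.

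One caveat worth flagging, which your argument and the paper's share: as written, the Markov step for term $C$ produces an event $E^{(3)}$ that depends on $\theta^*$ and hence on $g^*$, whereas the theorem requires $E_{n_1}$ to be uniform in $g^*$. Conditioning on $\theta^*$ and applying Markov gives $\PP(\|C\|_2^2>\|\theta^*\|_2^2 m^{-\beff}\log n_1\mid\theta^*)\to 0$ for each fixed $g^*$, but the resulting event varies with $g^*$. The clean fix is to replace that pointwise step by the operator-norm bound $\bigl\|\sum_{i>m'}\lambda_i\xi_i\xi_i^\top\bigr\|_{\mathrm{op}}\lesssim m^{-\beff}$ with high probability, which follows from an $\epsilon$-net on $S^{m-1}$ combined with weighted-$\chi^2$ concentration (the three contributions $\sum_{i>m'}\lambda_i$, $\sqrt{m\sum_{i>m'}\lambda_i^2}$, and $m\lambda_{m'+1}$ are all of order $m^{-\beff}$). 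This yields a single event, measurable with respect to the high-frequency coefficients only, on which $\|C\|_2\le\|\theta^*\|_2\cdot m^{-\beff/2}$ holds simultaneously for every $\theta^*$.
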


We will use $\tilde\cI$ to estimate functions on a separate dataset with $n_2$ samples. For a single regression task, the estimation error can be simply bounded as $\tilde\cO(\sqrt{m/n_2})$ \cite{gyorfi2002distribution}. However, our analysis of IV estimation will require quantifying the intrinsic complexity of $\tilde\cI$, which will also allow the use of a larger $m$ in practice. 
The following proposition provides one such result; it will be used in Section~\ref{sec:iv}, to analyze IV estimation in the kernel scheme (Assumption~\ref{ass:s2}~\ref{it:kernel-scheme}). 

\begin{proposition}[proof in Appendix~\ref{app:estimation}]\label{prop:est}
Let $\tilde\cI,\data$ be defined as above, and $\delta_{n_2}$ be the critical radius of the local Rademacher complexity of the norm ball $\tilde\cI_1$ \citep[Ch.~14]{wainwright2019high}. 
On the event defined in Theorem~\ref{prop:approx}, we have 
$
\delta_{n_2} = \tilde O(
n_2^{-(\beff+1)/2(\beff+2)} + m^{-(\beff+1)/2} +\xi_{n_1}
).
$
\end{proposition}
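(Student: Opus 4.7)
The plan is to reduce the problem to an eigenvalue bound for the integral operator $T_{\tilde k}$ associated with the reproducing kernel of $\tilde\cI$, and then to invoke the standard fixed-point characterization of the critical radius for kernel classes (Wainwright Ch.~14): $\delta_{n_2}$ is, up to constants, the smallest $\delta$ satisfying
$$
\sqrt{\tfrac{1}{n_2}\textstyle\sum_i \min(\mu_i(T_{\tilde k}),\delta^2)} \lesssim \delta^2.
$$
From $\|g\|_{\tilde\cI}=\sqrt{m}\|\theta\|_2$ the reproducing kernel is $\tilde k(z,z')=\tfrac{1}{m}\sum_j \esti{j}(z)\esti{j}(z')$, so $T_{\tilde k}=\tfrac{1}{m}\sum_j \esti{j}\otimes_{L_2}\esti{j}$ has rank at most $m$, giving $\mu_i(T_{\tilde k})=0$ for $i>m$. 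The task then reduces to estimating $\mu_i(T_{\tilde k})$ for $i\le m$ in terms of the population eigenvalues $\lambda_i(T_z)\lesssim i^{-(\beff+1)}$, which are inherited from Assumption~\ref{ass:s1-rkhs}\emph{(ii)} via the identity $k_z(z,z')=\bar k_z(\Phi(z),\Phi(z'))$.

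The second step is an operator perturbation through the intermediate operator $T:=\tfrac{1}{m}\sum_j \gpi{j}\otimes_{L_2}\gpi{j}$, which satisfies $\EE T = T_z$. The decomposition
$$
\|T_{\tilde k}-T_z\|_{\mrm{op}} \le \|T_{\tilde k}-T\|_{\mrm{op}} + \|T-T_z\|_{\mrm{op}}
$$
is handled in two pieces. For the first, $\|\esti{j}\otimes\esti{j}-\gpi{j}\otimes\gpi{j}\|_{\mrm{op}}\le \|\esti{j}-\gpi{j}\|_2\,(\|\esti{j}\|_\infty+\|\gpi{j}\|_\infty)$; combined with the clipping $\|\esti{j}\|_\infty\le C_k\log n_1$, Assumption~\ref{ass:s1-rkhs}\emph{(iii)} for the GP sup norms, Assumption~\ref{ass:s1-oracle} for the oracle MSE, and Cauchy--Schwarz across the $m$ samples, this yields $\|T_{\tilde k}-T\|_{\mrm{op}}=\tilde O(\xi_{n_1})$ on a high-probability event. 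For the second, $T-T_z$ is a centered sum of i.i.d.\ rank-one operators with controllable Hilbert--Schmidt norm (after truncating $\|\gpi{j}\|_\infty$ at a polylogarithmic scale via Borell--TIS), so a Hilbert-space Bernstein inequality gives $\|T-T_z\|_{\mrm{op}}=\tilde O(m^{-1/2})$. Weyl's inequality then gives $\mu_i(T_{\tilde k})\le \lambda_i(T_z)+\eta_n$ for $i\le m$ with $\eta_n=\tilde O(\xi_{n_1}+m^{-1/2})$.

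In the third step, I plug this eigenvalue bound into the fixed-point inequality and split the sum at $k^\star\asymp \delta^{-2/(\beff+1)}$, so that $\lambda_{k^\star}(T_z)\asymp \delta^2$. The result is
$$
\textstyle \sum_{i=1}^m \min(\mu_i(T_{\tilde k}),\delta^2) \lesssim k^\star\delta^2 + (k^\star)^{-\beff} + m\eta_n .
$$
The first two contributions balance at the standard RKHS rate $\delta\asymp n_2^{-(\beff+1)/(2(\beff+2))}$; the rank-$m$ truncation tail $\sum_{i>m}\lambda_i(T_z)\lesssim m^{-\beff}$ (which mirrors the approximation component of Theorem~\ref{prop:approx}) feeds $m^{-(\beff+1)/2}$ into the solution of the fixed point; and the $m\eta_n$ piece yields the $\xi_{n_1}$ contribution (with the $(m^{1/2}/n_2)^{1/4}$ byproduct absorbed into the other terms under a mild compatibility on $m$). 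The high-probability event from Theorem~\ref{prop:approx} is inherited throughout to ensure sup-norm clips and concentration hold simultaneously.

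I expect the main obstacle to be the operator concentration for $T-T_z$: the summands $\gpi{j}\otimes\gpi{j}$ have operator norm $\|\gpi{j}\|_2^2$, which is only sub-exponentially concentrated rather than deterministically bounded, so a naive matrix-Bernstein bound fails. Handling this requires truncating $\gpi{j}$ via Borell--TIS and showing the untruncated mass contributes only lower-order terms. A secondary subtlety is that Weyl's inequality adds $\eta_n$ uniformly to every eigenvalue, which would in principle inflate the leading modes; the $\min(\cdot,\delta^2)$ truncation in the Rademacher bound absorbs the perturbation on those modes, and only the collective $m\eta_n$ term survives, which is exactly what drives the $\xi_{n_1}$ contribution in the final statement.
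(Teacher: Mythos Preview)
Your reduction to the eigenvalue fixed point is sound, but the perturbation step via operator norm plus Weyl is too crude to recover the stated rate. Weyl gives $\mu_i(T_{\tilde k})\le \lambda_i(T_z)+\eta_n$ \emph{uniformly} in $i$, so after splitting at $k^\star$ the tail contributes $(m-k^\star)\eta_n\asymp m\eta_n$. With $\eta_n=\tilde\cO(\xi_{n_1}+m^{-1/2})$, this yields $m\eta_n=\tilde\cO(m\xi_{n_1}+m^{1/2})$, and solving the fixed point $m\eta_n\le n_2\delta^4$ gives $\delta\gtrsim (m\xi_{n_1}/n_2)^{1/4}+(m^{1/2}/n_2)^{1/4}$, not $\xi_{n_1}+m^{-(\beff+1)/2}$. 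Your claim that ``the $m\eta_n$ piece yields the $\xi_{n_1}$ contribution'' would require $m/n_2\lesssim\xi_{n_1}^3$, which is a genuine polynomial constraint, not a mild compatibility; and your derivation of the $m^{-(\beff+1)/2}$ term from the rank-$m$ truncation tail is also not how it enters (note $\mu_i=0$ for $i>m$, so $\sum_{i>m}\lambda_i(T_z)$ never appears in your sum at all).

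The paper avoids this blow-up by controlling \emph{partial eigenvalue sums} rather than individual eigenvalues. It embeds $\tilde\cI_1$ into the unit ball of a larger RKHS $\bar\cI$ whose kernel decomposes additively as $\bar k=\tfrac{3}{m}\big[(\bar G)^\top\bar G+(\Delta_1 G)^\top\Delta_1 G+(\Delta_2 G)^\top\Delta_2 G\big]$, where $\Delta_1 G=\hat G-G$ are the oracle residuals and $\Delta_2 G=G-\bar G$ the truncation to the top $m'$ eigenfunctions. Wielandt's inequality on partial sums then gives $\sum_{i\ge i_s}\lambda_i(\bar C)\lesssim i_s^{-\beff}+\tfrac{1}{m}\Tr((\Delta_1 G)^\top\Delta_1 G)+\tfrac{1}{m}\Tr((\Delta_2 G)^\top\Delta_2 G)$. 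The point is that the perturbation enters through its \emph{trace} (which is $\tilde\cO(\xi_{n_1}^2+m^{-(\beff+1)})=\chi_n^2$ after dividing by $m$), not through $m$ times its operator norm. With this corrected tail bound, the remainder of your fixed-point argument would go through; the paper instead converts the partial-sum bound into an empirical entropy-number estimate and runs Dudley chaining, but that is a matter of packaging rather than a different idea.
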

IV estimation in the GP scheme is more delicate, and requires additional analysis of $\tilde\cI$, which is deferred to App.~\ref{app:gp-fixed-design-regr}. 
Before we proceed, however, we illustrate the results on a simple regression task: 
\begin{example}[Example~\ref{ex:dnn-oracles}, cont'd]\label{ex:dnn-oracles-cont}
Let $\Phi,\cI$ be defined as before, and $\esti{j}$ be estimated by the DNN oracle. Suppose $\epsilon_{opt}$ is not greater than the other terms. Then 
\begin{enumerate}[nosep,leftmargin=*,label=\roman*.]
    \item 
Let $m = \lceil n_1^{\beff/(\beff+1)^2}\rceil$. On the event in Theorem~\ref{prop:approx}, 
for any $g^*\in\cI$, there exists $\tilde g^*\in\tilde\cI$ s.t.
$\|\tilde g^*\|_{\tilde\cI} \le c_r \|g^*\|_\cI, 
\|\tilde g^*-g^*\|_2 %
= \tilde O(\|g^*\|_\cI \xi_{n_1})$.  
\item Let $g^*\sim\mc{GP}(0,k_z)$. 
A refined analysis, based on Corollary~\ref{thm:ml}, shows that when $m =\lceil n_1^{1/(\beff+1)}\rceil$, there exists $\tilde g$ s.t.
$
\|\tilde g\|_{\tilde\cI} \lesssim n_1^{1/2(\beff+1)}, 
\EE_{g^*\sim\mc{GP}(0,k_z)} \|\tilde g-g^*\|_2 = \tilde \cO(\xi_{n_1}).
$
\end{enumerate}
(See Appendix~\ref{app:derivation-example-cont} for derivations, and another high-probability bound in the GP scheme.)

Let $\hat g^*_n$ be the truncated OLS estimate using $\tilde\cI$, on a dataset $\{(z_i,g^*(z_i)+e_i):i\in [n_2]\}$ where $\EE(e_i\mid z_i)=0,\mrm{Var}(e_i)\le 1,\|g^*\|_\infty\le B$. Then 
$\EE\|\hat g^*_n-g^*\|_2=\tilde\cO(\|\tilde g^*-g^*\|_2+B\sqrt{m/n_2})$ \citep[Thm.~11.3]{gyorfi2002distribution}. When $n_1=n_2$, the latter term is $\ll \xi_{n_2}$, and 
case (ii) above always matches the DNN rate. 
Case (i) matches the DNN rate when feature learning becomes harder ($\xi_{n_1}^2\gtrsim n_1^{-\beff/(\beff+1)}$); otherwise the rate may be slightly inferior, but still approaches $n_1^{-1/2}$ as the regularity $\beff$ improves. 
\end{example}
As discussed in Example~\ref{ex:dnn-oracles}, when $d_z>d_l$, the DNN rate can outperform fixed-form kernels by a large margin. 
The above example demonstrates a similar superiority of the learned kernel.

\section{Results for IV Regression}\label{sec:iv}

We shall use the approximate first stage $\tilde\cI$ for IV regression, by plugging $\tilde\cI$ and $\cH$ to the kernelized estimators in \cite{dikkala_minimax_2020,wang2021quasibayesian}; see Algorithm~\ref{alg:main}. We analyze the resulted estimators in this section, while deferring implementation details, including hyperparameter selection, to 
Appendix~\ref{app:algo}. 

\begin{algorithm}[h]
\caption{Kernelized IV with learned instruments.}\label{alg:main}
\begin{algorithmic}[1]
\REQUIRE $\dataSI,\dataSII$; regression algorithm $\msf{Regress}$; second-stage kernel $k_x$; $m\in\mb{N}$
\FOR{$j \gets 1$ to $m$}
\STATE Sample %
$f^{(j)}\sim \mc{GP}(0, k_x)$
\STATE 
$\estiRaw{j}\gets \msf{Regress}(\{(\tilde z_i, f^{(j)}(\tilde x_i)): i\in [n_1]\})$  
\ENDFOR~{\color{gray}\COMMENT{the $m$ invocations of $\msf{Regress}$ may be replaced with a single vector-valued regression}}
\STATE Define 
$
\tilde k_z(z,z') := %
\frac{1}{m} \sum_{j=1}^{m} \esti{j}(z)\esti{j}(z'), 
$%
where $\esti{j} := \min\{\estiRaw{j}(\cdot), C\log m\}$. %
\STATE {\bf return} $\msf{KernelizedIV}(\dataSII,\tilde k_z,k_x)$~{\color{gray}\COMMENT{See \eqref{eq:minimax-estimator} below, or Appendix~\ref{app:algo} for the closed-form solution}}
\end{algorithmic}
\end{algorithm}

Both \cite{dikkala_minimax_2020} and the posterior mean estimator of \cite{wang2021quasibayesian} have the form
\begin{equation}\label{eq:minimax-estimator}
\arg\min_{f\in\cH}\ell_{n_2}(f)+\mu\|f\|_\cH^2 := \arg\min_{f\in\cH}\max_{g\in\tilde\cI} \frac{1}{n_2}\sum_{i=1}^{n_2}
(y_i-f(x_i)- \kappa g(z_i))g(z_i) - \lambda \|g\|_{\tilde\cI}^2 + \mu\|f\|_\cH^2.
\end{equation}
Their difference lies in the regularization scaling, which arises from the different assumptions about $f_0$ and $\cH$ (Assumption~\ref{ass:s2}). 
Thus, we analyze the resulted two estimators separately, in Section~\ref{sec:dikkala-regime} and Section~\ref{sec:qb-regime} below. 
In the setting of \cite{wang2021quasibayesian} we are also able to justify the use of likelihood-based model comparison and (quasi-)Bayesian model averaging (BMA).

\subsection{Estimation in the Kernel Scheme} \label{sec:dikkala-regime}
\cite{dikkala_minimax_2020} establish faster rate convergence of the point estimator under simple assumptions. We now provide corresponding results using our learned $\tilde\cI$, by plugging in the results in Section~\ref{sec:main}. 

\begin{proposition}[proof in Appendix~\ref{app:dikkala-proof}]\label{prop:dikkala-regime}
Assume Assumptions~\ref{ass:npiv}, \ref{ass:s2} (kernel scheme), \ref{ass:s1-rkhs} and \ref{ass:s1-oracle}. %
Let $\tilde\cI$ be defined by $\tilde k_z$ in Algorithm~\ref{alg:main}, and 
$\hat f_{n_2}$ be defined by \eqref{eq:minimax-estimator}, with $\kappa,\lambda,\nu$ set as in Appendix E.1. %
On the event defined in Theorem~\ref{prop:approx}, we have
\begin{equation}\label{eq:iv-rate-dikkala-setting}
\!\|E(\hat f_{n_2} - f_0)\|_2 = \tilde \cO_p\big(\big(\xi_{n_1}+n_2^{-\frac{b+1}{2(b+2)}}\big)(1+\|f_0\|_\cH^2)\big).\!\!
\end{equation}
\end{proposition}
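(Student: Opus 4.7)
The plan is to plug the learned $\tilde\cI$ together with $\cH$ into the kernelized minimax IV estimator \eqref{eq:minimax-estimator} and invoke the analysis of \cite{dikkala_minimax_2020}, with the twist that the first-stage class is now learned and only approximately realizes $Ef_0$. Throughout I condition on $\dataSI$, so that $\tilde\cI$ and its basis functions are deterministic; the independence of $\dataSI$ and $\dataSII$ makes this separation clean.

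First I would identify the ideal first-stage target. Lemma~\ref{lem:optimal-I}\,\emph{(i)} gives a $g_0^\star\in\cI$ with $[g_0^\star]_\sim = E[f_0]_\sim$ and $\|g_0^\star\|_\cI = \|f_0\|_\cH$; this is what the first stage would recover if $\cI$ were accessible. Applying Theorem~\ref{prop:approx} to $g^\ast = g_0^\star$ yields, on the event $E_{n_1}$, a surrogate $\tilde g^\star \in \tilde\cI$ with $\|\tilde g^\star\|_{\tilde\cI} \lesssim \|f_0\|_\cH$ and
\[
\|\tilde g^\star - g_0^\star\|_2 \lesssim \|f_0\|_\cH (\xi_{n_1} + m^{-(\beff+1)/2})\sqrt{\log n_1} + \|g_0^\star - \Proj{m'}{g_0^\star}\|_2,
\]
with the tail term controlled by the eigendecay of $T_z$. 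Since $m$ is a hyperparameter independent of $n_2$, I would take it large enough that the $m^{-(\beff+1)/2}$ contributions are dominated by $\xi_{n_1}$.

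Second I would collect the critical-radius inputs. Proposition~\ref{prop:est} gives $\delta_{1,n_2} = \tilde O(n_2^{-(\beff+1)/(2(\beff+2))} + m^{-(\beff+1)/2} + \xi_{n_1})$ for the unit ball of $\tilde\cI$; the standard local Rademacher bound for a polynomial-eigendecay kernel gives $\delta_{2,n_2} = \tilde O(n_2^{-(b+1)/(2(b+2))})$ for the unit ball of $\cH$. With $\kappa,\lambda,\mu$ tuned as specified in Appendix~E.1, the minimax machinery of \cite{dikkala_minimax_2020} then produces a sharp oracle inequality of the form
\[
\|E(\hat f_{n_2}-f_0)\|_2 \lesssim (\delta_{1,n_2}+\delta_{2,n_2})(1+\|f_0\|_\cH) + \|\tilde g^\star - E f_0\|_2(1+\|f_0\|_\cH) + \mu\|f_0\|_\cH^2.
\]
Substituting the bounds from steps one and two, absorbing $(1+\|f_0\|_\cH)^2 \lesssim 1+\|f_0\|_\cH^2$, and noting that $\delta_{1,n_2}\lesssim n_2^{-(b+1)/(2(b+2))}+\xi_{n_1}$ whenever $\beff\ge b$ (which is the relevant regime by Appendix~\ref{app:regularity}) delivers \eqref{eq:iv-rate-dikkala-setting}.

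The hard part is propagating the approximation error cleanly through the min--max objective. In the original Dikkala et al.\ setup, the dual class is assumed to contain $Ef$ for every $f\in\cH$, so $Ef_0$ can be played directly by the max-player; here only the surrogate $\tilde g^\star$ is available, and I need to verify that using it in the ``witness'' step of their analysis inflates the \emph{projected} RMSE $\|E(\hat f_{n_2}-f_0)\|_2$ by at most $\|\tilde g^\star - Ef_0\|_2$, rather than by an ill-posed inverse of it. The key observation is that the minimax objective penalizes $g$ quadratically through $\kappa g(z)^2 + \lambda\|g\|_{\tilde\cI}^2$, so the contribution of the approximation error decouples from the ill-posedness of $E$. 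A secondary bookkeeping point is ensuring that the norm bound $\|\tilde g^\star\|_{\tilde\cI}\lesssim \|f_0\|_\cH$ is compatible with $\lambda\|g\|_{\tilde\cI}^2$, which dictates the concrete choice of $\lambda$; and that the local Rademacher concentrations from Proposition~\ref{prop:est} hold uniformly over the $\cH$-ball of $f$'s visited by the estimator, which is standard once the norm of $\hat f_{n_2}$ is a priori bounded via $\mu$.
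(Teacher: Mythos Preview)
Your high-level plan is right, but there is a genuine gap in how you handle the first-stage approximation error. In the Dikkala et al.\ analysis the max-player's witness must certify the quality of $\hat f_{n_2}$, which means it has to approximate $E(\hat f_{n_2}-f_0)$, not $Ef_0$. Playing your single surrogate $\tilde g^\star\approx Ef_0$ at the witness step does not produce an additive bias of size $\|\tilde g^\star-Ef_0\|_2$; the relevant quantity is $\inf_{g\in\tilde\cI,\|g\|_{\tilde\cI}\lesssim\|\hat f_{n_2}-f_0\|_\cH}\|g-E(\hat f_{n_2}-f_0)\|_2$, and this is what must be controlled. The paper handles this by first modifying Dikkala's realizability condition to allow approximation error scaling as $\eta_n\|f-f_0\|_\cH$ \emph{uniformly} in $f\in\cH$ (their \eqref{eq:dikkala-approx-requirement}), then verifying the uniform condition by applying Theorem~\ref{prop:approx} to $g^*=E(f-f_0)$ for arbitrary $f$, using $\|E(f-f_0)\|_\cI\le\|f-f_0\|_\cH$ from Lemma~\ref{lem:optimal-I}. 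The approximation error thus enters multiplied by $\|\hat f_{n_2}-f_0\|_\cH$, which is absorbed by the ridge $\mu$; it is not a free-standing $\|\tilde g^\star-Ef_0\|_2(1+\|f_0\|_\cH)$ term as in your displayed oracle inequality.

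A secondary point: Dikkala's critical-radius hypothesis is not on the $\cH$-ball and the $\tilde\cI$-ball separately, but on $\tilde\cI_{3U}$ and the \emph{product} class $\cG=\{(x,z)\mapsto (f-f_0)(x)\,g_f(z)\}$. The paper controls $\cG$ by combining the standard entropy-number bound for $\cH$ with Lemma~\ref{lem:entropy-number} for $\tilde\cI$, then rerunning the chaining argument of Proposition~\ref{prop:est}; since $\beff\ge b$ the $\cH$ part dominates and yields $\delta_{n_2}^2=\tilde O(n_2^{-(b+1)/(b+2)}+\xi_{n_1}+m^{-(\beff+1)})$. Your additive split $\delta_{1,n_2}+\delta_{2,n_2}$ lands on the right order, but the actual route goes through the product class.
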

Let us compare the result with \cite{dikkala_minimax_2020} in the setting of Example~\ref{ex:dnn-oracles}. Suppose $n_1=n_2$. \cite{dikkala_minimax_2020} establishes the rate of $\cO_p((\xi'_{n_2}+n_2^{-(b+1)/(b+2)})(1+\|f_0\|_\cH^2))$, where $\xi'_{n_2}$ %
is comparable with a first-stage regression rate established from local Rademacher analysis. For the DNN model in Example~\ref{ex:dnn-oracles}, we have 
$\xi'_n=\tilde\Theta(\xi_n)=\tilde\cO(\epsilon_{fea,n} + n^{-\beff/2(\beff+1)})$ in the general case,\footnote{With some abuse of notation, we also use $\tilde\cO$ to hide the infinitesimal deterioration of the polynomial order.}%
 or $\xi_n' = \tilde\cO(\epsilon_{fea,n} + n^{-(\beff+1)/2(\beff+2)})$ assuming additional regularity (Remark~\ref{rmk:xi-optimality-rkhs}). Thus, the two rates are equivalent if $\epsilon_{fea}$ is sufficiently large, meaning that 
the difficulty of feature learning cannot be ignored. Otherwise, \cite{dikkala_minimax_2020} may be better if $\beff < b+1$; this is a somewhat narrow range, as $\beff\ge \max\{2p-1,b\}$. Recall that our method is more appealing computationally: 
directly instantiating \cite{dikkala_minimax_2020} with DNNs requires solving a minimax problem similar to \eqref{eq:minimax-estimator}, while for our learned kernel \eqref{eq:minimax-estimator} can be evaluated in closed form. %

We can also compare \eqref{eq:iv-rate-dikkala-setting} with kernelized IV using a fixed-form first stage. In the above setting, its best rate is also provided by \cite{dikkala_minimax_2020}, and is dominated by the kernel regression error in Ex.~\ref{ex:dnn-oracles} which, as we discussed, can be much worse than $\xi_n$. \revision{
Our improved rate has been made possible by the fact that 
we are approximating a first-stage model with optimal adaptivity (Ex.~\ref{ex:informative-latents}), at a rate that is also adaptive to the informative latent structure (Ex.~\ref{ex:dnn-oracles}). 
} 

In summary, {\em our algorithm combines the best of both worlds}: it maintains the sharp guarantees of adaptive models, and the simplicity of kernel methods.

\subsection{Quasi-Bayesian Estimation and Uncertainty Quantification}\label{sec:qb-regime} %

Quasi-Bayesian analysis enables efficient uncertainty quantification for NPIV, without introducing extra risks of model misspecification \citep{kato_quasi-bayesian_2013,chernozhukov_mcmc_2003}. 
\cite{wang2021quasibayesian} studies a quasi-Bayesian posterior constructed from \eqref{eq:minimax-estimator} and a $\mc{GP}(0,k_x)$ prior. It is defined through the Radon-Nikodym derivative w.r.t.~the prior, 
$
\big(\nicefrac{d\Pi(\cdot\mid\dataSII)}{d\Pi}\big)(f) \propto e^{-n_2\ell_{n_2}(f)}
$. For kernel first-stage models, the quasi-posterior can be evaluated in closed form (App.~\ref{app:algo}). 
For general models, however, it is entirely unclear if approximate inference can be possible, since for any parameter $f$, evaluation of $\ell_n(f)$ involves solving a separate optimization problem. %
Our kernel learning algorithm enables the (indirect) use of such models.

Analysis of (quasi-)Bayesian procedures is more challenging, partly because of the weaker regularization. Thus, \cite{wang2021quasibayesian} introduced additional assumptions. Our analysis is further complicated by a different assumption on $\cI$, and approximation errors in $\tilde\cI$, which necessitate further assumptions. 
App.~\ref{app:qb-ass} discusses these assumptions in detail. 
For simplicity, we state the result in a ``rate-optimal'' case:\footnote{
Classical NPIV lower bounds continue to hold given full knowledge of $E$ \citep{chen_rate_2007}, so the rate $n_2^{-\nicefrac{b}{2(b+2p+1)}}$ is minimax optimal irrespective of $n_1$. In our setting, it is certainly desirable to improve the dependency on $n_1$, and our restriction is only employed to simplify proof. In simulations we find the choice of $n_1=n_2$ works well.
}
\begin{theorem}[posterior contraction; proof in App.~\ref{app:qb-proof}]\label{prop:qb-regime}
Assume Asms.~\ref{ass:npiv}, \ref{ass:s2} (GP scheme), \ref{ass:mildly-ill-posed}, \ref{ass:s1-rkhs}, \ref{ass:s1-oracle}, \ref{ass:emb-general}, \ref{ass:oracle-sup-norm-err}, \ref{ass:qb-1}, \ref{ass:qb-n}. 
Let $n_1$ be s.t.~$\xi_{n_1}^2\log n_2+n_1^{-(b+2p)/(b+2p+1)}\lesssim n_2^{-1}$, and $m\asymp n_1^{1/(b+2p+1)}$. 
Let $\Pi_{n_1}(\cdot\mid\dataSII)$ be defined in \eqref{eq:quasi-posterior} in appendix. 
Then, with $\dataSI$-probability $\to 1$, we have 
\begin{align*}
\EE_{\dataSII}\Pi_{n_1}(\{f:\|f-f_0\|_2\ge M\bar\epsilon_{n_2}\}\mid\dataSII) &\to 0, \\
\EE_{\dataSII}\Pi_{n_1}(\{f:\|E(f-f_0)\|_2\ge M\bar\delta_{n_2}\}\mid\dataSII) &\to 0,
\end{align*}
where $\bar\delta_{n_2} = 
\tilde \cO(n_2^{-(b+2p)/2(b+2p+1)})$, 
$\bar\epsilon_{n_2}=\tilde \cO(n_2^{-b/2(b+2p+1)})$. 
\end{theorem}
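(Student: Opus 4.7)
The plan is to extend the quasi-Bayesian posterior contraction argument of \cite{wang2021quasibayesian} -- which was developed for a fixed first-stage RKHS $\cI$ -- to accommodate our learned, sample-dependent $\tilde\cI$. First I would condition on the high-probability events from Theorem~\ref{prop:approx} and Assumption~\ref{ass:oracle-sup-norm-err}, on which $\tilde\cI$ becomes a deterministic object with the approximation properties relative to $\cI$ stated in the main text. After this conditioning the goal is a Ghosal--van der Vaart style prior-mass-plus-testing bound: (i) a prior mass lower bound $\Pi(\{f : \ell_{n_2}(f) - \ell_{n_2}(f_0) \lesssim \bar\epsilon_{n_2}^2\}) \ge e^{-c n_2 \bar\epsilon_{n_2}^2}$ giving the evidence bound, and (ii) exponentially powerful tests against $\{f : \|E(f-f_0)\|_2 \ge M\bar\delta_{n_2}\}$, on a sieve $\cF_{n_2}\subset\cH$ with small prior complement and controlled metric entropy. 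Passage from the $\bar\delta_{n_2}$ rate on $\|E(f-f_0)\|_2$ to the $\bar\epsilon_{n_2}$ rate on $\|f-f_0\|_2$ is the standard modulus-of-continuity argument for mildly ill-posed inverse problems, driven by Assumption~\ref{ass:mildly-ill-posed}.

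For the prior mass step I would invoke the GP scheme (Assumption~\ref{ass:s2} (GP scheme)) to obtain $\fApprox\in\cH$ with $\|\fApprox-f_0\|_\cH\lesssim n_2^{1/2(b+1)}$ and $\|\fApprox-f_0\|_2\lesssim n_2^{-b/2(b+1)}$, and combine with the standard small-ball estimate for $\mc{GP}(0,k_x)$. The nontrivial piece is showing $\ell_{n_2}(\fApprox)$ is small: since $\ell_{n_2}$ is a supremum over $\tilde\cI$, this reduces to controlling the empirical inner product between $\fApprox(\bx)-\by$ and the best $\tilde g\in\tilde\cI$ approximating $E(\fApprox - f_0)$; Theorem~\ref{prop:approx} controls both $\|\tilde g\|_{\tilde\cI}$ and $\|\tilde g - E(\fApprox - f_0)\|_2$, and Assumption~\ref{ass:qb-1} -- forcing $\beff = b + 2p$ -- is what makes the resulting approximation error subdominant to $\bar\delta_{n_2}^2$. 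For the test construction, for each $f\in\cF_{n_2}$ with $\|E(f-f_0)\|_2\ge M\bar\delta_{n_2}$, I would apply Theorem~\ref{prop:approx} to $g^\ast = E(f-f_0)$ to obtain a test direction $\tilde g\in\tilde\cI$; the critical-radius bound of Proposition~\ref{prop:est}, together with Assumption~\ref{ass:qb-n} governing the $L_\infty$-to-$L_2$ interplay on $\tilde\cI$, yields a localized Bernstein inequality for the statistic $\frac{1}{n_2}\sum_i (y_i - f(x_i))\tilde g(z_i)$, separating the two hypotheses with exponentially small type-I/II errors as required.

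The main obstacle is the three-way budgeting between the regression oracle error $\xi_{n_1}$, the truncation error $m^{-(\beff+1)/2}$, and the target rate $\bar\delta_{n_2}$. The scaling conditions $\xi_{n_1}^2\log n_2 + n_1^{-(b+2p)/(b+2p+1)}\lesssim n_2^{-1}$ and $m\asymp n_1^{1/(b+2p+1)}$ are calibrated precisely so that both the regression-oracle and truncation contributions are dominated by $n_2^{-1}$, and hence absorbed into $\bar\delta_{n_2}^2$. Verifying this \emph{simultaneously} in the evidence lower bound (where the adversarial $\tilde g$ in $\ell_{n_2}(\fApprox)$ must be small) and in the test construction (where $\tilde g$ must faithfully reproduce $E(f-f_0)$) is the delicate part; Assumptions~\ref{ass:emb-general}, \ref{ass:oracle-sup-norm-err}, \ref{ass:qb-1}, \ref{ass:qb-n} enter precisely here, guaranteeing $\tilde\cI$ contains functions of the required regularity for both directions without blowing up the norm. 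Once the calibration is verified, the conclusion follows from standard Ghosal--van der Vaart machinery combined with the ill-posedness link.
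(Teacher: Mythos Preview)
Your high-level structure (condition on the first-stage events, then run a denominator/numerator argument in the style of Ghosal--van der Vaart) is right, but two of the core steps are genuinely missing or use the wrong tool.

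\textbf{The denominator bound is not what you describe.} You say showing $\ell_{n_2}(\fApprox)$ is small ``reduces to controlling the empirical inner product between $\fApprox(\bx)-\by$ and the best $\tilde g\in\tilde\cI$ approximating $E(\fApprox-f_0)$''. But $\ell_{n_2}(f)$ is a \emph{supremum} over $\tilde\cI$; plugging in a particular $\tilde g$ only lower-bounds it. To upper-bound $\ell_{n_2}(f)$ you must control the actual maximizer, which is a KRR estimate $\hat g_f$; the paper rewrites $\ell_{n_2}(f)=\|\hat g_f\|_{n_2}^2+n_2^{-1}\nu\|\hat g_f\|_{\tilde\cI}^2$ and bounds both pieces. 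The first piece needs a small-ball probability bound $-\log\Pi_{\tilde\cI}(\|g\|_{n_2}\le\epsilon)\lesssim n_2\epsilon^2$ for the GP defined by the \emph{learned} kernel $\tilde k_z$, which is nontrivial because the eigenvalues of $\tilde\cI$ are polluted by the approximation residuals $\chi_{n_1}$; the paper establishes this via an entropy-number computation (Lemma~\ref{lem:entropy-number}) and a Li--Linde/Pajor--Tomczak-Jaegermann duality argument (Lemma~\ref{lem:small-ball-prob}), then feeds it into a fixed-design fractional-posterior contraction result (Corollary~\ref{corr:fixed-design-gpr-mean}). The second piece needs a separate peeling argument (Lemma~\ref{lem:krr-norm-bound}) to show $\|\hat g_f\|_{\tilde\cI}^2\lesssim n_2\delta_{n_2}^2\log n_2$. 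None of this appears in your plan, and it is where most of the new technical work lies.

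\textbf{Proposition~\ref{prop:est} is the wrong tool for the tests.} That proposition bounds the critical radius of $\tilde\cI_1$ and is used only for the \emph{kernel scheme} (Proposition~\ref{prop:dikkala-regime}); its rate $n_2^{-(\beff+1)/2(\beff+2)}$ does not match the GP-scheme rate $n_2^{-\beff/2(\beff+1)}$ with $\beff=b+2p$. In the GP scheme the regularization is only $\nu\asymp n_2^{-1}$, so a local-Rademacher argument does not directly yield the needed separation. The paper instead, for each $f$ in the sieve with $\|E(f-f_0)\|_2$ large, builds an explicit test direction $\tilde g\in\tilde\cI$ via Theorem~\ref{prop:approx} \emph{and} Proposition~\ref{prop:sup-norm-approx}, and applies Bernstein's inequality to the event $A(f,\dataSII)=\{\Psi_{n_2}(f,\tilde g)-\|\tilde g\|_{n_2}^2\ge\Psi(f,\tilde g)-\tfrac{3}{2}\|\tilde g\|_2^2\}$. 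Bernstein requires a sup-norm bound on $\tilde g-E(f-f_0)$; this is exactly where Assumption~\ref{ass:oracle-sup-norm-err} and the interpolation inequality from Assumption~\ref{ass:emb-general} are used (via Proposition~\ref{prop:sup-norm-approx}), and your plan does not account for it. Assumption~\ref{ass:qb-n} concerns $\cH$, not $\tilde\cI$, and enters only to bound $\|f-f_0\|_\infty$ in the Bernstein denominator.
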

Theorem~\ref{prop:qb-regime} immediately implies Theorems 5, 6 in \cite{wang2021quasibayesian} for our $\tilde\cI$, with the extra logarithms, as their proofs do not involve %
the first stage. Those results establish Sobolev norm rates, %
and justify uncertainty quantification by lower bounding the magnitude of posterior spread.

In the nonparametric Bayes literature, contraction results like Theorem~\ref{prop:qb-regime} often lead to the justification of marginal likelihood-based model selection and averaging. This is also the case here. The key ingredient is the following marginal quasi-likelihood bound:
\begin{corollary}[proof in Appendix~\ref{app:proof-qb-mlh}]\label{lem:qb-mlh}
In the setting of Theorem~\ref{prop:qb-regime}, for some $C>0$ we have 
\begin{align*}
\PP_{\dataSII}\big(
C^{-1}n_2^{\frac{1}{b+2p+1}}\log^{-\frac{6}{b}}n_2
 &\le -\log \Pi_{n_1}(\dataSII)  %
 \le Cn_2^{\frac{1}{b+2p+1}} \log^2 n_2\big) \to 1.
\end{align*}
\end{corollary}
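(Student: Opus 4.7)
The goal is to match $-\log\Pi_{n_1}(\dataSII)=-\log\int e^{-n_2\ell_{n_2}(f)}\,d\Pi(f)$ to the nonparametric Bayes benchmark $n_2\bar\delta_{n_2}^2\asymp n_2^{1/(b+2p+1)}$, which one expects because $\bar\delta_{n_2}$ is the $\|E(\cdot)\|_2$-contraction rate already delivered by Theorem~\ref{prop:qb-regime}. I would prove the two inequalities separately, recycling the machinery behind that theorem.

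For the upper bound $-\log\Pi_{n_1}(\dataSII)\lesssim n_2^{1/(b+2p+1)}\log^2 n_2$, the plan is to lower-bound the integral by restricting to a prior-likely set on which $\ell_{n_2}$ is uniformly small. Since $\ell_{n_2}(f)$ is essentially governed by $\|E(f-f_0)\|_2$, and the pushforward of $\Pi=\mc{GP}(0,k_x)$ under $E$ is $\mc{GP}(0,k_z)$ (Lemma~\ref{lem:iv-to-kl-1}), a natural choice is $B_{n_2}:=\{f:\|Ef-E\fApprox[n_2]\|_2\le c\bar\delta_{n_2}\}$, where $\fApprox[n_2]\in\cH$ is the approximation of $f_0$ from Assumption~\ref{ass:s2}\ref{it:gp-scheme} (refined by Assumption~\ref{ass:qb-1}). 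A Cameron--Martin shift by $E\fApprox[n_2]\in\cI$ (whose $\cI$-norm equals $\|\fApprox[n_2]\|_\cH$ by Lemma~\ref{lem:optimal-I}) combined with the Kuelbs--Li small-ball estimate for $\mc{GP}(0,k_z)$ (eigendecay $i^{-(b+2p+1)}$, small-ball exponent $2/(b+2p)$) yields $\Pi(B_{n_2})\ge e^{-Cn_2\bar\delta_{n_2}^2}$. On $B_{n_2}$ the triangle inequality gives $\|E(f-f_0)\|_2\lesssim\bar\delta_{n_2}$, and a concentration bound on the empirical process defining $\ell_{n_2}$ then delivers $\sup_{B_{n_2}}\ell_{n_2}\lesssim\bar\delta_{n_2}^2\log^2 n_2$. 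Chaining produces $\Pi_{n_1}(\dataSII)\ge e^{-C'n_2\bar\delta_{n_2}^2\log^2 n_2}$, which is the claimed upper bound after taking logarithms.

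For the lower bound $-\log\Pi_{n_1}(\dataSII)\gtrsim n_2^{1/(b+2p+1)}\log^{-6/b}n_2$, I would use Theorem~\ref{prop:qb-regime} directly. The inner maximum over $g\in\tilde\cI$ in $\ell_{n_2}$ includes $g\equiv 0$, so $\ell_{n_2}\ge 0$. On the high-probability event where $\Pi_{n_1}(A_{n_2}\mid\dataSII)\ge\tfrac12$ with $A_{n_2}:=\{f:\|f-f_0\|_2\le M\bar\epsilon_{n_2}\}$, this gives
\[
\Pi_{n_1}(\dataSII)\le 2\int_{A_{n_2}}e^{-n_2\ell_{n_2}(f)}\,d\Pi(f)\le 2\,\Pi(A_{n_2}).
\]
Anderson's inequality (applied after a triangle-inequality reduction that absorbs $\|\fApprox[n_2]-f_0\|_2\ll\bar\epsilon_{n_2}$) reduces $\Pi(A_{n_2})$ to the centred $L_2$-small-ball probability $\Pi(\|f\|_2\le 2M\bar\epsilon_{n_2})$; the Kuelbs--Li lower bound on the small-ball function of $\mc{GP}(0,k_x)$ (exponent $2/b$ from the $i^{-(b+1)}$ eigendecay in Assumption~\ref{ass:s2}(ii)) then yields $-\log\Pi(A_{n_2})\gtrsim(M\bar\epsilon_{n_2})^{-2/b}$. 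Plugging in $\bar\epsilon_{n_2}\asymp n_2^{-b/2(b+2p+1)}\log^3 n_2$ (tracked explicitly from Theorem~\ref{prop:qb-regime}) raises the $\log^3$ factor to the $-2/b$ power, producing exactly the $\log^{-6/b}n_2$ prefactor advertised in the corollary.

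The main obstacle is the uniform upper bound $\sup_{f\in B_{n_2}}\ell_{n_2}(f)\lesssim\bar\delta_{n_2}^2\log^2 n_2$ in the first half. Because the inner maximization is over the data-dependent space $\tilde\cI$, it requires simultaneously the approximation guarantee from $\cI$ to $\tilde\cI$ (Theorem~\ref{prop:approx}), the critical-radius bound on $\tilde\cI$-norm balls (Proposition~\ref{prop:est}), a concentration bound on the residual-noise process $n_2^{-1}\sum_i(y_i-g_0(z_i))g(z_i)$ uniformly in $g\in\tilde\cI$, and the specific regularization scalings of Appendix~\ref{app:qb-ass}. All these ingredients are already assembled in the proof of Theorem~\ref{prop:qb-regime}; once imported, the remainder is careful bookkeeping of polylogarithmic factors.
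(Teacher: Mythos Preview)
Your proposal is essentially correct and follows the same route as the paper. The upper bound on $-\log\Pi_{n_1}(\dataSII)$ is precisely Proposition~\ref{prop:qb-den} (the denominator bound underlying Theorem~\ref{prop:qb-regime}), which the paper simply cites. Your sketch is right in spirit, but one detail matters: the paper's small-ball set $\Theta_{d0}$ (Lemma~\ref{lem:replaced-den-fset}) controls $\|f-f_0\|_\infty$ in addition to $\|E(f-f_0)\|_2$, whereas your $B_{n_2}$ only controls the latter. The sup-norm bound is what makes the residual $f(\bx)-\by-E(f-f_0)(\bz)$ uniformly bounded (hence subgaussian with a fixed constant), which is needed in the KRR-based analysis of $\ell_{n_2}(f)$; without it, the uniform bound $\sup_{B_{n_2}}\ell_{n_2}\lesssim\bar\delta_{n_2}^2\log^2 n_2$ would not follow from the machinery you cite.

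For the lower bound on $-\log\Pi_{n_1}(\dataSII)$, your argument via $\ell_{n_2}\ge 0$, the posterior-contraction event from Theorem~\ref{prop:qb-regime}, and Anderson's inequality is correct and slightly more direct than the paper's. The paper instead splits $\Pi_{n_1}(\dataSII)\le\int_{\err}\exp(-n_2\ell_{n_2})\,d\Pi+\Pi(\err^c)$, controls the first piece via the numerator estimate already established inside the proof of Theorem~\ref{prop:qb-regime}, and for $\Pi(\err^c)$ shifts the center from $f_0$ to $f^\dagger_j\in\cH$ before invoking the $L_2$ small-ball bound. Your use of Anderson's inequality removes the need for this shift (Anderson applies for any center, not only one in the Cameron--Martin space), which spares you the $e^{\|f^\dagger_j\|_\cH^2}$ factor the paper's version incurs; this is a mild simplification but otherwise the two arguments coincide.
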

This result allows the comparison of a finite number of second-stage RKHSes. Of particular interest is the comparison between {\em power RKHSes} (Defn.~\ref{defn:power-spaces}), which often have intuitive interpretations: e.g., for a Mat\'ern RKHS $\cH$ and $\gamma\in (2/(b+1),1)$, the power RKHS $\cH^\gamma$ is equivalent to lower-order Mat\'ern RKHSes (\citep{fischer2020sobolev}; Ex.~\ref{ex:matern-gp}). We can verify that such $\cH^\gamma$ fulfills the assumptions about $\cH$. Thus, provided the other assumptions continue to hold, 
Corollary~\ref{lem:qb-mlh} will hold for all such $\cH^\gamma$, with $b+1$ replaced by $\gamma(b+1)$, 
showing the marginal likelihood has a different asymptotics. 
Consequently, it establishes asymptotically valid comparison between such models, and justifies the use of BMA.%

Analysis of more general settings requires additional effort: 
NPIV is an inverse problem, and we anticipate the subtleties of model selection in
nonparametric inverse problems. For example, analyses are usually restricted to the selection of $\gamma$ \citep{knapik2016bayes,szabo2015frequentist,jia2018posterior}, and 
the $\gamma>1$ case requires additional assumptions \citep{szabo2015frequentist}.\footnote{We do not cover it here for brevity, noting that it is well-understood in inverse problem settings \citep{knapik2016bayes,szabo2015frequentist}.}
In the IV setting, it should also be noted that valid model comparison 
requires a good approximation to $E|_\cH$, since otherwise the quasi-likelihood becomes less meaningful at any finite sample size. 
The same intuition applies to other model selection procedures \citep{bennett2019deep,muandet_dual_2020,singh_kernel_2020} based on the estimated violation of \eqref{eq:npiv}. 
When the approximation cannot be guaranteed, 
it could be preferable to stick to the prior knowledge and fix a conservative choice for $\cH$. %

\section{Related Work}\label{sec:related-work}

\paragraph{Multi-Task Learning} Our %
Example~\ref{ex:dnn-oracles-cont} can also be viewed as quantifying sample efficiency improvements in multi-task learning, if we view the GP prior draws %
as the labeling functions for a handful of diverse training tasks, which share the representation $\Phi$. 
This general idea is not new: starting from \cite{tripuraneni_theory_2020,du_few-shot_2021}, a line of recent work establishes similar results. 
Most related is \cite[Sec.~5]{du_few-shot_2021}, which assumes a fixed-dimensional linear model for $\bar g$, and an adaptive $\Phi$ with metric entropy bounds. We assume more general models for both components, and do not require different training tasks to have separate inputs. 
On the flip side, \cite{du_few-shot_2021} allows for non-iid training tasks. \cite[Sec.~6]{du_few-shot_2021} investigated infinite-dimensional %
$\bar g$, but established a slow rate. We are unaware of any work that established fast-rate convergence for infinite-dimensional top-level models, or used ML models as a black box. 
Both aspects may be interesting for multi-task learning, and are necessary for instrument learning. 

\vspace{-0.2em}
\paragraph{Causal Statistics} %
The double machine learning framework \citep{chernozhukov_doubledebiased_2018} also uses black-box ML models to estimate certain nuisance parameters in the model. 
While the operator $E$ can be viewed as a nuisance parameter, the structure of the NPIV problem is quite different: \cite[p.~8]{foster2019orthogonal} noted that 
it is \revision{very} unclear if such a view can be helpful for NPIV estimation; \revision{consistent with their remarks, we have also been unable to cast our problem into the double ML framework.}
Note that double ML has been applied to semiparametric estimation and inference for IV \citep{chernozhukov_doubledebiased_2018,syrgkanis2019machine,singh2020generalized,jung2021double}, which are orthogonal to our goal. %

It has long been known \citep{kelejian1971two,chamberlain_asymptotic_1987} that under a linear outcome model $f_0(\bx)=\theta^\top\bx$, using $\EE(\bx\mid \bz)$ as instrument leads to $\sqrt{n}$-consistent estimates. 
Our Section~\ref{sec:iv-to-klearn} can be viewed as an infinite-dimensional generalization of this observation.\footnote{As noted in \cite{singh_kernel_2020}, when $f_0\in\cH$ for some RKHS $\cH$, the first stage should model $\EE(f(\bx)\mid\bz)$ for all $f\in\cH$, as opposed to merely modeling $\EE(\bx\mid\bz)$. Note that \cite{singh_kernel_2020} did not study the optimal choice of the first stage.  
} 
Given high-dimensional instruments and a parametric outcome model, 
there is a large body of literature on efficient inference; see \cite{singh2020machine} for a review. 
As we move to nonparametric models, we focus on estimation which becomes much more challenging, 
in the spirit of \cite{foster2019orthogonal}. 
Still, we have provided qualitative characterization for uncertainty estimates in Section~\ref{sec:qb-regime}. 

For the use of ML for nonlinear IV, 
\cite{hartford2017deep} studied a heuristic application of NNs. 
We discussed the minimax formulation in introduction. 
\cite{liao_provably_2020,wang2021quasibayesian} justified the use of NNs with the respective neural tangent kernels (NTKs) which, like other fixed-form kernels, cannot adapt to the informative latent structure \citep{wei2019regularization,ghorbani2019limitations}. 
\cite{zhang_maximum_2020,chen_efficient_2021} investigated the combination of an NN-based second stage and a linear first stage, which could be useful in complementary scenarios. 
\cite{xu_learning_2020} considered feature learning in both stages, but only established a slow rate; as the authors noted, it is also unclear if their algorithm reliably minimizes the empirical risk. 

For model selection in the setting of Section~\ref{sec:qb-regime}, \cite{chen2021adaptive} prove the validity of bootstrap-based selection for the sieve estimator \citep{newey2003instrumental}. 
\cite{zhang_maximum_2020,hsu2019bayesian} investigate the use of marginal likelihood for two different kernel-based IV estimators: 
\cite{hsu2019bayesian} establish a crude $-\nicefrac{1}{4}\log n$ upper bound for the log marginal likelihood,
and \cite{zhang_maximum_2020} connect it to the empirical leave-one-out validation error. %
Neither result fully justifies model selection as our Corollary~\ref{lem:qb-mlh}. 
For kernelized IV models, 
\cite{muandet_dual_2020,singh_kernel_2020,zhang_instrument_2021} proposed 
validation statistics for comparing {\em a finite number of} first stage models.

\begin{figure*}[bt]
    \centering
    \includegraphics[width=0.98\linewidth,clip,trim={0.25cm 0cm 0.35cm 0.25cm}]{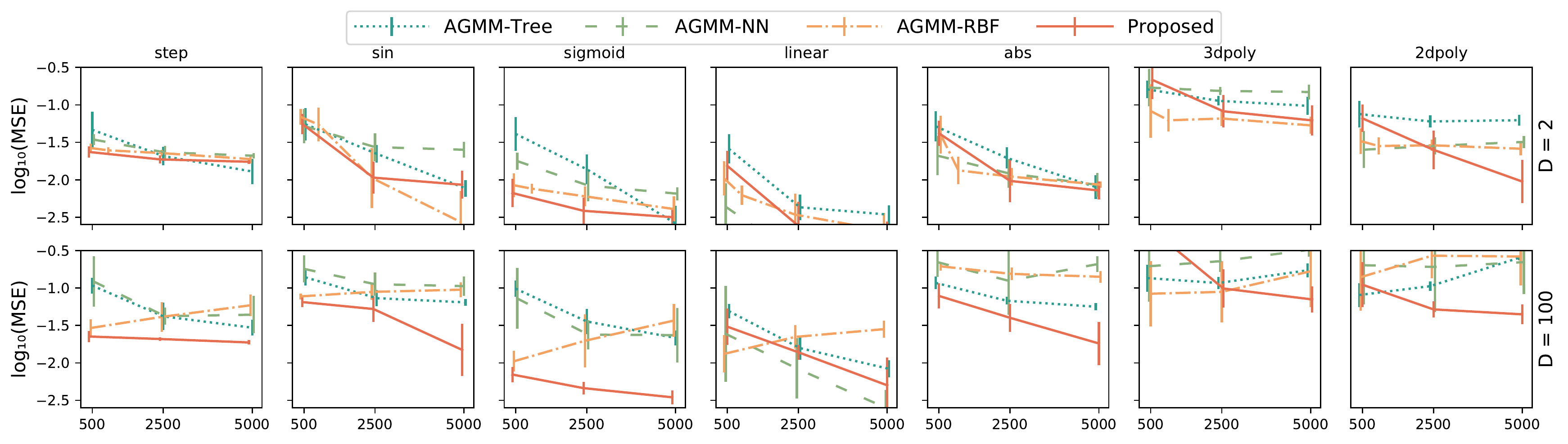}
    \caption{Predictive performance: test MSE vs sample size $n_1=n_2$ for all method, and $D\in\{2,100\}$. Full results are in App.~\ref{sec:exp-main}. %
    }\label{fig:predictive-main}
\end{figure*}

\section{Simulation Study}\label{sec:experiments}

Our main simulation setup is adapted from \cite{bennett2019deep,dikkala_minimax_2020}; Appendix~\ref{app:exp-exo} presents additional experiment on the demand dataset \cite{hartford2017deep,xu_learning_2020}. In \cite{bennett2019deep,dikkala_minimax_2020}, the observed $\bz,\bx,\by$ are generated by
\begin{align*}
\bar \bz &\sim \mrm{Unif}[-3, 3]^{\lfloor\frac{D}{2}\rfloor}, ~  \bz=h(\bar\bz), ~\bu \sim\cN(0, 1), ~
\bx := \bar \bz_1 + \bu + \be_x, ~
\by := ({f_0(\bx) + \bu + \be_y - \mu})/\sigma, 
\end{align*}
where $\bu$ is the confounder, 
$\be_x,\be_y\sim \cN(0, 0.1^2)$ are independent noise, and the constants $\mu,\sigma$ standardize $\by$. 
We consider three choices for $h$: {\bf(i)} $D=2$, $h$ is the identity function; this recovers the setup in previous work, and quantifies the hardness of the NPIV problem given true instruments. {\bf(ii)} $\mrm{dim}\:\bz=D\in\{40,100\}$, $h$ is a three-layer DNN; this simulates a feature learning scenario, and ensures the observation has a low signal-to-noise ratio ($O(1/D)$). {\bf (iii)} $h$ maps $\bar\bz_1$ to a MNIST \cite{lecun1998mnist} or CIFAR-10 \cite{krizhevsky2009learning} image with matching label; the MNIST setting also appeared in previous work. 

\begin{table}[bt]\centering\small
\caption{\revision{Runtime results for all methods in the predictive experiment, for $N=2500,D=100$.}}\label{tbl:runtime}
\begin{tabular}{ccccc}\toprule 
Method & AGMM-Tree & AGMM-NN & AGMM-RBF & Proposed \\ 
\midrule 
Runtime / s & $1374\pm 418$ & $303\pm 16$ & $6.7\pm 0.1$ & $25.9\pm 5.6$ \\ 
\bottomrule
\end{tabular}
\end{table}

\begin{table}[tb]\centering\small
\caption{%
Test MSE, radius and estimated coverage rate of the $90\%$ $L_2$ credible ball (CB), and the average coverage of pointwise $90\%$ credible interval (CI), for $f_0\sim\mc{GP},D=100$. %
For the CB coverage rate estimate, we report its $95\%$ Wilson score interval \citep{wilson27}. 
Full results are in App.~\ref{app:exp-uq}. 
}\label{tbl:uq-gprand-main}
    \begin{tabular}{cccccc}\toprule
Method & 
$n_1=n_2$ & Test MSE & 90\% CB.~Rad. & 90\% CB.~Cvg. & 90\% CI.~Cvg. \\ 
\midrule  %
\multirow{3}{*}{Proposed} 
 & $500$ & $.097$ {\tiny $\pm .065$} & $.201$ {\tiny $\pm .025$} & $.923$ {\tiny $[.888, .948]$} & $.915$ {\tiny $\pm .123$}\\% 0.9233 300
 & $2500$ & $.035$ {\tiny $\pm .024$} & $.074$ {\tiny $\pm .008$} & $.917$ {\tiny $[.880, .943]$} & $.908$ {\tiny $\pm .127$}\\% 0.9167 300
 & $5000$ & $.024$ {\tiny $\pm .016$} & $.049$ {\tiny $\pm .004$} & $.920$ {\tiny $[.884, .946]$} & $.905$ {\tiny $\pm .134$}\\% 0.9200 300
\midrule \multirow{3}{*}{RBF} 
 & $500$ & $.431$ {\tiny $\pm .192$} & $.240$ {\tiny $\pm .036$} & $.187$ {\tiny $[.147, .235]$} & $.640$ {\tiny $\pm .191$}\\% 0.1867 300
 & $2500$ & $.176$ {\tiny $\pm .089$} & $.175$ {\tiny $\pm .023$} & $.517$ {\tiny $[.460, .573]$} & $.822$ {\tiny $\pm .136$}\\% 0.5167 300
 & $5000$ & $.126$ {\tiny $\pm .072$} & $.156$ {\tiny $\pm .019$} & $.660$ {\tiny $[.605, .711]$} & $.855$ {\tiny $\pm .143$}\\% 0.6600 300
\bottomrule
\end{tabular}
\end{table}

We consider two choices for $f_0$: {\bf (i)} a widely used collection of functions (e.g., $\sin,\mrm{abs}$) in \cite{bennett2019deep}. {\bf (ii)} $f_0\sim\mc{GP}(0,k_x)$. (ii) ensures the correct specification of $\cH$ and allows us to focus on the first stage. %

We use a DNN as the black-box learner, and  a RBF kernel for $\cH$, with bandwidth determined by marginal likelihood \eqref{eq:s2-log-qlh}.
We set $N_1=N_2\in\{500,2500,5000\}$. We defer 
setup details and full results to Appendix~\ref{app:simulations}, and summarize the findings below:

\paragraph{Hyperparameter Selection (App.~\ref{sec:exp-s1-randgp})} We first study 
hyperparameter selection in instrument learning. We set $f_0\sim\mc{GP}$, $D\in\{2,40,100\}$. We find our %
validation statistics \eqref{eq:s1-val-stats} always correlates with the counterfactual MSE $\|\hat f_n-f_0\|_2^2$, and that across a large hyperparameter space, trained DNNs %
always outperform first-stage models based on RBF kernels, or randomly initialized DNNs.
    
\paragraph{Predictive Performance (App.~\ref{sec:exp-main})} 
For $h$ defined as in (i-ii), we compare our algorithm with \cite[AGMM]{dikkala_minimax_2020}, instantiated with kernel, tree and NN models. As shown in \cite{dikkala_minimax_2020}, the baselines have competitive performance on this setup; 
the latter two models also enjoy adaptivity guarantees.
A representative subset of results are plotted in Fig.~\ref{fig:predictive-main}: our method has stable performance as we move to high dimensions, demonstrating excellent adaptivity. In contrast, fixed-form kernels fail to identify the informative features. AGMM-tree and AGMM-NN also have deteriorated performance as $D$ increases, despite their theoretical guarantees, presumably due to the challenges in optimization. 
\revision{
\cref{tbl:runtime}
reports the run time of all methods in this experiment. As we can see, our method is more efficient than both adaptive baselines. 
}

For image-based $h$, we compare with AGMM-NN and \cite{zhang_maximum_2020}, which report the best results in the MNIST setting. Our method outperforms both baselines. 

\paragraph{Uncertainty Quantification (App.~\ref{app:exp-uq})} Table~\ref{tbl:uq-gprand-main} presents a subset of results for $f_0\sim\mc{GP}$. 
Comparing with a fixed-form RBF first stage, 
our method produces sharper credible intervals, which also have better coverage. 
For $f_0$ specified as in \cite{bennett2019deep}, we experiment with BMA over a grid of RBF kernels, and present visualizations in Appendix~\ref{app:exp-uq}. We find that 
when the model is more correctly specified, BMA 
produces conservative uncertainty estimates which are nonetheless informative. However, when all models are severely misspecified (e.g., when $f_0$ is a step function), we cannot expect model-based uncertainty estimates to have ideal coverage.

\paragraph{Exogenous Covariates (App.~\ref{app:exp-exo})} \revision{We evaluate the extended algorithm in \cref{app:exo-algo} on the demand dataset \citep{hartford2017deep}, which is a widely used simulation design with high-dimensional exogenous covariates. As shown in the appendix, our extended algorithm has competitive performance.}

\section*{Acknowledgements}
This work was supported by NSFC Projects (Nos. 62061136001, 62076145, 62076147, U19B2034, U1811461, U19A2081, 61972224), Beijing NSF Project (No. JQ19016), BNRist (BNR2022RC01006), Tsinghua Institute for Guo Qiang, and the High Performance Computing
Center, Tsinghua University. J.Z is also supported by the XPlorer Prize.

\renewcommand{\>}{\rabak}
\bibliographystyle{IEEEtranN}
\bibliography{bib}

\newpage
\appendix

\addcontentsline{toc}{section}{Appendix} %
\part{Appendix} %
\parttoc %

\renewcommand{\>}{\rangle}
\section{Background and Technical Lemmas}\label{app:technical}

\paragraph{Kernels}
The two following lemma applies to our $\cH$ satisfying Assumption~\ref{ass:s2}, but we will also apply them to the RKHS $\bar\cI$ defined in Section~\ref{sec:main}.\footnote{$\cI$ may not necessarily satisfy the requirement in Lemma~\ref{lem:mercer} (iv), but a weaker version always holds; see Appendix~\ref{app:proof-sec-iv-to-kl}.} In the latter case, $x,\cX$ should be replaced by $\bar z,\bar\cZ$; and as discussed in the main text, the results will transfer to the RKHS $\cI = \{\bar g\circ\Phi:\bar g\in\cI\}$. 
\begin{lemma}[Mercer's representation]\label{lem:mercer}
Let $\cH$ be any RKHS with kernel $k_x$ s.t.~$
\int P(dx) k_x(x,x) < \infty.
$
Then 
\begin{enumerate}[leftmargin=*,label=\roman*.]
   \item $\cH$ can be embedded into $L_2(P(dx))$, and 
   the natural inclusion operator $\iota_x:\cH\to L_2(P(dx))$ and $\iota_x^\top$ are Hilbert-Schmidt; the map $T_x: f\mapsto \int P(dx) k_x(x,\cdot) f(x)$ defines a positive, self-adjoint and trace-class operator; $T_x = \iota_x\iota_x^\top$. 
   \item $T_x$ has the decomposition $$
T_x f = \sum_{i\in I} \mu_i \<\bar e_i, f\>_2 \bar e_i,
$$
where the index set $I\subset \mb{N}$ is at most countable, and $\{\bar e_i\}$ is an orthonormal system in $L_2(P(dx))$. 
\item There exists an orthogonal system $\{e_i: i\in I\}$ of $\cH$ s.t.~$[e_i]_\sim = \sqrt{\lambda_i} \bar e_i$. 
\item If $k_x$ is additionally bounded and continuous, 
$\{e_i: i\in I\}$ will %
define a Mercer's representation whose convergence is absolute and uniform. %
\end{enumerate}
\end{lemma}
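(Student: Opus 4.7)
The argument is a standard adaptation of Steinwart--Christmann (Ch.~4) to a possibly non-compact $\cX$; the finite trace condition $\int k_x(x,x)P(dx)<\infty$ substitutes for compactness in parts (i)--(iii), and the extra hypothesis in (iv) recovers Mercer's classical theorem.

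For (i), I would first use the reproducing property to write $|f(x)|=|\<f,k_x(\cdot,x)\>_\cH|\le \|f\|_\cH k_x(x,x)^{1/2}$, so $\|\iota_x f\|_2^2\le \|f\|_\cH^2\int k_x(x,x)P(dx)$; this shows $\iota_x$ is a well-defined bounded operator, and the embedding of $\cH$ into $L_2(P(dx))$ descends to equivalence classes. To upgrade to Hilbert--Schmidt, fix any ONB $\{h_j\}$ of $\cH$ and compute
\begin{equation*}
\sum_j\|\iota_x h_j\|_2^2=\int\sum_j|\<h_j,k_x(\cdot,x)\>_\cH|^2 P(dx)=\int \|k_x(\cdot,x)\|_\cH^2 P(dx)=\int k_x(x,x)P(dx)<\infty,
\end{equation*}
where Fubini is justified by positivity. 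Thus $\iota_x$ and its adjoint $\iota_x^\top$ are Hilbert--Schmidt. The adjoint is explicit: for $g\in L_2(P(dx))$, the element $\iota_x^\top g\in\cH$ must satisfy $\<h,\iota_x^\top g\>_\cH=\int h(x)g(x)P(dx)$ for all $h\in\cH$, and Riesz combined with the reproducing property forces $\iota_x^\top g=\int k_x(\cdot,x)g(x)P(dx)$. Composing yields $(\iota_x\iota_x^\top g)(y)=\int k_x(y,x)g(x)P(dx)=T_x g(y)$, so $T_x=\iota_x\iota_x^\top$ is positive, self-adjoint, and trace-class (composition of two HS operators).

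For (ii), the spectral theorem for compact positive self-adjoint operators on the separable Hilbert space $L_2(P(dx))$ applied to $T_x$ yields an at most countable system of eigenvalues $\mu_i>0$ (together with a possible kernel) and an orthonormal system of eigenfunctions $\{\bar e_i\}$ in the closure of the range, with $T_x\bar e_i=\mu_i\bar e_i$ and the stated sum decomposition. For (iii), identify $\lambda_i=\mu_i$ and define $e_i:=\mu_i^{-1/2}\iota_x^\top\bar e_i\in\cH$. Two verifications are needed: first $[e_i]_\sim=\mu_i^{-1/2}\iota_x\iota_x^\top\bar e_i=\mu_i^{-1/2}T_x\bar e_i=\sqrt{\mu_i}\,\bar e_i$; second, orthogonality in $\cH$, which follows from $\<e_i,e_j\>_\cH=(\mu_i\mu_j)^{-1/2}\<\bar e_i,T_x\bar e_j\>_2=\delta_{ij}$.

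For (iv), under boundedness and continuity of $k_x$, each $\iota_x^\top\bar e_i$ is a continuous function (so $e_i$ has a canonical continuous representative), and the partial sums $S_N(x,y):=\sum_{i\le N}\lambda_i e_i(x)e_i(y)$ are continuous, symmetric, and positive semidefinite with $S_N(x,x)\uparrow k_x(x,x)$ monotonically by Parseval applied to $k_x(\cdot,x)\in\cH$. Dini's theorem then gives uniform convergence on compact subsets of the diagonal; Cauchy--Schwarz in the form $|k_x(x,y)-S_N(x,y)|^2\le (k_x(x,x)-S_N(x,x))(k_x(y,y)-S_N(y,y))$ upgrades this to uniform and absolute off-diagonal convergence. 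The main subtlety here is that $\cX$ in our application is merely assumed bounded, not compact, so to invoke Dini one either restricts to compact subsets, or uses the fact that Assumption~\ref{ass:s2}(i) together with boundedness of $k_x$ lets one pass to the closure $\overline{\cX}$, which is compact in $\RR^{d_x}$. That compactification is the one genuinely delicate point; the rest is bookkeeping.
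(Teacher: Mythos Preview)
Your argument is sound and is essentially the classical proof; the paper itself does not give a proof but simply cites \cite{steinwart_mercers_2012} (Lemmas~2.2--2.3, 2.12 and Corollary~3.5) for all four parts, so you have supplied more detail than the authors do. Two small points to tighten in (iv): first, your partial sum should read $S_N(x,y)=\sum_{i\le N}e_i(x)e_i(y)$ without the extra factor $\lambda_i$, since the $e_i$ you constructed in (iii) are $\cH$-orthonormal and the reproducing-kernel expansion in that basis is $k_x(x,y)=\sum_i e_i(x)e_i(y)$ (equivalently, write it as $\sum_{i\le N}\lambda_i\bar e_i(x)\bar e_i(y)$ in terms of the $L_2$-orthonormal $\bar e_i$). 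Second, the Parseval step $S_N(x,x)\uparrow k_x(x,x)$ needs $\{e_i\}$ to be a complete ONB of $\cH$, not merely an orthonormal system; this follows from injectivity of $\iota_x$, which in turn holds under the continuity hypothesis of (iv) when $\cX=\mrm{supp}\,P$ (cf.\ \cite[Exercise~4.6]{steinwart2008support}, used elsewhere in the paper) --- a point worth making explicit before invoking Dini.
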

\begin{proof}
\cite[Lemma 2.3, 2.2 (for i), %
       2.12 (for ii-iii), Corollary 3.5 %
       (for iv)]{steinwart_mercers_2012}.
\end{proof}

The following material on power spaces are adapted from \cite{wang2021quasibayesian}, which collected them from \cite{steinwart_mercers_2012,fischer2020sobolev}. 
\begin{definition}[power space, embedding property]\label{defn:power-spaces}
Let $\cH$ be an RKHS with Mercer's representation $\{(\lambda_i,\varphi_i):i\in\mb{N}\}$. For $\gamma\ge 1$,  the \emph{power space} $[\cH]^\gamma\subset L_2(P(dx))$ is defined as 
$$
[\cH]^\gamma = \left\{[f]_\sim := \sum_{i=1}^\infty a_i [\varphi_i]_\sim: \|[f]_\sim\|_{[\cH]^\gamma}^2 := \sum_{i=1}^\infty \lambda_i^{-\gamma}a_i^2<\infty\right\}.
$$
We say $\cH$ satisfies an \emph{embedding property} with order $\gamma$ if 
$[\cH]^\gamma$ is continuously embedded into $L_\infty(P(dx))$, denoted as 
\begin{equation}\label{eq:emb}
[\cH]^\gamma \hookrightarrow L_\infty(P(dx)). \tag{EMB}
\end{equation}
\end{definition}
Clearly, $\cI$ and $\bar\cI$ will satisfy \eqref{eq:emb} with the same order. 

\begin{lemma}\label{lem:emb}
Under \eqref{eq:emb}, (i) the function space
$$
\cH^\gamma = \left\{f := \sum_{i=1}^\infty a_i \varphi_i: (\lambda_i^{-\gamma/2}a_i)_{i\in\mb{N}}\in\ell_2(\mb{N})\right\},
$$
with a similarly defined norm, will be an RKHS (a ``power RKHS'') with a bounded kernel; (ii) the kernel has a pointwise convergent Mercer representation $\{(\lambda_i^{\gamma},\varphi_i):i\in\mb{N}\}$. %
(iii) We have the interpolation inequality 
\begin{equation}\label{eq:interpolation-general}
\|f\|_\infty \lesssim \|f\|_{\cH}^\gamma \|f\|_2^{1-\gamma},\quad\forall f\in\cH.
\end{equation}
\end{lemma}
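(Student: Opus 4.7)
The plan is to derive all three claims from the Mercer/spectral decomposition of $\cH$ together with the embedding hypothesis~\eqref{eq:emb}. The first step is to give $\cH^\gamma$ a clean function-space (rather than equivalence-class) meaning: for any coefficient sequence $(a_i)$ with $\sum_i \lambda_i^{-\gamma}a_i^2<\infty$, applying the embedding bound $\|g\|_\infty\le C\|g\|_{[\cH]^\gamma}$ to the tail $s_N-s_M=\sum_{i=M+1}^N a_i\varphi_i$ shows $\{s_N\}$ is Cauchy in $L_\infty$; the uniform limit is the distinguished bounded representative of $[f]_\sim\in[\cH]^\gamma$, and this is what I take as the element $f\in\cH^\gamma$. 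Equipping $\cH^\gamma$ with the weighted-$\ell_2$ norm on coefficients makes it a Hilbert space, and the continuous inclusion $\cH^\gamma\hookrightarrow L_\infty$ inherits the constant $C$.

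For parts (i) and (ii), point evaluation $\delta_x$ is then bounded by $C$ on $\cH^\gamma$, so the Riesz representation theorem yields a reproducing kernel $k^\gamma$ with $k^\gamma(x,x)=\|k^\gamma(x,\cdot)\|_{\cH^\gamma}^2\le C^2$, giving the bounded kernel of~(i). For the Mercer representation in~(ii), reparameterizing coefficients as $b_i:=\lambda_i^{-\gamma/2}a_i$ makes $\{\sqrt{\lambda_i^\gamma}\,\varphi_i\}_{i\in\mb{N}}$ an orthonormal basis of $\cH^\gamma$. The general identity $k(x,y)=\sum_j \psi_j(x)\psi_j(y)$ for any ONB $\{\psi_j\}$ of an RKHS (obtained by expanding $k(x,\cdot)$ in the basis via the reproducing property and then evaluating at $y$) then yields $k^\gamma(x,y)=\sum_i \lambda_i^\gamma\varphi_i(x)\varphi_i(y)$, with pointwise absolute convergence following from Cauchy--Schwarz together with $\sum_i \lambda_i^\gamma\varphi_i(x)^2=k^\gamma(x,x)<\infty$.

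For~(iii), I would apply H\"older's inequality to the eigencoefficients of $f=\sum_i a_i\varphi_i\in\cH$. Splitting $\lambda_i^{-\gamma}a_i^2=(\lambda_i^{-1}a_i^2)^{\gamma}(a_i^2)^{1-\gamma}$ and using H\"older with conjugate exponents $1/\gamma$ and $1/(1-\gamma)$ gives $\|f\|_{[\cH]^\gamma}^2\le \|f\|_\cH^{2\gamma}\|f\|_2^{2(1-\gamma)}$; composing with the embedding $\|f\|_\infty\le C\|f\|_{[\cH]^\gamma}$ yields~\eqref{eq:interpolation-general}. The main obstacle throughout is the care required in the first step: extracting from~\eqref{eq:emb} alone a function-space realization of $\cH^\gamma$, pointwise convergence of the candidate Mercer series, and uniform control of $k^\gamma(x,x)$ all at once. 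Once this equivalence-class/function subtlety is settled, the remaining steps are routine spectral bookkeeping.
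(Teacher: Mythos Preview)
Your proposal is correct. The paper does not give a self-contained argument here; it simply cites \cite{steinwart_convergence_2019} for (i), \cite{steinwart_mercers_2012} for (ii), and \cite{steinwart_mercers_2012} for (iii), and your sketch is a faithful reconstruction of the arguments in those references: the H\"older step on eigencoefficients is precisely how the interpolation inequality is derived, and the ONB expansion plus Cauchy--Schwarz is the standard route to the pointwise Mercer representation once boundedness of the kernel is in hand. The subtlety you flag in passing from the $L_\infty(P(dx))$ embedding (an essential-supremum statement about equivalence classes) to genuine pointwise control is real and is exactly what the cited works handle with care; under the paper's standing assumptions (bounded continuous $k_x$, $\cX$ equal to the support of $P(dx)$) the Mercer eigenfunctions are continuous and essential supremum agrees with supremum on $\cX$, which is what makes your ``distinguished representative via uniform limit'' step go through.
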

\begin{proof}
\cite[Theorem 5.5 (for i)]{steinwart_convergence_2019}, \cite[Theorem~3.1 (for ii), Thm.~5.3 (for iii)]{steinwart_mercers_2012}.
\end{proof}

The embedding property is stronger when $\gamma$ can be chosen to be smaller. The following example shows that for Mat\'ern kernels, we can choose the best posible $\gamma$:\todo{discuss the assumptions in the text}
\begin{example}[Mat\'ern kernels and Sobolev regularity]\label{ex:matern}
Let $\cX$ be a bounded open set in $\RR^d$ with a smooth boundary, 
$P(dx)$ have %
its Lebesgue density bounded from both sides, 
and $\cH$ be the Mat\'ern-$\alpha$ RKHS. Then 
\begin{enumerate}[leftmargin=*,label=\roman*.]
  \item $\cH$ is norm-equivalent to the $L_2$-Sobolev space $W^{\alpha+\frac{d}{2}, 2}$ \citep[Example 2.6]{kanagawa_2018_gaussian}.
  \item Its Mercer eigenvalues decay at $\lambda_i \asymp i^{-(1+\frac{2\alpha}{d})}$, and it satisfies \eqref{eq:emb} for all $\gamma > (1+\frac{2\alpha}{d})^{-1}$ \citep[Section 4]{fischer2020sobolev}. 
\end{enumerate}
Among kernels with the same eigendecay, this is the best possible $\gamma$ \citep{steinwart2009optimal}. 
\end{example}

We now provide some intuition on the ``GP scheme'' approximation condition, Assumption~\ref{ass:s2} \ref{it:gp-scheme}:
\begin{example}\label{ex:f0}
For any $\cH$ satisfying the eigendecay assumption, simple calculation shows that any $f_0\in [\cH]^{b/(b+1)}$ satisfies Assumption~\ref{ass:s2} \ref{it:gp-scheme} \citep[Lemma 23]{wang2021quasibayesian}. If we are further in the setting of Example~\ref{ex:matern}, $f_0$ will satisfy Assumption~\ref{ass:s2}~\ref{it:gp-scheme} if $f_0\in W^{\alpha,2}$ \citep[Chapter 7]{adams2003sobolev}. 
\end{example}

\begin{lemma}\label{lem:trunc-err}
Let $\cI$ satisfy Assumption~\ref{ass:s1-rkhs}. Then for all $g\in\cI, m\in\mb{N}$, 
\begin{equation}\label{eq:trunc-err}
\|\Proj{m}{g}\|_\cI \le \|g\|_\cI, ~~
\|g - \Proj{m}{g}\|_2^2 \lesssim \|g\|_\cI^2 m^{-(\beff+1)},
\end{equation}
where the constant hidden in $\lesssim$ only depends on $\cI$.
\end{lemma}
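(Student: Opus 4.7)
My plan is to reduce everything to the Mercer expansion in $L_2(P(dz))$ and exploit the orthogonality structure it provides in both $\cI$ and $L_2$. By Lemma~\ref{lem:mercer} applied to $\bar\cI$ (and transferred to $\cI$ via $\Phi$, which does not change the Mercer eigenvalues by the observation in Example~\ref{ex:informative-latents}), there is an at-most-countable system $\{e_i\}$ that is orthonormal in $\cI$ and satisfies $[e_i]_\sim = \sqrt{\lambda_i}\,\bar e_i$, where $\{\bar e_i\}$ is orthonormal in $L_2(P(dz))$ and $\lambda_i := \lambda_i(T_{\bar z})$ is indexed in decreasing order. Any $g\in\cI$ then admits an expansion $g=\sum_i a_i e_i$ with $\|g\|_\cI^2=\sum_i a_i^2$ and $\|g\|_2^2=\sum_i \lambda_i a_i^2$.

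With this setup the first claim is immediate: the projection onto the span of the top $m$ Mercer eigenfunctions corresponds to the truncation $\Proj{m}{g}=\sum_{i\le m} a_i e_i$, so $\|\Proj{m}{g}\|_\cI^2 = \sum_{i\le m} a_i^2 \le \sum_i a_i^2 = \|g\|_\cI^2$. For the second claim, the residual is $g-\Proj{m}{g}=\sum_{i>m} a_i e_i$, whose squared $L_2$ norm equals $\sum_{i>m}\lambda_i a_i^2$. Pulling out the largest remaining eigenvalue and applying Assumption~\ref{ass:s1-rkhs}\,(ii) gives
\begin{equation*}
\|g-\Proj{m}{g}\|_2^2 \;\le\; \lambda_{m+1}\sum_{i>m} a_i^2 \;\le\; \lambda_{m+1}\,\|g\|_\cI^2 \;\lesssim\; (m+1)^{-(\beff+1)}\|g\|_\cI^2 \;\lesssim\; m^{-(\beff+1)}\|g\|_\cI^2,
\end{equation*}
where the hidden constant depends only on the constant in the eigendecay assumption for $\cI$.

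There is really no serious obstacle in the argument; the only point deserving care is the transfer of Mercer's theorem from $\bar\cI$ to $\cI$, since $k_z$ may fail the continuity/boundedness hypothesis required for the uniform-convergence conclusion of Lemma~\ref{lem:mercer}\,(iv). However, parts (i)--(iii) of that lemma still apply (as flagged in the footnote at the start of Appendix~\ref{app:technical}), and those are all that the above computation uses, since we only need orthogonality and the identification $\lambda_i(T_z)=\lambda_i(T_{\bar z})$, not pointwise or uniform Mercer convergence.
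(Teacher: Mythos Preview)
Your proof is correct and follows essentially the same approach as the paper: both expand $g$ in the Mercer orthonormal system of $\cI$, read off the RKHS-norm bound from orthogonality, and bound the $L_2$ tail by pulling out the largest remaining eigenvalue $\lambda_{m+1}\lesssim m^{-(\beff+1)}$. Your write-up is in fact slightly more explicit (and your remark on why Lemma~\ref{lem:mercer}\,(i)--(iii) suffices even without the continuity of $k_z$ is a nice touch the paper omits).
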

\begin{proof}\footnote{Similar result has been stated in \cite{wang2021quasibayesian}. We restate the proof to drop some unnecessary assumptions.}
Let $\{(\lambda_i,\varphi_i)\}$ be the Mercer eigendecomposition, so that $\{\sqrt{\lambda_i}\varphi_i\}$ constintute a countable ONB for $\cI$ (Lemma~\ref{lem:mercer}). 
Thus, the RKHS norm bound holds, and 
\begin{align*}
\|g - \Proj{m}{g}\|_2^2 = \sum_{j=m+1}^{\infty} \<f, \sqrt{\lambda_i} \varphi_i\>_\cI^2 \|\sqrt{\lambda_i}\varphi_i\|_2^2 \le 
\|g-\Proj{m}{g}\|_\cI^2\cdot \lambda_m\|\varphi_m\|_2^2 \lesssim
\|g\|_\cI^2 m^{-(\beff+1)}.
\end{align*}
\end{proof}

\paragraph{Gaussian Measure and Gaussian Process}

\begin{definition}[Gaussian measure, \cite{eldredge_analysis_2016}]\label{defn:gaussian-measure}
Let $(\mb{B},\|\cdot\|)$ be a Banach space, $W\sim \mu$ be a Borel measurable map. $\mu$ is a \emph{Gaussian measure} if for any $b^*\in\mb{B}^*$, the pushforward measure $b^*_{\#\mu}$ is normally distributed.

A Gaussian measure defines a bilinear form on $\mb{B}^*$: $q(f,g) = \EE f(W) g(W)$. When $\mb{B}$ is additionally a Hilbert space, $q$ will correspond to a bilinear form on $\mb{B}$, denoted as $\Lambda$. We then introduce the notation $W\sim N(0,\Lambda)$, meaning that for all $l\in \mb{B}$, the random variable $\<l,W\>_H\sim\cN(0, \<l, \Lambda l\>_{\mb{B}})$.
\end{definition}

\begin{lemma}[Borell-TIS, \cite{ghosal2017fundamentals}, Proposition I.8]\label{lem:borell-tis}
Let $W$ be any mean-zero Gaussian process defined on a Banach space $\mb B$, and $\|\cdot\|$ denote any Banach norm. If $\|W\|<\infty$ a.s., it will hold that 
$$
\PP(|\|W\|-E\|W\|| > x) \le 2e^{-x^2/(2\sigma^2(W))}\quad \forall x>0,
$$
where $\sigma(W) := \sup_{b^* \in \mb B^* : \| b^* \| = 1} \sqrt{\EE_W [ b^*(W)^2]}$ is less than the median of $\|W\|$. 
\end{lemma}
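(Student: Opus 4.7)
The plan is to obtain Borell--TIS by a finite-dimensional reduction followed by the classical Gaussian concentration inequality for Lipschitz functions of i.i.d.~standard normals. The structural insight is that the $\mb{B}$-norm, viewed through a countable weak-$*$ dense family in the unit ball of $\mb{B}_0^*$, is a Lipschitz function of a finite-dimensional Gaussian with constant governed by the weak variance $\sigma(W)$.

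First I would reduce to a separable subspace: since $\|W\|<\infty$ a.s., a standard measure-theoretic fact for Gaussian random elements places the law of $W$ on a separable subspace $\mb{B}_0\subseteq\mb{B}$. Choose $\{b_k^*\}_{k\in\mb{N}}\subset\mb{B}_0^*$ weak-$*$ dense in the unit ball, so that $\|w\|=\sup_k b_k^*(w)$ on $\mb{B}_0$, and set $F_N(W):=\max_{k\le N}b_k^*(W)$, which increases almost surely to $\|W\|$.

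Next I would establish the Lipschitz estimate. The vector $(b_1^*(W),\ldots,b_N^*(W))$ is centered Gaussian with covariance $\Sigma_N$, hence equals $\Sigma_N^{1/2}Z$ in distribution for $Z\sim\cN(0,I_N)$. The function $G(z):=\max_k(\Sigma_N^{1/2}z)_k$ is Lipschitz in $\ell_2$ with constant $\|\Sigma_N^{1/2}\|_{\mathrm{op}}=\sqrt{\|\Sigma_N\|_{\mathrm{op}}}$, and
$$\|\Sigma_N\|_{\mathrm{op}}=\sup_{\|\alpha\|_2=1}\Var\!\Big(\sum_k\alpha_k b_k^*(W)\Big)\le\sup_{\|b^*\|\le 1}\EE b^*(W)^2=\sigma^2(W),$$
since $\sum_k\alpha_k b_k^*$ has $\mb{B}_0^*$-norm at most $1$ whenever $\|\alpha\|_2=1$. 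The Gaussian concentration inequality for $G$ then yields $\PP(|F_N(W)-\EE F_N(W)|>x)\le 2e^{-x^2/(2\sigma^2(W))}$ uniformly in $N$.

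Finally I would pass to the limit: monotone convergence and the uniform subgaussian tail give $\EE F_N(W)\to\EE\|W\|<\infty$, and the tail estimate transfers to $\|W\|$. For the median comparison, $|b^*(W)|\le\|W\|$ for any $b^*$ in the unit ball, so $\PP(|b^*(W)|\le m)\ge 1/2$ where $m$ is the median of $\|W\|$; inverting the Gaussian CDF bounds $(\EE b^*(W)^2)^{1/2}$ by $m$ up to a universal constant, and taking $\sup_{b^*}$ yields the stated comparison. The main obstacle is Step~1: writing $\|W\|$ as a countable supremum of $1$-Lipschitz linear functionals of finite-dimensional Gaussians rests on the a.s.~separability of the support of $W$, which is where all the infinite-dimensional subtleties concentrate; once that is in hand, the remainder is a clean packaging of Gaussian isoperimetry.
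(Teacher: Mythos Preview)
The paper does not supply its own proof of this lemma; it is quoted verbatim from \cite{ghosal2017fundamentals} as a background fact, with no argument given. So there is nothing in the paper to compare your sketch against, only your sketch to assess.

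Your overall route---reduce to a separable support, write $\|W\|$ as a countable supremum of unit linear functionals, apply Gaussian concentration for Lipschitz functions of i.i.d.\ normals at each finite stage, and pass to the limit---is the standard one and is sound in outline. However, the Lipschitz step contains a genuine error. You bound the Lipschitz constant of $z\mapsto\max_k(\Sigma_N^{1/2}z)_k$ by $\|\Sigma_N^{1/2}\|_{\mathrm{op}}=\sqrt{\|\Sigma_N\|_{\mathrm{op}}}$ and then claim $\|\Sigma_N\|_{\mathrm{op}}\le\sigma^2(W)$ because ``$\sum_k\alpha_k b_k^*$ has $\mb{B}_0^*$-norm at most $1$ whenever $\|\alpha\|_2=1$.'' That last assertion is false: take $b_1^*=b_2^*$ equal to a single unit functional and $\alpha=(2^{-1/2},2^{-1/2})$ to get norm $\sqrt{2}$. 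Correspondingly, $\|\Sigma_N\|_{\mathrm{op}}$ can strictly exceed $\max_k(\Sigma_N)_{kk}$ and hence $\sigma^2(W)$ (the $2\times 2$ all-ones covariance has diagonal $1$ but operator norm $2$), so your chain of inequalities breaks.

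The repair is to compute the Lipschitz constant correctly: for $G(z)=\max_k(Az)_k$ one has $|G(z)-G(z')|\le\max_k|a_k\cdot(z-z')|\le(\max_k\|a_k\|_2)\|z-z'\|_2$, where $a_k$ is the $k$-th row of $A$. With $A=\Sigma_N^{1/2}$ this gives Lipschitz constant $\max_k\sqrt{(\Sigma_N)_{kk}}=\max_k\sqrt{\EE\,b_k^*(W)^2}\le\sigma(W)$ immediately, with no need to control $\|\sum_k\alpha_kb_k^*\|$. A smaller point: your median argument yields only $\sigma(W)\le m/\Phi^{-1}(3/4)\approx 1.48\,m$, not $\sigma(W)<m$ as stated in the lemma; the sharp comparison needs a different justification than the one you sketch.
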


In the following, $C^\beta$ denotes the H\"older space of order $\beta$ on $\cX$.
\begin{example}[Mat\'ern processes]\label{ex:matern-gp}
Let $k_x$ be a Mat\'ern-$\alpha$ kernel, $\cX$ be as in Example~\ref{ex:matern}, $\underline\alpha<\alpha$ be any positive number. Then 
there exists a modification of $\mc{GP}(0, k_x)$ which always has finite $C^{\underline\alpha}$ norm \citep[p.~2104]{van_der_vaart_information_2011}.
\end{example}

\paragraph{Miscellaneous Results}
\begin{definition}[entropy number]\label{defn:entropy-number}
Let $H,J$ be Banach spaces, $A\subset J$ be a bounded set. For all $i\in\mb{N}$, the $i$-th entropy number is defined as 
$$
e_i(A, J) = \inf \{\epsilon>0: N(A, \|\cdot\|_J, \epsilon) \le 2^i \}, 
$$
where $N$ denotes the covering number. Further, let $T: H\to J$ be any bounded linear operator. Then the $i$-th entropy number of the operator $T$ is defined using the image of the unit-norm ball $H_1$ under $T$:
$$
e_i(T) = e_i(T(H_1), J).
$$
\end{definition}

The following singular value inequality will be frequently used, both to $s_j(AB)$ and the $j$-th largest eigenvalue $\lambda_j(ABB^\top A^\top) = s_j(AB)^2$:
\begin{lemma}[\citealp{bhatia2013matrix}, Problem III.6.2]
Let $A,B$ be any two operators, $\|\cdot\|$ denote the operator norm, and $s_j$ denote the $j$-th largest singular value. 
Then 
\begin{equation}\label{eq:trivial-sv-ineq}
s_j(AB) \le \min\{\|B\|s_j(A), \|A\|s_j(B)\}.
\end{equation}
\end{lemma}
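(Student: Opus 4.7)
The plan is to reduce \eqref{eq:trivial-sv-ineq} to the Eckart--Young--Mirsky characterization of singular values. Recall that for any bounded linear operator $T$ between Hilbert spaces, the $j$-th singular value admits the variational identity
$$s_j(T) = \inf\{\|T - F\|_{\mathrm{op}} : F \text{ is a bounded linear operator with } \mathrm{rank}(F) < j\}.$$
Both halves of \eqref{eq:trivial-sv-ineq} then fall out by choosing rank-$(j-1)$ approximants of a form compatible with the product structure, together with submultiplicativity of the operator norm.

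First I would establish $s_j(AB) \le \|A\|\, s_j(B)$. Let $F'$ be any bounded operator on the domain of $B$ with $\mathrm{rank}(F') < j$. Since $\mathrm{rank}(AF') \le \mathrm{rank}(F') < j$, the operator $AF'$ is an admissible competitor for $AB$ in the Eckart--Young infimum, so
$$s_j(AB) \;\le\; \|AB - AF'\|_{\mathrm{op}} \;=\; \|A(B - F')\|_{\mathrm{op}} \;\le\; \|A\|_{\mathrm{op}}\, \|B - F'\|_{\mathrm{op}}.$$
Taking the infimum over all such $F'$ gives the desired bound.

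Next I would prove $s_j(AB) \le \|B\|\, s_j(A)$ by the mirror argument: for any bounded $F'$ acting on the codomain of $B$ with $\mathrm{rank}(F') < j$, the product $F'B$ satisfies $\mathrm{rank}(F'B) \le \mathrm{rank}(F') < j$, hence
$$s_j(AB) \;\le\; \|AB - F'B\|_{\mathrm{op}} \;=\; \|(A - F')B\|_{\mathrm{op}} \;\le\; \|A - F'\|_{\mathrm{op}}\, \|B\|_{\mathrm{op}},$$
and an infimum over $F'$ yields $s_j(AB) \le s_j(A)\,\|B\|$. Taking the minimum of the two bounds produces \eqref{eq:trivial-sv-ineq}.

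The hard part, such as it is, will be purely definitional rather than mathematical: one must fix a notion of singular values applicable to general bounded (possibly non-compact) operators on Hilbert spaces, and the Eckart--Young--Mirsky infimum is the standard choice that makes the argument seamless. Once that convention is in place, both inequalities are essentially one-line consequences of the fact that composition with a fixed operator does not increase rank and cannot inflate the operator norm beyond the obvious submultiplicative factor.
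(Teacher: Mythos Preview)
Your argument via the approximation-number characterization is correct and entirely standard. Note, however, that the paper does not supply its own proof of this lemma: it simply cites it as Problem~III.6.2 in \cite{bhatia2013matrix} and uses the inequality as a black box throughout. So there is no ``paper's proof'' to compare against; your Eckart--Young--Mirsky route is one of the canonical ways to establish the result, and would be a perfectly acceptable solution to the cited exercise.
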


\section{Deferred Proofs: Function Spaces}\label{app:proof-sec-iv-to-kl}

\newcommand{\Ereg}{E_r}

\begin{remark}[versions of $E$]\label{rmk:regular-cond-exp}
Conditional expectations are only defined up to $P(dz)$-null sets. 
As $\bx$ is supported on a bounded open subset of $\RR^{d_x}$, there exists a regular conditional probability $\mu$, which defines a version of conditional expectation \citep{kallenberg1997foundations}
$$
\text{for all square integrable } f,~~
\EE(f(\bx)\mid \bz) = \int \mu(dx,\bz) f(x) ~ a.s.~[P(dz)].
$$
Throughout the work, we work with the above version of conditional expectation.\footnote{The choice of $\mu$ is only unique up to a $P(dz)$-null set; we fix an arbitrary version to define $E_r$. What matters to us is the fact that $E_r$ is defined with a regular conditional probability, so that \eqref{eq:E-bounded-sup-norm} always holds.
} It represents a linear operator between spaces of functions, denoted as
$$
(\Ereg f)(z) := \int \mu(dx, z) f(x). 
$$
As $\mu(\cdot,z)$ is a probability measure for all $z\in\cZ$, we now have 
\begin{equation}\label{eq:E-bounded-sup-norm}
\|\Ereg f\|_\infty \le \|f\|_\infty.
\end{equation}
\end{remark}

Our focus in this work is in estimation; thus, in the main text and other sections of the appendix, we will abuse notation, and use $E$ to also refer to $\Ereg$ for readability. In this section, however, we make the distinction clear for full clarity. 

The following claim is well-known. Note that by requiring $[f]_\sim$ to be a Gaussian measure in $L_2(P(dx))$, we are requiring our Gaussian process to possess a possibly richer $\sigma$-algebra than e.g., the version returned by the Kolmogorov extension theorem. However, they will induce the same marginal distributions for $f(X)$, and the resulted estimators. See Definition~\ref{defn:gaussian-measure}, and e.g.~\citet{van2008reproducing,stuart2010inverse} for an accessible review of related issues.
\begin{claim}\label{lem:gp-gm}
Let $\cX$ be a bounded subset of $\RR^d$, $k_x$ be a reproducing kernel on $\cX$, s.t.~$\EE_{P(dx)} k_x(x,x)<\infty$. 
Let $[f]_\sim \sim N(0, C)$ be a Gaussian random element on $L_2(P(dx))$ (Definition~\ref{defn:gaussian-measure}), s.t.~the marginal distributions of $f$ match 
$\mc{GP}(0,k_x)$ s.t.~$[f]_\sim$. Then $C$ equals the integral operator of $k_x$. 
\end{claim}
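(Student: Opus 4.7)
The strategy is to show that $C$ and $T_x$ induce the same bilinear form on $L_2(P(dx))$, and then conclude they are equal as bounded self-adjoint operators. Recall from Definition~\ref{defn:gaussian-measure} that for $[f]_\sim\sim N(0,C)$ the covariance operator $C$ is characterized by
$$
\langle g,Ch\rangle_2 \;=\; \EE\bigl[\langle g,[f]_\sim\rangle_2 \, \langle h,[f]_\sim\rangle_2\bigr] \qquad \forall g,h\in L_2(P(dx)).
$$
On the other hand, Fubini gives $\langle g,T_x h\rangle_2=\int\!\!\int g(x)\,k_x(x,x')\,h(x')\,P(dx)P(dx')$. So it suffices to establish
$$
\EE\bigl[\langle g,f\rangle_2 \langle h,f\rangle_2\bigr] \;=\; \int\!\!\int g(x)h(x')\,k_x(x,x')\,P(dx)\,P(dx').
$$

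\textbf{Applying Fubini.} The computation proceeds formally by writing $\langle g,f\rangle_2\langle h,f\rangle_2=\int\!\!\int g(x)h(x')f(x)f(x')P(dx)P(dx')$ and interchanging the expectation with the double integral. To justify this swap I would first fix a jointly measurable version of $(\omega,x)\mapsto f(\omega,x)$; such a representative exists because by hypothesis $[f]_\sim$ is a Borel-measurable $L_2$-valued random element. Next, I bound the absolute value of the integrand using $\EE|f(x)f(x')|\le \sqrt{k_x(x,x)k_x(x',x')}$ (Cauchy--Schwarz applied to the jointly Gaussian pair $(f(x),f(x'))$, whose marginal variances are $k_x(x,x)$ and $k_x(x',x')$ by the marginal-matching assumption), and estimate
$$
\int\!\!\int |g(x)h(x')|\sqrt{k_x(x,x)k_x(x',x')}\,P(dx)P(dx') \;\le\; \|g\|_2\|h\|_2\,\EE_{P(dx)} k_x(x,x) \;<\;\infty,
$$
invoking the standing assumption $\EE_{P(dx)} k_x(x,x)<\infty$. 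This legitimizes the exchange.

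\textbf{Identifying $C$ with $T_x$.} After commuting, I use that by hypothesis the finite-dimensional marginals of $f$ are those of $\mc{GP}(0,k_x)$, so $\EE[f(x)f(x')]=k_x(x,x')$ for every pair $(x,x')$. Substituting this back yields exactly $\langle g, T_x h\rangle_2$, and since $g,h\in L_2(P(dx))$ are arbitrary we conclude $C=T_x$ (Lemma~\ref{lem:mercer} already guarantees that $T_x$ is a well-defined trace-class operator on $L_2(P(dx))$ under the stated integrability, so the identification is between two bounded operators).

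\textbf{Main obstacle.} The calculation itself is routine once Fubini is in hand; the only delicate point is measurability of the joint map $(\omega,x)\mapsto f(\omega,x)$. Treating $f$ pointwise in $x$ a priori yields equality $f(x)=[f]_\sim(x)$ only up to a $\PP$-null set that depends on $x$, and splicing these null sets consistently across $x$ requires using the $L_2$-measurable representative guaranteed by the hypothesis. With such a representative fixed once and for all, the Cauchy--Schwarz bound above licenses all integral manipulations and the argument goes through cleanly.
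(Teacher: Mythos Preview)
Your argument is correct and complete: identifying the covariance bilinear form via Fubini and $\EE[f(x)f(x')]=k_x(x,x')$ is exactly the standard computation, and your integrability bound and handling of the joint-measurability subtlety are fine. The paper does not give a proof at all but simply cites \cite[p.~538]{stuart2010inverse}; you have essentially written out the argument that reference contains.
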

\begin{proof}
\citet[p.~538]{stuart2010inverse}. 
\end{proof}
Note that for our $k_x$, the integral operator $T_x=\iota_x\iota_x^\top$ (Lemma~\ref{lem:mercer}). 

\begin{proof}[Proof for Lemma~\ref{lem:iv-to-kl-1}]
By definition of Gaussian measure (\ref{defn:gaussian-measure}) and Claim~\ref{lem:gp-gm}, we have $E[f]_\sim\sim N(0, E T_x E^\top)$, so it suffices to construct a $k_z$ with integral operator $E T_x E^\top$. 

By Lemma~\ref{lem:mercer} (i) and
boundedness of $E: L_2(P(dx))\to L_2(P(dz))$, the operator $E\iota_x: \cH\to L_2(P(dz))$ is Hilbert-Schmidt.
Thus, the operator $E\iota_x\iota_x^\top E^\top$ is trace-class,
and we can invoke \citet[Theorem 3.10]{steinwart_mercers_2012}, which shows the existence of an RKHS $\cI$ with a measurable reproducing kernel $k_0$, such that
\begin{enumerate}[leftmargin=*,label=\roman*.]
   \item The integral operator of $k_0$ equals $E\iota_x\iota_x^\top E^\top$.
   \item For appropriate choices of $e^{z}_i$ s.t.~$[e^z_i]_\sim$ diagonalizes $E\iota_x\iota_x^\top E^\top$, $\{\sqrt{\lambda_i^z}e_i^z: i\in\mb{N},\lambda_i^z>0\}$ form an ONB of $\cI$.
   \item $k_0(z,z') = \sum_{i\in I}\lambda^z_i e^z_i(z)e^z_i(z'), ~~\EE_{P(dz)}\: k_0(z,z)<\infty$.
\end{enumerate}
Combining (i, iii) and Claim~\ref{lem:gp-gm} above completes the proof.
\end{proof}

Observe the statement (ii) above shows that, the RKHS $\cI$ satisfies
\begin{equation}\label{eq:i-space-defn}
\cI = \Big\{\sum_{i\in I} b_i \sqrt{\lambda_i^z}e_i^z: ~ (b_i)\in \ell_2(I)\Big\},\quad%
\Big\|\sum_{i\in I} b_i \sqrt{\lambda_i^z}e_i^z\Big\|_\cI = \|(b_i)\|_{\ell_2(I)},
\end{equation}
where $I\subset\mb{N}$ denotes an index set which is at most countable.

\begin{proof}[Proof for Lemma~\ref{lem:optimal-I}]
The operator $\iota_x^\top E^\top E\iota_x$ is also trace-class.
Let $\{(\lambda^z_i,e_i^z): i\in I\}$ be defined as in the proof of Lemma~\ref{lem:iv-to-kl-1}, so that $\{e_i^x := (\lambda^z_i)^{-1/2}\iota_x^\top E^\top [e_i^z]_\sim: i\in I\}\subset \cH$ diagonalizes $\iota_x^\top E^\top E\iota_x$.
Then for any $f\in\cH$, it holds that
\begin{align*}
\infty &> \sum_{i \in I} \<f,e_i^x \>_\cH^2 =
 \sum_{i\in I} (\lambda_i^z)^{-2}\<f, \iota_x^\top E^\top E\iota_x e_i^x\>_\cH^2
=
\sum_{i\in I} (\lambda_i^z)^{-1}\<E\iota_x f,  [e_i^z]_\sim\>_2^2.
\end{align*}
Comparing with \eqref{eq:i-space-defn}, we can see that
for any
function $g$ s.t.~$[g]_\sim = E\iota_x f$, the RHS shows that $g\in\cI$, and equals $\|g\|_\cI^2$.

Conversely, for any $g\in\cI$, the sequence $\{(\lambda_i^z)^{-1/2} \<[g]_\sim, [e_i^z]_\sim\>_2:i\in I\}$ must be in $\ell_2$. Additionally, $\{e_i^x\}_{i \in I}$ is an ONS of $\cH$, so the limit
$$
\sum_{i\in I} (\lambda_i^z)^{-1/2} \<[g]_\sim, [e_i^z]_\sim\>_2 e_i^x
=: f
$$
must exists in $\cH$, and $\| f \|_\cH = \|g \|_\cI$ holds by the above display.
Similarly, it holds that
\[ E [f]_\sim
= \sum_{i \in I}(\lambda_i^z)^{-1/2}\<[g]_\sim, [e_i^z]_\sim\>_2 E\iota_xe_i^x
= \sum_{i \in I}\<[g]_\sim, [e_i^z]_\sim\>_2  [e_i^z]_\sim
 = [g]_\sim. \]
This completes the proof.
\end{proof}

We prove the following claim, made in Example~\ref{ex:informative-latents}. 
\begin{claim}\label{claim:ex-fs}
\itI Let $k_z,k_z'$ satisfy Lemma~\ref{lem:iv-to-kl-1}. Then $\EE_{\bz,\bz'\sim P} (k_z(\bz,\bz')-k_z'(\bz,\bz'))^2=0$. 
\itII Let $\bar k_z, k_z$ be defined as in the example. Then the non-zero Mercer eigenvalues of $\bar k_z, k_z$ coincide.
\end{claim}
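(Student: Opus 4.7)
\emph{Part (i).} The strategy is to deduce equality of integral operators from equality of Gaussian laws, then push this down to a.e.\ equality of the kernels themselves. By hypothesis, for $g\sim\mc{GP}(0,k_z)$ and $g'\sim\mc{GP}(0,k_z')$, the laws of $[g]_\sim$ and $[g']_\sim$ in $L_2(P(dz))$ both coincide with the law of $E[f]_\sim$ for $f\sim\mc{GP}(0,k_x)$. Each $[g]_\sim$, $[g']_\sim$ is a Gaussian random element on the separable Hilbert space $L_2(P(dz))$, so applying Claim~\ref{lem:gp-gm} to both gives that their covariance operators coincide with $E T_x E^\top$. Writing these covariance operators as the integral operators of $k_z$ and $k_z'$ respectively (Lemma~\ref{lem:mercer}), we conclude $T_z = T_z'$ on $L_2(P(dz))$. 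It remains to pass from equality of integral operators to a.e.\ equality of kernels. The standard argument is: for any $f \in L_2(P(dz))$, $\int (k_z(z,z') - k_z'(z,z')) f(z)\,P(dz)=0$ for $P(dz')$-a.e.\ $z'$; taking $f$ over a countable $L_2$-dense set yields, for $P(dz')$-a.e.\ $z'$, that $k_z(\cdot,z') = k_z'(\cdot,z')$ in $L_2(P(dz))$; symmetry of kernels and Fubini then give $(k_z - k_z')(\bz,\bz')=0$ $P\otimes P$-a.s., which is the desired conclusion.

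\emph{Part (ii).} The strategy is to build an explicit isometric intertwining map and check it is a bijection on eigenspaces of non-zero eigenvalues. Define $U: L_2(P(d\bar z)) \to L_2(P(dz))$ by $U\bar h := \bar h \circ \Phi$; since $P(d\bar z) = \Phi_* P(dz)$ is the pushforward measure, $U$ is a well-defined isometry. A direct change of variables shows
\begin{equation*}
(T_z (U\bar h))(z) = \int \bar k_z(\Phi(z),\Phi(z')) \bar h(\Phi(z'))\, P(dz')
= \int \bar k_z(\Phi(z),\bar z')\bar h(\bar z')\,P(d\bar z') = (U T_{\bar z} \bar h)(z),
\end{equation*}
i.e.\ $T_z U = U T_{\bar z}$. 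Hence $U$ maps $\ker(T_{\bar z}-\lambda I)$ isometrically into $\ker(T_z - \lambda I)$ for every $\lambda$. For the reverse inclusion, restricted to $\lambda \neq 0$: if $\varphi \in \ker(T_z - \lambda I)$, then for a.e.\ $z$,
\begin{equation*}
\lambda \varphi(z) = \int \bar k_z(\Phi(z),\Phi(z'))\varphi(z')\,P(dz') = h(\Phi(z)),
\end{equation*}
where $h(\bar z) := \int \bar k_z(\bar z, \Phi(z'))\varphi(z')\,P(dz')$. Thus $\varphi = U(\lambda^{-1} h)$ a.e., and a short calculation (substituting $\varphi = (\lambda^{-1}h)\circ\Phi$ back into the definition of $h$) verifies that $\lambda^{-1}h \in \ker(T_{\bar z}-\lambda I)$. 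So $U$ restricts to a surjective isometry between the $\lambda$-eigenspaces of $T_{\bar z}$ and $T_z$, which gives equal geometric multiplicities for every $\lambda \neq 0$. Since $T_z, T_{\bar z}$ are compact self-adjoint and trace-class (Lemma~\ref{lem:mercer}), the list of non-zero Mercer eigenvalues (with multiplicity) coincides.

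\emph{Main obstacle.} Part (i) is essentially bookkeeping once Claim~\ref{lem:gp-gm} is invoked. The delicate step in part (ii) is the reverse inclusion: one must check that every $L_2$-eigenfunction of $T_z$ with non-zero eigenvalue is necessarily $\sigma(\Phi)$-measurable up to a null set. This is what the identity $\lambda\varphi = h\circ\Phi$ accomplishes, but one has to handle the $L_2$ equivalence-class ambiguity carefully so that the claimed $\bar\varphi := \lambda^{-1}h$ really lies in $L_2(P(d\bar z))$ and satisfies $U\bar\varphi = \varphi$.
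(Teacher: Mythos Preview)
Your proposal is correct and follows essentially the same approach as the paper. For part~(i), the paper is slightly more direct: since Lemma~\ref{lem:iv-to-kl-1} already asserts that any such $k_z$ has integral operator $ET_xE^\top$, there is no need to re-derive this from the equality of Gaussian laws via Claim~\ref{lem:gp-gm}; the paper then passes from equal integral operators to $L_2(P\otimes P)$-equal kernels in one stroke via the Hilbert--Schmidt isometry between square-integrable kernels and HS integral operators, whereas your countable-dense-set argument achieves the same end. For part~(ii), your argument is exactly the paper's (eigenfunctions of $T_{\bar z}$ push forward along $\Phi$ to eigenfunctions of $T_z$, and conversely any non-zero eigenfunction of $T_z$ must factor through $\Phi$), just spelled out in more detail than the paper's terse ``following the definitions we find\ldots''.
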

\begin{proof}
\itI Both kernels are $L_2(P(dz)\otimes P(dz))$ bounded; the claim thus follows from the isometry between $L_2$-bounded kernels and their (Hilbert-Schmidt) integral operators, and the fact that both kernels have the same integral operator. 
\itII Let $\{(\lambda_i,[e_i^{\bar z}]_\sim)\}$ denote the eigendecomposition of the integral operator $T_{\bar k}$. Following the definitions we find that $\{[e_i^{\bar z}\circ\Phi]_\sim\}$ are eigenfunctions of $T_z$, with the same eigenvalues; and it is not possible for $T_z$ to have additional non-zero eigenpairs. 
\end{proof}

\subsection{Further Regularity Properties of $\cI$}\label{app:regularity}

\paragraph{Eigendecay} Recall the proof of Lemma~\ref{lem:iv-to-kl-1} invokes \cite{steinwart_mercers_2012}, and leads to the following results:
\begin{claim}
The kernel $k_z$ is measurable, satisfies $\EE_{P(dz)} k_z(z,z)<\infty$, and has integral operator equal to $E\iota_x\iota_x^\top E^\top$. 
\end{claim}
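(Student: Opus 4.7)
The plan is straightforward: this claim is essentially a bookkeeping summary of properties already established during the proof of Lemma~\ref{lem:iv-to-kl-1}, so I will simply point back to that construction rather than redo any work. Recall that in that proof the kernel $k_z$ was produced by applying \cite[Theorem 3.10]{steinwart_mercers_2012} to the operator $E\iota_x\iota_x^\top E^\top$ on $L_2(P(dz))$. The three bulleted conclusions of the present claim correspond exactly to three of the conclusions of that cited theorem.

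First, I would re-record why the target operator meets the hypotheses of \cite[Theorem 3.10]{steinwart_mercers_2012}. By Lemma~\ref{lem:mercer}\itI applied to $\cH$, the inclusion $\iota_x: \cH\to L_2(P(dx))$ is Hilbert--Schmidt. Since $E: L_2(P(dx))\to L_2(P(dz))$ is a bounded linear operator (being a conditional expectation), the composition $E\iota_x$ is also Hilbert--Schmidt, and hence
\[
E\iota_x\iota_x^\top E^\top = (E\iota_x)(E\iota_x)^\top
\]
is positive, self-adjoint, and trace-class with $\mathrm{Tr}(E\iota_x\iota_x^\top E^\top) = \|E\iota_x\|_{\mathrm{HS}}^2 < \infty$. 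This step is exactly the one performed (more briefly) inside the proof of Lemma~\ref{lem:iv-to-kl-1}, so I would simply quote it.

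Second, I would read the three assertions off the cited theorem. Measurability of $k_z$ is explicitly part of \cite[Theorem 3.10]{steinwart_mercers_2012}; the identification of the integral operator as $E\iota_x\iota_x^\top E^\top$ is the defining property of the constructed kernel; and the pointwise Mercer representation
\[
k_z(z,z) = \sum_{i\in I} \lambda_i^z\, e_i^z(z)^2
\]
combined with monotone convergence and the trace-class bound above yields
\[
\EE_{P(dz)}\, k_z(\bz,\bz) \;=\; \sum_{i\in I}\lambda_i^z\, \|[e_i^z]_\sim\|_2^2 \;=\; \mathrm{Tr}\!\left(E\iota_x\iota_x^\top E^\top\right) \;<\; \infty.
\]

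I do not anticipate any real obstacle, since all the non-trivial work (verifying the operator is trace-class and invoking the Steinwart--Scovel representation theorem) has already been carried out in the proof of Lemma~\ref{lem:iv-to-kl-1}. The only mildly delicate point is making sure that the version of $k_z$ referenced here is literally the same one produced in that earlier proof; I would note explicitly that it is, and that any alternative kernel with the same integral operator would agree with it $P\otimes P$-a.e.\ (as already observed in Claim~\ref{claim:ex-fs}\itI), so there is no ambiguity in the statement.
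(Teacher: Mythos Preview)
Your proposal is correct and takes essentially the same approach as the paper: the paper's ``proof'' is a one-line pointer back to the invocation of \cite[Theorem~3.10]{steinwart_mercers_2012} inside the proof of Lemma~\ref{lem:iv-to-kl-1}, from which measurability, the integral-operator identification, and $\EE_{P(dz)}k_z(z,z)<\infty$ are read off directly. Your write-up is a bit more explicit (spelling out the trace computation via the Mercer series), but the content is identical.
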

The last statement bounds the decay of the Mercer eigenvalues: using Assumption~\ref{ass:s2},~\ref{ass:mildly-ill-posed}, and \eqref{eq:trivial-sv-ineq}, 
we immediately find $\lambda^z_i = \lambda_i(E\iota_x\iota_x^\top E^\top) \lesssim i^{-\max\{b+1,2p\}}.$ We further have the following:
\begin{claim}
Under Assumption~\ref{ass:qb-1}, it holds that $\lambda^z_i \lesssim i^{-(b+2p+1)}$.
\end{claim}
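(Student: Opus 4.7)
The plan is to reduce the eigendecay of $T_z$ to a singular-value bound on the composite operator $E\iota_x$, then invoke the classical Horn/Weyl product inequality for singular values of compact operators on Hilbert spaces. Concretely, I would first use the identity $T_z = E\iota_x\iota_x^\top E^\top = (E\iota_x)(E\iota_x)^\top$ established in the proof of Lemma~\ref{lem:iv-to-kl-1}, which immediately gives $\lambda_i^z = s_i(E\iota_x)^2$. So the task reduces to bounding $s_i(E\iota_x)$. Note that the ``trivial'' singular-value bound \eqref{eq:trivial-sv-ineq} already used in the previous claim only yields $s_i(E\iota_x) \lesssim i^{-\min\{(b+1)/2,\, p\}}$, and hence the weaker $\lambda_i^z \lesssim i^{-\max\{b+1,\,2p\}}$; to obtain the product-type rate $i^{-(b+2p+1)}$, one has to exploit a product-type singular value inequality.

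The key tool is Horn's inequality: for compact operators $A,B$ on Hilbert spaces and any $i,j \in \mathbb{N}$,
\[
s_{i+j-1}(AB) \le s_i(A)\, s_j(B).
\]
Specializing to $A = E$, $B = \iota_x$, and $i = j$, I obtain $s_{2i-1}(E\iota_x) \le s_i(E)\,s_i(\iota_x)$. Both factors are controlled by ambient assumptions: Assumption~\ref{ass:mildly-ill-posed} gives $s_i(E) \asymp i^{-p}$, while Assumption~\ref{ass:s2}(ii) together with Lemma~\ref{lem:mercer} gives $s_i(\iota_x) = \sqrt{\lambda_i(\iota_x\iota_x^\top)} = \sqrt{\lambda_i(T_x)} \lesssim i^{-(b+1)/2}$. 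Multiplying yields $s_{2i-1}(E\iota_x) \lesssim i^{-(b+2p+1)/2}$, and squaring gives $\lambda_{2i-1}^z \lesssim i^{-(b+2p+1)}$. A short monotonicity argument then extends the bound to all indices: $\lambda_{2i}^z \le \lambda_{2i-1}^z \lesssim i^{-(b+2p+1)}$, and absorbing a factor $2^{b+2p+1}$ into the implicit constant yields $\lambda_i^z \lesssim i^{-(b+2p+1)}$ uniformly in $i$.

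The main subtlety I anticipate is clarifying the role of Assumption~\ref{ass:qb-1}, since Horn's inequality is unconditional and the calculation above nominally goes through without it. I would interpret Assumption~\ref{ass:qb-1} as providing the compatibility between $E$ and $\cH$ (e.g., a source-type condition or an alignment between $\iota_x(\cH)$ and the top singular directions of $E$) that guarantees the Horn bound is actually attained in the IV context, rather than being artificially loose. If Assumption~\ref{ass:qb-1} is instead formulated as $\cH \subseteq \mathrm{range}((E^\top E)^s)$ for a suitable $s$, the cleaner route is an eigenbasis decomposition: one expands $f \in \cH$ in the eigenfunctions of $E^\top E$, notes that the $i$-th RKHS-coefficient is dampened by $i^{-ps}$ beyond the usual $i^{-(b+1)/2}$, and reads off the combined decay of $s_i(E\iota_x)^2$ directly. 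Either route leads to the exponent $b + 2p + 1 = (b+1) + 2p$, which is exactly the sum of the $T_x$- and $E^\top E$-eigendecay exponents — the hallmark signature of a composition of two compact operators whose singular directions line up.
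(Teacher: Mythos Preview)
Your Horn-inequality argument is correct and constitutes a genuinely different proof from the paper's. The paper instead applies the min-max characterization $\lambda_i(\iota_x^\top E^\top E\iota_x) = \inf_{\dim V = i-1}\sup_{e\perp V,\,\|e\|_\cH=1}\|E\iota_x e\|_2^2$, takes $V$ to be the span of the first $i{-}1$ Mercer basis elements $\{\sqrt{\lambda_j}\psi_j\}_{j<i}$ of $\cH$, and then invokes the upper half of the link condition (Assumption~\ref{ass:qb-1}) to bound $\|E\iota_x e\|_2^2 \lesssim \sum_{j\ge i} j^{-2p}\lambda_j\, a_j^2 \lesssim i^{-(b+2p+1)}$ for every unit $e = \sum_{j\ge i}a_j\sqrt{\lambda_j}\psi_j \perp V$.

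What your route buys is precisely what puzzled you: under the ambient Assumptions~\ref{ass:s2}(ii) and~\ref{ass:mildly-ill-posed} already in scope, Horn's product inequality delivers the bound \emph{without} the link condition, so Assumption~\ref{ass:qb-1} is in fact redundant for this particular upper bound. Your speculation that the assumption is there to ``guarantee the Horn bound is actually attained'' is therefore off target: Horn is a pure upper bound and needs no alignment hypothesis. The paper lists Assumption~\ref{ass:qb-1} only because its min-max argument requires knowing how $E$ acts on the Mercer basis of $\cH$, which is exactly what the link condition supplies; your approach sidesteps that need entirely. Conversely, the paper's route is slightly more elementary (Courant--Fischer plus a diagonal quadratic-form bound, rather than the full product singular-value inequality for compact operators), and it makes the exponent $(b{+}1)+2p$ visibly arise from alignment between $E$ and the Mercer basis --- information that is exploited elsewhere in the analysis, e.g.\ in Theorem~\ref{prop:qb-regime}.
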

\begin{proof}
Let $\{[\psi_i]_\sim:i\in\mb{N}\}$ be the Mercer eigenfunctions of $\cH$, s.t.~$\{\sqrt{\lambda_i(\iota_x\iota_x^\top)}\psi_i:i\in\mb{N}\}$ form an ONB of $\cH$ (Lemma~\ref{lem:mercer}). 
By the min-max theorem for eigenvalues \citep[e.g.,][Theorem 3.2.4]{simon2015operator}, 
\begin{align*}
\lambda_i(E\iota_x\iota_x^\top E^\top) = 
\lambda_i(\iota_x^\top E^\top E\iota_x) 
&= \inf_{V\subset \cH,\mrm{dim} V=i-1}\sup_{e\perp V,\|e\|_\cH=1} e^\top \iota_x^\top E^\top E\iota_x e
\\ & 
\le \|E\iota_x (\sqrt{\lambda_i(\iota_x\iota_x^\top)}\psi_i)\|_2^2 \lesssim i^{-(b+1)} \|E[\psi_i]_\sim\|_2^2 \lesssim i^{-(b+2p+1)}.
\end{align*}
The last inequality follows by the link condition.
\end{proof}

\paragraph{Bounded Kernel and GP Prior Draws} To establish boundedness of the kernel $k_z$ and (a version of) $g\sim\mc{GP}(0,k_z)$, we need the following additional assumptions:
\begin{enumerate}[label=(A.\Roman*), leftmargin=*]
    \item\label{it:A1} A version of $\mc{GP}(0,k_x)$ takes value on a separable subspace $\mb{B}\subset L_\infty(P(dx))$.
    \item\label{it:A2} The operator $E_r$ (Remark~\ref{rmk:regular-cond-exp}) maps $\mb{B}$ to a space of continuous, bounded functions on $\cZ$. 
\end{enumerate}
Note \ref{it:A1} will hold given 
our Assumption~\ref{ass:qb-n}, in which case we can take the subspace as a power RKHS $\cH^\alpha$; see \citet{steinwart_convergence_2019}. As discussed around that assumption, %
for some valid choice of $\alpha$, $\cH^\alpha$ should match the regularity of the second-stage RKHS assumed in previous work on kernelized IV, so such a boundedness assumption matches previous work. 

\ref{it:A2} will hold if we assume $E_r$ maps $f\in\cH^\alpha$ to another RKHS over $\cZ$, with a continuous, bounded reproducing kernel. Such an RKHS is often assumed in previous work; note that it does not have have the optimal regularity. 
Alternatively, the assumption can also be fulfilled by the assumption that $P(dx\times dz)$ have a continuous Lebesgue density and the marginal density $p(z)$ does not vanish. %

We now establish the following lemma. It shows the $\cI$ defined in Sec.~\ref{sec:iv-to-klearn} fulfills the conditions in Asm.~\ref{ass:s1-rkhs}. It also shows that by defining $k_z$ with $E_r$ as below, we can remove the null set indeterminancies in Sec.~\ref{sec:iv-to-klearn}: all possible $k_z$'s have the same integral operator (\ref{it:ikf-existence}) and are thus equivalent up to null sets (Claim~\ref{claim:ex-fs} {\em(i)}), yet they are shown to be continuous (\ref{it:ikf-continuity}).
\begin{lemma}[bounded kernel and GP draws]\label{lem:iv-to-kl-further}
Let $f$ be a Gaussian measure  with marginal distributions matching $\mc{GP}(0,k_x)$. 
Let $E_r$ be defined in Remark~\ref{rmk:regular-cond-exp}. Then under \ref{it:A1}, 
\begin{enumerate}[leftmargin=*,label=\roman*.]
    \item\label{it:ikf-existence} There exists a kernel $k_z$, s.t.~$E_r f\sim\mc{GP}(0,k_z)$, with integral operator equaling $E T_x E^\top$. 
    \item\label{it:ikf-bounded} $k_z$ is bounded, and there exists a version of $g\sim\mc{GP}(0,k_z)$ which always has a finite sup norm.
    \item\label{it:ikf-continuity} If additionally \ref{it:A2} holds, $k_z$ will be continuous, and its RKHS $\cI$ will also satisfy Lemma~\ref{lem:optimal-I}. 
\end{enumerate}
\end{lemma}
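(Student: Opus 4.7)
The plan is to construct $k_z$ as the pointwise covariance of the process $E_r f$ obtained by applying the regular conditional expectation $E_r$ to the $\mb{B}$-valued version of $f$ supplied by \ref{it:A1}, and then to read off all three conclusions from that single construction. Concretely, set
$$
k_z(z, z') := \EE_f[(E_r f)(z)(E_r f)(z')].
$$
Before anything else I check that this is well-defined and bounded: \eqref{eq:E-bounded-sup-norm} gives $|(E_r f)(z)| \le \|f\|_\infty$ pointwise in $z$, and since $f$ is Gaussian on the separable Banach space $\mb{B}\subset L_\infty$, Borell--TIS (Lemma~\ref{lem:borell-tis}) yields $\EE\|f\|_\infty^2<\infty$. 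Hence $|k_z(z,z')| \le \EE\|f\|_\infty^2$ uniformly, which already settles the boundedness half of \ref{it:ikf-bounded}.

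For \ref{it:ikf-existence}, I argue that $\{(E_r f)(z)\}_{z\in\cZ}$ is a centered GP with covariance $k_z$. For each fixed $z$, the map $f\mapsto (E_r f)(z) = \int\mu(dx,z) f(x)$ is a bounded linear functional on $\mb{B}\subset L_\infty$ (since $\mu(\cdot,z)$ is a probability measure), hence Gaussian under the Gaussian measure on $\mb{B}$; finite linear combinations across different $z$'s are jointly Gaussian by linearity, and the covariance is $k_z$ by construction. This simultaneously supplies the a.s.\ finite sup-norm version required in \ref{it:ikf-bounded}: $\|E_r f\|_\infty\le\|f\|_\infty<\infty$ almost surely. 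To identify the integral operator, I invoke Lemma~\ref{lem:iv-to-kl-1} to conclude $[E_r f]_\sim \sim N(0, E T_x E^\top)$, so that for every $g\in L_2(P(dz))$,
$$
\<g, E T_x E^\top g\>_2 = \EE\<E_r f, g\>_2^2 = \int\!\!\int \EE[(E_r f)(z)(E_r f)(z')] g(z) g(z') P(dz) P(dz') = \<g, T_{k_z} g\>_2,
$$
where Fubini is justified by the integrable bound $\|f\|_\infty^2\|g\|_1^2$. Polarization then gives $T_{k_z} = E T_x E^\top$, completing \ref{it:ikf-existence}.

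For \ref{it:ikf-continuity}, continuity of $k_z$ is a dominated convergence argument: under \ref{it:A2}, $E_r f$ is continuous on $\cZ$ for every $f\in\mb{B}$, so if $(z_n, z_n')\to(z, z')$ then $(E_r f)(z_n)(E_r f)(z_n') \to (E_r f)(z)(E_r f)(z')$ pointwise in $f$, with uniform dominator $\|f\|_\infty^2\in L^1$; passing to expectations gives $k_z(z_n,z_n')\to k_z(z,z')$. The final clause, that $\cI$ satisfies Lemma~\ref{lem:optimal-I}, is then automatic, since the proof of that lemma relies only on the reproducing kernel having integral operator $E T_x E^\top$, which was already verified in \ref{it:ikf-existence}.

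The main obstacle is conceptual rather than technical: bridging the purely $L_2$-level identifications of Lemma~\ref{lem:iv-to-kl-1}, which pin $k_z$ down only up to null sets, to a genuinely pointwise construction. This is exactly why \ref{it:A1} is needed; one has to fix a version of $f$ supported in a nice function space up front and then \emph{define} $k_z$ via that version, with the integral operator computation reconciling the construction with Lemma~\ref{lem:iv-to-kl-1} \emph{a posteriori}. Without \ref{it:A1} one cannot make $(E_r f)(z)$ simultaneously finite for every $z$ and every $\omega$ in a fixed probability-one set, so the pointwise covariance formula defining $k_z$ would not even make sense.
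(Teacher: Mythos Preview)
Your construction for parts \ref{it:ikf-existence} and \ref{it:ikf-bounded} is correct and essentially the same as the paper's: both build $k_z$ as the covariance of the pushforward process $E_r f$, bound it via $\|E_r f\|_\infty\le\|f\|_\infty$, and identify the integral operator by passing to $L_2$. Your dominated-convergence argument for continuity in \ref{it:ikf-continuity} is a clean alternative to the paper's route (which instead cites the fact that a GP with continuous sample paths has a continuous reproducing kernel).

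There is, however, a genuine gap in the last sentence. The proof of Lemma~\ref{lem:optimal-I} does \emph{not} rely only on the integral operator being $E T_x E^\top$; it relies on the explicit description \eqref{eq:i-space-defn}, namely that $\{\sqrt{\lambda_i^z}e_i^z\}$ is an orthonormal \emph{basis} (not just an orthonormal system) of $\cI$. For the kernel $k_0$ constructed in Lemma~\ref{lem:iv-to-kl-1} via \citet[Theorem 3.10]{steinwart_mercers_2012}, this ONB property was part of the construction. For your $k_z$, Lemma~\ref{lem:mercer} only guarantees an ONS a priori; if the ONS were strict, one could have $0\ne g\in\cI$ orthogonal to all $e_i^z$, which forces $[g]_\sim=0$, and then the norm identity $\|f\|_\cH=\|g\|_\cI$ in Lemma~\ref{lem:optimal-I}\,(ii) fails for any $f$ with $E[f]_\sim=0$. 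The paper closes this gap precisely here: once $k_z$ is bounded and continuous, the inclusion $\iota_z:\cI\to L_2(P(dz))$ is injective \citep[Exercise 4.6]{steinwart2008support}, and then \citet[Theorem 3.1]{steinwart_mercers_2012} upgrades the Mercer ONS to an ONB, after which the original proof of Lemma~\ref{lem:optimal-I} goes through verbatim. You should add this step rather than assert it is automatic.
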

\begin{proof}
\ref{it:ikf-bounded}: Observe that by \eqref{eq:E-bounded-sup-norm}, for all $z_0\in\cZ$, the linear functional $e_{z_0}: f\mapsto (E_r f)(z_0)$ is bounded on $L_\infty$. As $k_x$ is a bounded kernel, we have $\|\cdot\|_\cH\ge (\sup_{x\in\cX} k_x(x,x))^{-1}\|\cdot\|_\infty$; thus, $e_{z_0}$ is bounded on $\cH$, and its Riesz representer $h_{z_0}\in\cH$ has norm 
$\|h_{z_0}\|_\cH \le \sup_{x\in\cX} k_x(x,x) =: \sigma_x$. Moreover, for any $\{z_1, z_1, \ldots, z_m\}\subset \cZ$ and $a\in \RR^m$, the linear map 
$
f\mapsto \sum_{j=1}^m a_j e_{z_j}(f)
$
is also bounded on $L_\infty$, and thus $\cH$; and its representer $h_{\{z_j, a_j\}}$ has $\cH$-norm bounded by $\|a\|_2 \sigma_x$. 
By our assumptions on $f\sim\mc{GP}(0,k_x)$, we can invoke 
\citet[Definition 11.12-11.13, Lemma 11.14]{ghosal2017fundamentals} which show that, for all $m,\{z_i\}\in \cZ^m,a\in\RR^m$ and $f\sim\mc{GP}(0,k_x)$,
$$
\sum_{j=1}^m a_j (E_r f)(z_j)\sim \cN(0, \|h_{\{z_j,a_j\}}\|_\cH^2),\quad\text{where}~ \|h_{\{z_j,a_j\}}\|_\cH\le \|a\|_2\sigma_x,
$$ 
meaning that $E_r f$ distributes as a GP. Its reproducing kernel \citep[Definition 11.12]{ghosal2017fundamentals} $k_z$ satisfies 
\begin{equation}\label{eq:bounded-kernel}
    \sup_{z\in\cZ} k_z(z,z) \le \sigma_x.
\end{equation}
We also have 
\begin{equation}\label{eq:bounded-gp-draw}
    \|E_r f\|_\infty \overset{\eqref{eq:E-bounded-sup-norm}}{\le} \|f\|_\infty <\infty.
\end{equation}
As $E_r f$ is a version of $\mc{GP}(0,k_z)$, \eqref{eq:bounded-kernel} and \eqref{eq:bounded-gp-draw} prove the second claim.

\ref{it:ikf-existence}: Let $k_z$ be defined as above, 
$T_x$ denote the integral operator of $k_x$. We claim 
$k_z$ has integral operator $E T_x E^\top$: this is because
by Claim~\ref{lem:gp-gm} applied to $f\sim\mc{GP}(0,k_x)$, we have $[f]_\sim\sim N(0, T_x)$; 
moreover, we have $E_r f \sim \mc{GP}(0,k_z)$, and %
$[E_r f]_\sim = E[f]_\sim\sim N(0, E T_x E^\top)$ by definition of Gaussian measure, and the boundedness of $E$. %
Thus, by Claim~\ref{lem:gp-gm} applied to $k_z$, its integral operator is $E T_x E^\top$. 

\ref{it:ikf-continuity}: The continuity of $k_z$ follows from \ref{it:A2}, and the fact that continuous GP samples must correspond to an RKHS with a continuous kernel \citep[Example 8.1]{van2008reproducing}. Now it remains to re-establish Lemma~\ref{lem:optimal-I}. 

Following the proofs for Lemma~\ref{lem:iv-to-kl-1}, \ref{lem:optimal-I}, 
let $\{(\lambda_i^z, [e_i^z]_\sim): i\in I\}$ be a set of eigenfunctions for $E T_x E^\top$. By Lemma~\ref{lem:mercer}, $\{[e_i^z]_\sim: i\in I\}$ then determine an ONS $\{\sqrt{\lambda_i^z}e_i^z: i\in I\}$ for $\cI$. It suffices to show this is an ONB, after which we can follow the proof of Lemma~\ref{lem:optimal-I}. But as $k_z$ is bounded and continuous, the inclusion operator $\iota_z:\cI\to L_2(P(dz))$ is now injective \citep[Exercise 4.6]{steinwart2008support}; thus, by \citet[Theorem 3.1]{steinwart_mercers_2012}, $\{\sqrt{\lambda_i^z}e_i^z: i\in I\}$ is an ONB. This completes the proof.
\end{proof}

\section{Deferred Proofs: Kernel Learning}

\subsection{Notations and Preliminary Observations}\label{app:proof-common}
Let $m'=[m/2]$, $\Proj{m'}{(\cdot)}$ denote the projection onto the top $m'$ Mercer basis $\psi_1,\ldots,\psi_{m'}$, and the respective Mercer eigenvalues be $\lambda_i$. Then there exists i.i.d.~normal rvs $\bar e_{i j}$ s.t.
$$
\begin{pmatrix}
    \Proj{m'}{\gpi{1}} \\ 
    \ldots, \\
    \Proj{m'}{\gpi{m}}
\end{pmatrix} = 
\begin{pmatrix}
    \bar e_{1 1} & \ldots & \bar e_{1 m'} \\ 
    \ldots & \ldots & \ldots \\ 
    \bar e_{m 1} & \ldots & \bar e_{m m'}
\end{pmatrix} 
\begin{pmatrix}
    \sqrt{\lambda_1} \psi_1 \\ 
    \ldots \\ 
    \sqrt{\lambda_{m'}} \psi_{m'}
\end{pmatrix}.
$$
Denote the $m\times m^\prime$ matrix as $\Xi$. 
Introduce the notation $\hat G := (\esti{1};\ldots;\esti{m})$,
and $\bar G, \Psi$ so that we can write the above as 
$$
\hat G + (\bar G - \hat G) = \Xi \Psi.
$$ 
\textbf{Note our slight abuse of notation}: throughout the proof, we will use $\hat G$ to refer to both the vector-valued function as in the main text, and a ``column of $m$ functions'', i.e., a linear map from $L_2(P(dx))$ (or other suitable function spaces) to $\RR^{m}$. We define the norm $$
\|\hat G\|_2^2 := \sum_{i=1}^m \|\esti{i}\|_2^2 = \int \sum_{i=1}^m(\esti{i}(z))^2 P(dz),
$$ and similarly for $\bar G,\Psi$; and use notations such as $\hat G_j$ to refer to the $j$-th row of this ``column vector of functions'', so e.g., $\hat G_j$ refers to $\esti{j}$. 

As $\Xi$ is a $m\times m'$ Gaussian random matrix, where $m' = [m/2]$, we have the high-probability singular value bounds 
\begin{equation}\label{eq:singular-value-raw}
c''\sqrt{m} \le s_{min}(\Xi) \le s_{max}(\Xi)=s_{max}(\Xi^\top)\le c'\sqrt{m},
\end{equation}
where $c'>c''>0$ are universal constants \citep{edelman1988eigenvalues}.\SkipNOTE{
    The smallest eigenvalue bound for $\Xi^\top \Xi$ is at the beginning of Ch6. The largest singular value bound can be derived from the $O(\sqrt{n})$ bound for the $n\times n$ matrix, among other ways. 
} Thus, 
\begin{equation}\label{eq:singular-value}
s_{max}((\Xi^\top \Xi)^{-1}\Xi^\top) \le (c'')^{-2}c' m^{-1/2} =: c_r m^{-1/2}.
\end{equation} 
And for all $j\le m'$, %
\begin{equation}\label{eq:middle-singular-value}
s_j((\Xi^\top \Xi)^{-1}\Xi^\top) \overset{\eqref{eq:trivial-sv-ineq}}{\ge} s_j(\Xi^\top)\|\Xi^\top \Xi\|^{-1} 
\ge s_{min}(\Xi) \|\Xi^\top \Xi\|^{-1} \overset{\eqref{eq:singular-value-raw}}{\ge}
c''\sqrt{m}\cdot (c'\sqrt{m})^{-2} =: c_r' m^{-1/2}.
\end{equation}
(By the other inequality in \eqref{eq:trivial-sv-ineq}, the remaining singular values are zero.)

We will condition on the event defined in \eqref{eq:singular-value-raw} throughout all proofs below. On this event, we have 
$\Psi = (\Xi^\top \Xi)^{-1} \Xi^\top \bar G$, and we can define the transformed feature map
$$
\hat\Psi := (\Xi^\top \Xi)^{-1}\Xi^\top \hat G.
$$

\subsection{Proof for Theroem~\ref{prop:approx}}\label{app:proof-prop-approx}

\newcommand{\etest}{\bar e_*}
Recall the notations and observations in Appendix~\ref{app:proof-common}. 

As we will work with the truncated estimators $\esti{j}$, we first show that the truncation does not affect the error rate. 
By Borell's inequality, we have, for any $B>1$
\begin{equation}\label{eq:borell-sup}
\PP(\max_{i\in [m]} \|\gpi{i}\|_\infty \ge B) \le 
m\,\PP(\|\gpi{i}\|_\infty \ge B) < C_1 m e^{-C_2 B^2}. 
\end{equation}
Thus the above event has high probability for $B=4 C_2^{-1/2} \sqrt{\log m}$. On this event, the truncated estimator will have the same $L_2$ error as the original estimators, leading to 
$$
\EE_{\data,G} \|\hat G - G\|_2^2 := \EE_{\data,G} \sum_{j=1}^m \|\esti{j} - \gpi{j}\|_2^2 = m\xi_n^2. 
$$ And by Markov's inequality we have, with high probability 
\begin{equation}\label{eq:G-mean-bound}
\|\hat G-G\|_2^2 \le m\xi_n^2 \log n. 
\end{equation}
We further restrict to the event on which the above holds. 
Now, 
for any $g^*\in L_2(P(dx))$, let $$
\etest  := (\lambda_1^{-1/2}\<g^*,\psi_1\>_2, \ldots, \lambda_{m'}^{-1/2}\<g^*,\psi_{m'}\>_2)\in\RR^{m'}
$$ so that $\|\etest\|_2 = \|\Proj{m'}{g^*}\|_\cI, \etest^\top\Psi=\Proj{m'}{g^*}$. Then for 
$\tilde g^* := \etest^\top \hat\Psi$, 
we have 
\begin{align*}
\|\tilde g^*\|_{\tilde\cI} &= m^{1/2}\|\Xi(\Xi^\top \Xi)^{-1} \etest\|_2 
\overset{\eqref{eq:singular-value}}{\le} c_r\|\etest\|_2  = c_r\|\Proj{m'}{g^*}\|_\cI, \numberthis\label{eq:rkhs-norm-bound-proof} \\
\|\tilde g^* - g^*\|_2 &\le \|\bar e_*^\top (\hat\Psi - \Psi)\|_2 + \|g^* - \Proj{m'}{g^*}\|_2 
\\
&\le c_r\|\Proj{m'}{g^*}\|_{\cI} (\xi_n + m^{-(\beff+1)/2})\sqrt{\log n} + \|g^* - \Proj{m'}{g^*}\|_2, \numberthis\label{eq:suboptimal}
\end{align*}
where \eqref{eq:suboptimal} holds on a high-probability event independent of $g^*$, by Lemma~\ref{lem:suboptimal} which we prove below. \hfill\qedsymbol

\begin{remark}
The failure probability comes from four sources: a Borell's inequality in \eqref{eq:borell-sup}, a Markov's inequality in \eqref{eq:G-mean-bound}, the singular value bound \eqref{eq:singular-value}, and a Markov's inequality in the above lemma, on $\bar G-G$. 
The failure probability of \eqref{eq:borell-sup} is $O(m^{-3})$, which can be easily improved by increasing $B$. 
The failure probability of \eqref{eq:singular-value} is exponentially small~\citep[Theorem 4.6.1]{vershynin2018high}. %
The Markov's inequality on $\bar G-G$ can be sharpened with another use of Borell-TIS inequality. %
Therefore, the main source of failure probability comes from our lack of further assumptions on the black-box learner, and can be improved given such assumptions. 
\end{remark}

\begin{lemma} For any $j\in[m]$,
\begin{equation}\label{eq:kl-trunc-error}
\EE_{\gpi{j}} \|\gpi{j} - \Proj{m'}{\gpi{j}}\|_2^2 = \sum_{j=m'+1}^\infty \lambda_j \asymp m^{-\beff}.
\end{equation}
\end{lemma}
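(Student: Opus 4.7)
The plan is to first establish the identity
$\EE\|\gpi{j}-\Proj{m'}{\gpi{j}}\|_2^2=\sum_{i>m'}\lambda_i$
via the Karhunen--Lo\`eve decomposition of $\mc{GP}(0,k_z)$, and then to obtain the rate $m^{-\beff}$ from the polynomial eigendecay in Assumption~\ref{ass:s1-rkhs} (ii).

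For the identity, I would invoke Lemma~\ref{lem:mercer} applied to $k_z$ (valid because, as noted in Appendix~\ref{app:regularity} and Lemma~\ref{lem:iv-to-kl-further}, $k_z$ is bounded and $\EE_{P(dz)} k_z(z,z)<\infty$) to obtain Mercer eigenpairs $\{(\lambda_i,\psi_i)\}$ with $\{[\psi_i]_\sim\}$ orthonormal in $L_2(P(dz))$. Then $\Proj{m'}$ is, by its definition, the $L_2(P(dz))$ orthogonal projection onto $\mrm{span}\{\psi_1,\ldots,\psi_{m'}\}$. Setting $\xi_i := \lambda_i^{-1/2}\<[\gpi{j}]_\sim,[\psi_i]_\sim\>_2$ for indices with $\lambda_i>0$, a direct covariance computation using the integral operator identity $T_z\psi_{i'}=\lambda_{i'}\psi_{i'}$ gives $\EE[\xi_i\xi_{i'}]=\delta_{ii'}$, so $\{\xi_i\}$ are i.i.d.\ standard normals. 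The Karhunen--Lo\`eve theorem then yields the $L_2$-a.s.\ expansion $[\gpi{j}]_\sim=\sum_i \sqrt{\lambda_i}\,\xi_i\,[\psi_i]_\sim$.

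Subtracting the top-$m'$ projection leaves $[\gpi{j}-\Proj{m'}{\gpi{j}}]_\sim=\sum_{i>m'}\sqrt{\lambda_i}\,\xi_i\,[\psi_i]_\sim$, and $L_2$-orthonormality of $\{\psi_i\}$ gives $\|\gpi{j}-\Proj{m'}{\gpi{j}}\|_2^2=\sum_{i>m'}\lambda_i\,\xi_i^2$. Exchanging expectation and summation is justified by Tonelli since the summand is nonnegative, producing $\EE\|\gpi{j}-\Proj{m'}{\gpi{j}}\|_2^2=\sum_{i>m'}\lambda_i$. The rate bound is then routine: by Assumption~\ref{ass:s1-rkhs} (ii), $\lambda_i\lesssim i^{-(\beff+1)}$, so
\[
\sum_{i>m'}\lambda_i \;\lesssim\; \int_{m'}^\infty t^{-(\beff+1)}\,dt \;\asymp\; (m')^{-\beff} \;\asymp\; m^{-\beff},
\]
using $m'=\lfloor m/2\rfloor$ in the last step.

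There is no real obstacle here; the one subtlety is the careful handling of $L_2$-equivalence classes versus pointwise-defined functions when writing the Karhunen--Lo\`eve expansion, but this is handled uniformly by working inside $L_2(P(dz))$, exactly as done elsewhere in the paper. I note also that, as stated, Assumption~\ref{ass:s1-rkhs} (ii) only provides the upper eigendecay bound, so the displayed $\asymp$ should be read as an upper bound of the indicated order; this is what is used in the subsequent application of the lemma (via Theorem~\ref{prop:approx}).
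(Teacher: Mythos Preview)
Your proposal is correct and takes essentially the same approach as the paper: the paper's proof is the one-liner ``By the $L_2$ convergence of K-L expansion, and the eigendecay assumption,'' and you have simply spelled out these two ingredients in detail. Your closing remark about the $\asymp$ really being only an upper bound under Assumption~\ref{ass:s1-rkhs}~(ii) is a fair observation; in the paper's subsequent uses of the lemma only the upper bound is needed.
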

\begin{proof}
By the $L_2$ convergence of K-L expansion, and the eigendecay assumption. 
\end{proof}

\begin{lemma}\label{lem:suboptimal}
On a high-probability event determined by $G$ and $\data$, we have, for any $\etest\in\RR^m$, 
$$
\|\etest^\top(\hat\Psi - \Psi)\|_2 \le c_r \|\etest\|_2(\xi_n + m^{-(\beff+1)/2})\sqrt{\log n}.
$$
\end{lemma}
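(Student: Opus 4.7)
The plan is to decompose $\hat\Psi - \Psi = M(\hat G - \bar G)$ with $M := (\Xi^\top\Xi)^{-1}\Xi^\top$, where $\|M\|_{op} \le c_r m^{-1/2}$ by \eqref{eq:singular-value}, and to further split $\hat G - \bar G = (\hat G - G) + (G - \bar G)$ into an oracle-error piece and a Karhunen--Lo\`eve tail piece. Setting $v := M^\top \etest$ so that $\|v\|_2 \le c_r m^{-1/2}\|\etest\|_2$, the oracle-error piece is handled immediately by Cauchy--Schwarz and \eqref{eq:G-mean-bound}:
\[
\|\etest^\top M(\hat G - G)\|_2 \;=\; \|v^\top(\hat G - G)\|_2 \;\le\; \|v\|_2\,\|\hat G - G\|_2 \;\le\; c_r\|\etest\|_2\,\xi_n\sqrt{\log n},
\]
uniformly in $\etest$ on the event of \eqref{eq:G-mean-bound}, matching the first term of the target bound.

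The tail piece gives the quadratic form $\|\etest^\top M(G-\bar G)\|_2^2 = v^\top \Gamma v$ with Gram matrix $\Gamma_{jk} := \<(G-\bar G)_j,(G-\bar G)_k\>_2$. Using the explicit K--L expansion $(G-\bar G)_j = \sum_{i>m'}\sqrt{\lambda_i}\,\bar e_{ji}\psi_i$ together with orthonormality of the Mercer eigenfunctions $\{\psi_i\}$ in $L_2$, one obtains the clean representation
\[
\Gamma \;=\; \sum_{i>m'}\lambda_i\,\tilde e_i\tilde e_i^\top, \qquad \tilde e_i := (\bar e_{1i},\dots,\bar e_{mi}) \sim \cN(0, I_m)\ \text{iid across}\ i.
\]
Consequently $\|\etest^\top M(G-\bar G)\|_2 \le \|v\|_2\sqrt{\|\Gamma\|_{op}}$, and everything reduces to a high-probability bound $\|\Gamma\|_{op} \lesssim m^{-\beff}$, which combined with $\|v\|_2 \lesssim m^{-1/2}\|\etest\|_2$ delivers the desired $m^{-(\beff+1)/2}$ rate.

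The crux is this operator-norm bound on $\Gamma$. A naive control $\|\Gamma\|_{op}\le\Tr\Gamma$ combined with Markov only yields $\Tr\Gamma\lesssim m^{1-\beff}\log n$, losing a full factor $\sqrt m$ and producing only the inferior rate $m^{-\beff/2}$. The key structural observation is that $\EE\Gamma = \bigl(\sum_{i>m'}\lambda_i\bigr)\,I_m$ has \emph{equal} eigenvalues of order $m^{-\beff}$, so the operator norm and the per-coordinate variance coincide at scale $m^{-\beff}$ even though the trace scales as $m\cdot m^{-\beff}$. To exploit this I would apply the Laurent--Massart tail inequality to the Gaussian quadratic form $w^\top \Gamma w = \sum_{i>m'}\lambda_i Z_i^2$ for fixed unit $w$; using the eigendecay estimates $\sum_{i>m'}\lambda_i^2 \asymp m^{-(2\beff+1)}$ and $\lambda_{m'+1}\asymp m^{-(\beff+1)}$ (Lemma~\ref{lem:trunc-err}), the deviation terms $\sqrt{t\sum \lambda_i^2}$ and $t\lambda_{m'+1}$ both remain at scale $m^{-\beff}$ once $t \asymp m$. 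A standard $\epsilon$-net of the unit sphere in $\RR^m$ (cardinality at most $(3/\epsilon)^m$) then upgrades this pointwise estimate to uniform control, giving $\|\Gamma\|_{op}\lesssim m^{-\beff}$ with probability $1 - e^{-c'm}$. Summing the two bounds and absorbing constants into the $\sqrt{\log n}$ slack of the lemma completes the proof.

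The main obstacle is precisely this sharp concentration of $\|\Gamma\|_{op}$: without the observation that $\EE\Gamma$ is a multiple of identity, any argument based on the total $L_2$ mass $\sum_j \|(G-\bar G)_j\|_2^2$ loses a $\sqrt m$ factor. The same operator-norm bound can equivalently be obtained from matrix Bernstein or Hanson--Wright applied to the weighted Wishart-type sum $\sum_{i>m'}\lambda_i\tilde e_i\tilde e_i^\top$; either route is routine given the eigendecay. Everything else in the argument is a direct combination of \eqref{eq:G-mean-bound}, \eqref{eq:singular-value}, and the K--L structure of the Gaussian prior draws.
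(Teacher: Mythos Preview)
Your decomposition and the treatment of the oracle-error piece via Cauchy--Schwarz plus \eqref{eq:G-mean-bound} are exactly what the paper does. The difference is in the truncation piece. The paper simply observes that $(G-\bar G)_1,\ldots,(G-\bar G)_m$ are i.i.d.\ and independent of $\Xi$ (hence of $\tilde e=v$), so $\EE_{G-\bar G}\,v^\top\Gamma v = \EE\|(G-\bar G)_1\|_2^2\cdot\|v\|_2^2\asymp m^{-\beff}\|v\|_2^2$, and then applies Markov with a $\log n$ cushion. This is shorter, but as written it yields an event that depends on the direction $v$ (equivalently on $\etest$), so it does not by itself give the bound \emph{uniformly} in $\etest$ on a single event, which is what both the lemma and its use in Theorem~\ref{prop:approx} require. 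Your route---computing $\Gamma=\sum_{i>m'}\lambda_i\tilde e_i\tilde e_i^\top$, bounding $w^\top\Gamma w$ for each fixed unit $w$ via Laurent--Massart (or Hanson--Wright), and upgrading to $\|\Gamma\|_{op}\lesssim m^{-\beff}$ with an $\epsilon$-net in $\RR^m$---is exactly what is needed to make the uniform claim rigorous, and the eigendecay estimates you quote ($\sum_{i>m'}\lambda_i\lesssim m^{-\beff}$, $\sum_{i>m'}\lambda_i^2\lesssim m^{-(2\beff+1)}$, $\lambda_{m'+1}\lesssim m^{-(\beff+1)}$) are precisely those available from Assumption~\ref{ass:s1-rkhs}. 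So your argument is correct and in fact closes a small gap in the paper's presentation; the price is only a few extra lines.
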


\begin{proof}
Introduce the notation $\tilde e := \Xi (\Xi^\top \Xi)^{-1} \etest$, so that we can write the LHS as 
\begin{align*}
\|\etest^\top(\hat\Psi-\Psi)\|_2 &= \|\etest^\top (\Xi^\top \Xi)^{-1}\Xi^\top (\bar G-\hat G)\|_2  = \|\tilde e(\bar G - \hat G)\|_2
\le \|\tilde e(\bar G - G)\|_2 + \|\tilde e(\hat G-G)\|_2.
\end{align*}
We consider the two terms in turn. 
\begin{enumerate}[leftmargin=*]
    \item 
For the first term, observe 
\begin{equation}\label{eq:G-trunc-err-mean}
\EE_{G-\bar G} \<(G-\bar G)_i, (G-\bar G)_j\> = \mbf{1}_{\{i=j\}} \EE \|(G-\bar G)_1\|_2^2, ~~\text{where}~~ \EE \|(G-\bar G)_1\|_2^2\overset{\eqref{eq:kl-trunc-error}}{\asymp} m^{-\beff}. 
\end{equation}
Moreover, 
$\Xi$ and $G-\bar G$ depends on disjoint subsets of the generalized Fourier coefficients of the GP samples, and are thus independent. Thus, $\tilde e$ and $G-\bar G$ are also independent, and  
\begin{align*}
\EE_{G-\bar G} \|  (G-\bar G)^\top \tilde e\|_2^2 
&= \EE_{G-\bar G}\: \tilde e^\top (G-\bar G)(G-\bar G)^\top \tilde e \asymp m^{-\beff} \|\tilde e\|_2^2. 
\end{align*}
By the Markov inequality we have, w.h.p.~w.r.t.~$G-\bar G$, 
\begin{equation}\label{eq:lem-suboptimal-case-1}
 \| (G-\bar G)^\top \tilde e\|_2^2 \le  m^{-\beff} \log n \|\tilde e\|_2^2.
\end{equation}
\item For the second term we have
\begin{align*}
\|\tilde e^\top (G-\hat G)\|_2^2 &=
\int (\tilde e^\top (G-\hat G)(z))^2 P(dz) 
\le 
\int \|\tilde e\|_2^2 \|(G-\hat G)(z)\|^2 P(dz) \\ 
&= \|\tilde e\|_2^2 \|G-\hat G\|_2^2
\overset{\eqref{eq:G-mean-bound}}{\le} \|\tilde e\|_2^2 m\xi_n^2 \log n. \numberthis\label{eq:lem-suboptimal-case-2}
\end{align*}
In the above, the first inequality is the Cauchy-Schwarz inequality in $\RR^m$. 
\end{enumerate}
Combining these results and the observation that 
\begin{equation}\label{eq:tilde-e-norm}
\|\tilde e\|_2 = \|\Xi(\Xi^\top \Xi)^{-1}\etest\|_2 \overset{\eqref{eq:singular-value}}{\le} c_r m^{-1/2} \|\etest\|_2,
\end{equation}
we have, with high probability w.r.t.~$G$ and $\data$,
$$
\|\etest^\top(\hat\Psi-\Psi)\|_2^2 \le 
\|\tilde e\|_2^2 (m\xi_n^2 + m^{-\beff}) \log n \le c_r^2 \|\etest\|_2^2 (\xi_n^2 + m^{-(\beff+1)}) \log n. 
$$
\end{proof}

\subsection{Proof for Proposition~\ref{prop:est}}\label{app:estimation}

Recall the notations and observations in Appendix~\ref{app:proof-common}. In the following, the first two lemmas bound the {\em entropy number} of $\tilde\cI_1$, and the final proof uses it to bound the local Rademacher complexity.

\begin{lemma}\label{lem:larger-kernel}
There exists an RKHS $\bar\cI$ s.t.~$\tilde\cI_1\subset\bar\cI_1$, and, on the event in Theorem~\ref{prop:approx}, we have, for all $0\le i_s\le i_e$, the Mercer eigenvalue bound
\begin{equation}\label{eq:Cbar-eval-sum-bound}
\sum_{i=i_s}^{i_e}\lambda_i(\bar C)\lesssim i_s^{-\bar b} + \chi_n^2,
\end{equation}
where 
$
\chi_n := (m^{-\frac{\beff+1}{2}} +\xi_n)\sqrt{\log n}, 
$ and
the constant hidden in $\lesssim$ is independent of $j,m$ or $n$.
\end{lemma}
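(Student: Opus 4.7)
I would construct $\bar\cI$ as a \emph{sum RKHS} whose kernel is $\bar k = C_0(k_z + k'')$, where $k_z$ is the original target kernel from Assumption~\ref{ass:s1-rkhs} and $k''$ is a rank-at-most-$m$ ``error kernel'' absorbing the discrepancy between the fitted feature map $\hat G$ and the truncated truth $\bar G = \Xi\Psi$. Concretely, with $h_j := \hat G_j - \bar G_j$, I would set
\[
k''(z,z') := \frac{1}{m}\sum_{j=1}^{m} h_j(z)\,h_j(z'),
\]
so that its RKHS $\cI''$ consists of functions $\theta^\top h$ with norm $\sqrt{m}\,\|\theta\|_2$ — exactly mirroring the parametrization of $\tilde\cI$ in \eqref{eq:approx-rkhs}. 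The constant $C_0$ will be chosen large enough to absorb the singular-value constants $c', c_r$ from Appendix~\ref{app:proof-common}.

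\textbf{Containment $\tilde\cI_1 \subseteq \bar\cI_1$.} For any $g = \theta^\top \hat G$ with $\sqrt m\|\theta\|_2\le 1$, I would split $g = g_1 + g_2$ where $g_1 := \theta^\top \bar G = (\Xi^\top\theta)^\top \Psi$ and $g_2 := \theta^\top h$. Since $\{\sqrt{\lambda_i}\psi_i\}_{i\le m'}$ is an ONS in $\cI$ (Lemma~\ref{lem:mercer}), $\|g_1\|_\cI = \|\Xi^\top\theta\|_2 \le s_{\max}(\Xi)\|\theta\|_2 \le c'$ on the event of \eqref{eq:singular-value-raw}; and by construction $\|g_2\|_{\cI''} = \sqrt m\|\theta\|_2 \le 1$. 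The standard characterization of sum-RKHS norms then gives $\|g\|_{\bar\cI}^2 \le C_0^{-1}(\|g_1\|_\cI^2 + \|g_2\|_{\cI''}^2) \le 1$ for $C_0 := c'^2+1$, so $\tilde\cI_1\subseteq\bar\cI_1$.

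\textbf{Partial eigenvalue sum.} The integral operator factorizes as $\bar C = C_0(T_z + T'')$, so Weyl's inequality $\lambda_{2i-1}(A+B)\le \lambda_i(A)+\lambda_i(B)$ for self-adjoint operators yields $\lambda_k(\bar C)\lesssim \lambda_{\lceil k/2\rceil}(T_z)+\lambda_{\lceil k/2\rceil}(T'')$, whence
\[
\sum_{i=i_s}^{i_e}\lambda_i(\bar C)\;\lesssim\;\sum_{j\ge\lceil i_s/2\rceil}\lambda_j(T_z)\;+\;\mathrm{tr}(T'').
\]
The first sum is $\lesssim i_s^{-\beff}$ by the eigendecay of $k_z$ in Assumption~\ref{ass:s1-rkhs}~(ii), which identifies $\bar b = \beff$. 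The second sum is simply $\frac{1}{m}\|\hat G - \bar G\|_2^2$; and the whole task reduces to showing this trace is $\lesssim \chi_n^2$ on a high-probability event.

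\textbf{Main obstacle.} The naive route — triangle inequality followed by Markov — splits the trace into an oracle-error piece $\frac{1}{m}\|\hat G - G\|_2^2 \lesssim \xi_n^2\log n$ (which matches $\chi_n^2$ perfectly, using Assumption~\ref{ass:s1-oracle} and \eqref{eq:G-mean-bound}) and a Mercer-truncation piece $\frac{1}{m}\|G - \bar G\|_2^2$ whose expectation is $\asymp m^{-\beff}$ by \eqref{eq:kl-trunc-error}. The latter is \emph{a factor of $m$ larger} than the desired $m^{-(\beff+1)}\log n$ inside $\chi_n^2$. Closing this gap is the real work: I would view $G - \bar G$ as $\Xi_{\mathrm{high}}\Psi_{\mathrm{high}}$ with $\Xi_{\mathrm{high}}$ an $m\times\infty$ Gaussian coefficient matrix independent of $\Xi$ (the same independence exploited in \eqref{eq:lem-suboptimal-case-1}), and control the Gram matrix $K_{ij}=\<(G-\bar G)_i,(G-\bar G)_j\>_2 = \sum_{k>m'}\lambda_k\zeta_{ik}\zeta_{jk}$ by matrix Bernstein applied to $\sum_{k>m'}\lambda_k Z_{\cdot k}Z_{\cdot k}^\top$. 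The mean is $(\sum_{k>m'}\lambda_k)I\asymp m^{-\beff}I$ in operator norm, with fluctuations of the same order up to logarithms, so that $\|K\|_{\mathrm{op}}\lesssim m^{-\beff}\log m$; combined with $\|\Xi(\Xi^\top\Xi)^{-1}\|^2\lesssim m^{-1}$, this shows the \emph{effective} contribution that enters through any $\theta$ with $\|\theta\|\le m^{-1/2}$ is $\lesssim m^{-(\beff+1)}\log n$, as required. Folding this sharper concentration into $k''$ (e.g., by replacing the raw $h_j$ with their projections through $M=\Xi(\Xi^\top\Xi)^{-1}$, which reduces rank to $m'$ and exposes the $m^{-1}$ from the singular-value bound) is precisely the technical step I expect to be the most delicate part of the proof.
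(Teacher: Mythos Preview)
Your construction and the paper's are close in spirit but differ in one place that turns out to matter. The paper also builds $\bar\cI$ as a sum RKHS and bounds partial eigenvalue sums by a Wielandt/Weyl-type inequality, but it keeps \emph{every} summand finite-rank: in place of the full $k_z$ it uses $\tfrac{3}{m}\bar G_n^\top\bar G_n$, and it splits the error into two separate pieces $\Delta_1=\hat G-G$ and $\Delta_2=G-\bar G$, taking $\bar k=\tfrac{3}{m}\bigl(\bar G_n^\top\bar G_n+\Delta_1^\top\Delta_1+\Delta_2^\top\Delta_2\bigr)$. Containment is then the trivial inclusion $\{\theta^\top(a{+}b{+}c):\|\theta\|\le r\}\subset\{\theta_1^\top a+\theta_2^\top b+\theta_3^\top c:\sum_i\|\theta_i\|^2\le 3r^2\}$, and the signal eigenvalues follow from $\lambda_i(\bar G^\top\bar G)=\lambda_i(\Psi^\top\Xi^\top\Xi\Psi)\asymp m\lambda_i$ via the same $\Xi$-bounds you invoke.

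The ``main obstacle'' you identify is illusory. You are right that the truncation trace is $\tfrac1m\|G-\bar G\|_2^2\asymp m^{-\beff}$, not $m^{-(\beff+1)}$, and your matrix-Bernstein proposal cannot change this: it controls only the \emph{operator norm} of the Gram matrix, and no reparametrization of the feature map can push the \emph{trace} of the induced integral operator below $m^{-\beff}$ while still containing every $\theta^\top(G-\bar G)$ with $\|\theta\|\le m^{-1/2}$. But the gap never needs closing. In the paper's construction $\bar\cI$ has rank at most $3m$, so the partial sum vanishes for $i_s>3m$; and for $i_s\le 3m$ one simply has $m^{-\beff}\lesssim i_s^{-\beff}$, so the truncation contribution is absorbed by the \emph{first} summand $i_s^{-\beff}$ of the target bound rather than by $\chi_n^2$. (The paper's displayed arithmetic at this step drops a factor of $m$, but harmlessly.) Your infinite-rank choice $k_z$ can also be pushed through, but not via projections: you would need a case split, using the trace bound for $i_s\le 2m$ and the Weyl rank-shift $\lambda_i(T_z+T'')\le\lambda_{i-m}(T_z)$ for $i_s>2m$. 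The paper's all-finite-rank construction sidesteps that case analysis.
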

\begin{proof}
We first define $\bar \cI$. 
Define $
\Delta_1 G_n = \hat G_n - G_n,
\Delta_2 G_n = G_n - \bar G_n.
$
Then $$
\begin{aligned}
\tilde\cI_1 &= \{\theta^\top (\bar G_n+\Delta_1 G_n + \Delta_2 G_n)(\cdot): \|\theta\|_2^2\le m^{-1}\}  \\ 
&\subset \{\theta_1^\top \bar G_n(\cdot) + \theta_2^\top \Delta_1 G_n(\cdot)+ \theta_3^\top \Delta_2 G_n(\cdot)
: \|\theta_1\|_2^2+\|\theta_2\|_2^2+\|\theta_3\|_2^2\le 3 m^{-1}\}.
\end{aligned}
$$
The RHS is the unit-norm ball of an RKHS, denoted as $\bar\cI$. Its reproducing kernel and integral operator are (recall our notations in Appendix~\ref{app:proof-common})
\begin{align*}
\bar k(z,z') &= \frac{3}{ m}(\bar G_n(z)^\top \bar G_n(z') + \Delta_1 G_n(z)^\top \Delta_1 G_n(z') +  \Delta_2 G_n(z)^\top \Delta_2 G_n(z')
),\\
\bar C &= \frac{3}{m}(\bar G_n^\top \bar G_n + (\Delta_1 G_n)^\top \Delta_1 G_n + (\Delta_2 G_n)^\top \Delta_2 G_n
).
\end{align*}
By \citet[Theorem 2]{wielandt1955extremum}, 
we have, for any $0\le i_s\le i_e$, 
$$
\sum_{i=i_s}^{i_e} \lambda_i(\bar C) \le \frac{3}{m}\left(
\sum_{i=i_s}^{i_e} \lambda_i(\bar G_n^\top \bar G_n) + \Tr((\Delta_1 G_n)^\top \Delta_1 G_n) + \Tr((\Delta_2 G_n)^\top \Delta_2 G_n) 
\right).
$$ 
For the first part above, recall $\bar G_n = \Xi\Psi$; and on the event defined in Theorem~\ref{prop:approx}, we have, by \eqref{eq:singular-value-raw}, $
(c'')^2 m \le \lambda_i(\Xi^\top\Xi) \le (c')^2 m.
$ Thus, 
$$
\lambda_i(\bar G_n^\top \bar G_n) = \lambda_i(\Psi^\top \Xi^\top \Xi \Psi) \overset{\eqref{eq:trivial-sv-ineq}}{\asymp} m \lambda_i(\Psi^\top\Psi) \lesssim m i^{-(\beff+1)},
$$
The second term is bounded as
$$
\Tr((\Delta_1 G_n)^\top(\Delta_1 G_n)) = \|\Delta_1 G_n\|_2^2 
\overset{\eqref{eq:G-mean-bound}}{\lesssim} m\xi_n^2 \log n.
$$
For the third, by Markov's inequality on \eqref{eq:G-trunc-err-mean}, we have, w.h.p.~w.r.t.~$G_n-\bar G_n$,
$$
\Tr((\Delta_2 G_n)^\top(\Delta_2 G_n)) \lesssim  m^{-\beff} \log n.
$$
Note this event has been included in Theorem~\ref{prop:approx} via Lemma~\ref{lem:suboptimal}. Combining the above, we have, for all $0\le i_s\le i_e$, 
\begin{equation*}
\sum_{i=i_s}^{i_e} \lambda_i(\bar C) \lesssim \left(\sum_{i=i_s}^{i_e} i^{-(\beff+1)}\right) + (\xi_n^2 + m^{-(\beff+1)})\log n \lesssim 
i_s^{-\beff} + (\xi_n^2 + m^{-(\beff+1)})\log n \leq i_s^{-\beff} + \chi_n^2.
\end{equation*}
\end{proof}

\begin{lemma}\label{lem:entropy-number}
Let $\Ztest := \{z_1,\ldots, z_{n_2}\}$ be a sample of $n_2$ iid inputs independent of $\data$, and $L_2(\Ztest)$ denote the $L_2$ space defined by the respective empirical measure. Then, 
on the high-probability event defined in Theorem~\ref{prop:approx}, we have 
$$
\EE_{\Ztest} e_j(\mrm{id}: \tilde\cI \to L_2(\Ztest)) \lesssim
j^{-(\beff+1)}(
    \min\{j,n_2\}^{\frac{\beff+1}{2}} +  %
    \min\{j,n_2\}^{\beff+\frac{1}{2}} \chi_n
    )),
$$
where 
$
\chi_n := (m^{-\frac{\beff+1}{2}} +\xi_n)\sqrt{\log n}, 
$ and
the constant hidden in $\lesssim$ is independent of $j,m$ or $n$.
\end{lemma}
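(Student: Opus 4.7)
The strategy is to reduce to the enclosing RKHS $\bar\cI$ from Lemma~\ref{lem:larger-kernel} and then convert its Mercer eigenvalue control into an entropy number bound through standard Hilbert-space machinery applied to the empirical covariance operator.

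First I would invoke the inclusion $\tilde\cI_1\subset\bar\cI_1$ from Lemma~\ref{lem:larger-kernel}, which gives by monotonicity
$$e_j(\mathrm{id}:\tilde\cI\to L_2(\Ztest))\le e_j(\mathrm{id}:\bar\cI\to L_2(\Ztest))$$
pointwise on the event of Theorem~\ref{prop:approx}, so it suffices to bound the right-hand side in expectation over $\Ztest$. Second, I would identify the squared singular values of $\mathrm{id}:\bar\cI\to L_2(\Ztest)$ with the eigenvalues of the empirical covariance operator $\hat{\bar C}f:=n_2^{-1}\sum_{i=1}^{n_2}\bar k(\cdot,z_i)f(z_i)$ on $\bar\cI$. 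Because $\Ztest$ is i.i.d.\ from $P$ and independent of $\data$, $\EE_{\Ztest}\hat{\bar C}=\bar C$ in operator sense. Combining this with a Ky-Fan/Wielandt variational argument (write $\sum_{i\ge k}\lambda_i(\hat{\bar C})=\Tr\hat{\bar C}-\max_{V:\dim V=k-1}\Tr(P_V\hat{\bar C}P_V)$ and choose $V$ as the top eigenspace of $\bar C$) yields
$$\EE_{\Ztest}\sum_{i\ge k}\lambda_i(\hat{\bar C})\le\sum_{i\ge k}\lambda_i(\bar C)\lesssim k^{-\beff}+\chi_n^2,$$
where the final inequality is Lemma~\ref{lem:larger-kernel} with $i_e\to\infty$. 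A pigeonhole step on this tail sum gives $\EE_{\Ztest}\lambda_j(\hat{\bar C})\lesssim j^{-(\beff+1)}+j^{-1}\chi_n^2$, whence Jensen yields $\EE_{\Ztest}\sqrt{\lambda_j(\hat{\bar C})}\lesssim j^{-(\beff+1)/2}+j^{-1/2}\chi_n$.

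Third, I would pass from singular values to entropy numbers via Carl's inequality for compact Hilbert-space operators, applied in the form $\sup_k k^{a}e_k(T)\lesssim\sup_k k^{a}s_k(T)$ for each exponent $a\in\{(\beff+1)/2,\,1/2\}$ separately; this transfers the singular-value bound above into the asserted entropy bound in the regime $j\le n_2$, where $\min\{j,n_2\}=j$. For $j>n_2$, the operator $\mathrm{id}:\bar\cI\to L_2(\Ztest)$ has rank at most $n_2$, so its image lives inside an $n_2$-dimensional Euclidean ball of radius $e_{n_2}$, and the standard volumetric refinement $e_{n_2+l}\lesssim e_{n_2}\cdot 2^{-l/n_2}$ yields geometric decay that dominates the polynomial factor $j^{-(\beff+1)}n_2^{(\beff+1)/2}$ (and its $\chi_n$ analogue) claimed in the lemma.

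The main obstacle is Step~3: Lemma~\ref{lem:larger-kernel} controls only tail sums of the eigenvalues of $\bar C$, not individual decay, and threading this tail-sum control through the empirical covariance while handling the expectation over $\Ztest$ is what dictates the structure of the rate. Care is also needed to apply Carl's inequality to the two components separately so that both the leading $j^{-(\beff+1)/2}$ term (matching the ``noiseless'' eigendecay of $\cI$) and the $j^{-1/2}\chi_n$ correction (arising from the oracle error and the Mercer truncation at level $m$) survive with the right constants. The $\min\{j,n_2\}$ structure then emerges naturally from the rank restriction on the empirical inclusion operator in the final step.
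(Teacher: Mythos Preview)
Your overall strategy---reduce to $\bar\cI$ via Lemma~\ref{lem:larger-kernel}, control empirical eigenvalue tail sums by the population ones, then pass to entropy numbers---coincides with what the paper does. The paper, however, executes it by a direct appeal to \citet[Theorem~7.30 and Corollary~7.31]{steinwart2008support}, substituting the tail-sum bound~\eqref{eq:Cbar-eval-sum-bound} into that proof. This yields the \emph{summation} formula
\[
\EE_{\Ztest}\,e_j\;\lesssim\; j^{-1/p}\sum_{i=1}^{\min\{j,n_2\}} i^{1/p-1}\sqrt{i^{-1}\bigl(i^{-\beff}+\chi_n^2\bigr)},
\]
and setting $p=(\beff+1)^{-1}$ and evaluating the sum gives the claim directly, with the $\min\{j,n_2\}$ truncation built in.

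Your Step~3 as written has a genuine gap. The Carl inequality in the form $\sup_k k^{a}e_k(T)\lesssim\sup_k k^{a}s_k(T)$ is a deterministic statement, and invoking it under $\EE_{\Ztest}$ would require control of $\EE_{\Ztest}\sup_k k^{a}s_k$, whereas your pigeonhole only delivers $\sup_k k^{a}\,\EE_{\Ztest}s_k$. Even granting a pointwise bound $s_k\lesssim k^{-(\beff+1)/2}+k^{-1/2}\chi_n$, the sup-form with $a=(\beff+1)/2$ yields $e_j\lesssim j^{-(\beff+1)/2}\sup_{k}\bigl(1+k^{\beff/2}\chi_n\bigr)$, and the global supremum destroys the $j$-localisation of the $\chi_n$ term that the lemma asserts; taking $a=1/2$ instead loses the sharp leading term $j^{-(\beff+1)/2}$, and no combination of the two exponents recovers the mixed rate. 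What makes the argument inside Steinwart's Theorem~7.30 work is precisely its \emph{sum} form: the bound is linear in the individual $s_k$, so expectation passes through term-by-term, Jensen on $\sqrt{\cdot}$ then feeds in the expected tail sums, and the partial sum up to $\min\{j,n_2\}$ produces exactly the required $j$-dependence. With that machinery in place your separate volumetric treatment of $j>n_2$ also becomes unnecessary.
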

\begin{proof}
As $\tilde\cI_1\subset\bar\cI_1$, it suffices to establish the bound for $\bar\cI_1$. 
For this, we first invoke \citet[Theorem 7.30]{steinwart2008support}, but with the RHS of the last display on p.~275 replaced by \eqref{eq:Cbar-eval-sum-bound}. Following the proof we find, for all $p\in (0,1)$, 
$$
\EE_{\Ztest} e_j(S_{\bar k,D}^*) \lesssim 
j^{-1/p} \sum_{i=1}^{\min\{j,n_2\}}
i^{1/p-1} 
\sqrt{i^{-1}(i^{-\beff} + \chi_n^2)}.
$$
As stated in their Corollary 7.31, the LHS above is our desired empirical entropy number. Following the proof for that corollary, we set $p = (\beff+1)^{-1}$ above, leading to 
$$
\begin{aligned}
\EE_{\Ztest} e_j(\tilde\cI_1\to L_2(\Ztest)) &\lesssim 
j^{-(\beff+1)} \sum_{i=1}^{\min\{j,n_2\}} 
i^{\beff}(i^{-(\beff+1)/2} + i^{-1/2}\chi_n) 
\\ &\lesssim 
j^{-(\beff+1)}(
    \min\{j,n_2\}^{\frac{\beff+1}{2}} +  %
    \min\{j,n_2\}^{\beff+\frac{1}{2}} \chi_n).
\end{aligned}
$$
\end{proof}

We now prove Proposition~\ref{prop:est}, by plugging in 
our new entropy number bound to the chaining argument in 
\citet[Theorem 7.12 -- Theorem 7.16]{steinwart2008support}. 
\begin{proposition}[Proposition~\ref{prop:est}, restated]
Let $$
\bar R_{n_2}(\tilde\cI_1; \sigma) := \EE_{\Ztest,\varepsilon_i} \sup_{g\in\tilde\cI_1, \|g\|_2\le \sigma} 
\left|\frac{1}{n_2}\sum_{i=1}^{n_2} \varepsilon_i g(z_i)
\right|
$$ be the local Rademacher complexity, and $B := \sup_{g\in\tilde\cI_1}\|g\|_\infty$. Then its critical radius, defined as the solution $\delta_{n_2}$ to
\begin{equation}\label{eq:crit-rad-defn}
\bar R_{n_2}(\tilde\cI_1; \delta) \le \delta^2,
\end{equation}
is bounded as 
$$
\delta_{n_2}^2 \lesssim B^{\frac{\beff}{\beff+2}}n_2^{-\frac{\beff+1}{\beff+2}} + \frac{\log^2 n_2}{n_2} + \chi_n^2.
$$
\end{proposition}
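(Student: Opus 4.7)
The plan is to follow the chaining approach of Steinwart--Christmann (Theorems 7.12--7.16), using our empirical entropy number bound from Lemma~\ref{lem:entropy-number} in place of the classical polynomial entropy decay assumption $e_i(T) \lesssim i^{-1/(2p)}$. The key point is that our entropy bound decomposes into a ``classical'' part of order $j^{-(\beff+1)/2}$ (for $j \leq n_2$) and a perturbation of order $j^{-1/2}\chi_n$; both will contribute to the critical radius.

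First, I would recall the localized Rademacher bound via Dudley's entropy integral: for any hypothesis class $\cF$ uniformly bounded by $B$,
\begin{equation*}
\bar R_{n_2}(\cF;\sigma) \lesssim \inf_{\eta \geq 0}\Bigl\{\eta + \tfrac{1}{\sqrt{n_2}}\int_{\eta}^{\sigma} \sqrt{\log N(\epsilon, \cF \cap \sigma B_{L_2}, \|\cdot\|_{L_2(\Ztest)})}\, d\epsilon\Bigr\} + \tfrac{B \log n_2}{n_2},
\end{equation*}
and convert the entropy numbers $e_j$ into covering numbers through $N(e_j,\cdot) \leq 2^j$. Applied to $\cF = \tilde\cI_1$, and using $\|g\|_\infty \leq C_k (\log n_1)\|g\|_{\tilde\cI} =: B$ for $g \in \tilde\cI_1$, the chaining integral can be rewritten as a sum $\sum_{j\geq 1} e_j(\tilde\cI_1 \to L_2(\Ztest)) / \sqrt{n_2}$, truncated appropriately.

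Second, I would plug in Lemma~\ref{lem:entropy-number} and split the sum into the regimes $j \leq n_2$ and $j > n_2$. For $j \leq n_2$ we obtain $e_j \lesssim j^{-(\beff+1)/2} + j^{-1/2}\chi_n$, so the head of the sum contributes $n_2^{-1/2}\sum_{j=1}^{n_2}(j^{-(\beff+1)/2} + j^{-1/2}\chi_n) \lesssim n_2^{-1/2} + \chi_n$ in total variation; localized to functions of $L_2$-norm at most $\sigma$, the classical Mendelson/Steinwart argument sharpens the first term to $\sigma^{(\beff)/(\beff+1)} n_2^{-1/2}$ (as in the proof of SC Theorem~7.16). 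For $j > n_2$, $e_j \lesssim n_2^{(\beff+1)/2} j^{-(\beff+1)} (1 + n_2^{1/2}\chi_n)$, whose sum is dominated by the tail at $j = n_2$ and is therefore absorbed.

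Third, I would combine the two bounds and take $\cF = \tilde\cI_1 \cap \sigma B_{L_2}$ to obtain
\begin{equation*}
\bar R_{n_2}(\tilde\cI_1;\sigma) \lesssim \sigma^{\beff/(\beff+1)} n_2^{-1/2} + \chi_n\sigma + B(\log n_2)^2/n_2,
\end{equation*}
and then solve the fixed-point inequality $\bar R_{n_2}(\tilde\cI_1;\delta) \leq \delta^2$. The dominant term $\sigma^{\beff/(\beff+1)} n_2^{-1/2} \leq \delta^2$ gives $\delta \asymp n_2^{-(\beff+1)/(2(\beff+2))}$; the perturbation $\chi_n\delta \leq \delta^2$ gives $\delta \gtrsim \chi_n$; and the envelope term contributes the $B(\log n_2)^2/n_2$ correction. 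Taking the maximum yields the stated bound.

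The main obstacle is executing the localization carefully: the classical SC argument requires the boundedness $\|g\|_\infty \leq B$ to control variance, and $B = C_k\log n_1$ here grows slowly with $n_1$; I need to verify that this logarithmic growth only contributes through the envelope term and not through the polynomial main term. A secondary technical point is that Lemma~\ref{lem:entropy-number} is stated in expectation over $\Ztest$, whereas the critical-radius definition is pathwise; here I would invoke a standard Talagrand concentration or symmetrization step (as in SC Lemma~7.14) to transfer the expected entropy bound into the high-probability chaining inequality, noting that this step preserves the event in Theorem~\ref{prop:approx} on which Lemma~\ref{lem:entropy-number} holds.
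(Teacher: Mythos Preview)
Your overall strategy---chaining \`a la Steinwart--Christmann Theorems 7.12--7.16, fed by the empirical entropy bound of Lemma~\ref{lem:entropy-number}---is exactly the route the paper takes. However, two steps in your execution are not justified and would not go through as written.

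First, the intermediate bound $\bar R_{n_2}(\tilde\cI_1;\sigma) \lesssim \sigma^{\beff/(\beff+1)} n_2^{-1/2} + \chi_n\sigma + B(\log n_2)^2/n_2$ is incorrect in the second and third terms. The perturbation part of Lemma~\ref{lem:entropy-number}, $e_j \lesssim j^{-1/2}\chi_n$ for $j\le n_2$, contributes $\sum_{i=1}^{\log_2 n_2} 2^{i/2}\cdot 2^{-i/2}\chi_n = \chi_n\log n_2$ to the dyadic chaining sum, not $\chi_n\sigma$; localization at radius $\sigma$ (or $s_1$) does not improve this, because the crossover $\min\{s_1, 2^{-i/2}\chi_n\}$ still leaves a tail of order $\chi_n$ per dyadic level. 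Thus the correct additive contribution to $\bar R_{n_2}$ is $n_2^{-1/2}\chi_n\log n_2$, as the paper obtains. The $\chi_n^2$ and $\log^2 n_2/n_2$ terms in the final bound then both arise from solving $n_2^{-1/2}\chi_n\log n_2 \le \tfrac14\sigma^2$ via the elementary inequality $2ab\le a^2+b^2$; they are not separate ``perturbation'' and ``envelope'' contributions.

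Second, your localized main term $\sigma^{\beff/(\beff+1)} n_2^{-1/2}$ omits the $B$-dependence that the statement requires. The paper handles this through the self-bounding device of SC Lemma~7.14: one does not localize at $\sigma$ directly but at the empirical radius $s_1:=(\sigma^2+8B\bar R_{n_2}(\tilde\cI_1;\sigma))^{1/2}$, obtaining $\bar R_{n_2}\le Cn_2^{-1/2}\bigl(s_1^{\beff/(\beff+1)}+\chi_n\log n_2\bigr)$. Solving this \emph{implicit} inequality for $\bar R_{n_2}$ is what produces $B^{\beff/(\beff+2)} n_2^{-(\beff+1)/(\beff+2)}$; the $B$ does enter the main term, contrary to your hope, though only as a polylog factor since $B=C_k\log n_1$. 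Finally, your concern about ``expectation versus pathwise'' is unnecessary: the critical radius here is defined through the averaged quantity $\bar R_{n_2}=\EE_{\Ztest,\varepsilon}\sup(\cdots)$, so Lemma~\ref{lem:entropy-number} in expectation is precisely what is needed.
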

\begin{proof}
Define $I_\sigma := \{g\in\tilde\cI_1, \|g\|_2\le \sigma\}$.  
By \citet[Lemma~7.14 and the last display on p.~254]{steinwart2008support}, we have 
\begin{align*}
e_i(I_\sigma, L_2(\Ztest)) &\le e_{i-1}(I_\sigma, L_2(\Ztest)), \\
\EE_{\Ztest} e_1(I_\sigma, L_2(\Ztest)) &\le
\EE_{\Ztest} \sup_{g\in I_\sigma} \|g\|_{L_2(\Ztest)} \le (\sigma^2 + 8B \bar R_{n_2}(\tilde\cI_1;\sigma))^{1/2} =: s_1.
\end{align*}
Taking expectation on both sides of \citet[Theorem 7.13]{steinwart2008support}, we find
\begin{align*}
\bar R_{n_2}(\tilde\cI; \sigma) &\le 
\sqrt{\frac{\ln 16}{n_2}} \EE_{\Ztest}\Big(
    \sum_{i=1}^\infty 2^{i/2} e_{2^i}(I_\sigma, L_2(\Ztest)) + 
    \underbrace{\sup_{g\in I_\sigma} \|g\|_{L_2(\Ztest)}}_{\le s_1}
\Big).
\end{align*}
Plugging in our Lemma~\ref{lem:entropy-number} to the first term in RHS above, we have, for $j := 2^i$ and $m_2 := [\log_2 n_2]$,
\begin{align*}
&{\phantom{=}}\EE_{\Ztest} \sum_{i=1}^\infty 2^{\frac{i}{2}} e_{2^i}(I_\sigma, L_2(\Ztest))  \\
&\le 
\sum_{i=1}^{\infty} 2^{\frac{i}{2}} \min\{s_1, j^{-\frac{\beff+1}{2}}\} + 
\sum_{i=1}^{m_2} \cancel{2^{\frac{i}{2}} \cdot j^{-\frac{1}{2}}}\chi_n + %
\sum_{i=m_2+1}^\infty 2^{\frac{i}{2}} n_2^{\beff+\frac{1}{2}} j^{-(\beff+1)} \chi_n 
\\ 
&=: S_1 + \chi_n\log n_2 + S_2.
\end{align*}
By \citet[Lemma 7.15]{steinwart2008support} with $p\gets (\beff+1)^{-1}$, we have 
$
S_1 \lesssim s_1^{\frac{\beff}{\beff+1}},
$
where the constant in $\lesssim$ is independent of all sample sizes. For $S_2$, we have 
\begin{align*}
S_2 &\le n_2^{\beff+\frac{1}{2}} \chi_n 
\int_{x=m_2}^\infty 2^{\frac{x}{2} - (\beff+1)x} dx
\lesssim  n_2^{\beff+\frac{1}{2}} \chi_n 2^{\frac{m_2}{2}-(\beff+1)m_2} \le \chi_n.
\end{align*}
Combining, we have 
\begin{align*}
\EE_{\Ztest} \sum_{i=1}^\infty 2^{\frac{i}{2}} e_{2^i}(I_\sigma, L_2(\Ztest)) &\lesssim s_1^{\frac{\beff}{\beff+1}} + \chi_n \log n_2, \\ 
\bar R_{n_2}(\tilde\cI;\sigma) 
&\le C n_2^{-\frac{1}{2}} \left(
(\sigma^2+B\bar R_{n_2}(\tilde\cI;\sigma))^{\frac{\beff}{2(\beff+1)}} + 
\chi_n\log n_2
\right),
\end{align*}
for some constant $C$ which does not depend on $\sigma$ or any sample sizes.
To find solutions to \eqref{eq:crit-rad-defn} 
using the above bound, we will restrict to 
\begin{equation}\label{eq:sigma-bound-1}
\sigma^2 \ge 
2C\Big(\frac{\log^2 n_2}{n_2} + \chi_n^2\Big) \ge
4 C n_2^{-\frac{1}{2}} \chi_n\log n_2.
\end{equation}
For such $\sigma$ we now have 
$$
\bar R_{n_2}(\tilde\cI;\sigma) \le 
C n_2^{-\frac{1}{2}} (\sigma^2+B\bar R_{n_2}(\tilde\cI;\sigma))^{\frac{\beff}{2(\beff+1)}} + \frac{1}{4}\sigma^2 \le 
\max\Big\{
    3 C n_2^{-\frac{1}{2}} \sigma^{\frac{\beff}{\beff+1}}, 
    3 C B^{\frac{\beff}{\beff+2}} n_2^{-\frac{\beff+1}{\beff+2}}, 
    \frac{1}{2}\sigma^2
\Big\}.
$$
For the RHS to be $\le \sigma^2$ it suffices to consider the first two terms, leading to $\sigma^2 \gtrsim B^{\frac{\beff}{\beff+2}} n_2^{-\frac{\beff+1}{\beff+2}}$. Combining with \eqref{eq:sigma-bound-1} complete the proof.
\end{proof}

\subsection{Average-Case Analysis}\label{app:proof-thm-ml}

\begin{corollary}[average-case approximation error]\label{thm:ml}
In the setting of Theorem~\ref{prop:approx}, 
with $\data$-probability $\to 1$
we have 
$$
\EE_{g^*\sim\mc{GP}}\|\tilde g^*-g^*\|_2 \le \EE_{g^*\sim\mc{GP}}\epsilon_{n_1,m}(g^*),
$$
where $\epsilon_{n_1,m}$ is defined by replacing the $\xi_{n_1}$ in \eqref{eq:approx-bound} with $m^{-1/2}\xi_{n_1}$, and $\|\tilde g^*\|_{\tilde\cI}$ satisfies \eqref{eq:approx-norm-bound}. 
In particular, for $m = [n_1^{1/(\beff+1)}]$, we have \begin{equation}\label{eq:avg-approx-bound}
\EE_{g^*\sim\mc{GP}}\|\tilde g^*-g^*\|_2 = \tilde O\big(\xi_{n_1}+n_1^{-\frac{\beff}{2(\beff+1)}}\big).
\end{equation}
\end{corollary}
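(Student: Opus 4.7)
}
The plan is to revisit the proof of Theorem~\ref{prop:approx} and sharpen the high-probability bound in Lemma~\ref{lem:suboptimal}, exploiting the fact that for $g^*\sim\mc{GP}(0,k_z)$ the Karhunen--Lo\`eve coefficients $\etest_i := \lambda_i^{-1/2}\langle g^*,\psi_i\rangle_2$, $i\le m'$, are i.i.d.~$\cN(0,1)$. The key identity is $\EE[\etest\etest^\top]=I_{m'}$; this is what converts a worst-case Cauchy--Schwarz bound into a trace bound and produces the promised $m^{-1/2}$ saving on the $\xi_{n_1}$ term. Throughout I will work on the event $E_{n_1}$ of Theorem~\ref{prop:approx} (which is $\data$-measurable and has $\data$-probability $\to 1$), and take expectation over $g^*$ which is independent of $\data$ and of $\Xi$.

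Retaining the decomposition from the proof of Theorem~\ref{prop:approx}, namely
\[
\|\tilde g^*-g^*\|_2 \;\le\; \|(\hat G-G)^\top \tilde e\|_2 + \|(G-\bar G)^\top \tilde e\|_2 + \|g^*-\Proj{m'}{g^*}\|_2,
\]
with $\tilde e=\Xi(\Xi^\top\Xi)^{-1}\etest$, I treat the three terms separately. For the first, denote the Gram matrix $M:=(\langle(\hat G-G)_i,(\hat G-G)_j\rangle_2)_{i,j}$ so that $\|(\hat G-G)^\top\tilde e\|_2^2=\tilde e^\top M\tilde e$. Taking expectation only over $\etest$ and using $\EE[\tilde e\tilde e^\top]=\Xi(\Xi^\top\Xi)^{-2}\Xi^\top$ yields
\[
\EE_{\etest}\|(\hat G-G)^\top\tilde e\|_2^2
= \Tr\bigl(M\,\Xi(\Xi^\top\Xi)^{-2}\Xi^\top\bigr)
\;\le\; \frac{\Tr(M)}{\lambda_{\min}(\Xi^\top\Xi)}
\;\lesssim\; \frac{1}{m}\,\|\hat G-G\|_2^2
\;\lesssim\; \xi_{n_1}^2\log n_1,
\]
by \eqref{eq:singular-value-raw} and \eqref{eq:G-mean-bound}. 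This saves the factor $\|\etest\|_2^2\asymp m$ compared with the Cauchy--Schwarz step \eqref{eq:lem-suboptimal-case-2}. An analogous computation for the second term, using that the components of $G-\bar G$ are independent tails of GPs with $\EE\|(G-\bar G)_1\|_2^2\asymp m^{-(\beff+1)}\cdot$(something) — more precisely using \eqref{eq:G-trunc-err-mean} — gives $\EE_{\etest,G-\bar G}\|(G-\bar G)^\top\tilde e\|_2^2 \;\lesssim\; m^{-\beff}\,\Tr((\Xi^\top\Xi)^{-1}) \;\lesssim\; m^{-\beff}$, since $\Tr((\Xi^\top\Xi)^{-1})\lesssim m'/m\asymp 1$ on $E_{n_1}$. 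A Markov step in $G-\bar G$ converts this to a $\data$-probability $\to 1$ bound with an extra $\log n_1$ factor, consistent with the treatment in Lemma~\ref{lem:suboptimal}. For the third term the Karhunen--Lo\`eve expansion gives directly $\EE\|g^*-\Proj{m'}{g^*}\|_2^2=\sum_{i>m'}\lambda_i\lesssim m^{-\beff}$.

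Combining the three pieces via Jensen and square-rooting gives $\EE_{g^*}\|\tilde g^*-g^*\|_2 \lesssim (\xi_{n_1}+m^{-\beff/2})\sqrt{\log n_1}$ on $E_{n_1}$. Matching this against the definition of $\epsilon_{n_1,m}(g^*)$: since $\EE\|\Proj{m'}{g^*}\|_\cI \le \sqrt{\EE\|\etest\|_2^2}=\sqrt{m'}\asymp\sqrt{m}$, the claimed expectation inequality $\EE\|\tilde g^*-g^*\|_2\le \EE\,\epsilon_{n_1,m}(g^*)$ follows by inspection (with possibly enlarged universal constants). Plugging in $m=\lceil n_1^{1/(\beff+1)}\rceil$ gives $m^{-\beff/2}=n_1^{-\beff/2(\beff+1)}$, hence the rate \eqref{eq:avg-approx-bound}. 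The norm bound \eqref{eq:approx-norm-bound} transfers verbatim from \eqref{eq:rkhs-norm-bound-proof} since it is pointwise in $g^*$.

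The main subtlety I anticipate is book-keeping the independence: one needs that $\etest$ is independent of the $\data$-measurable objects $\Xi$, $\hat G-G$ and $G-\bar G$ in order to commute $\EE_{\etest}$ inside the trace. This is clean because $g^*$ is drawn fresh and independently of the training GPs and the noise in $\data$. A secondary point to verify is that the truncation of $g^*$ to its top-$m'$ Mercer components does not disturb the i.i.d.~Gaussian structure of $\etest$, which holds by definition of the K--L expansion. Beyond that, the argument is a direct variance-style refinement of Lemma~\ref{lem:suboptimal} and requires no new probabilistic machinery.
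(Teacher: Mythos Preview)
The proposal is correct and follows essentially the same route as the paper: both exploit $\EE_{\etest}[\etest\etest^\top]=I_{m'}$ to replace the Cauchy--Schwarz step \eqref{eq:lem-suboptimal-case-2} by a von Neumann trace bound, yielding the $m^{-1/2}$ saving on $\xi_{n_1}$. The only minor differences are that the paper leaves the $(G-\bar G)$ term bounded pointwise in $\etest$ (simply reusing \eqref{eq:lem-suboptimal-case-1}) rather than re-averaging it as you do, and that your description of $\Xi$, $\hat G-G$, $G-\bar G$ as ``$\data$-measurable'' is imprecise---they are measurable with respect to the training GPs $g^{(1)},\ldots,g^{(m)}$ (and $\data$)---though the independence from $g^*$ that you need holds regardless.
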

While we will not use \eqref{eq:avg-approx-bound} for IV, it hints at the possibility of improvement, through more careful analyses and/or additional assumptions. Remark~\ref{rmk:suboptimality} discusses this in more detail.

\begin{proof}
Recall the proof of Theorem~\ref{prop:approx}, and the notations therein. 
We can easily check that all statements in the proposition still hold true, with the high-probability events now defined w.r.t.~($G$ and) the combined training samples. 
Thus, \eqref{eq:rkhs-norm-bound-proof} will still hold, and it remains to prove the two approximation error bounds. We will improve Lemma~\ref{lem:suboptimal} and plug into \eqref{eq:suboptimal}. We can check the proof of the lemma also remains valid, and \eqref{eq:lem-suboptimal-case-1} is still good enough; thus, it suffices to improve \eqref{eq:lem-suboptimal-case-2} about the term $\|\tilde e^\top (G-\hat G)\|_2$.

The proof of Theorem~\ref{prop:approx} only invokes Lemma~\ref{lem:suboptimal} with 
$$
\etest = (\lambda_1^{-1/2}\<g^*,\psi_1\>_2, \ldots, \lambda_{m'}^{-1/2}\<g^*,\psi_{m'}\>_2),
$$ where $\{(\lambda_i,\psi_i)\}$ are the Mercer decomposition of $k_x$. 
For $g^*\sim\mc{GP}(0,k_x)$, $\etest$ will distribute as $N(0, I)$. We will thus 
modify Lemma~\ref{lem:suboptimal} into: 
\begin{equation}\label{eq:lem-suboptimal-avg}
\EE_{\etest\sim N(0,I)}\|\etest(\hat\Psi-\Psi)\|_2 \le c_r(m^{-1/2}\xi_n+m^{-(\beff+1)/2})\sqrt{\log n}\cdot \EE_{\etest\sim N(0,I)} \|\etest\|_2.
\end{equation}
(The statement is still restricted to a high-probability event w.r.t.~training samples and $G$.) To prove the above, note that \eqref{eq:lem-suboptimal-case-1} still holds for any fixed $\etest$, so the second term above remains correct. \\ 
It remains to deal with the first term. 
Introduce the notation 
$S := \Xi(\Xi^\top \Xi)^{-1}$, so that $\tilde e = S \etest$. 
We have,
for \emph{any fixed $(\data,G)$,}
\begin{align*}
\EE_{\etest}(\|\tilde e^\top (G-\hat G)\|_2^2) &= 
\EE_{\etest} \Tr(\tilde e\tilde e^\top (G-\hat G)(G-\hat G)^\top) \\ 
&=\EE_{\etest}\Tr(S\etest\etest^\top S^\top (G-\hat G)(G-\hat G)^\top) \\ 
&=\Tr(S\cdot\EE_{\etest}(\etest\etest^\top)\cdot S^\top (G-\hat G)(G-\hat G)^\top) \\ 
&=
\Tr(S S^\top (G-\hat G)(G-\hat G)^\top) \\
&\le \|S S^\top\| \Tr((G-\hat G)(G-\hat G)^\top) = 
\|S S^\top\| \sum_{j=1}^m \|\gpi{j}-\esti{j}\|_2^2.
\end{align*}
The inequality above is the von Neumann inequality $\Tr(AB)\le \|A\|_{op}\Tr(B)$. 
Conditioned on the $(\data,G)$-measurable event on which \eqref{eq:singular-value} and \eqref{eq:G-mean-bound} hold, the first term above is bounded by $c_r^2 m^{-1}$, and the second by $m\xi_n^2\log n$. Thus we have, with $(\data,G)$-probability $\to 1$,
$$
\EE_{g^*}\|\tilde e^\top (G-\hat G)\|_2^2 \le c_r^2\xi_n^2\log n \lesssim
c_r^2\xi_n^2\log n \frac{(\EE_{\etest}\|\bar e_*\|_2)^2}{m},
$$
where the last inequality follows from~\citet[Theorem 3.1.1]{vershynin2018high}.
Since $\EE X^2\ge (\EE X)^2$, we complete the new bound for the first term in \eqref{eq:lem-suboptimal-avg}, and subsequently \eqref{eq:lem-suboptimal-avg}. Following the original proof, we can see that \eqref{eq:suboptimal} becomes 
$$
\EE_{g^*\sim\mc{GP}} \|\tilde g^*-g^*\|_2 \le \EE_{g^*\sim\mc{GP}}\left[
c_r \|\Proj{m'}{g^*}\|_\cI(m^{-1/2}\xi_n + m^{-(\beff+1)/2})\sqrt{\log n}+\|g^*-\Proj{m'}{g^*}\|_2\right],
$$
which proves the first claim.

For the second claim, from Jensen's inequality, we know
$\EE_{g^* \sim \mc{GP}} \| \Proj{m'}{g^*} \|_{\cI} 
 = \EE_{\bar e_*} \| \bar e_* \|_2 \leq \sqrt{\EE_{\bar e_*} \| \bar e_* \|_2^2} = m'$, and thus the first term in the above display is $\tilde O( \xi_n + m^{-\bar b / 2})$.
 Similarly, we have
 \[ \EE_{g^*\sim\mc{GP}}\|g^*-\Proj{m'}{g^*}\|_2
 \leq \left ( \EE_{g^*\sim\mc{GP}}\|g^*-\Proj{m'}{g^*}\|_2^2 \right )^{1/2} 
 \overset{\eqref{eq:kl-trunc-error}}{\leq} O(m^{-\bar b / 2}), \]
 which completes the proof.\
\end{proof}

\begin{remark}\label{rmk:suboptimality}
The only change in this proof is a new bound for the estimation error from the oracle. The Cauchy-Schwarz inequality used in the previous bound \eqref{eq:lem-suboptimal-case-2} was is typically loose: it requires the vector $\tilde e$, determined by the test function, to be parallel with $(\gpi{1}-\esti{1},\ldots,\gpi{m}-\esti{m})$, the estimation residuals. %
This is at odds with the intuition that residual functions for each independently drawn GP samples to be in some sense uncorrelated. 

For an extreme example, suppose our target RKHS $\cI=\mrm{span}\{\psi_1\}$ is one-dimensional, so that the standard GP prior draw can be written as $\epsilon \psi_1$ for some $\epsilon\sim\cN(0,1)$. Then, independent draws from $\mc{GP}(0,k_z)$ should have independent signs, %
and it seems very strange if a useful regression algorithm often returns residual functions with correlated signs determined by $\tilde e$, for such $\gpi{\cdot}$. 
The fact that such oracles are allowed by our assumption suggest there is room for improvement, although we do not pursue this path, since the suboptimality is relatively mild (see the end of Appendix~\ref{app:derivation-example-cont}). Empirically, our estimator works well for $n_1 = n_2$. 

Note that while similar problems have been studied in multi-task learning, such works often assume independent inputs for the different tasks, which corresponds to the different GP prior draws in our setting. Corollary~\ref{thm:ml} appears new in this aspect, at least among works that established fast rates.
\end{remark}

\subsection{Deferred Derivations and Additional Discussions}\label{app:main-instantiations}

\subsubsection{Derivation for Example~\ref{ex:dnn-oracles}} \label{app:derivation-examples}

We first derive the expression of $\xi_n$. For regression functions of the form $g = \bar g\circ\Phi$, where $\Phi$ is defined as in the text, and $\bar g$ is $\underline{\beta}_2$ H\"older regular and have $d_l$ inputs, 
\citet{schmidt-hieber_nonparametric_2020} establishes the convergence rate 
\begin{equation}\label{eq:s-h-raw-rate}
\xi_n \lesssim (n^{-\frac{\beta_1}{2\beta_1+d_z}} + n^{-\frac{\underline{\beta}_2}{\underline{\beta_2}+d_l}})\|\bar g\|_{C^{\underline{\beta}_2}}\log^{3/2} n + \epsilon_{opt}.
\end{equation}
(We view $\|\Phi\|_{C^{\beta_1}}$ as a constant.)
The result holds uniformly for all $\bar g\in C^{\underline{\beta}_2}$ with uniformly bounded norm. 

Recall Example~\ref{ex:matern-gp}: for any $\epsilon>0$, there exists a version\footnote{We can work with any version of the GP prior since the regression oracle only accesses its evaluation on a finite number of points, which have the same distribution among all versions.} of the Mat\'ern GP $\Pi_z$ for $\bar g$ s.t.~$\bar g\in C^{\beta_2-\epsilon}$ a.s. We set $\underline{\beta}_2 = \beta_2-\epsilon$. 
By Lemma~\ref{lem:borell-tis}, the random variable $\|\bar g\|_{C^{\underline{\beta}_2}}$ has a subgaussian tail. Thus for any $p>1$, we can choose some $C_1>0$ which depend on $p$ and $\epsilon$, leading to 
$$
\Pi_z(E) := 
\Pi_z(\{\bar g: \|\bar g\|_{C^{\underline{\beta}_2}}^2 \le C_1 \log n\}) \ge 1 - n^{-p}.
$$
Let the NN model in \citet{schmidt-hieber_nonparametric_2020} be constructed for $g=\bar g\circ\Phi$ satisfying the above norm bound, and $\hat g_n$ denote the resulted estimator. Consider 
\begin{align}
\EE_{\Pi_z} \|\hat g_n - g\|_2^2 &\le  
\EE_{\Pi_z} [\mbf{1}_{E} \|\hat g_n - g\|_2^2] + 
2\EE_{\Pi_z} [\mbf{1}_{E^c}(\|\hat g\|_\infty^2+\|g\|_2^2)]  \nonumber \\ &\le 
\EE_{\Pi_z} [\mbf{1}_{E} \|\hat g_n - g\|_2^2] + 
\frac{2\EE_{\Pi_z}[\|\hat g\|_\infty^2\mid E^c]}{n_2^p} + 2\EE_{\Pi_z} [\mbf{1}_{E^c}\|g\|_2^2].\label{eq:subg-argument}
\end{align}
The first term clearly has the desired bound, with two extra logarithms. For the second term, note the estimator in \citet{schmidt-hieber_nonparametric_2020} has sup norm bounded as $\|\hat g_n\|_\infty\le C_1\log n$, so it also has the desired bound. For the last term, another application of Lemma~\ref{lem:borell-tis} shows that $\|\bar g\|_2$ norm also has a subgaussian tail. Let $\Phi_2$ denote its CDF. Then we have 
$$
\EE(\mbf{1}_{E^c}\|g\|_2^2) \le \int_{\Phi_2^{-1}(1-n_2^{-p})}^{+\infty} x^2 d\Phi_2(x) \lesssim \int_{C_2\sqrt{p\log n_2}}^\infty x^2 e^{-C_3 x^2} dx \lesssim n_2^{-C_4 p} \sqrt{p\log n_2}.
$$
In the above, $C_2,C_3,C_4$ are determined by $\Pi_z$, so we can choose a sufficiently large $p$ so that the RHS is $\lesssim n^{-1}$. This completes the derivation for $\xi_n$. \qed

We now turn to the regression error using a fixed-form kernel. The Mat\'ern process $\bar g\sim\mc{GP}(0, \bar k_z)$ takes value in the H\"older space $C^{\underline{\beta}_2}(\bar\cZ)$ w.p.1, and the order cannot be made larger (Example~\ref{ex:matern-gp}). Thus, the random function $g = \bar g\circ\Phi$ can only take value in $C^{\min\{\underline{\beta}_2,\beta_1\}}(\cZ)$ w.p.1. 
Our claimed regression rate follows from \cite{van_der_vaart_information_2011} (for Mat\'ern kernels), or \cite{van_der_vaart_adaptive_2009} (for Gaussian/RBF kernels with an adaptive bandwidth).\footnote{Note that we did not rule out the possibility of a better rate being attainable: we did not prove a lower bound, instead only presented the best known upper rate given our H\"older regularity condition of $g$.  %
However, improvement is most likely impossible: 
as discussed in introduction, separability results have been established in similar feature learning settings for certain fixed-form kernels; in the setting of this example, \cite{schmidt-hieber_nonparametric_2020} established a lower bound for a wavelet model with a similar order.}
\qed

\subsubsection{Derivations for Example~\ref{ex:dnn-oracles-cont}}\label{app:derivation-example-cont}

By properties of the Mat\'ern RKHS (Example~\ref{ex:matern}), the latent-space kernel $\bar k_z$ has eigendecay $\lambda_i \lesssim i^{-(1+2\beta_2/d_l)}$. Thus we have $\beff+1 = 1+\nicefrac{2\beta_2}{d_l}$, and 
$$
\xi_{n_1} = \tilde\cO\Big(\epsilon_{fea,n_1} + n_1^{-\frac{\beff-\epsilon}{2(\beff-\epsilon+1)}}
\Big) + \epsilon_{opt},
$$ 
for all $\epsilon>0$. We first establish the three approximation error bounds.
\begin{enumerate}[leftmargin=*,nosep]
    \item 
The first claim (Case (i) in the text) follows by observing $m^{-\beff} = n_1^{-\beff/(\beff+1)} \ll \xi_{n_1}$. 

\item 
For Case (ii), let $\tilde g_0\in\tilde\cI$ be the approximation returned by Corollary~\ref{thm:ml}, then we have 
\begin{equation}\label{eq:avg-g-approx-1}
\EE_g \|\tilde g_0 - g\|_2 \overset{\eqref{eq:avg-approx-bound}}{=} \tilde O(\xi_{n_1}+n_1^{-\frac{\beff}{2(\beff+1)}}) = \tilde O(\xi_{n_1}). 
\end{equation}
It remains to provide an RKHS norm bound. 
By convergence of the Karhunen-Lo\`eve expansion \cite[Thm.~3.1]{steinwart_convergence_2019} we have $\Proj{m}{g} = \sum_{i=1}^m \epsilon_i (\lambda^z_i)^{1/2} \psi^z_i$, where $\{(\lambda^z_i,\psi^z_i)\}$ denote the Mercer representation of $k_z$, and $\epsilon_i\sim\cN(0,1)$ are iid. As $ (\lambda^z_i)^{1/2} \psi^z_i$ is an ONB of $\cH$ (Lem.~\ref{lem:mercer}), we have 
$
\|\Proj{m}{g}\|_\cH^2 = \sum_{i=1}^m \epsilon_i^2.
$
Thus, 
$\chi^2$-concentration bounds \cite[Example 2.11]{wainwright2019high} yields 
$$
\PP(\|\Proj{m}{g}\|_\cH^2>2\sqrt{2}m)\le e^{-m} = \exp\big(-n^{\frac{1}{\beff+1}}\big). 
$$
Thus, we let $\tilde g = 0$ on the above event, and $\tilde g_0$ otherwise. Repeating the argument of \eqref{eq:subg-argument} we can show that the additional $L_2$ error is negligible, and $\tilde g$ satisfies the same bound of \eqref{eq:avg-g-approx-1}. In this case, the average-case $L_2$ rate of DNN is $\xi_{n_1}$, by definition. 

\item 
Additionally, we claim that in Case (ii), for $m=[n^{\frac{\beff}{(\beff+1)^2}}]$, 
there exists $\tilde g^*\in\tilde\cI$ s.t.
$$
\PP_{g^*\sim\mc{GP}}\big(
\|\tilde g^*\|_{\tilde\cI}\le c_r n_1^{1/2(\beff+1)}, ~
\|\tilde g^*-g^*\|_2 = \tilde \cO(m^{1/2}\xi_{n_1}+
    n_1^{-\beff^2/2(\beff+1)^2})
\big) \ge 1 - e^{-m}\to 1.
$$
To show this, we shall apply Theorem~\ref{prop:approx} to a different target function $g^*$. Let us denote by $g$ the GP prior draw to be approximated. 
Then we have the standard prior mass bound for the ``sieve set'' \cite{van_der_vaart_rates_2008}: for some $C>0$,
\begin{equation}\label{eq:basic-sieve}
\forall\bar n\in\mb{N}, 
\PP_{g\sim\mc{GP}(0,k_z)}\big(\exists \gApprox[\bar n], 
\|\gApprox[\bar n]\|_\cI \le C \bar n^{1/2(\beff+1)}, 
\|\gApprox[\bar n] - g\|_2 \le C \bar n^{-\beff/2(\beff+1)}
\big) \ge 1 - \exp\big(-\bar n^{\frac{1}{\beff+1}}\big). 
\end{equation}
(This is proved by combining the Borell inequality \cite[e.g.,][Prop.~11.17]{ghosal2017fundamentals} and the $L_2$ small-ball probability bound \cite[e.g.,][Cor.~4.9, with $\beta=1$]{steinwart_convergence_2019}.)  
We set $\bar n := [n_1^{\beff/(\beff+1)}]$ and restrict to $g$ in the above event. 
Invoking Lemma~\ref{lem:trunc-err} with $g\gets\gApprox[\bar n],m\gets [\bar n^{\frac{1}{b+1}}]$ leads to 
$$
\|\Proj{m}{\gApprox[\bar n]}\|_\cI \le \|\gApprox[\bar n]\|_\cI \lesssim \bar n^{\frac{1}{2(\beff+1)}}, ~~
\|\Proj{m}{\gApprox[\bar n]} - \gApprox[\bar n]\|_2^2 \lesssim \bar n^{-\frac{\beff}{2(\beff+1)}}. 
$$
Invoking Theorem~\ref{prop:approx} with $m\gets 2[\bar n^{\frac{1}{\beff+1}}], g^*\gets \Proj{m}{\gApprox[\bar n]}$ yields 
$$
\|\tilde g\|_{\tilde\cI} \le c_r \bar n^{\frac{1}{2(\beff+1)}}, ~~
\|\tilde g - g^*\|_2 \le c_r \bar n^{\frac{1}{2(\beff+1)}}(\xi_{n_1} + \bar n^{-1/2})\log n_1 + 0. %
$$
Two applications of triangle inequality yield
$$
\|\tilde g - g\|_2 \lesssim 
 \bar n^{\nicefrac{\beff}{2(\beff+1)^2}}(\xi_{n_1}+\bar n^{-\nicefrac{1}{2}})\log n + 
 \bar n^{-\nicefrac{\beff}{2(\beff+1)}}  = \tilde\cO\Big(
     \epsilon_{fea,n_1} + n_1^{-\nicefrac{(\beff^2-\epsilon)}{2(\beff+1)^2}}\Big),
$$
for all $\epsilon>0$. 
This result is most useful for functions satisfying the conditions in \eqref{eq:basic-sieve}, which may or may not be random GP prior draws. 
\end{enumerate}

In Case (ii) the average-case DNN rate is $\xi_{n_1}$, by definition. Case (iii) can thus be suboptimal when feature learning is easy, as the exponent of the second term above is worse by a multiplicative factor of $(\beff-\epsilon)/(\beff+1)$. Note the suboptimality can also be removed if we can take $n_1\gtrsim n_2^{\nicefrac{(\beff+1)}{\beff}}$, at which point estimation error starts to dominate the combined regression error. The difference in order diminishes as $\beff\to\infty$. 
The situation in Case (i) is less clear, since it is unclear if the DNN can take advantage of the improved regularity of $g^*\in\cI$, see below.\qed

\begin{remark}[optimal rate for $g^*\in\cI$]\label{rmk:xi-optimality-rkhs}
We discuss whether the DNN oracle in \cite{schmidt-hieber_nonparametric_2020} may estimate $g^*\in\cI$ with a rate better than $\xi_n$. 

The condition that $g^*\in\cI$, or equivalently $\bar g^*\in\bar\cI$, implies that $\bar g^*$ now possesses a better Sobolev regularity comparing with typical GP samples  (Example~\ref{ex:matern}-\ref{ex:matern-gp}), but it provides no additional information on H\"older regularity, which is needed in \cite{schmidt-hieber_nonparametric_2020}. If we additionally assume the H\"older regularity is also improved by the same amount (i.e., $\bar g^*\in C^{\underline{\beta}_2+(d_l/2)}(\bar\cZ)$), 
the DNN rate will improve to $\tilde\cO(\epsilon_{fea,n_1}+n_1^{-\nicefrac{\beff+1}{2(\beff+2)}})$, by \eqref{eq:s-h-raw-rate}; this is smaller than $\xi_{n_1}$ if feature learning is sufficiently easy. Without the extra H\"older regularity, we can only invoke the Sobolev embedding theorem to obtain 
$\bar g^*\in C^{\underline{\beta}_2}(\bar\cZ)$; the derived DNN rate can only be infinitesimally better than $\xi_{n_1}$. 
\end{remark}

\subsubsection{Alternative Kernel Learning Schemes}\label{rmk:iv-alternative} 

Let us compare the proposed algorithm with a more obvious alternative. For simplicity, we assume the feature learning term in $\xi_{n_1}$ does not dominate, and restrict to Case (i) in Example~\ref{ex:dnn-oracles-cont} ($g^*\in\cI$).

The established bound 
requires $O(n_1^{\beff/(\beff+1)^2})$ calls to the regression oracle.\footnote{Or equivalently, solving a vector-valued regression problem with a similar output dimension.}  
In the IV setting, we have $\bar b \ge \max\{2p-1,b\}$, and possibly $\bar b = b+2p$ (App.~\ref{app:regularity}). 
An alternative instrument learning procedure is to  %
construct a $d$-dimensional approximation to $\cH$, %
invoke the oracle on its basis functions, and use the estimates to construct an alternative $\tilde\cI$. 
For an approximation error of $n_1^{-(b+1)/2(b+2)}$, we need %
$d\asymp n_1^{1/(b+2)}$ bases \citep{rudi2015less,rudi2016generalization}, which becomes much worse than $n_1^{1/(\beff+2)}$ when $p$ is moderately large. Moreover, finding such an approximation is difficult: the same references note that intuitive choices, such as uniformly sampled random feature, or Nystr\"om inducing points, actually require $d\asymp n^{\frac{b}{b+1}}\to n$. 
It may be also possible that such an algorithm actually has worse performance, as we have not checked if its dependency on the oracle estimation error remains unchanged. %

Our approximation gracefully adapts to the ill-posedness of the problem, while also being simpler to implement. Intuitively, this is because it directly focuses on the approximation of the optimal $\cI$ as opposed to $\cH$. In other words, it works with a basis $\{\varphi_i\}\subset\cH$ that is not necessarily optimal for the approximation of $\cH$, but is designed so that $\{E\varphi_i\}$ approximate $\cI$ well. Interestingly, we do not require the knowledge of such a basis, instead automatically adapting to it. {\em This is made possible by the isotropy of the $\mc{GP}(0,k_x)$ prior}: the GP can be viewed as being defined by an aribtrary basis, convergence technicalities notwithstanding.

\section{Approximation and Estimation Results for Gaussian Process Regression}\label{app:gp-fixed-design-regr}

This section includes various technical results for using $\tilde\cI$ in the ``GP regime'', i.e., when the regression function does not live in $\tilde\cI$, but only satisfies an approximability condition like Asm.~\ref{ass:s2}~\ref{it:gp-scheme}. Regularization becomes weaker in the GP scheme than the kernel scheme; to see this, consider a regression task with data $\{(z_i, \bar y_i=g_0(z_i)+\epsilon_i):i\in [\bar n]\}$. Observe that both the KRR estimate and the GP posterior mean estimator can have the form of 
$$
\hat g_n := \arg\min_{g\in\tilde\cI}\frac{1}{\bar n}\sum_{i=1}^{\bar n} (g(z_i) - \bar y_i)^2 + \bar\nu \|g\|_{\tilde\cI}^2, 
$$
but the kernel scheme uses $\bar\nu\gg \bar n^{-1}$ (unless the RKHS has fixed finite dimensionality), to dominate the critical radius of the RKHS \cite{wainwright2019high}, whereas the GP scheme uses $\bar \nu\asymp \bar n^{-1}$. See \cite{van_der_vaart_information_2011,kanagawa_2018_gaussian} for more discussions. It is thus understandable that the GP scheme requires additional assumptions. 

We first impose the following assumption on the \emph{true RKHS} $\cI$: 
\begin{assumption}\label{ass:emb-general}
Let the Mercer eigenvalues of $\cI$ be $\lambda_i \asymp i^{-(\beff+1)}$. Then \eqref{eq:emb} holds for $\gamma = \frac{\beff-1}{\beff+1}$. 
\end{assumption}
Recall the ``top-level RKHS'' $\bar\cI$ and $\cI$ satisfy embedding properties of the same order. As discussed in Example~\ref{ex:matern}, this assumption holds when $\bar\cI$ is a Mat\'ern RKHS with $\beff> 2$, and its kernel is supported on a bounded, open subset of $\RR^{d_l}$ with smooth boundaries. 

The above assumption controls estimation error using the true RKHS $\cI$. To see this, observe Lemma~\ref{lem:emb} now implies 
\begin{equation}\label{eq:interpolation}
\|g\|_\infty \lesssim \|g\|_\cI^{\frac{\beff-1}{\beff+1}} \|g\|_2^{\frac{2}{\beff+1}},\quad\forall g\in\cI.
\end{equation}
Fast-rate convergence analyses typically require sup norm bounds on a ``relevant subset'' of the hypothesis space, such as $\{g-g_0: g\in\cI, \|g-g_0\|_2\text{ is small}\}$. 
The interpolation inequality \eqref{eq:interpolation} provides such a bound if we were using the true model $\cI$. 
Note that such assumptions are not needed in the kernel regularization scheme, because as discussed above, in that case the ridge regularizer has a greater magnitude, thus providing more control for the RKHS norm on the relevant subset.

However, we will use the approximate model $\tilde\cI$ for estimation, so to control similar sup norm quantities, we also need some crude bound on the sup-norm approximation error of $\tilde\cI$. This, in turn, requires the following crude sup-norm error bound for the regression oracle.
\begin{assumption}\label{ass:oracle-sup-norm-err}
On a high-probability event determined by $\data$, we have, for $m\asymp n^{\nicefrac{1}{\beff+1}}$, 
$
\sum_{j=1}^m \|\gpi{j} - \estiRaw{j}\|_\infty^2 \lesssim 1. 
$
\end{assumption}
The assumption requires sup norm error to scale at $n^{-\nicefrac{1}{2(\beff+1)}}$. We expect it to be mild, at least when $\beff$ is reasonably large (and feature selection is not too hard). This is because
in the classical kernel literature, the sup norm error rate for Sobolev kernels is $O(n^{-\nicefrac{(\beff-1)}{2(\beff+1)}})$ \citep{fischer2020sobolev}. Moreover, if the GP samples are $\nicefrac{(\beff-\epsilon) d}{2}$-H\"older regular as in the Mat\'ern example, error rates in $L_2$ and sup norm will both be $O(n^{-\nicefrac{(\beff-\epsilon)}{2(\beff+1-\epsilon)}})$. 

\subsection{Sup Norm Approximation}

All results in this subsection assume Assumption~\ref{ass:emb-general}. 

\begin{lemma}\label{lem:basic-sieve}
Let $\Pi_g$ denote the standard GP prior defined by $k_z$, and $\tau_n = n^{-\frac{\beff/2}{\beff+1}}$. There exist constants $C,C'>0$ s.t.~(i) for all $n\in\mb{N}$, 
\begin{equation}\label{eq:sieve}
\Pi_g(\Theta_n) := \Pi_g(\{g = g_h + g_e: \|g_h\|_\cI^2 \le C n\tau_n^2, \|g_e\|_2^2\le C\tau_n^2, \|g_e\|_\infty^2\le C n^{-\frac{1}{\beff+1}}\}) \ge 1 - e^{-C' n\tau_n^2}.
\end{equation}
(ii) In the above display, we can always choose $g_h = \Proj{m}{g}$, for $m=[n^{\frac{1}{\beff+1}}]$.
\end{lemma}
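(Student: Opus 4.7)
\textbf{Proof plan for Lemma~\ref{lem:basic-sieve}.} The plan is to construct the decomposition explicitly from the Karhunen--Lo\`eve expansion of $g \sim \mc{GP}(0,k_z)$, and then to verify each of the three inequalities by standard Gaussian concentration applied to the corresponding (weighted) $\chi^2$-sum. Let $\{(\lambda_i,\psi_i)\}$ be the Mercer decomposition of $k_z$, write $g = \sum_{i\ge 1}\sqrt{\lambda_i}\,\epsilon_i\psi_i$ with iid $\epsilon_i\sim\cN(0,1)$, set $m := [n^{1/(\beff+1)}]$, and take $g_h := \Proj{m}{g}$ and $g_e := g-g_h$. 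This choice will simultaneously yield parts (i) and (ii).

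For the RKHS bound I would use that $\{\sqrt{\lambda_i}\psi_i\}$ is an ONB of $\cI$ (Lemma~\ref{lem:mercer}), so $\|g_h\|_\cI^2 = \sum_{i\le m}\epsilon_i^2$ is a $\chi^2_m$ variable; standard $\chi^2$ concentration (e.g.~\cite[Example 2.11]{wainwright2019high}) gives $\PP(\|g_h\|_\cI^2 > 2m) \le e^{-cm}$, and $m \asymp n\tau_n^2$. For the $L_2$ bound on $g_e$, I would apply Borell--TIS (Lemma~\ref{lem:borell-tis}) to the Gaussian element $g_e$ in $L_2(P(dz))$: the weak variance equals the operator norm of the covariance of $g_e$, i.e.~$\lambda_{m+1}\asymp m^{-(\beff+1)}\asymp n^{-1}$; since $\EE\|g_e\|_2 \le (\sum_{i>m}\lambda_i)^{1/2}\asymp \tau_n$, we obtain $\|g_e\|_2 \le C\tau_n$ off an event of probability $\exp(-C'\tau_n^2/\lambda_{m+1}) = \exp(-C'n\tau_n^2)$, as required.

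The main obstacle is the sup-norm bound $\|g_e\|_\infty^2 \le C n^{-1/(\beff+1)}$ with matching exponential tail. I would handle this via the embedding $[\cI]^\gamma \hookrightarrow L_\infty$ from Assumption~\ref{ass:emb-general} (with $\gamma=(\beff-1)/(\beff+1)$), which gives $\|g_e\|_\infty \lesssim \|g_e\|_{[\cI]^\gamma}$, where
\[
\|g_e\|_{[\cI]^\gamma}^2 \;=\; \sum_{i>m}\lambda_i^{1-\gamma}\epsilon_i^2, \qquad 1-\gamma = \tfrac{2}{\beff+1}, \qquad \lambda_i^{1-\gamma}\asymp i^{-2}.
\]
Thus $\EE\|g_e\|_{[\cI]^\gamma}^2 \asymp m^{-1}\asymp n^{-1/(\beff+1)}$, matching the target order. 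Concentration then follows from a Bernstein-type inequality for weighted sums of independent centred $\chi^2_1$ variables (e.g.~\cite[Prop.~2.2]{wainwright2019high}): with $v := \sum_{i>m}\lambda_i^{2(1-\gamma)} \asymp m^{-3}$ and $b := \lambda_{m+1}^{1-\gamma}\asymp m^{-2}$, taking deviation $t\asymp m^{-1}$ yields $t^2/v \asymp t/b \asymp m \asymp n\tau_n^2$, producing the desired tail $\exp(-C'n\tau_n^2)$.

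Finally I would union-bound the three tail events; since all three fail with probability $\lesssim e^{-C' n\tau_n^2}$, after adjusting $C,C'$ one obtains \eqref{eq:sieve}, and by construction $g_h = \Proj{m}{g}$, proving (ii). The only subtlety I anticipate is checking that the embedding-based sup-norm control is tight enough, and that $m^{-1}$ is indeed the right order, which is why I expect the $L_\infty$ step to be the main technical point. All other ingredients are textbook Gaussian concentration.
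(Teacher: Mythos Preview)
Your plan is correct but it takes a genuinely different route from the paper.

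The paper does not build the decomposition directly. For (i) it simply cites \cite{steinwart_convergence_2019} and \cite[Corollary~13]{wang2021quasibayesian} to obtain \emph{some} decomposition $g=g_h+g_e$ satisfying \eqref{eq:sieve}; this decomposition need not be the Mercer truncation. Part (ii) is then proved by reprojecting: setting $\tilde g_h=\Proj{m}{g_h}+\Proj{m}{g_e}=\Proj{m}{g}$ and $\tilde g_e=\Proj{>m}{g_h}+\Proj{>m}{g_e}$, and checking that each bound survives, using $\lambda_m$-scalings and the interpolation inequality \eqref{eq:interpolation} (applied to $\Proj{>m}{g_h}\in\cI$) for the sup-norm piece.

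Your approach is more direct and self-contained: you choose $g_h=\Proj{m}{g}$ from the outset and verify all three bounds by Gaussian/weighted-$\chi^2$ concentration, so (i) and (ii) fall out together without invoking the cited references. The key difference is in the $L_\infty$ step: the paper interpolates on $\Proj{>m}{g_h}\in\cI$, whereas you control $g_e\notin\cI$ through the power-space norm $\|g_e\|_{[\cI]^\gamma}^2=\sum_{i>m}\lambda_i^{1-\gamma}\epsilon_i^2$ and Bernstein concentration. Both rely on Assumption~\ref{ass:emb-general}; your argument is arguably cleaner since it avoids the two-step ``abstract sieve, then reproject'' detour. One small point: to get a genuine sup-norm (not just $L_\infty(P(dz))$) bound from the embedding, you should phrase it via the power \emph{RKHS} $\cI^\gamma$ of Lemma~\ref{lem:emb}~(i) rather than the $L_2$-space $[\cI]^\gamma$; this identifies $g_e$ with its continuous version and makes $\|g_e\|_\infty\lesssim\|g_e\|_{\cI^\gamma}$ honest.
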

\begin{proof}
(i) is a consequence of \citet{steinwart_convergence_2019} and is proved in \citet[Corollary 13, with $b\gets \beff,b'\gets b-1$]{wang2021quasibayesian}. To show (ii), it suffices to verify that $$
\tilde g_h := \Proj{m}{g_h}+\Proj{m}{g_e}, ~~
\tilde g_e := \Proj{>m}{g_h}+\Proj{>m}{g_e}
$$ still satisfy the above display. 
For $\tilde g_h$, let $\lambda_m\asymp m^{-(\beff+1)}$ be the $m$-th Mercer eigenvalue for $k_z$. Then 
\begin{align*}
\|\tilde g_h\|_\cI &\le \|g_h\|_\cI + \|\Proj{m}{g_e}\|_\cI 
\le \sqrt{Cn\tau_n^2} + \lambda_m^{-1/2} \|\Proj{m}{g_e}\|_2 
\lesssim n^{\frac{1/2}{\beff+1}} = \sqrt{n\tau_n^2}. \\ 
\|\Proj{>m}{g_h}\|_2 &\le \lambda_m^{1/2}\|g_h\|_\cI  \lesssim \tau_n. 
\numberthis\label{eq:sieve-intermediate}\\ 
\|\tilde g_e\|_2 &\le \|g_e\|_2 + \|\Proj{>m}{g_h}\|_2 
\le C\tau_n + \lambda_m^{1/2}\|g_h\|_\cI \lesssim \tau_n. \\ 
\|\tilde g_e\|_\infty&\le \|g_e\|_\infty + \|\Proj{>m}{g_h}\|_\infty 
\overset{\eqref{eq:interpolation}}{\le} C n^{-\frac{1/2}{\beff+1}} + 
(\|\Proj{>m}{g_h}\|_\cI)^{\frac{\beff-1}{\beff+1}} (\|\Proj{>m}{g_h}\|_2)^{\frac{2}{\beff+1}}  \\ 
&\le
 Cn^{-\frac{1/2}{\beff+1}} + 
n^{\frac{1/2}{\beff+1}\cdot\frac{\beff-1}{\beff+1}} (\|\Proj{>m}{g_h}\|_2)^{\frac{2}{\beff+1}} \overset{\eqref{eq:sieve-intermediate}}{\lesssim} n^{-\frac{1}{2(\beff+1)}}.
\end{align*}
\end{proof}

\begin{corollary}\label{corr:sup-trunc-err-bound}
Let $\gpi{1},\ldots,\gpi{m}\sim\Pi_g$ be i.i.d.~samples from the GP prior. Then 
\begin{equation}\label{eq:pi-sup-trunc-error-bound}
\Pi_g\Bigg(\sum_{i=1}^m \|\Proj{>m}{\gpi{i}}\|_\infty^2 \ge C 
\Bigg) \le m e^{-C' m} \to 0.
\end{equation}
\end{corollary}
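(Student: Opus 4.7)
\medskip

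\textbf{Proof plan for Corollary~\ref{corr:sup-trunc-err-bound}.}

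The plan is to reduce the statement to $m$ applications of Lemma~\ref{lem:basic-sieve}, with the sample size parameter $n$ there chosen to match the projection level $m$ here. Specifically, I would set $n \asymp m^{\beff+1}$, so that $[n^{1/(\beff+1)}] = m$. With this choice the key exponents simplify: $n\tau_n^2 = n^{1/(\beff+1)} \asymp m$, and $n^{-1/(\beff+1)} \asymp m^{-1}$.

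Next I would invoke part~(ii) of Lemma~\ref{lem:basic-sieve}: for each $i\in[m]$, on the event $\{\gpi{i}\in\Theta_n\}$ the decomposition in (ii) forces $g_h = \Proj{m}{\gpi{i}}$, so the residual $g_e$ in that decomposition coincides with $\Proj{>m}{\gpi{i}}$. The sup-norm piece of~\eqref{eq:sieve} therefore gives
\[
\|\Proj{>m}{\gpi{i}}\|_\infty^2 \le C\, n^{-\frac{1}{\beff+1}} \lesssim m^{-1}
\]
with $\Pi_g$-probability at least $1 - e^{-C' n\tau_n^2} = 1 - e^{-C'' m}$. A union bound over $i\in[m]$ then yields that the above holds simultaneously for all $i$ with probability at least $1 - m\,e^{-C'' m}$, and on this event
\[
\sum_{i=1}^m \|\Proj{>m}{\gpi{i}}\|_\infty^2 \;\lesssim\; m\cdot m^{-1} = 1,
\]
which, after absorbing the implicit constants into $C$ and $C'$, is exactly~\eqref{eq:pi-sup-trunc-error-bound}.

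There is essentially no serious obstacle: the whole content of the corollary is already packaged into Lemma~\ref{lem:basic-sieve}, and the only nontrivial bookkeeping is the identification of $g_e$ with $\Proj{>m}{\gpi{i}}$ under the specific choice of decomposition in part~(ii) of that lemma, plus the translation of sample-size exponents via $n\asymp m^{\beff+1}$. The union bound contributes only a benign factor of $m$, which is dominated by the exponential $e^{-C''m}$ and so vanishes as $m\to\infty$.
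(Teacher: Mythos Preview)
Your proposal is correct and follows exactly the paper's own argument: apply Lemma~\ref{lem:basic-sieve} with $n := m^{\beff+1}$ (so that the projection level there equals $m$ and $n\tau_n^2\asymp m$), identify $g_e=\Proj{>m}{\gpi{i}}$ via part~(ii), and conclude by a union bound over the $m$ draws. The paper's proof is the one-line version of precisely this.
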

\begin{proof}
By Lemma~\ref{lem:basic-sieve} with $n := m^{\beff+1}$, and union bound.
\end{proof}

\begin{proposition}\label{prop:sup-norm-approx}
Suppose the conditions in Theorem~\ref{prop:approx} hold, and Assumption~\ref{ass:emb-general},~\ref{ass:oracle-sup-norm-err} hold. %
Then on a high-probability event determined by $\data$, 
for any $g^*\in\cI$, the approximation $\tilde g^*\in\tilde\cI$ constructed in Theorem~\ref{prop:approx} will also satisfy 
$$
\|\tilde g^* - g^*\|_\infty \le 
\|g^*-\Proj{m'}{g^*}\|_\infty + 
Cm^{-1/2}\|\Proj{m'}{g^*}\|_\cI,
$$
where the constant $C$ is independent of $g^*$.
\end{proposition}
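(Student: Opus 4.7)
The plan is to recycle the decomposition from the proof of Theorem~\ref{prop:approx}, but replace the $L_2$ estimate of Lemma~\ref{lem:suboptimal} with a sup-norm analogue. Using the notation of Appendix~\ref{app:proof-common}, the approximant is $\tilde g^* = \bar e_*^\top \hat\Psi$ with $\bar e_*$ the top-$m'$ generalized Fourier coefficients of $g^*$, so that $\bar e_*^\top \Psi = \Proj{m'}{g^*}$ and $\|\bar e_*\|_2 = \|\Proj{m'}{g^*}\|_\cI$. First I would write
\[
\tilde g^* - g^* = \bar e_*^\top(\hat\Psi - \Psi) \;-\; \bigl(g^* - \Proj{m'}{g^*}\bigr),
\]
which peels off the $\|g^* - \Proj{m'}{g^*}\|_\infty$ term in the target bound and leaves one to control $\|\bar e_*^\top(\hat\Psi - \Psi)\|_\infty$.

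Next I would follow Lemma~\ref{lem:suboptimal} to rewrite this quantity as $\|\tilde e^\top(\hat G - \bar G)\|_\infty$ with $\tilde e = \Xi(\Xi^\top\Xi)^{-1}\bar e_*$, which on the singular-value event \eqref{eq:singular-value} obeys $\|\tilde e\|_2 \le c_r m^{-1/2}\|\bar e_*\|_2$. The essential change is to replace the $L_2$-Cauchy--Schwarz step used there with its pointwise-in-$z$ version: for any vector of bounded functions $F = (F_1,\dots,F_m)$,
\[
\|\tilde e^\top F\|_\infty \;\le\; \|\tilde e\|_2 \,\Bigl(\textstyle\sum_{j=1}^m \|F_j\|_\infty^2\Bigr)^{1/2}.
\]
I would then split $\hat G - \bar G = (\hat G - G) + \Proj{>m'}{G}$ and bound each vector of functions in the $\ell_2$-of-sup-norms sense by an $O(1)$ constant: the estimation-error piece via Assumption~\ref{ass:oracle-sup-norm-err}, and the projection-tail piece via a variant of Corollary~\ref{corr:sup-trunc-err-bound}. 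Combined with $\|\tilde e\|_2 \lesssim m^{-1/2}\|\bar e_*\|_2$, this yields precisely $C m^{-1/2}\|\Proj{m'}{g^*}\|_\cI$, matching the target bound.

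For the estimation-error piece, the subtlety is that Assumption~\ref{ass:oracle-sup-norm-err} is stated for the un-truncated $\estiRaw{j}$, whereas $\tilde\cI$ is built from $\esti{j} = \min\{C_k\log n_1, \estiRaw{j}\}$. This is handled on the Borell event \eqref{eq:borell-sup} on which $\|\gpi{j}\|_\infty \le C_k\log n_1$ for every $j\in[m]$; on that event the one-sided clipping is a contraction relative to each $\gpi{j}$, so $\|\esti{j} - \gpi{j}\|_\infty \le \|\estiRaw{j} - \gpi{j}\|_\infty$ and the assumption transfers verbatim. For the projection-tail piece I would re-run the proof of Corollary~\ref{corr:sup-trunc-err-bound}, invoking Lemma~\ref{lem:basic-sieve}(ii) with sample-size parameter $(m')^{\beff+1}$ in place of $m^{\beff+1}$; since $m' = \lceil m/2\rceil$ this only affects constants and leaves both the $O(1)$ bound and the vanishing failure probability intact.

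The main obstacle will be purely administrative: assembling the singular-value event from Appendix~\ref{app:proof-common}, the Borell event on the GP sup-norms, the oracle event of Assumption~\ref{ass:oracle-sup-norm-err}, and the projection-tail event into a single $\data$-measurable event of probability tending to one and independent of the choice of $g^*$. Once that bookkeeping is done, the argument is a line-by-line sup-norm translation of Lemma~\ref{lem:suboptimal}, and no new probabilistic machinery is required.
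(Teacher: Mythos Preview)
Your proposal is correct and follows essentially the same route as the paper: the same decomposition $\tilde g^* - g^* = \bar e_*^\top(\hat\Psi-\Psi) - (g^*-\Proj{m'}{g^*})$, the same rewriting via $\tilde e$, the same pointwise Cauchy--Schwarz to pass to $\ell_2$-of-sup-norms, and the same split into the oracle sup-error piece (handled by Assumption~\ref{ass:oracle-sup-norm-err} after transferring through the truncation on the Borell event) and the projection-tail piece (handled by Corollary~\ref{corr:sup-trunc-err-bound}). Your bookkeeping around $m'$ versus $m$ in the projection-tail step is in fact slightly more careful than the paper's own display.
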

\begin{proof}
The proof parallels those of Theorem~\ref{prop:approx} and Lemma~\ref{lem:suboptimal}. Recall that on the event defined in Theorem~\ref{prop:approx}, we have $\|\gpi{j}\|_\infty\lesssim\sqrt{\log n}$, and thus $\|\esti{j}-\gpi{j}\|_\infty \le \|\estiRaw{j}-\gpi{j}\|_\infty$. 
Also 
recall the definitions of $\etest,\tilde e$ therein, and $\tilde g^* = \etest^\top \hat\Psi$. 

Now we have 
\begin{align*}
\|\tilde g^* - g^*\|_\infty &\le \|\etest^\top(\hat\Psi-\Psi)\|_\infty + \|g^*-\Proj{m'}{g^*}\|_\infty 
= \|\tilde e(\bar G-\hat G)\|_\infty + \|g^*-\Proj{m'}{g^*}\|_\infty  \\
&\le \|\tilde e(\bar G-G)\|_\infty + \|\tilde e(G-\hat G)\|_\infty +
\|g^*-\Proj{m'}{g^*}\|_\infty. \\ 
\|\tilde e(G-\hat G)\|_\infty &= \sup_{z\in\cZ} \<\tilde e, (G-\hat G)(z)\>_2 
\le \|\tilde e\|_2 \Big(\sum_{j=1}^m \|\gpi{j}-\esti{j}\|_\infty\Big)^{1/2} \lesssim \|\tilde e\|_2.\quad\text{\color{gray}(Asm.~\ref{ass:oracle-sup-norm-err})} \\ 
\|\tilde e(G-\bar G)\|_\infty &
\le 
\|\tilde e\|_2 \Big(\sum_{j=1}^m \|\gpi{j}-\Proj{m}{\gpi{j}}\|_\infty\Big)^{1/2} \overset{\eqref{eq:pi-sup-trunc-error-bound}}\lesssim \|\tilde e\|_2. 
\end{align*}
Combining the above displays, \eqref{eq:tilde-e-norm}, and the fact that $\|\etest\|_2 = \|\Proj{m'}{g^*}\|_\cI$ (see proof for Theorem~\ref{prop:approx}) completes the proof.
\end{proof}
 
\subsection{Estimation in Fixed-Design Regression}

\subsubsection{Small-Ball Probability Bound} 

It is known that certain well-behaving entropy number bounds imply a small-ball probability bound \citep{li_approximation_1999}. For our purpose, however, we need to modify their proof, as our entropy number bound in Lemma~\ref{lem:entropy-number} is somewhat less regular. 

\begin{lemma}\label{lem:small-ball-prob}
Let $\tilde\cI$ be a finite-dimensional RKHS with a bounded reproducing kernel, supported on a bounded subset of $\RR^{d_z}$.  
$\iota: \cI\to L_2(P(dz))$ be the natural inclusion operator,
$\Pi = N(0, \iota\iota^*)$ be the standard GP prior (see Claim~\ref{lem:gp-gm}).
Suppose the average entropy number bound in Lemma~\ref{lem:entropy-number} hold.
Then there exists some constant $C>0$, such that for 
$$
\epsilon_{n_2} := 
C (n_2^{-\frac{\beff/2}{\beff+1}} \log n_2 + \chi_{n_1} \log^2 n_2) \lesssim 
(n_2^{-\frac{\beff/2}{\beff+1}} \log n_2 + (m^{-\frac{\beff+1}{2}}+\xi_{n_1})\sqrt{\log n_1} \log^2 n_2), 
$$
we have, on a high-probability event determined by $\data$ and $\Ztest$, 
$$
-\log \Pi(\{g: \|g\|_{n_2}\ge \epsilon_{n_2}\}) \le n_2\epsilon_{n_2}^2.
$$
\end{lemma}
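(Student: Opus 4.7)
The plan is to adapt the classical entropy-based small-ball probability argument of \cite{li_approximation_1999} (Kuelbs–Li), using the two-regime empirical entropy estimate from Lemma~\ref{lem:entropy-number}. The target bound splits naturally along the two additive pieces of $\epsilon_{n_2}^2$: a ``rate'' term $n_2^{-\beff/(\beff+1)}\log^2 n_2$ and an ``approximation-error'' term $\chi_{n_1}^2\log^4 n_2$, and the strategy is to match each of them to one of the two summands in our entropy bound at the relevant scale.

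First, I would apply Markov's inequality to Lemma~\ref{lem:entropy-number} to obtain, on a high-probability event determined by $\data$ and $\Ztest$, the deterministic covering estimate
\[
\log N(\delta,\tilde\cI_1,\|\cdot\|_{n_2}) \;\lesssim\; (\log n_2)^2\bigl(\delta^{-2/(\beff+1)} + \chi_{n_1}^2\delta^{-2}\bigr)
\]
for $\delta$ above a minimal scale set by the $j>n_2$ cutoff. Next, since $\Pi$ is the GP associated with $\cI$, not $\tilde\cI$, I would transfer the small-ball question using the sieve decomposition of Lemma~\ref{lem:basic-sieve}: with $\Pi$-probability $1-e^{-Cn\tau_n^2}$, any draw $g$ splits as $g=\Proj{m}{g}+g_e$ with controlled $\cI$-norm on the low-frequency part and a tiny $L_2(P)$ residual. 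An application of Theorem~\ref{prop:approx} to $\Proj{m}{g}$ yields an approximation $\tilde g_h\in\tilde\cI$ with matching RKHS norm and small $L_2$ error, and a standard Bernstein/concentration argument on $\Ztest$ converts the $L_2(P)$ bounds into empirical $L_2(\Ztest)$ bounds, reducing the event $\{\|g\|_{n_2}\le\epsilon_{n_2}\}$ to $\{\|\tilde g_h\|_{n_2}\lesssim\epsilon_{n_2}\}$ up to negligible error. Finally, I would execute the Kuelbs–Li chaining on $\tilde g_h$ using Step 1: the subquadratic piece $\delta^{-2/(\beff+1)}$ lies within the classical regime and contributes $n_2^{1/(\beff+1)}\log^2 n_2$ at $\delta\asymp\epsilon_{n_2}$, while the borderline piece $\chi_{n_1}^2\delta^{-2}$ contributes exactly $n_2\chi_{n_1}^2\log^4 n_2$.

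The main obstacle is precisely the borderline term: the converse direction of the Kuelbs–Li correspondence requires an entropy function that is regularly varying at $0^+$ with index strictly between $0$ and $-2$, whereas our $\chi_{n_1}^2\delta^{-2}$ term sits at the endpoint $-2$. I would sidestep this by not asking Kuelbs–Li to consume it as a rate: instead, I treat $\chi_{n_1}$ as a small additive noise parameter throughout the chaining, verifying that it enters the final small-ball exponent only through the standalone summand $n_2\chi_{n_1}^2\log^4 n_2$. Combining this careful tracking with the $L_2(P)$-to-$L_2(\Ztest)$ transfer in Step 2, together with the extra logarithms incurred from truncation to the sieve event, is the most delicate part of the proof.
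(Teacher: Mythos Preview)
There is a misreading at the outset: despite the typo ``$\iota:\cI\to L_2$'' in the statement, the prior $\Pi$ is the GP determined by the \emph{learned}, finite-dimensional RKHS $\tilde\cI$, not by $\cI$. This is how the paper's own proof begins (``Let $\iota_{n_2}:\tilde\cI\to L_2(\Ztest)$\ldots'') and how the lemma is invoked in Proposition~\ref{prop:qb-den}. Your entire Step~2 --- the sieve decomposition from Lemma~\ref{lem:basic-sieve}, the approximation via Theorem~\ref{prop:approx}, and the $L_2(P)\to L_2(\Ztest)$ transfer --- is therefore unnecessary. Worse, even as written it does not produce a small-ball bound: you construct a $\tilde g_h\in\tilde\cI$ approximating each draw $g$, but the quantity to control is the $\Pi$-\emph{mass} of $\{\|g\|_{n_2}\le\epsilon\}$, and the pushforward of $\Pi$ under $g\mapsto\tilde g_h$ is not the GP on $\tilde\cI$, so small-ball information for the latter does not translate back.

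On the handling of the $\chi_{n_1}$ term, the paper takes a route that entirely sidesteps your borderline-index worry. Rather than inverting the entropy bound into a covering estimate and running a Kuelbs--Li correspondence, it works with the $l$-approximation numbers of $\iota_{n_2}:\tilde\cI\to L_2(\Ztest)$: by \cite[Lemma~2.1]{li_approximation_1999} combined with the entropy duality of \cite[Theorem~3.2]{pajor_volume_1989}, one has $l_k\lesssim\sum_{j\ge ck}j^{-1/2}e_j(\iota_{n_2})$. Substituting Lemma~\ref{lem:entropy-number} gives, in expectation over $\Ztest$, $l_k\lesssim k^{-\beff/2}+n_2^{-1/2}+\chi_{n_1}(1+\log n_2)$; the $\chi_{n_1}$ piece enters only through the harmonic sum $\sum_{j\le n_2}j^{-1}$, contributing a single additive $\chi_{n_1}\log n_2$ term constant in $k$, with no endpoint pathology. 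One then fixes $k=[n_2^{1/(\beff+1)}]$, applies Markov to pass to a high-probability event in $\Ztest$, and finishes with \cite[Lemma~2.3, Proposition~2.3]{li_approximation_1999}, which give $-\log\Pi(\|g\|_{n_2}\le\epsilon)\le n(\epsilon)\log(n(\epsilon)/\epsilon)$ where $n(\epsilon)=\max\{k:4l_k\ge\epsilon\}\le n_2^{1/(\beff+1)}$. The obstruction you flagged never appears in this formulation.
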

\begin{proof}
Let $\iota_{n_2}: \tilde \cI \to L_2(Z^{n_2})$ be the inclusion operator, 
\SkipNOTE{The proof follows \citet{li_approximation_1999}, with their entropy number bound replaced, and Lemma~2.1 replaced by \citet[Theorem 3.2]{pajor_volume_1989}.}
$l_k(\iota_{n_2})$ be the $l$-approximation number. %
\citet[Lemma 2.1]{li_approximation_1999} states that for some univeral $c_1,c_2$,\footnote{We dropped the logarithm factor therein because $L_2(\Ztest)$ is $K$-convex %
\citep{pisier1979espaces}. \SkipNOTE{this ref shows the the convexity constant only depends on the Banach-Mazur distance of $L_2(P(\Ztest))$ to a Hilbert space of the same dim (see also \citep[end of section 1]{maurey03type}); the constant is thus 1.}}
$$
l_k(\iota_{n_2}) \le c_1\sum_{j\ge c_2 k} e_j(\iota_{n_2}^\top) j^{-1/2}\quad\forall k\in\mb{N}.
$$
\citet[Theorem 3.2]{pajor_volume_1989} states that, for some universal $c_3>0$, $$
e_{[c_3 j]}(\iota_{n_2}^\top) \le 2 e_j(\iota_{n_2})\quad\forall j\in\mb{N}.
$$
Combining the two results, and 
taking expectation w.r.t.~$\Ztest$, we have 
\begin{align*}
\EE_{\Ztest} l_k(\iota_{n_2}) &\le c_1\sum_{j\ge c_2' k} j^{-1/2} \EE_{Z^{n_2}} e_j(\iota_{n_2}) ,\quad \forall k\in\mb{N}.
\end{align*}
Plugging in Lemma~\ref{lem:entropy-number}, we can see that for $\chi_{n_1} = (m^{-\frac{\beff+1}{2}}+\xi_{n_1})\sqrt{\log n_1}$, 
\begin{align}
\EE_{\Ztest} l_k(\iota_{n_2}) 
&\lesssim \sum_{j\ge c_2' k} j^{-(\beff+\frac{3}{2})} 
(\min\{j, n_2\}^{\frac{\beff+1}{2}} + \min\{j,n_2\}^{\beff+\frac{1}{2}} \chi_{n_1}) 
\nonumber \\  &
= \sum_{j=c_2' k}^{n_2} j^{-(\beff+\frac{3}{2})} 
(j^{\frac{\beff+1}{2}} + j^{\beff+\frac{1}{2}} \chi_{n_1}) + 
\sum_{j> n_2} j^{-(\beff+\frac{3}{2})} 
(n_2^{\frac{\beff+1}{2}} + n_2^{\beff+\frac{1}{2}} \chi_{n_1})
\nonumber \\  &
\lesssim \Bigg(\sum_{j=c_2' k}^\infty j^{-\frac{\beff}{2}-1}
+\sum_{j=c_2' k}^{n_2} j^{-1} \chi_{n_1}\Bigg) + 
(n_2^{\frac{\beff+1}{2}} + n_2^{\beff+\frac{1}{2}}\chi_{n_1})n_2^{-(\beff+\frac{1}{2})} 
\nonumber \\  &
\lesssim k^{-\frac{\beff}{2}} + n_2^{-\frac{1}{2}} + \chi_{n_1}(1+\log n_2),\quad\forall k\in\mb{N} \label{eq:sbp-intermediate}
\end{align}
invoking Markov's inequality on \eqref{eq:sbp-intermediate},\footnote{
We note that the concentration should be much sharper: our entropy number bounds are based on tail sums of Gram matrix eigenvalues, which are $O(n_1^{-1})-$subgaussian 
\citep{shawe-taylor_eigenspectrum_2002}. For simplicity, however, we do not optimize the failure probability.
} with $k=[n_2]^{\frac{1}{\beff+1}}$, we have, for some $C>0$ and on a high-probability event determined by $\Ztest$, 
$$
\ell_k(\iota_{n_2}) \le C(n_2^{-\frac{\beff/2}{\beff+1}}\log n_2+\chi_{n_1}\log^2 n_2) = \epsilon_{n,n_2}.
$$
On this event we have 
\begin{equation}\label{eq:valid-sbp-event}
n(\epsilon_{n,n_2}) \le k \le n_2^{\frac{1}{b+1}},\quad\text{where}\quad
n(\epsilon) := \max\{k: 4 l_k(\iota_{n_2}) \ge \epsilon\}
\end{equation}
Our conditions about $\tilde\cI$ imply the GP prior has a Karhunen-Lo\`eve expansion; 
as $\tilde\cI$ is finite-dimensional, the Karhunen-Lo\`eve expansion always converge. Therefore, by \citet[Lemma 2.3, Proposition 2.3]{li_approximation_1999}, we have, for any $\epsilon>0$,  
$$
-\log \Pi(\{g: \|g\|_{n_2}\ge \epsilon\}) \le n(\epsilon) \log \frac{n(\epsilon)}{\epsilon}.
$$
Plugging \eqref{eq:valid-sbp-event} to the above completes the proof.
\end{proof}

\subsubsection{Regression with Fixed Design}

Consider the regression problem with fixed design:
\begin{equation}
    \bar Y_i = g_0(Z_i) + \varepsilon_i, \quad i = 1,2, \dots, n,
\end{equation}
where $Z_1, \dots, Z_n$ are fixed, $\varepsilon_1, \dots, \varepsilon_n$ are independent $1$-subgaussian random variables.
Define $p_0$ to be the distribution of $\bar Y := (\bar Y_1, \dots, \bar Y_n)$ and $p_g := \cN(g(Z), I_n)$ for any function $g$.
Let $\Pi_n$ be a GP prior and $\Theta$ be a parameter space such that $\Pi_n(\Theta) = 1$, we consider the fractional posterior for $\alpha \in (0, 1)$:
\begin{equation}
    \Pi_{n, \alpha}(A \mid Z) := \frac{\int_A [p_g(\bar Y)]^\alpha \Pi_n(dg)}{\int_\Theta [p_g(\bar Y)]^\alpha \Pi_n(dg)}.
\end{equation}
Define $r_n(g, g^\dagger) := \log p_{g^\dagger}(\bar Y) - \log p_g(\bar Y)$ and $B_n(g^\dagger) := \{ g \in \Theta : \EE_{p_0} r_n(g, g^\dagger) \leq n\epsilon^2_n, \mathrm{Var}_{p_0} r_n(g, g^\dagger) \leq n\epsilon^2_n \}$ and $D_\alpha^{(n)}(g, g^\dagger) := -\frac{1}{1 - \alpha} \log \EE_{p_0} \left( \frac{p_g}{p_{g^\dagger}} \right)^\alpha$. 
By invoking the contraction theorem of the fractional posterior under the misspecified setting (since $p_0$ is non-Gaussian), we have the following theorem.\footnote{In the original statement of \citet[Corollary 3.7]{bhattacharya_bayesian_2019}, $g^\dagger$ is the best KL-approximation of $p_0$ in $\Theta$. Following the same line of their proof, this corollary also holds for an arbitrary function $g^\dagger$ in our setting.}

\begin{theorem}[\citealp{bhattacharya_bayesian_2019}, Corollary 3.7]
    \label{thm:fp-contraction}
For any $n\in\mb{N}$, 
let $\Pi_n$ be an $n$-dependent prior, $\alpha\in(0,1)$ be arbitrary, $\epsilon_n \in (0,1)$ be such that $n\epsilon_n^2 >2$, $g^\dagger$ be a function (not necessarily in $\Theta$) such that
$$
-\log\Pi_n(B_n(g^\dagger)) \le n\epsilon_n^2.
$$
Then with $p_0$ probability at least $1-2/(n\epsilon_n^2)$, we have 
$$
\int\left\{\frac{1}{n} D_\alpha^{(n)}(g, g^\dagger)\right\} \Pi_{n,\alpha}(dg \mid Z) \le \frac{2\alpha+1}{1-\alpha}\epsilon_n^2.
$$
\end{theorem}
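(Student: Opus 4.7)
The statement is the standard template for fractional-posterior contraction under model misspecification, so my plan is to imitate the argument of \citet{bhattacharya_bayesian_2019} and adapt it so that the reference function $g^\dagger$ can be arbitrary (rather than a KL-projection of $p_0$ onto $\Theta$). The cornerstone is the R\'enyi identity $\EE_{p_0}[(p_g/p_{g^\dagger})^\alpha]=e^{-(1-\alpha)D_\alpha^{(n)}(g,g^\dagger)}$, which is just a rewriting of the definition of $D_\alpha^{(n)}$ and which lets me bypass the construction of sieves and tests needed in ordinary Ghosal--van~der~Vaart theory: the whole argument reduces to a comparison of the ``evidence'' (denominator of the fractional posterior) and an easy-to-bound numerator.

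For the evidence lower bound, I rewrite $\Pi_{n,\alpha}(dg\mid\bar Y)=\Lambda_n(g)\Pi_n(dg)/Z_n$ with $\Lambda_n(g):=e^{-\alpha r_n(g,g^\dagger)}$ and $Z_n:=\int_\Theta\Lambda_n(h)\Pi_n(dh)$, restrict the integral to $B_n(g^\dagger)$, and apply Jensen's inequality to the convex map $x\mapsto e^{-\alpha x}$ to obtain
\[
Z_n\ge\Pi_n(B_n(g^\dagger))\exp\!\Bigl(-\tfrac{\alpha}{\Pi_n(B_n(g^\dagger))}\int_{B_n(g^\dagger)}r_n(g,g^\dagger)\,\Pi_n(dg)\Bigr).
\]
By Fubini and the defining moment bounds on $B_n(g^\dagger)$, the averaged $r_n$ inside the exponent has $p_0$-mean at most $n\epsilon_n^2$ and $p_0$-variance at most $n\epsilon_n^2/\Pi_n(B_n(g^\dagger))$, so Chebyshev yields, with $p_0$-probability at least $1-1/(n\epsilon_n^2)$, the deterministic bound $Z_n\ge\Pi_n(B_n(g^\dagger))\,e^{-2\alpha n\epsilon_n^2}\ge e^{-(2\alpha+1)n\epsilon_n^2}$, using the prior-mass hypothesis in the last step.

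For the numerator, the R\'enyi identity together with Fubini gives, for the slice $A_M:=\{g:D_\alpha^{(n)}(g,g^\dagger)/n\ge M\epsilon_n^2\}$ and any $M>0$,
\[
\EE_{p_0}\!\int_{A_M}\Lambda_n(g)\,\Pi_n(dg)=\int_{A_M}e^{-(1-\alpha)D_\alpha^{(n)}(g,g^\dagger)}\Pi_n(dg)\le e^{-(1-\alpha)Mn\epsilon_n^2}.
\]
On the high-probability event of the previous paragraph, the fractional posterior mass of $A_M$ is at most this expected numerator divided by $e^{-(2\alpha+1)n\epsilon_n^2}$, so a final Markov step turns the numerator bound into a tail bound $\EE_{p_0}\Pi_{n,\alpha}(A_M\mid\bar Y)\lesssim e^{-((1-\alpha)M-(2\alpha+1))n\epsilon_n^2}$ that decays in $M$ once $M>(2\alpha+1)/(1-\alpha)$. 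Integrating this tail via the layer-cake formula reproduces the desired $\int(D_\alpha^{(n)}/n)\,\Pi_{n,\alpha}(dg\mid\bar Y)\le\tfrac{2\alpha+1}{1-\alpha}\epsilon_n^2$.

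The main obstacle will be executing the evidence lower bound cleanly: the numerator and denominator are correlated through the shared data $\bar Y$, so I must first isolate a $p_0$-high-probability event on which $Z_n$ is bounded below \emph{deterministically}, before invoking Markov on the numerator. The fact that this event carries probability only $1-1/(n\epsilon_n^2)$ (rather than exponentially close to $1$) accounts for half of the $2/(n\epsilon_n^2)$ failure probability in the statement, the other half being incurred by the final Markov step on the numerator. A secondary, more minor point is that $g^\dagger$ need not lie in $\Theta$ or be a KL-optimum; but the proof only uses the moment hypotheses defining $B_n(g^\dagger)$ and the R\'enyi identity, so no modification is needed beyond stripping away the KL-projection assumption of \citet{bhattacharya_bayesian_2019}.
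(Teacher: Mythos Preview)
The paper does not give its own proof of this statement; it simply cites \citet[Corollary~3.7]{bhattacharya_bayesian_2019} and remarks that their argument goes through verbatim when $g^\dagger$ is an arbitrary reference function rather than the KL projection. Your plan correctly reconstructs that argument: the evidence lower bound via Jensen/Chebyshev on $B_n(g^\dagger)$, combined with the R\'enyi identity $\EE_{p_0}\Lambda_n(g)=e^{-(1-\alpha)D_\alpha^{(n)}(g,g^\dagger)}$ for the numerator, are exactly the two ingredients, and you are right that nothing in either step uses $g^\dagger\in\Theta$ or any KL-optimality.

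Two execution issues deserve attention. First, the variance of the averaged $r_n$ is at most $n\epsilon_n^2$, not $n\epsilon_n^2/\Pi_n(B_n(g^\dagger))$: with $\nu:=\Pi_n(\cdot\mid B_n(g^\dagger))$ a probability measure, Jensen applied to $x\mapsto x^2$ gives $\Var_{p_0}\!\bigl(\int r_n\,d\nu\bigr)\le\int\Var_{p_0}(r_n)\,d\nu\le n\epsilon_n^2$. Your stated bound would make the Chebyshev failure probability $e^{n\epsilon_n^2}/(n\epsilon_n^2)$, which is useless; fortunately the correct bound yields exactly the $1/(n\epsilon_n^2)$ you claim.

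Second, your final step conflates an expectation bound with a pointwise one. Integrating the bound $\EE_{p_0}[\mathbf{1}_E\,\Pi_{n,\alpha}(A_M\mid\bar Y)]\le e^{-((1-\alpha)M-(2\alpha+1))n\epsilon_n^2}$ over $M$ via layer-cake controls $\EE_{p_0}[\mathbf{1}_E\,I]$, not $I$ itself; a subsequent Markov step on $I$ would then lose a factor of $n\epsilon_n^2$. The clean fix is to split $I\le M^*\epsilon_n^2+\int_{M^*\epsilon_n^2}^\infty\Pi_{n,\alpha}(A_{t/\epsilon_n^2})\,dt$ deterministically, observe that on $E$ the tail integral has $p_0$-expectation $\le\frac{1}{(1-\alpha)n}$, and apply Markov to the \emph{tail integral alone} at threshold $\epsilon_n^2/(1-\alpha)$; this costs the second $1/(n\epsilon_n^2)$ of failure probability and yields $I\le\frac{2(\alpha+1)}{1-\alpha}\epsilon_n^2$, matching the claimed bound up to the constant (which is all the paper's downstream Corollary~\ref{corr:fixed-design-gpr-mean} requires).
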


Let $\Pi_n$ be the GP determined by $\tilde \cI$, then the postrior mean is
\begin{equation}\label{eq:gpr-mean-estimator}
g^* := \int_\Theta g \Pi_{n, \alpha}(dg \mid Z) =  \argmin_{g\in\tilde\cI} \frac{1}{2\alpha n}\sum_{i=1}^{n} (\bar Y_i - g(Z_i))^2 + \frac{1}{n}\|g\|_{\tilde\cI}^2,
\end{equation}
Since $p_g$ is Gaussian and $p_0$ is subgaussian, we can explicitly compute $B_n(g^\dagger)$ and $D_\alpha^{(n)}$ in the above theorem and obtain the following corollary.

\begin{corollary}\label{corr:fixed-design-gpr-mean}
Let $g^*$ be the GPR posterior mean estimator as in \eqref{eq:gpr-mean-estimator}, for $\alpha=\frac{1}{2}$; and $\bar Y_i = g_0(Z_i) + \varepsilon_i$ where $\EE(\varepsilon_i\mid Z_i)=0$, and $\varepsilon_i$ is $1$-subgaussian. 
Let $\Pi_n$ be the standard GP prior determined by $\tilde\cI$, be arbitrary, $\epsilon_n \in (0,1)$ be such that $n\epsilon_n^2 >2$, and 
$$
-\log\Pi_n(\{g: \|g-g_0\|_{n}\lesssim \epsilon_n\}) \le n\epsilon_n^2.
$$
Then we have, for some universal constant $c>0$, 
$$
\PP(\|g^* - g_0\|_{n} \ge c\epsilon_n \mid Z) \le \frac{1}{2n\epsilon_n^2}.
$$
\end{corollary}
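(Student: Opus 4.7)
The plan is to invoke Theorem~\ref{thm:fp-contraction} with $\alpha = 1/2$ and $g^\dagger = g_0$, and then convert the resulting posterior expectation bound into a guarantee for the posterior mean via Jensen's inequality. Because the noise is only subgaussian rather than Gaussian, the corollary is a misspecified application of the theorem, so I would pay attention to (a) verifying the prior-mass condition with $g_0$ in place of the best KL approximation, and (b) obtaining a quadratic lower bound for the Rényi divergence $D_{1/2}^{(n)}(g, g_0)$ under the subgaussian model $p_0$.

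First I would check the prior-mass condition. A direct calculation gives
\[
r_n(g,g_0) \;=\; \tfrac{n}{2}\|g-g_0\|_n^2 \;+\; \sum_{i=1}^{n}\varepsilon_i\bigl(g_0(Z_i)-g(Z_i)\bigr).
\]
Taking $\EE_{p_0}$ (and using $\EE(\varepsilon_i\mid Z_i)=0$) gives $\EE_{p_0} r_n(g,g_0) = \tfrac{n}{2}\|g-g_0\|_n^2$; 1-subgaussianity of $\varepsilon_i$ gives $\mathrm{Var}(\varepsilon_i)\le 1$, so $\mathrm{Var}_{p_0} r_n(g,g_0)\le n\|g-g_0\|_n^2$. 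Hence $\{g : \|g-g_0\|_n \le c_1\epsilon_n\}\subset B_n(g_0)$ for a suitable $c_1>0$, and the hypothesised small-ball bound supplies the required $-\log \Pi_n(B_n(g_0))\le n\epsilon_n^2$ after possibly rescaling $\epsilon_n$ by a constant.

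Second I would lower-bound $D_{1/2}^{(n)}(g,g_0)$. Since both $p_g$ and $p_{g_0}$ are $\mathcal N(\cdot,I_n)$, one has the exact identity
\[
\Bigl(\tfrac{p_g}{p_{g_0}}\Bigr)^{1/2}(\bar Y) \;=\; \exp\!\Bigl(-\tfrac{n}{4}\|g-g_0\|_n^2 \;-\; \tfrac{1}{2}\sum_{i=1}^n \varepsilon_i\bigl(g_0(Z_i)-g(Z_i)\bigr)\Bigr).
\]
Applying the 1-subgaussian MGF bound $\EE \exp(\lambda\varepsilon_i)\le \exp(\lambda^2/2)$ coordinatewise yields
\[
\EE_{p_0}\Bigl(\tfrac{p_g}{p_{g_0}}\Bigr)^{1/2} \;\le\; \exp\!\Bigl(-\tfrac{n}{4}\|g-g_0\|_n^2 + \tfrac{n}{8}\|g-g_0\|_n^2\Bigr) \;=\; \exp\!\Bigl(-\tfrac{n}{8}\|g-g_0\|_n^2\Bigr),
\]
so $D_{1/2}^{(n)}(g,g_0)\ge \tfrac{n}{4}\|g-g_0\|_n^2$.

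Finally, Theorem~\ref{thm:fp-contraction} with $\alpha=1/2$ guarantees, on an event of $p_0$-probability at least $1-2/(n\epsilon_n^2)$,
\[
\int \tfrac{1}{n} D_{1/2}^{(n)}(g,g_0)\,\Pi_{n,1/2}(dg\mid Z) \;\le\; \tfrac{2\alpha+1}{1-\alpha}\,\epsilon_n^2 \;=\; 4\epsilon_n^2.
\]
Combined with the divergence lower bound, this gives $\int \|g-g_0\|_n^2\,\Pi_{n,1/2}(dg\mid Z)\le 16\epsilon_n^2$. Since $g^*=\int g\,\Pi_{n,1/2}(dg\mid Z)$ by \eqref{eq:gpr-mean-estimator}, Jensen's inequality applied to $\|\cdot-g_0\|_n^2$ yields $\|g^*-g_0\|_n^2 \le 16\epsilon_n^2$, i.e., $\|g^*-g_0\|_n\le 4\epsilon_n$. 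Absorbing the factor 4 into $c$, and replacing $\epsilon_n$ by $2\epsilon_n$ in the small-ball hypothesis to sharpen the failure probability from $2/(n\epsilon_n^2)$ to $1/(2n\epsilon_n^2)$, delivers the stated bound. The only technically delicate step is the Rényi lower bound above, which needs the subgaussian MGF control because $p_0$ itself is not Gaussian; everything else reduces to bookkeeping around Theorem~\ref{thm:fp-contraction}.
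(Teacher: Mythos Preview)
Your proposal is correct and follows essentially the same approach as the paper: verify the prior-mass condition for $B_n(g_0)$ via direct computation of $\EE_{p_0} r_n$ and $\Var_{p_0} r_n$, lower-bound the R\'enyi divergence $D_{1/2}^{(n)}(g,g_0)$ using the subgaussian MGF bound, invoke Theorem~\ref{thm:fp-contraction}, and finish with Jensen's inequality. The paper carries out the divergence calculation for general $\alpha$ and $g^\dagger$ before specializing to $\alpha=1/2,\ g^\dagger=g_0$, whereas you specialize from the outset; both routes yield the same bound $D_{1/2}^{(n)}(g,g_0)\ge \tfrac{n}{4}\|g-g_0\|_n^2$, and your explicit rescaling of $\epsilon_n$ to reconcile the failure probability $2/(n\epsilon_n^2)$ with the stated $1/(2n\epsilon_n^2)$ is a detail the paper leaves implicit in the $\lesssim$ of the hypothesis.
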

\begin{proof}
    By the subgaussian properties, $\EE_{p_0} \left ( \frac{p_g}{p_{g^\dagger}} \right )^\alpha$ can be bounded as follows:
    \[ \begin{aligned}
      &\phantom{{}={}}  
      \EE_{p_0} \exp\left( -\frac{n \alpha}{2} \left( \| \bar Y - g \|_{n}^2 -   \| \bar Y - g^\dagger \|_{n}^2 \right)\right) \\
      &= \exp\left( -\frac{n \alpha}{2} \left( \| g \|_{n}^2 -   \| g^\dagger \|_{n}^2 - 2 \< g_0, g - g^\dagger \>_{n} \right)\right) 
      \EE_{\varepsilon_1, \dots, \varepsilon_n} e^{\alpha\sum_{k=1}^n \varepsilon_k (g(Z_k) - g^\dagger(Z_k))} \\
      &\leq \exp\left( -\frac{n \alpha}{2} \left( \| g \|_{n}^2 -   \| g^\dagger \|_{n}^2 - 2 \< g_0, g - g^\dagger \>_{n} \right)\right) 
      e^{\frac{n\alpha^2}{2} \| g - g^\dagger \|_{n}^2 } \\
      &= \exp\left( -\frac{n \alpha}{2} \left( \left( 1 - \alpha \right)\| g - g^\dagger\|_{n}^2  - 2 \< g_0- g^\dagger g- g^\dagger\>_{n} \right)\right),
    \end{aligned} \]
    then it holds that
    \begin{align}
        D^{(n)}_{\alpha}(g, g^\dagger) &:= -\frac{1}{1 - \alpha} \log \EE_{p_0} \left ( \frac{p_g}{p_{g^\dagger}} \right )^\alpha \nonumber\\
        &\phantom{:}\geq 
        \frac{n \alpha}{2(1 - \alpha)}\left( \left( 1 - \alpha \right)\| g- g^\dagger\|_{n}^2  - 2 \< g_0- g^\dagger g- g^\dagger\>_{n} \right).
        \label{eqn:fp-alpha-divergence}
    \end{align}
    Similarly, since $r_n(g, g^\dagger) := \log p_{g^\dagger}(\bar Y) - \log p_g(\bar Y)$, then 
    \begin{align*}
        \EE_{p_0} r_n(g, g^\dagger) 
        &= \frac{n }{2} \left(\| g- g^\dagger\|_{n}^2  - 2 \< g_0- g^\dagger, g- g^\dagger\>_{n} \right), \quad
        \Var_{p_0} r_n(g, g^\dagger) 
        =  n \| g- g^\dagger\|^2_{n}.
    \end{align*}
    Setting $g^\dagger = g_0$, the set $B_n = \{ g \in \Theta : \EE_{p_0} r_n(g, g_0) \leq n\epsilon^2_n, \mathrm{Var}_{p_0} r_n(g, g_0) \leq n \epsilon_n^2 \}$ reduces to $\{ g \in \Theta : \| g - g^\dagger \|_{n} \leq \epsilon^2_n \}$, and thus by the assumption we know $-\log \Pi_n(B_n) \lesssim n\epsilon_n^2$, which fulfills the requirement in Theorem~\ref{thm:fp-contraction}.
    Therefore, the following holds with probability $1 - \frac{2}{n\epsilon^2_n}$
    \begin{equation}
        \int \| g - g_0 \|_{n}^2 \Pi_{n, \alpha}(d g \mid Z)
        \overset{\eqref{eqn:fp-alpha-divergence}}{\lesssim} \int D_\alpha^{(n)}(g, g_0) \Pi_{n, \alpha}(d g \mid Z)
        \lesssim \epsilon_n^2. 
    \end{equation}
    Finally, Jensen's inequality yields that 
    \begin{equation}
        \| g^* - g_0 \|_{n}^2 =
        \left \| \int g\Pi_{n, \alpha}(d g \mid Z) - g_0 \right \|_{n}^2 \leq
        \int \| g - g_0 \|_{n}^2 \Pi_{n, \alpha}(d g \mid Z)
         \lesssim \epsilon_n^2,
    \end{equation}
    which completes the proof.
\end{proof}
\section{Deferred Proofs: IV Regression}

\subsection{Proof for Proposition~\ref{prop:dikkala-regime}}\label{app:dikkala-proof}

We will use a slightly modified version of Theorem~1 in \citet{dikkala_minimax_2020}, which we state below with changed notations.

\newcommand{\hlchange}[1]{#1}

\begin{theorem}[\citealt{dikkala_minimax_2020}, adapted]\label{thm:dikkala}
Let $\cH,\cI$ be normed function spaces, such that functions in 
$\cH_{B_x}$ and $\cI_{3U}$ has bounded ranges in $[-1,1]$. 
Consider the estimator 
\begin{equation}\label{eq:dkiv-dikkala-estimator}
\hat f_{n} := \argmin_{f\in\cH} \max_{g\in\cI} \frac{1}{{n}}\sum_{i=1}^{n} (y_i - f(x_i))g(z_i) - \lambda\left(\|g\|_\cI^2+\frac{U}{\delta^2}\|g\|_{2,n}^2\right) + \mu\|f\|_\cH^2.
\end{equation}
Assume $f_0\in\cH$, and 
\begin{equation}\label{eq:dikkala-approx-requirement}
\forall f\in\cH: ~~ \min_{g\in\cI, \|g\|_\cI\le L\|f-f_0\|_\cH} \|g - E(f-f_0)\|_2 \le \eta_{n}\hlchange{\|f-f_0\|_\cH}. 
\end{equation}
Let $\delta = c_1 \delta_{n} + c_2 \sqrt{\frac{\log (c_3/\zeta)}{{n}}}$, where $c_1,c_2,c_3>0$ are universal constants, and $\delta_{n}$ is an upper bound for the critical radii of $\cI_{3U}$ and the function space\footnote{We dropped a scaling in its definition since our $\cH$ and $\cI$ are star-shaped.} $$
\cG := \{(x,z)\mapsto (f-f_0)(x) g_f(z): f-f_0\in\cH_{B_x}, 
g_f = \arg\min_{g\in\cI_{L^2 B_x}} \|g_f-E(f-f_0)\|_2
\}.
$$
If $\lambda> \delta^2/U, \mu > 2\lambda(\hlchange{1+}4L^2+27U/B_x)$,
we will have, w.p.~$1-3\zeta$: 
$$
\|E(\hat f_{n} - f_0)\|_2 \le \left(1025\delta+\hlchange{\eta_{n}}+\frac{(3U+54B_x^{-1}U+8L^2+\hlchange{2})\lambda+\mu}{\delta}\right)\max\{1,\|f_0\|_\cH^2\}. 
$$
\end{theorem}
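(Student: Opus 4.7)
The plan is to adapt the proof of \cite[Theorem~1]{dikkala_minimax_2020}, which treats the case $\eta_n=0$, and to propagate two modifications through the argument: the approximation slack $\eta_n$ in \eqref{eq:dikkala-approx-requirement}, and the additional inner penalty $\lambda U\delta^{-2}\|g\|_{2,n}^2$. The overall strategy is a saddle-point sandwich in which the inner maximum at $\hat f_n$ is shown to lower-bound a multiple of $\|E(\hat f_n - f_0)\|_2^2$, while outer optimality of $\hat f_n$ produces a matching upper bound of order $\delta^2 + \mu\|f_0\|_\cH^2$.

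First I will extract the $\cH$-norm bound: plugging $g=0$ into the inner max and $f=f_0$ into the outer problem, and using that $n^{-1}\sum_i (y_i-f_0(x_i))g(z_i)$ concentrates uniformly on $\cI_{3U}$ at rate $\delta$ via the CMR $\EE(y-f_0(x)\mid z)=0$ and the critical-radius bound, yields $\|\hat f_n\|_\cH \lesssim \|f_0\|_\cH + 1$. Next, for the lower bound on the inner max at $\hat f_n$, the approximation assumption supplies $g_{\hat f}\in\cI$ with $\|g_{\hat f}\|_\cI \le L\|\hat f_n - f_0\|_\cH$ and $\|g_{\hat f} - E(\hat f_n - f_0)\|_2 \le \eta_n\|\hat f_n - f_0\|_\cH$. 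Plugging a suitably rescaled $-g_{\hat f}$ into the inner problem and rewriting in population form gives $\|E(\hat f_n - f_0)\|_2^2$ minus an $O(\eta_n\|\hat f_n - f_0\|_\cH)\cdot \|E(\hat f_n - f_0)\|_2$ Cauchy--Schwarz cross term and an $O((L^2\lambda + U\lambda\delta^{-2})\|\hat f_n - f_0\|_\cH^2)$ penalty term; empirical--population translation then costs at most $O(\delta)\|E(\hat f_n - f_0)\|_2 + O(\delta^2)$ by the critical-radius bound on the bilinear class $\cG$, together with a localization argument for $|\|g\|_{2,n}^2 - \|g\|_2^2|$ over $\cI_{3U}$, which is the source of the $27U/B_x$ factor in the hypothesis on $\mu$ and the $54 B_x^{-1}U$ coefficient in the conclusion.

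Combining the two sides yields a scalar quadratic inequality in $A:=\|E(\hat f_n - f_0)\|_2$ with a cross term of order $(\delta+\eta_n)\|\hat f_n - f_0\|_\cH\cdot A$ and a constant term of order $((3U+54 B_x^{-1} U + 8 L^2 + 2)\lambda + \mu)\|\hat f_n - f_0\|_\cH^2 + \delta^2$. The assumed conditions $\lambda > \delta^2/U$ and $\mu > 2\lambda(1+4L^2+27U/B_x)$ ensure, after an AM-GM split of the cross term, that the $\cH$-norm-squared coefficient stays dominated by $\mu$; combined with the $\cH$-norm bound from the first step, this lets all quadratic terms in $\|\hat f_n - f_0\|_\cH$ absorb into $O(\max\{1,\|f_0\|_\cH^2\})$. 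Solving the resulting quadratic in $A$ and collecting constants delivers the stated bound. The hardest part will be the bookkeeping that produces the explicit numerical constants (notably $1025$, $54 B_x^{-1}U$, and $8L^2+2$): these arise from stitching together several critical-radius inequalities and the localization lemma, and any looseness in how the $\eta_n$ cross term is split by AM-GM would inflate the coefficient of $\eta_n$ in the conclusion, so the constants must be tracked carefully throughout.
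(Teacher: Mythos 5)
Your plan is to re-derive the Dikkala--Lewis--Mackey--Syrgkanis argument essentially from scratch while carrying along ``two modifications,'' but this misdiagnoses what is actually new here. The inner penalty $\lambda U\delta^{-2}\|g\|_{2,n}^2$ is already part of the estimator in their Theorem 1, not an addition made by this paper; and their Theorem 1 already permits a nonzero approximation slack, so the claim that you are lifting a proof done ``in the case $\eta_n=0$'' is not accurate. The only actual change is the multiplicative factor $\|f-f_0\|_\cH$ inserted on the right of the approximation condition \eqref{eq:dikkala-approx-requirement} (and the consequent $+1$ in the hypothesis on $\mu$ and $+2$ in the conclusion). The paper's proof exploits that this factor is invoked exactly once in the original argument --- at the point where the approximation bound is applied with $f\gets\hat f_n$ --- so it simply takes the penultimate inequality from their p.~54 (with constants made explicit; this is where $1025$, $27B_x^{-1}$, and $4L^2$ already live) and tracks the algebra from that display onward, absorbing the new $\delta\eta_n\|\hat f_n-f_0\|_\cH$ term via the elementary $x\le 1+x^2$ split into the $\lambda\|\hat f_n-f_0\|_\cH^2$ bucket, which is how the extra $1$ and $2$ in the coefficient lists arise. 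Your route is not wrong in spirit and does ultimately plug in the correct form of the approximation guarantee (you do write $\|g_{\hat f}-E(\hat f_n-f_0)\|_2\le\eta_n\|\hat f_n-f_0\|_\cH$), but re-running the full localization machinery to regenerate constants like $1025$ is neither necessary nor realistic without reproducing every step of that paper; it also introduces an avoidable risk of inflating the $\eta_n$ coefficient, precisely the danger you flag. The economical move the paper makes --- and which your proposal misses --- is that you only need to modify one line of an existing proof, not re-prove the theorem.
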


\paragraph{Proof for the modified theorem}
The only change we make is in \eqref{eq:dikkala-approx-requirement}, where we added the factor $\|f-f_0\|_\cH$. We can see that the only use of its original version in \citet{dikkala_minimax_2020} is on $f\gets \hat f_n$ (p.~54 therein), so their proof will continue to hold after replacing $\eta_n$ with $\eta_n\|\hat f_n-f_0\|_\cH$. 
Therefore, by the last display on their p.~54:\footnote{We make the constant hidden in their big-O notation explicit.}
$$
\begin{aligned}
\frac{\delta}{2}\|E(\hat f_n-f_0)\|_2 &\le 
1025\delta^2 + 
\delta\eta_n\hlchange{\|\hat f_n-f_0\|_\cH} + 
3\lambda U 
+ 2\sup_{g\in\cI}\Psi^{\bar\nu/2}(f_0, g) \\ 
&\phantom{=} + 27\delta^2\frac{\|\hat f_n-f_0\|_\cH^2}{B_x} + 
4\lambda L^2    \|\hat f_n - f_0\|_\cH^2 + \mu(\|f_0\|_\cH^2-\|\hat f_n\|_\cH^2).
\end{aligned}
$$
Since $\lambda>\max\{\delta^2/U\hlchange{,\delta\eta_n}\}$, the sum of \hlchange{the second} and last three terms in RHS are bounded by 
$$
\begin{aligned}
&\hlchange{\delta\eta_n} + \lambda\left(\hlchange{1+}\frac{27}{B}+4L^2\right)\|\hat f_n - f_0\|_\cH^2 + \mu(\|f_0\|_\cH^2 - \|\hat f_n\|_\cH^2) \\
\le &\hlchange{\delta\eta_n}+2\lambda\left(\hlchange{1+}\frac{27}{B}+4L^2\right)(\|\hat f_n\|_\cH^2 + \|f_0\|_\cH^2) 
+ \mu(\|f_0\|_\cH^2 - \|\hat f_n\|_\cH^2).
\end{aligned}
$$
Since $\mu \ge 2\lambda(\hlchange{1+}\frac{27U}{B_x}+4L^2)$, the latter is bounded by 
$\hlchange{\delta\eta_n+}
(2\lambda(27B_x^{-1} U+4L^2+\hlchange{1})+\mu)\|f_0\|_\cH^2.
$
Following the next two displays on their p.~55, we have 
$$
\frac{\delta}{2}\|E(\hat f_n-f_0)\|_2 \le 2\sup_{g\in\cI}\Psi^{\bar\nu/2}(f_0,g) + 
1025\delta^2 + 3\lambda U + 
\hlchange{\delta\eta_n+}
(2\lambda(27B_x^{-1} U+4L^2+\hlchange{1})+\mu)\|f_0\|_\cH^2.
$$
By our assumption that $f_0\in\cH$, their subsequent upper bound for $\Psi^{\bar\nu/2}$ becomes $\sup_{g\in\cI}\Psi^{\bar\nu/2}(f_0, g)\le 0$. Thus 
$$
\|E(\hat f_n-f_0)\|_2 \le 
1025\delta + 3\frac{\lambda U}{\delta} + 
\hlchange{\eta_n+}
\frac{2\lambda(27B_x^{-1} U+4L^2+\hlchange{1})+\mu}{\delta}\|f_0\|_\cH^2.
$$
This completes the proof.\hfill\qedsymbol

\paragraph{Proof for Proposition~\ref{prop:dikkala-regime}}
Let us set $n \gets n_2, \cI \gets \tilde\cI, \delta\asymp n^{-\frac{b+1}{b+2}}$, $B_x \asymp 1, U\asymp (\log n_1)^{-1}$, $
\lambda \gets \delta^2/U, \mu \gets 2\lambda(1+4L^2+27U/B_x), 
$ where the constants hidden in $\asymp$ are determined by Assumption~\ref{ass:s2}, \ref{ass:s1-rkhs} and Theorem~\ref{prop:approx}, and are independent of any sample size.
Then the estimator \eqref{eq:dkiv-dikkala-estimator} becomes
\begin{equation}\label{eq:dkiv-dikkala-estimator-instantiated}
\hat f_{n_2} := \arg\min_{f\in\cH} \max_{g\in\tilde\cI} \frac{1}{n_2}\sum_{i=1}^{n_2}
(y_i-f(x_i)-g(z_i))g(z_i) - \lambda \|g\|_\cI^2 + \mu \|f\|_\cH^2.
\end{equation}
We claim that with $\delta, \eta_{n_2}, L$ set as below, the conditions in Theorem~\ref{thm:dikkala} will now hold in our setting, 
conditioned on the high-probability events in Theorem~\ref{prop:approx}. This is because:
\begin{enumerate}[leftmargin=*]
\item %
By Proposition~\ref{prop:est}, the squared critical radius for $\tilde\cI_{3U}$ is bounded by 
$\tilde O(n_2^{-\frac{\beff+1}{\beff+2}} + \xi_{n_1} + m^{-(\beff+1)})
$, for $\cG$ we consider its entropy number in the empirical $L_2$ norm (Definition~\ref{defn:entropy-number}). Denote by $L_2(\Ztest)$ the empirical $L_2$ space. Observe 
$$
e_{j+1}(\cG, L_2(\Ztest)) \le U^{-1} L^2 B_x\cdot e_j(\cH_{B_x}, L_2(\Ztest)) + e_j(\tilde\cI_{L^2 B_x}, L_2(\Ztest)), 
$$
since we can combine the coverings in the RHS to obtain a covering for $\cG$. By \citet[Exercise 7.7]{steinwart2008support}, we have 
$$
\EE_{\Ztest}\; e_j(\cH_{B_x}, L_2(\Ztest)) \lesssim B_x j^{-(b+1)} \min\{j,n_2\}^{\frac{b+1}{2}},
$$
Combining the above, Lemma~\ref{lem:entropy-number}, and the fact that $\beff\ge b$, we have 
$$
\EE_{\Ztest}\; e_j(\cG, L_2(\Ztest)) \lesssim
j^{-(b+1)} \min\{j,n_2\}^{\frac{b+1}{2}}\log n_1 + 
j^{-(\beff+1)}\log n_1
    \min\{j,n_2\}^{\beff+\frac{1}{2}} \chi_{n_1}
    ,
$$
where $\chi_{n_1}$ is defined therein. 
As the above bound has a similar structure to Lemma~\ref{lem:entropy-number}, we can 
repeat the proof for Proposition~\ref{prop:est} %
and find that
$
\delta_{n_2}^2 = \tilde O(n_2^{-\frac{b+1}{b+2}} + \xi_{n_1} + m^{-(\beff+1)}
)
$
\item The boundedness condition for $\cH_{B_x}$ is satisfied by Assumption~\ref{ass:s2}; that for $\cI_{3U}$ is verified in Section~\ref{sec:main}.
\item It remains to determine $L$ and $\eta_n$ in \eqref{eq:dikkala-approx-requirement}. We claim \eqref{eq:dikkala-approx-requirement} will hold by setting
$$
L = c_\tau, ~~
\eta_{n_2} \asymp (\xi_{n_1} + m^{-\frac{\beff+1}{2}})\log n_1.
$$
To prove this, observe that the true $\cI$ defined in Lemma~\ref{lem:iv-to-kl-1} would satisfy \eqref{eq:dikkala-approx-requirement} with $L=1,\eta_{n_2}\equiv 0$; for our approximation $\tilde\cI$, by Theorem~\ref{prop:approx} applied to $g^*=E(f-f_0)$, we know there exists some $\tilde g$
\begin{align*}
\|\tilde g - E(f-f_0)\|_{\tilde\cI} &\le c_\tau \|E(f-f_0)\|_\cI \le c_\tau \|f-f_0\|_\cH, \\
\|\tilde g - E(f-f_0)\|_2 &\le c_\tau \|g^*\|_\cI (\xi_{n_1} + m^{-\frac{\beff+1}{2}})\log n + \|g^* - \Proj{m}{g^*}\|_2  \\ 
&\overset{\eqref{eq:trunc-err}}{\lesssim}
\|g^*\|_\cI [(\xi_{n_1} + m^{-\frac{\beff+1}{2}})\log n_1 +  m^{-\frac{\beff+1}{2}}] 
\\ &
\le 2 
\|f-f_0\|_\cH (\xi_{n_1} + m^{-\frac{\beff+1}{2}})\log n_1.
\end{align*}
\end{enumerate} 
Now all conditions in the theorem are fulfilled, and for $n_1\ge n_2, m\ge n_2^{\frac{1}{b+2}}$, we get the convergence rate of 
$$
\begin{aligned}
\|E(\hat f_n - f_0)\|_2 &\le 
 \left(1025\delta+{\eta_{n_2}}+\frac{(3U+54B_x^{-1}U+8L^2+{2})\lambda+\mu}{\delta}\right)\max\{1,\|f_0\|_\cH^2\}  
 \\ &
= \tilde O\Big(
    (n_2^{-\frac{b+1}{2(b+2)}} + \xi_{n_1}) \max\{1, \|f_0\|_\cH^2\}  
\Big),
\end{aligned}
$$
as claimed.\hfill\qedsymbol
\subsection{Assumptions and Setup in Theorem~\ref{prop:qb-regime}}\label{app:qb-ass}

We have imposed Assumptions~\ref{ass:emb-general}, \ref{ass:oracle-sup-norm-err}. Asm.~\ref{ass:emb-general} accounts for our different assumption about $\cI$, comparing with \cite{wang2021quasibayesian} (see App.~\ref{app:qb-proof} below), and is satisfied by Mat\'ern kernels with suitable orders; Asm.~\ref{ass:oracle-sup-norm-err} requires the regression oracle to satisfy a mild sup norm error bound, which is used to control the sup norm approximation error of $\tilde\cI$. App.~\ref{app:gp-fixed-design-regr} discusses these assumptions in detail. 
We also introduce the following two assumptions, both taken from \cite{wang2021quasibayesian}: 

The following assumption is widely used in the NPIV literature \citep{hall_nonparametric_2005,blundell_semi-nonparametric_2007,horowitz_applied_2011}. It connects estimation error for $Ef_0$ to that for $f_0$.
\begin{assumption}[link condition]\label{ass:qb-1}
Let $\{\bar\varphi_i: i\in\mb{N}\}$ denote the Mercer eigenfunctions of $k_x$. Then we have, for all $f\in L_2(P(dx))$ and $j\in\mb{N}$, $$
j^{-2p}\|\Proj{j}f\|_2^2 \lesssim
\|E f\|_2^2 \lesssim \sum_{i=1}^\infty i^{-2p}\<f,\bar\varphi_i\>_2^2.
$$
\end{assumption}

The following assumption first appears in a literature that analyzes kernel ridge regression in a ``hard learning'' scenario \cite{steinwart2009optimal,fischer2020sobolev}. As discussed in \cite{wang2021quasibayesian}, it accounts for the different regularity between ``typical'' GP prior draws and the RKHS \citep{van_der_vaart_information_2011}, which makes GP modeling to fall into this scenario. If it is known that $f_0\in\cH_0$ for some RKHS $\cH_0$ with a bounded kernel and eigendecay $\lambda_i(T_{\cH_0})\lesssim i^{-b}$, in GP modeling we should specify as $\cH$ the power RKHS $\cH_0^{\nicefrac{b+1+\epsilon}{b}}$ (Defn.~\ref{defn:power-spaces}), so that both (an infinitesimally deteriorated version of) the GP scheme of Asm.~\ref{ass:s2} and this assumption can hold \cite{wang2021quasibayesian}.\footnote{The deterioration by $\epsilon$ can be removed if $[\cH_0]^{1-\epsilon}$ can be embedded into $L_\infty(P(dx))$.} 
As discussed in Example~\ref{ex:matern}, this assumption is satisfied by all Mat\'ern kernels satisfying Asm.~\ref{ass:s2} \ref{it:gp-scheme}, given the requirement for $P(dx)$ therein.

\begin{assumption}[embedding property]\label{ass:qb-n} For some (arbitrarily small) $\epsilon>0$, \eqref{eq:emb} holds for $\cH$ with $\gamma = \frac{b-\epsilon}{b+1}$. 
\end{assumption}

The quasi-posterior is defined by a standard GP prior $\Pi = \mc{GP}(0,k_x)$, and its Radon-Nikodym derivative w.r.t.~the prior:
\begin{equation}\label{eq:quasi-posterior}
\frac{\Pi(df\mid\dataSII)}{\Pi(df)} \propto e^{-\frac{n_2}{\lambda}\ell_{n_2}(f)},~~\text{where}~~ 
\ell_{n_2}(f) := \sup_{g\in\tilde\cI}\Big(\sum_{i=1}^{n_2}2(f(x_i)-y_i)g(z_i)-g(z_i)^2\Big)-\nu\|g\|_{\tilde\cI}^2.
\end{equation}
In the above, $\lambda,\nu\asymp 1$ are scaled ridge regularizers, and $\tilde\cI$ is constructed as in Algorithm~\ref{alg:main}. 
As the log quasi-likelihood $\ell_{n_2}$ is quadratic in $f$, we can check the corresponding posterior mean estimator has a similar form to \eqref{eq:dkiv-dikkala-estimator}, with $\lambda,\delta^2\gets (2n_2)^{-1}\nu, U\gets \frac{1}{2}$ and $\mu\gets (2n_2)^{-1}\lambda$. Note this is an invalid choice for Proposition~\ref{prop:dikkala-regime} which requires $\delta^2,\lambda,\mu$ to be $\tilde \Theta(n_2^{-\nicefrac{b+1}{b+2}})$, demonstrating the difference in regularization scale.

\subsection{Proof for Theorem~\ref{prop:qb-regime}}\label{app:qb-proof}

We will modify the proof for Theorem~3 in \cite{wang2021quasibayesian} to allow for our different assumptions about $\cI$, and account for the approximation error in $\tilde\cI$. For the former, note that in \cite{wang2021quasibayesian} $\cH$ is in the GP scheme, but $\cI$ is in the kernel scheme: it has eigendecay $\lambda_i\asymp i^{-(b+2p)}$ and contains the image of the power RKHS $\cH^{\nicefrac{b}{b+1}}$ under $E$ (Assumption 7 therein). In contrast, our $\cI$ is also in the GP scheme, having the eigendecay of $i^{-(b+2p+1)}$ and only containing the image of $\cH$.

Still, all assumptions in \cite{wang2021quasibayesian} about $\cH$ and $E$ are equivalent to ours, so their technical lemmas that do not involve $\cI$ continue to hold. 
This leaves us with the final proofs of their Proposition 20 and the Theorem 3, which include the only occurrences of $\cI$. We will address them in turn. 

Throughout the proof, the denotation of the constants ($C,C',\ldots$) may change from line to line.

\subsubsection{Replaced Denominator Bound}

This subsection replaces Proposition 20 in \cite{wang2021quasibayesian}. For all $n\in\mb{N}$, define $\delta_n := n^{-\frac{b+2p}{2(b+2p+1)}}$. 

\begin{lemma}[Local sub-Gaussian complexity]\label{lem:gaussian-complexity}
Let $\bar V^{n_2} := \{\bar v_i: i\in [n_2]\}$ be a set of $1$-bounded rvs which are conditionally independent given $Z^{n_2}$, and have zero conditional mean. 
Let $\cG(\tilde\cI_1;\delta)=\EE_{\bar V^{n_2}}\sup_{g\in\tilde\cI_1,\|g\|_{n_2}\le\delta}\left|\frac{1}{n_2}\sum_{i=1}^{n_2}g(z_i)\bar v_i\right|$. %
Then there exists $C_1>0$ s.t.~for any $\delta\ge n_2^{-1/2}$, we have 
$$
\PP_{\Ztest}(\cG(\tilde\cI_1;\delta) \le C_1 n^{-\frac{1}{2}} \delta^{\frac{b+2p}{b+2p+1}}) \ge 1 - e^{-n_2^{\frac{1}{b+2p+1}}/\log^4 n_1}.
$$
\end{lemma}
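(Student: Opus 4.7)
The plan is to bound $\cG(\tilde\cI_1;\delta)$ by sub-Gaussian (Dudley) chaining on the empirical entropy number bound for $\tilde\cI$ established in Lemma~\ref{lem:entropy-number}, and then to upgrade the resulting in-expectation statement to the required high-probability bound via concentration of the empirical-process supremum. Under Assumption~\ref{ass:qb-1} and the setup of Theorem~\ref{prop:qb-regime}, $\beff=b+2p$ (App.~\ref{app:regularity}). With $m\asymp n_1^{1/(b+2p+1)}$ and $\xi_{n_1}^2\log n_2+n_1^{-(b+2p)/(b+2p+1)}\lesssim n_2^{-1}$, the quantity $\chi_{n_1}\sqrt{\log n_1}$ is $n_2^{-1/2}$ up to polylogs, which will make the $\chi_{n_1}$-contribution in Lemma~\ref{lem:entropy-number} strictly lower-order than the ``clean'' RKHS term at every scale $j\le n_2$ relevant to the localization.

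First I would apply sub-Gaussian chaining. Since each $\bar v_i$ is $1$-bounded and conditionally mean-zero given $Z^{n_2}$, it is conditionally $1$-subgaussian; hence $g\mapsto n_2^{-1}\sum_{i}g(z_i)\bar v_i$ is a sub-Gaussian process in the empirical $L_2$ metric rescaled by $n_2^{-1/2}$, and Dudley's integral yields
\begin{equation*}
\cG(\tilde\cI_1;\delta)\;\lesssim\;n_2^{-1/2}\int_{0}^{\delta}\!\sqrt{\log N(\epsilon,\,\{g\in\tilde\cI_1:\|g\|_{n_2}\le\delta\},\,\|\cdot\|_{n_2})}\,d\epsilon.
\end{equation*}
Substituting the $\beff=b+2p$ entropy bound $e_j\lesssim j^{-(\beff+1)/2}$ (which is the clean part of Lemma~\ref{lem:entropy-number} on the Theorem~\ref{prop:approx} event) gives $\log N(\epsilon)\lesssim\epsilon^{-2/(\beff+1)}$; the integral then evaluates to $\delta^{\beff/(\beff+1)}=\delta^{(b+2p)/(b+2p+1)}$. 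This matches the target rate in expectation over $\bar V^{n_2}$; the subdominant $\chi_{n_1}$-term in the entropy number contributes at most $\chi_{n_1}\cdot\mathrm{polylog}\cdot\delta\lesssim n_2^{-1/2}\delta\cdot\mathrm{polylog}$, which is absorbed in $n_2^{-1/2}\delta^{(b+2p)/(b+2p+1)}$ whenever $\delta\ge n_2^{-1/2}$.

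Next I would upgrade this to a high-probability bound over $\Ztest$ in two steps. (a)~Promote Lemma~\ref{lem:entropy-number} from expectation to high probability, by revisiting its proof: the relevant partial sums of empirical Gram-matrix eigenvalues are quadratic forms in the Gaussian coefficients appearing in Lemma~\ref{lem:larger-kernel}, so they admit sub-exponential Bernstein tails; an additional polylog buffer (absorbed into the $\log^{-4}n_1$ factor in the claim) yields a high-probability entropy bound of the same order with failure probability $\exp(-n_2^{1/(b+2p+1)}/\log^{4} n_1)$. (b)~Given this entropy bound, apply Talagrand's concentration (or direct bounded-differences) to the empirical-process supremum: the envelope is $O(\delta)$ under the localization and each summand is $1$-bounded, producing a sub-Gaussian deviation with variance proxy $\lesssim\delta^2/n_2$; setting the deviation $\asymp n_2^{-1/2}\delta^{(b+2p)/(b+2p+1)}$ gives a failure probability $\exp(-c\,\delta^{-2/(b+2p+1)})\le\exp(-c\,n_2^{1/(b+2p+1)})$ at $\delta\ge n_2^{-1/2}$, matching the claimed rate.

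The main obstacle will be step (a), the high-probability promotion of Lemma~\ref{lem:entropy-number}. Its current proof uses Markov-type bounds on the trace quantities $\Tr((\Delta_1 G_n)^\top\Delta_1 G_n)$ and $\Tr((\Delta_2 G_n)^\top\Delta_2 G_n)$, which only yields an expectation bound. To obtain the precise tail $\exp(-n_2^{1/(b+2p+1)}/\log^{4}n_1)$ I will need Hanson--Wright-type concentration for the random quadratic forms in the Gaussian coefficients defining $\Xi$ and $G-\bar G$, together with a careful union over the dyadic scales $j=2^{i}$ appearing in the chaining decomposition; the $\log^{4}n_1$ factor is the buffer needed to absorb the $\sqrt{\log n_1}$ in $\chi_{n_1}$ and the polylog losses from the chaining. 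Once a high-probability entropy bound is in hand, the chaining and Talagrand steps are routine.
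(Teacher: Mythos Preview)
Your chaining route is genuinely different from the paper's argument, but it has a real gap in step~(a), and step~(b) is unnecessary.

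\textbf{What the paper does.} The paper never goes through Dudley or Lemma~\ref{lem:entropy-number}. It passes to the enlarged RKHS $\bar\cI\supset\tilde\cI$ of Lemma~\ref{lem:larger-kernel} and applies the eigenvalue bound for the local complexity of an RKHS ball \cite[Thm.~13.22]{wainwright2019high}: for fixed $\Ztest$,
\[
\cG(\bar\cI_1;\delta)\;\le\;\sqrt{\tfrac{2}{n_2}}\sqrt{\textstyle\sum_{j\ge 1}\min\{\delta^2,\hat\lambda_j\}}\;\le\;\sqrt{\tfrac{2}{n_2}}\sqrt{m\delta^2+\textstyle\sum_{j\ge m}\hat\lambda_j},
\]
where $\hat\lambda_j$ are the empirical Gram eigenvalues of $\bar k$ and $m=[(\delta^2)^{-1/(b+2p+1)}]$. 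The only randomness in $\Ztest$ to control is the tail sum $\sum_{j\ge m}\hat\lambda_j$, which is handled by the concentration of \cite[Thm.~5, Prop.~2]{shawe-taylor_eigenspectrum_2002} around the population sum $\sum_{j\ge m}\lambda_j$, already bounded by Lemma~\ref{lem:larger-kernel}. The $\log^{-4}n_1$ in the exponent drops out of that McDiarmid bound via $\bar R:=\sup_z\bar k(z,z)\lesssim\log n_1$ (this uses the truncation of $\esti{j}$ and the sup-norm control of $\Proj{>m}{g^{(j)}}$ from Corollary~\ref{corr:sup-trunc-err-bound}).

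\textbf{The gap in your step~(a).} You attribute the needed concentration to ``quadratic forms in the Gaussian coefficients appearing in Lemma~\ref{lem:larger-kernel}'' and propose Hanson--Wright. But those Gaussian coefficients are the entries of $\Xi$ defining the GP draws $\gpi{j}$; they are part of $\dataSI$ and are \emph{fixed} on the event of Theorem~\ref{prop:approx}. The lemma is a statement about $\PP_{\Ztest}$ with $\tilde\cI$ held fixed, so the only source of randomness is $\Ztest$. Hanson--Wright over the GP coefficients controls the wrong randomness and says nothing about $\PP_{\Ztest}$. The correct mechanism is exactly what the paper uses: concentrate empirical Gram eigenvalue tail sums around their population counterparts (e.g., Shawe-Taylor et al.), then feed in the population bound from Lemma~\ref{lem:larger-kernel}. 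If you insist on going through entropy numbers, you would need a high-$\PP_{\Ztest}$ version of Steinwart's Thm.~7.30, which again reduces to empirical-vs-population eigenvalue concentration, not to Gaussian quadratic forms.

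\textbf{Step~(b) is redundant.} $\cG(\tilde\cI_1;\delta)$ is already defined as an expectation over $\bar V^{n_2}$; once you hold a high-$\PP_{\Ztest}$ entropy (or eigenvalue) bound, Dudley (or Wainwright~13.22) gives a \emph{deterministic} bound on $\cG$ for those realizations of $\Ztest$. There is no additional Talagrand step over $\bar V^{n_2}$ to perform.
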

\begin{proof}
Recall $\bar\cI$ as defined in Lemma~\ref{lem:larger-kernel}. As $\tilde\cI_1\subset\bar\cI_1$, it suffices to establish the above for $\bar\cI_1$. 

Let $\{\hat\lambda_j\},\{\lambda_j\}$ denote the eigenvalues of the integral operators w.r.t.~the empirical and population measure. 
Let $m := [(\delta^2)^{-\frac{1}{b+2p+1}}]$. By Lemma~\ref{lem:larger-kernel} and our choice of $n_1$, we have 
$$
\sum_{j=m}^{n_2}\lambda_j \lesssim m^{-(b+2p)} + \chi_{n_1}^2 \le (\delta^2)^{\frac{b+2p}{b+2p+1}} + n_2^{-1}.
$$
 By 
\citet[Theorem 5, Proposition 2]{shawe-taylor_eigenspectrum_2002}, 
$$
\PP_{\Ztest}\Bigg(
\sum_{j=m}^{n_2} \hat\lambda_j \ge \sum_{j=m}^{n_2} \lambda_j + \epsilon
\Bigg) \le \exp(-2n_2\epsilon^2/\bar R^4),
$$  
where $\bar R:=\sup_z \bar k(z,z)$. 
By the construction of $\bar k$, \SkipNOTE{we need the m components of G-hat G, G-bar G and G to have log n sup norm}
boundedness of $\{\hat g^{(i)}_{n_1}\},\{g^{(i)}_{n_1}\}$ (see \eqref{eq:borell-sup}), and Corollary~\ref{corr:sup-trunc-err-bound},  
we can see that on the high $\PP_{\dataSI}$-probability event the theorem conditioned on, 
$\bar R \lesssim \log n_1$.   
Combining the two displays above, and recalling $\delta^{2}\ge n_2^{-1}$, we have, for some $C_0>2$,  
$$
\begin{aligned}
\PP_{\Ztest}\Biggl(
\sum_{j=m}^{n_2} \hat\lambda_j \le C_0 (\delta^2)^{\frac{b+2p}{b+2p+1}}
\Biggr) &\ge 
\PP_{\Ztest}\Biggl(
\sum_{j=m}^{n_2} \hat\lambda_j \le \sum_{j=m}^{n_2}\lambda_j + (C_0-2) (\delta^2)^{\frac{b+2p}{b+2p+1}}
\Biggr) 
\\ &\ge 
1 - e^{-n_2^{\frac{1}{b+2p+1}}/\log^4 n_1}.
\end{aligned}
$$
Plugging to \citet[Theorem 13.22; the proof holds for all bounded rvs]{wainwright2019high}, we have, on the above event, 
\begin{align*}
\cG(\bar\cI_1;\delta) &\le \sqrt{\frac{2}{n_2}}\sqrt{\sum_{j=1}^{n_2} \min\{\delta^2, \hat\lambda_j\}} 
\le \sqrt{\frac{2}{n_2}}\sqrt{m \delta^2 + 
\sum_{j=m}^{n_2}  \hat\lambda_j} 
\le C_1 n_2^{-\frac{1}{2}} \delta^{\frac{b+2p}{b+2p+1}}.
\end{align*}
\end{proof}

\begin{lemma}[KRR norm bound]\label{lem:krr-norm-bound}
Let $V^{n_2} = \{v_i: i\in [n_2]\}$ be a set of $B$-bounded rvs which are conditionally independent given $Z^{n_2}$, and have zero conditional mean. Let $\hat g$ be the KRR estimate: 
$
\hat g = \arg\min_{g\in\tilde\cI} \sum_{i=1}^{n_2}(g_0(z_i)+v_i - g(z_i))^2 + \nu \|g\|_{\tilde\cI}^2. 
$
Let $E_n(Z^{n_2},V^{n_2})$ denotes the event on which
\begin{enumerate}[leftmargin=*]
    \item There exists 
$\tilde g\in\tilde\cI$ {\em determined by $Z^{n_2}$}, s.t.~$\|\tilde g\|_{\tilde\cI}^2\lesssim n_2^{\frac{1}{b+2p+1}}\log n_2, \|\tilde g-g_0\|_{n_2}^2\lesssim \delta_{n_2}^2\log n_2$. 
\item $\hat g$ satisfies $\|\hat g-g_0\|_{n_2}^2 \lesssim \delta_{n_2}^2\log n_2$. 
\end{enumerate}
Then, on the intersection of $E_n$ and an event with probability $1-\gamma_{n_2} \to 1$, we have
$$
\|\hat g\|_{\tilde \cI}^2 \lesssim B^{\frac{b+2p+1}{b+2p+1/2}} n_2^{\frac{1}{b+2p+1}}\log n_2.
$$
\end{lemma}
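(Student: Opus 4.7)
The natural approach is the basic inequality for KRR, localized via Lemma~\ref{lem:gaussian-complexity}. Comparing $\hat g$ against the competitor $\tilde g\in\tilde\cI$ supplied by condition (1) of $E_n$ and expanding the quadratic loss gives
$$
\tfrac{\nu}{n_2}\|\hat g\|_{\tilde\cI}^2 \le \tfrac{\nu}{n_2}\|\tilde g\|_{\tilde\cI}^2 + \|\tilde g-g_0\|_{n_2}^2 + \tfrac{2}{n_2}\sum_{i=1}^{n_2} v_i(\hat g-\tilde g)(z_i).
$$
On $E_n$, the first two right-hand terms are $O(\delta_{n_2}^2\log n_2)$, since $n_2^{-1}\|\tilde g\|_{\tilde\cI}^2\lesssim n_2^{-(b+2p)/(b+2p+1)}\log n_2=\delta_{n_2}^2\log n_2$. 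Hence the task reduces to bounding the empirical noise term uniformly in $g = \hat g-\tilde g\in\tilde\cI$.

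To control that term, I apply Lemma~\ref{lem:gaussian-complexity} to the rescaled noise $\bar v_i := v_i/B$ (which is $1$-bounded) over the unit ball of $\tilde\cI$ at $L_2(Z^{n_2})$-radius $\delta$, and upgrade to a uniform bound by a double-dyadic peeling argument across both the RKHS radius and the $L_2$ radius, combined with a Talagrand-type concentration inequality on each shell and a union bound. This yields, on an event of probability $1-\gamma_{n_2}\to 1$, that for every $g\in\tilde\cI$,
$$
\Big|\tfrac{1}{n_2}\sum_i v_i g(z_i)\Big| \lesssim B\,n_2^{-1/2}\|g\|_{\tilde\cI}^{1/(b+2p+1)}\|g\|_{n_2}^{(b+2p)/(b+2p+1)}
$$
up to lower-order terms. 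Setting $g=\hat g-\tilde g$, and using the triangle inequality together with conditions (1)--(2) of $E_n$ to get $\|\hat g-\tilde g\|_{n_2}\lesssim \delta_{n_2}\sqrt{\log n_2}$ and $\|\hat g-\tilde g\|_{\tilde\cI}\le \|\hat g\|_{\tilde\cI}+\|\tilde g\|_{\tilde\cI}$, produces the self-bounding estimate (writing $H:=\|\hat g\|_{\tilde\cI}$, $T:=\|\tilde g\|_{\tilde\cI}$)
$$
H^2\lesssim T^2 + B\,n_2^{1/2}(H+T)^{1/(b+2p+1)}\bigl(\delta_{n_2}\sqrt{\log n_2}\bigr)^{(b+2p)/(b+2p+1)}.
$$

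Finally I solve this inequality by case analysis. If $H\le 2T$, condition (1) immediately gives $H^2\lesssim n_2^{1/(b+2p+1)}\log n_2$. Otherwise the second right-hand term dominates and $H+T\asymp H$, so writing $s:=b+2p+1$ and isolating $H$ gives $H^{(2s-1)/s}\lesssim B\,n_2^{1/2}\delta_{n_2}^{(s-1)/s}(\log n_2)^{(s-1)/(2s)}$. Plugging in $\delta_{n_2}^2=n_2^{-(s-1)/s}$, the net exponent of $n_2$ on the right is $(2s-1)/(2s^2)$, and raising to the $2s/(2s-1)$ power yields exactly $H^2\lesssim B^{s/(s-1/2)}\, n_2^{1/s}\log n_2$, matching the target bound $\|\hat g\|_{\tilde\cI}^2\lesssim B^{(b+2p+1)/(b+2p+1/2)} n_2^{1/(b+2p+1)}\log n_2$.

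\textbf{Main obstacle.} The real work is step two: converting the \emph{in-expectation}, \emph{single-radius} complexity bound of Lemma~\ref{lem:gaussian-complexity} into a \emph{uniform} high-probability bound of the interpolation form $\|g\|_{\tilde\cI}^{1-\alpha}\|g\|_{n_2}^\alpha$ over all $g\in\tilde\cI$, without inflating the failure probability beyond the event on which Lemma~\ref{lem:gaussian-complexity} already conditions. The peeling must be carried out in two directions (RKHS and empirical $L_2$ scale), and the Bernstein/Talagrand tail term has to be kept below the dominant polynomial term $B\,n_2^{-1/2}\delta^\alpha$ at every shell; everything downstream is then an algebraic manipulation of the self-bounding inequality.
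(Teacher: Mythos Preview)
Your proposal is correct and follows essentially the same route as the paper: the basic KRR inequality against the competitor $\tilde g$, a localized empirical-process bound built on Lemma~\ref{lem:gaussian-complexity} via peeling plus a functional Bernstein/Talagrand deviation inequality, and then solving the resulting self-bounding relation for $\|\hat g\|_{\tilde\cI}$. One minor simplification worth noting: the paper avoids the double-dyadic peeling you describe by peeling only over the empirical $L_2$ radius $\delta$ within the unit ball $\tilde\cI_1$, then handling the RKHS direction by simply rescaling $g\mapsto g/\max\{\|g\|_{\tilde\cI},1\}$; this yields the same interpolation bound (with an additional ``floor'' term $n_2^{-(2(b+2p)+1)/(2(b+2p+1))}\|g\|_{\tilde\cI}$, which corresponds to your ``lower-order terms'' and contributes the same final order in the case analysis).
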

\begin{proof}
With an abuse of notation, we denote $\<g, \epsilon\>_{n_2} := \frac{1}{n_2}\sum_{i=1}^{n_2} g(z_i) \epsilon_i$. We now proceed in two steps:

\underline{(Step 1)} We build a peeling argument. By \citet[Theorem 3.24]{wainwright2019high}, we have, for some $C_1,C_2>0$ and any $\delta>0$, 
\begin{align*}
\PP_{V^{n_2}}\biggl(\sup_{g\in\tilde\cI_1,\|g\|_{n_2}\le \delta} |\<g, \epsilon\>_{n_2}| 
&\ge \cG(\tilde\cI_1;\delta) + \frac{1}{2}C_1 B n_2^{-\frac{1}{2}} \delta^{\frac{b+2p}{b+2p+1}} 
\biggr)  \le \exp(-C_2 \delta^{-\frac{2}{b+2p+1}}).
\end{align*}
Combining with Lemma~\ref{lem:gaussian-complexity}, we have, for any $\delta\in (n_2^{-1/2}, n_2^{-\frac{b+2p}{2(b+2p+1)}})$, 
$$
\begin{aligned}
&\phantom{=}\PP_{V^{n_2}}\biggl(\sup_{g\in\tilde\cI_1,\|g\|_{n_2}\le \delta} |\<g, \epsilon\>_{n_2}| \ge C_1 B n_2^{-\frac{1}{2}} \delta^{\frac{b+2p}{b+2p+1}} 
\biggr) \\ &
\le \exp\Big(- (\log^{-4} n_1) n_2^{\frac{1}{b+2p+1}}\Big) + \exp\Big(
-C_2 n_2^{\frac{b+2p}{(b+2p+1)^2}}\Big) =: \eta_{n_2}^{(0)}.
\end{aligned}
$$
With an union bound over $\{\delta = e^j n_2^{-1/2}: 0\le j\le \log n_2^{\frac{1/2}{b+2p+1}}\}$, we have, with probability $\ge 1-\eta_{n_2}^{(0)} \log n_2 \to 1$, 
$$
\sup\Big\{|\<g,\epsilon\>_{n_2}|: g\in\tilde\cI_1, \|g\|_{n_2} \le n_2^{-\frac{b+2p}{2(b+2p+1)}}\Big\} 
\le e C_1 B n_2^{-\frac{1}{2}} \max\{n_2^{-\frac{1}{2}},\|g\|_{n_2}\}^{\frac{b+2p}{b+2p+1}}.
$$
Applying the above to $\frac{1}{\max\{\|g\|_{\tilde\cI},1\}} g$:
\begin{equation}\label{eq:g-norm-1}
    \begin{split}
\sup\Big\{|\<g,\epsilon\>_{n_2}|&: g\in\tilde\cI, \|g\|_{n_2} \le n_2^{-\frac{b+2p}{2(b+2p+1)}}\Big\} 
\le \\
&e C_1 B \max\{n_2^{-\frac{1}{2}}\|g\|_{n_2}^{\frac{b+2p}{b+2p+1}}\|g\|_{\tilde\cI}^{\frac{1}{b+2p+1}}, 
n_2^{-\frac{2(b+2p)+1}{2(b+2p+1)}} \|g\|_{\tilde\cI}\}.
    \end{split}
\end{equation}
In particular, %
this applies to $(C\log n_2)^{-1}(\tilde g-\hat g)$ for some $C>0$.

\underline{(Step 2)} As $\hat g$ minimizes the empirical loss, we have
$$
\nu \|\hat g\|_{\tilde\cI}^2 \le 
n_2 \|\hat g-g_0\|_{n_2}^2 + \nu \|\hat g\|_{\tilde\cI}^2 \le 
n_2\|\tilde g-g_0\|_{n_2}^2 + \nu\|\tilde g\|_{\tilde\cI}^2 + 
2n_2\<\epsilon, \hat g-\tilde g\>_{n_2}. 
$$
Plugging in our conditions, we have
\begin{align*}
\|\hat g\|_{\tilde\cI}^2 &\le 
C_3 n_2^{\frac{1}{b+2p+1}}\log n_2 + 2n_2\<\epsilon, \hat g-\tilde g\>_{n_2} \\ 
&\overset{\eqref{eq:g-norm-1}}{\le} 
C_3'(n_2^{\frac{1}{b+2p+1}}\log n_2 + n_2 B \max\{n_2^{-\frac{1}{2}}\|\hat g-\tilde g\|_{n_2}^{\frac{b+2p}{b+2p+1}}\|\hat g-\tilde g\|^{\frac{1}{b+2p+1}}_{\tilde\cI}, 
n_2^{-\frac{2(b+2p)+1}{2(b+2p+1)}}\|\hat g-\tilde g\|_{\tilde\cI}\}, \\ 
\|\hat g-\tilde g\|_{\tilde\cI}^2 &\le 2(\|\hat g\|_{\tilde\cI}^2 + \|\tilde g\|_{\tilde\cI}^2) \\ 
&\le C_3''(n_2^{\frac{1}{b+2p+1}}\log n_2 + n_2 B \max\{n_2^{-\frac{1}{2}}\|\hat g-\tilde g\|_{n_2}^{\frac{b+2p}{b+2p+1}}\|\hat g-\tilde g\|^{\frac{1}{b+2p+1}}_{\tilde\cI}, 
n_2^{-\frac{2(b+2p)+1}{2(b+2p+1)}}\|\hat g-\tilde g\|_{\tilde\cI}\}.
\end{align*}
By enumerating the dominating term and taking the maximum of the implied bounds, we conclude that 
$$
\|\hat g-\tilde g\|_{\tilde\cI}^2\le 2C_3'' B^{\frac{b+2p+1}{b+2p+1/2}} n_2^{\frac{1}{b+2p+1}} \log n_2.
$$
Another triangular inequality completes the proof.
\end{proof}

\begin{lemma}[Bernstein's inequality]\label{lem:bernstein-emp-norm}
For any %
function $g$ with $\|g\|_\infty\le B,\|g\|_2\le s$, %
$$
\PP_{\Ztest}(\|g\|_{n_2}^2 \le 5 s^2) \ge 1-e^{-n_2s^2/B^2}.
$$
\end{lemma}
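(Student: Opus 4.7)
The plan is a direct application of Bernstein's inequality to the i.i.d.\ random variables $X_i := g(z_i)^2$, $i\in[n_2]$, so that $\|g\|_{n_2}^2 = \frac{1}{n_2}\sum_{i=1}^{n_2} X_i$.
First I would collect the elementary moment bounds that feed into Bernstein: each $X_i$ takes values in $[0,B^2]$, so the variables $X_i - \EE X_i$ are bounded in absolute value by $B^2$; the mean is $\EE X_i = \|g\|_2^2 \le s^2$; and the variance is controlled by the sup norm via $\Var(X_i) \le \EE X_i^2 \le B^2 \EE X_i \le B^2 s^2$.

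Next I would reduce the target event to a deviation from the mean. Since $\EE X_i \le s^2$, on the event $\{\|g\|_{n_2}^2 \ge 5 s^2\}$ we necessarily have $\frac{1}{n_2}\sum_{i=1}^{n_2}(X_i - \EE X_i) \ge 4 s^2$. Applying the standard Bernstein bound
\[
\PP\!\left(\frac{1}{n_2}\sum_{i=1}^{n_2}(X_i - \EE X_i) \ge t\right)
 \le \exp\!\left(-\frac{n_2 t^2/2}{\Var(X_1) + B^2 t/3}\right)
\]
with $t = 4 s^2$ and the variance bound above yields an exponent of at most
\[
-\frac{n_2 \cdot 8 s^4}{B^2 s^2 + \tfrac{4}{3} B^2 s^2}
 = -\frac{24}{7}\cdot \frac{n_2 s^2}{B^2}
 \le -\frac{n_2 s^2}{B^2},
\]
which is precisely the stated tail bound (with slack).

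There is essentially no hard step here: the only thing to watch is getting the variance proxy right (using $\Var(X_i) \le B^2 \EE X_i$ rather than the trivial $B^4$, which is what keeps the exponent proportional to $n_2 s^2/B^2$ instead of $n_2 s^4/B^4$), and confirming that $5 s^2$ leaves enough room above $\EE X_i \le s^2$ for the Bernstein denominator to collapse to a multiple of $B^2 s^2$. Both are transparent from the computation above, so the proof is a one-line application of Bernstein after the moment bookkeeping.
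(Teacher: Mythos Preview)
Your proposal is correct and matches the paper's own proof essentially line for line: the paper also applies Bernstein's inequality to $X_i=g(z_i)^2$, noting that $X_i$ is $B^2$-bounded with mean $\|g\|_2^2$ and variance $\le \|g\|_\infty^2\|g\|_2^2\le B^2 s^2$. Your explicit computation of the exponent (yielding $-\tfrac{24}{7}\,n_2 s^2/B^2$) is a helpful addition the paper omits.
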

\begin{proof}
The random variable $g(\bz)^2$ is $\|g\|_\infty^2$ bounded and has mean $\|g\|_2^2$ and variance $\le \EE g(\bz)^4 \le \|g\|_\infty^2 \|g\|_2^2$. The claim follows by Bernstein's inequality \citep[e.g.,][Theorem 6.12, with $\xi_i\gets g(z_i)^2-\|g\|_2^2$]{steinwart2008support}.  
\end{proof}

\begin{lemma}[replaces Lemma 19 in \citealp{wang2021quasibayesian}]\label{lem:replaced-den-fset}
There exist constants $C_1,C_2>0$, so that for all $n\in\mb{N}$, there exists a function set $\Theta_{d0}$ with prior mass 
$
\Pi(\Theta_{d0}) \ge e^{-C_1 n\delta_n^2}
$, s.t.~for all $f\in\Theta_{d0}$, we have 
\begin{equation}\label{eq:den-f-defn}
\|E(f-f_0)\|_2^2 \le C_2\delta_n^2, ~~
\|f-f_0\|_\infty^2 \le C_2.
\end{equation}
Moreover, for such $f$ we have $f-f_0=f_h+f_e$, where
\begin{equation}\label{eq:den-f-impl}
\|f_h\|_\cH^2 \le C_2 n\delta_n^2, ~~
\|E f_e\|_2^2 \le C_2 \delta_n^2, ~~ \|f_e\|_\infty^2 \le C_2.
\end{equation}
\end{lemma}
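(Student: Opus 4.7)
The plan is to establish the prior mass bound via a concentration-function argument in the seminorm $\|E(\cdot - f_0)\|_2$, leveraging that by Lemma~\ref{lem:iv-to-kl-1} and Assumption~\ref{ass:qb-1} the pushforward $Ef \sim \mc{GP}(0, k_z)$ has eigendecay $\lambda_i(k_z) \asymp i^{-(b+2p+1)}$ (cf.~Appendix~\ref{app:regularity}), so that GP small-ball probabilities in this seminorm naturally produce the rate $\delta_n$. The approach mirrors Lemma~19 in \cite{wang2021quasibayesian}, adjusted for Assumption~\ref{ass:emb-general} which here corresponds to eigendecay order $\bar b = b+2p$. I would set $m_0 \asymp n^{1/(b+2p+1)}$ and take $\fApprox := \sum_{i \le m_0}\<f_0, \bar\varphi_i\>_2\,\bar\varphi_i$, the $L_2$-projection of $f_0$ onto the top $m_0$ Mercer eigenfunctions of $k_x$. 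Under the power-space regularity $f_0 \in [\cH]^{b/(b+1)}$ implied by Assumption~\ref{ass:s2}~\ref{it:gp-scheme} (Example~\ref{ex:f0}), this yields $\|\fApprox\|_\cH^2 \lesssim m_0 \asymp n\delta_n^2$ and, via the link upper bound, $\|E(\fApprox - f_0)\|_2^2 \lesssim m_0^{-(b+2p)} \asymp \delta_n^2$; an interpolation argument in a suitable power space under Assumption~\ref{ass:qb-n} (analogous to Proposition~\ref{prop:sup-norm-approx}) gives $\|\fApprox - f_0\|_\infty = o(1)$.

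For the mass bound, the standard GP small-ball estimate applied to $k_z$ gives $-\log\Pi(\|Ef\|_2 \le \delta_n) \lesssim \delta_n^{-2/(b+2p)} \asymp n\delta_n^2$. The Gaussian correlation inequality (applicable because both $\{\|E\cdot\|_2 \le \delta_n\}$ and $\{\|\cdot\|_\infty \le C\}$ are symmetric and convex), combined with a Borell--TIS tail bound for $\|f\|_\infty$ (Lemma~\ref{lem:borell-tis}) under Assumption~\ref{ass:qb-n}, then permits intersecting with the sup-norm event at only a constant-factor loss; a Cameron--Martin shift by $\fApprox$ lifts this to the shifted ball around $f_0$, so that $\Pi(\Theta_{d0}) \gtrsim e^{-\|\fApprox\|_\cH^2/2} \ge e^{-C_1 n\delta_n^2}$. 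For the stated decomposition, I would take $f_h := \Proj{m_0}{f-f_0}$ and $f_e := \Proj{>m_0}{f-f_0}$: the link lower bound yields $\|\Proj{m_0}{f-f_0}\|_2^2 \lesssim m_0^{2p}\delta_n^2$, whence $\|f_h\|_\cH^2 \lesssim \lambda_{m_0}^{-1}m_0^{2p}\delta_n^2 \asymp n\delta_n^2$; an Abel summation combining both directions of the link condition bounds $\|Ef_h\|_2^2 \lesssim \delta_n^2 \log m_0$, giving $\|Ef_e\|_2^2 \lesssim \delta_n^2 \log n$ by triangle inequality; and $\|f_e\|_\infty = O(1)$ follows from the interpolation inequality \eqref{eq:interpolation-general} applied to $f_h$, noting that $\|f_h\|_2 = o(1)$.

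The hardest part is the joint control of $\|E(f-f_0)\|_2$ and $\|f-f_0\|_\infty$ while keeping $C_2$ an absolute constant: naively bounding the sup-norm event by Borell--TIS alone yields tails $e^{-C^2/\sigma^2}$, which is too weak to intersect with the small-ball event whose prior mass is $e^{-Cn\delta_n^2}$ with $n\delta_n^2 \to \infty$. The Gaussian correlation inequality provides the necessary decoupling, permitting $C$ to remain a fixed constant independent of $n$. A secondary subtlety is the $\log n$ factor arising in the Abel-summation bound on $\|Ef_h\|_2$, which reflects the worst-case nature of the link lower bound; it can be absorbed into $C_2$ at the end of the argument, or removed by a more refined decomposition aligned with the Cameron--Martin shift used above.
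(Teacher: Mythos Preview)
Your prior-mass argument is sound and essentially the standard concentration-function route: the small-ball estimate for $Ef\sim\mc{GP}(0,k_z)$ gives the $n\delta_n^2$ exponent, the Gaussian correlation inequality lets you intersect with the fixed-radius sup-norm ball at a constant multiplicative cost, and the Cameron--Martin shift by $\fApprox$ recenters at $f_0$. This parallels what the paper invokes from \cite[Lemma~19]{wang2021quasibayesian}.

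The decomposition step, however, has a real gap. Your Mercer-projection choice $f_h=\Proj{m_0}(f-f_0)$ only yields $\|Ef_e\|_2^2\lesssim\delta_n^2\log n$, and this \emph{cannot} be ``absorbed into $C_2$'': the lemma asserts a fixed constant, so as written you do not prove the stated claim. The logarithm is intrinsic to your construction: on your $\Theta_{d0}$ you only control $\|E(f-f_0)\|_2$, so the link lower bound gives $\sum_{i\le j}\langle f-f_0,\bar\varphi_i\rangle_2^2\lesssim j^{2p}\delta_n^2$, and Abel summation against $i^{-2p}$ is then tight at $\delta_n^2\log m_0$. The ``more refined decomposition aligned with the Cameron--Martin shift'' that you mention is not made concrete.

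The paper avoids this by starting from a different decomposition. From \cite[Lemma~19]{wang2021quasibayesian} it obtains $f-f_0=f_h'+f_e'$ with $\|f_h'\|_\cH^2\lesssim n\delta_n^2$ and, crucially, an $L_2$ bound $\|f_e'\|_2^2\lesssim n^{-b/(b+2p+1)}$ (together with $\|f_e'\|_\infty\lesssim 1$), and then refines via their Lemma~16. The point is that $L_2$ control on the residual---which your $\Theta_{d0}$ does not supply---is exactly what allows a clean further truncation: moving $\Proj{m_0}f_e'$ into $f_h$ costs $\|\Proj{m_0}f_e'\|_\cH^2\le\lambda_{m_0}^{-1}\|f_e'\|_2^2\lesssim n\delta_n^2$, while the link upper bound gives $\|E\Proj{>m_0}f_e'\|_2^2\le m_0^{-2p}\|f_e'\|_2^2\asymp\delta_n^2$ with no logarithm. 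To repair your argument you would need to intersect your $\Theta_{d0}$ with a sieve carrying this extra $L_2$ information (its complement has mass $\le e^{-Cn\delta_n^2}$, so the intersection retains the required lower bound).
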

\begin{proof}
Let $\Theta_{d0}$ be defined as \citet[proof for Lemma 19]{wang2021quasibayesian}. As stated in that lemma, \eqref{eq:den-f-defn} holds. For such $f$, let $f_h'+f_e'=f-f_0$ be defined as in their proof, so that 
$\|f_h'\|_{\cH}^2 \lesssim n\delta_n^2, \|f_e'\|_2^2 \lesssim n^{-\frac{b}{b+2p+1}}, \|f_e'\|_\infty\lesssim 1$; applying their Lemma 16 to $f_h'$ and $f_e'$ shows the existence of $f_h+f_e=f-f_0$ satisfying \eqref{eq:den-f-impl}. 
\end{proof}

\begin{proposition}[replaces Proposition 20 in \citealp{wang2021quasibayesian}]\label{prop:qb-den}
For some $C_{den}>0,\eta'_{n_2}\to 0$ we have 
\begin{equation}\label{eq:qb-den}
\PP_{\dataSII}\left(
\int e^{-n_2\ell_{n_2}(f)}\Pi(df) \ge \exp\left(-C_{den} \delta_{n_2}^2 n_2\log^2 n_2\right)
\right) \ge 1-\eta'_{n_2} \to 1.
\end{equation}
\end{proposition}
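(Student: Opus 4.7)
The plan is a standard sieve-plus-Fubini denominator bound adapted to the minimax log-quasi-likelihood and the learned first stage $\tilde\cI$. First, restrict the integral to the sieve $\Theta_{d0}$ of Lemma~\ref{lem:replaced-den-fset}, whose prior mass is already $\ge e^{-C_1 n_2\delta_{n_2}^2}$. It then suffices to show that for each fixed $f\in\Theta_{d0}$, $\ell_{n_2}(f)\le C n_2\delta_{n_2}^2\log^2 n_2$ with $\dataSII$-probability $1-\eta_{n_2}$ for some $\eta_{n_2}\to 0$; a Fubini-plus-Markov argument then guarantees, with high data probability, a ``good slice'' of $\Theta_{d0}$ of prior mass $\ge (1-\sqrt{\eta_{n_2}})\Pi(\Theta_{d0})$ on which this bound holds simultaneously, and restricting the integral to that slice yields \eqref{eq:qb-den}.

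For the pointwise bound, writing $h_f := f-f_0 = f_h+f_e$ as in Lemma~\ref{lem:replaced-den-fset} and $\bar\epsilon_i := h_f(x_i) - (Eh_f)(z_i) - \epsilon_i$ (conditionally mean-zero given $(z_i,\dataSI)$, uniformly bounded on $\Theta_{d0}$), completing the square in the definition of $\ell_{n_2}$ gives
\begin{equation*}
\ell_{n_2}(f) = n_2\|Eh_f\|_{n_2}^2 + \sup_{g\in\tilde\cI}\Big\{-n_2\|g-Eh_f\|_{n_2}^2 + 2n_2\<\bar\epsilon,g\>_{n_2} - \nu\|g\|_{\tilde\cI}^2\Big\}.
\end{equation*}
The first term is $\lesssim n_2\delta_{n_2}^2$ with exponentially high probability by Bernstein (Lemma~\ref{lem:bernstein-emp-norm}), using $\|Eh_f\|_2^2\lesssim\delta_{n_2}^2$ and $\|Eh_f\|_\infty\lesssim 1$ from Lemma~\ref{lem:replaced-den-fset} and \eqref{eq:E-bounded-sup-norm}. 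For the variational term I reparameterize $g = \tilde g_h + h$, where $\tilde g_h\in\tilde\cI$ is the approximation to $Ef_h\in\cI$ supplied by Theorem~\ref{prop:approx} and Proposition~\ref{prop:sup-norm-approx}; under our choice of $n_1,m$, $\tilde g_h$ satisfies $\|\tilde g_h\|_{\tilde\cI}^2 \lesssim \|f_h\|_\cH^2 \lesssim n_2\delta_{n_2}^2$ and $\|\tilde g_h - Ef_h\|_2^2 \lesssim \delta_{n_2}^2\log n_2$, with a sup-norm bound obtained via the interpolation~\eqref{eq:interpolation}. Applying $\|a+b\|^2 \ge \tfrac12\|b\|^2 - \|a\|^2$ in both the empirical $L_2$ and $\tilde\cI$ norms, and absorbing $Ef_e$ as an $L_2$-small bias via its bounds in Lemma~\ref{lem:replaced-den-fset}, reduces the sup to a deterministic piece of order $n_2\delta_{n_2}^2\log n_2$, a one-dimensional sub-Gaussian term $2n_2\<\bar\epsilon,\tilde g_h\>_{n_2}=\tilde O(\sqrt{n_2\delta_{n_2}^2})\ll n_2\delta_{n_2}^2$, and a residual noise supremum
\begin{equation*}
\sup_{h\in\tilde\cI}\Big\{-\tfrac{n_2}{4}\|h\|_{n_2}^2 + 2n_2\<\bar\epsilon,h\>_{n_2} - \tfrac{\nu}{2}\|h\|_{\tilde\cI}^2\Big\}.
\end{equation*}

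The main obstacle is bounding this residual supremum by $\tilde O(n_2\delta_{n_2}^2)$. First-order optimality identifies it with the standard KRR ``fit-to-noise'' energy $\tfrac{n_2}{4}\|\hat h^*\|_{n_2}^2 + \tfrac{\nu}{2}\|\hat h^*\|_{\tilde\cI}^2$, where $\hat h^*$ is the ridge fit of $4\bar\epsilon$ in $\tilde\cI$ at level $2\nu/n_2$; this scales as (ridge level) $\times$ (effective dimension) for a conditionally mean-zero subgaussian target. My plan is to combine the Gram-matrix tail-sum bound from the proof of Lemma~\ref{lem:gaussian-complexity} (giving effective dimension of order $n_2^{1/(b+2p+1)} = n_2\delta_{n_2}^2$) with one-dimensional sub-Gaussian concentration of $\<\bar\epsilon,\hat\phi_i\>$ along the empirical eigenbasis to get the desired $\lesssim n_2\delta_{n_2}^2\log^2 n_2$ bound on a high-probability event; equivalently, a peeling argument over $\|h\|_{\tilde\cI}$ shells using Lemma~\ref{lem:gaussian-complexity} works, since outside the critical radius the quadratic penalties dominate $2n_2\<\bar\epsilon,h\>_{n_2}$ uniformly. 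Intersecting with the earlier events (total failure probability $e^{-c n_2\delta_{n_2}^2} + \exp(-n_2^{1/(b+2p+1)}/\log^4 n_1) = o(1)$) yields the pointwise bound and, via the Fubini-Markov step above, completes \eqref{eq:qb-den}.
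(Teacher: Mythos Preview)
Your plan is correct in outline, and it is a genuinely different route from the paper's. A quick caveat first: there is a consistent $n_2$ normalization slip in your displays. With the paper's definition $\ell_{n_2}(f)=\sup_{g\in\tilde\cI}\frac{1}{n_2}\sum_i[2(f(x_i)-y_i)g(z_i)-g(z_i)^2]-n_2^{-1}\nu\|g\|_{\tilde\cI}^2$, the completion of the square reads $\ell_{n_2}(f)=\|Eh_f\|_{n_2}^2+\sup_g\{-\|g-Eh_f\|_{n_2}^2+2\<\bar\epsilon,g\>_{n_2}-n_2^{-1}\nu\|g\|_{\tilde\cI}^2\}$, and the target pointwise bound is $\ell_{n_2}(f)\le C\delta_{n_2}^2\log^2 n_2$ (without your extra $n_2$). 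This is harmless for the argument but should be fixed.

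On the substance: the paper instead uses the identity $\ell_{n_2}(f)=\|\hat g_f\|_{n_2}^2+n_2^{-1}\nu\|\hat g_f\|_{\tilde\cI}^2$, where $\hat g_f$ is the KRR fit to the data $\{(z_i,f(x_i)-y_i)\}$, and controls the two pieces separately. Bounding $\|\hat g_f-g_f\|_{n_2}$ is done via fractional posterior contraction (their Corollary~\ref{corr:fixed-design-gpr-mean}), which in turn requires a small-ball probability bound for the $\tilde\cI$-GP in the empirical norm (Lemma~\ref{lem:small-ball-prob}, proved through entropy numbers and Li--Linde theory). Bounding $\|\hat g_f\|_{\tilde\cI}$ is done by a peeling argument (Lemma~\ref{lem:krr-norm-bound}) built on the local complexity Lemma~\ref{lem:gaussian-complexity}. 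Your anchor-and-subtract step, replacing $g$ by $\tilde g_h+h$, converts the problem into a pure ``fit-to-noise'' energy and bypasses the small-ball / contraction machinery altogether: the residual supremum is a quadratic form in $\bar\epsilon$ whose conditional expectation is governed by the effective dimension $\sum_i \hat\lambda_i/(n_2\hat\lambda_i+2\nu)\lesssim\delta_{n_2}^2$ (via the tail-sum bound of Lemma~\ref{lem:larger-kernel}), with concentration by Hanson--Wright. This is more elementary and shorter; the paper's route is more modular in that it packages the KRR analysis as a black-box contraction result. Both approaches ultimately rest on the same Gram-matrix eigenvalue control for $\tilde\cI$, and both need Proposition~\ref{prop:sup-norm-approx} to make the Bernstein steps go through --- note that your remark ``via the interpolation~\eqref{eq:interpolation}'' only applies to the $\cI$-part of the approximation error; the $\tilde\cI$-to-$\cI$ piece genuinely needs Assumption~\ref{ass:oracle-sup-norm-err} through Proposition~\ref{prop:sup-norm-approx}.
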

\begin{proof} We proceed in two steps. 

\underline{(Step 1)} This step replaces 
Step 2 of the proof %
in \citet{wang2021quasibayesian}; it will establish that  
for some $C_5>0$ and a sequence $\eta_n\to 0$, 
\begin{equation}\label{eq:qb-den-s2-goal}
\inf_{f\in\Theta_{d0}}\PP_{\dataSII}(\{\ell_{n_2}(f) \le C_5\|E(f-f_0)\|_2^2 + C_5\delta_{n_2}^2\log^2 n_2\}) \ge 1-\eta_{n_2}.
\end{equation}
Recall
$
\ell_{n_2}(f) = \sup_{g\in\tilde\cI} \frac{1}{n_2}\sum_{i=1}^{n_2}(2(f(x_i)-y_i)g(z_i) - g(z_i)^2) - n_2^{-1}\nu\|g\|_{\tilde\cI}^2
$
and $\nu\asymp 1$. {With some algebra,\footnote{See e.g.~the beginning of Appendix C.1.3 in \citet{wang2021quasibayesian} for a similar argument.} we find 
\begin{equation}\label{eq:ell-algebra}
\ell_{n_2}(f) = \|\hat g_f\|_{n_2}^2 + n_2^{-1}\nu \|\hat g_f\|_{\tilde\cI}^2,
\end{equation}
}
where $\hat g_f$ is the optima for $\ell_{n_2}(f)$ and equals the KRR solution \eqref{eq:gpr-mean-estimator}. 
We will use Corollary~\ref{corr:fixed-design-gpr-mean} to bound the $\|\hat g_f\|_{n_2}$, { 
and Lemma~\ref{lem:krr-norm-bound} to bound $\|\hat g_f\|_{\tilde\cI}$.}

First 
note the residual $\be^f := f(\bx)-\by-E(f-f_0)(\bz)$ is bounded by $2(B+C_2)$, and thus subgaussian: this is because 
$|y_i-f_0(x_i)|\le B$ by Assumption~\ref{ass:npiv}, and $|f(x_i)-f_0(x_i)|\le C_2$ by \eqref{eq:den-f-defn}. 
Moreover, the true regression function 
$g_f := E(f-f_0)$ can be approximated in the true RKHS $\cI$ using $\bar g_f := E(f_h)$, where $f_h$ is defined in Lemma~\ref{lem:replaced-den-fset}, so that by that Lemma and Lemma~\ref{lem:optimal-I}, we have 
$$
\|\bar g_f\|_\cI^2 = \|f_h\|_\cH^2 \le C_2 n_2\delta_{n_2}^2, ~
\|\bar g_f - g_f\|_2^2 \le C_2\delta_{n_2}^2, ~
\|\bar g_f-g_f\|_\infty^2 \le C_2.
$$
Combining the above with \eqref{eq:den-f-defn} and the inequality $\|a\|^2 \le 2\|a\|^2 + 2\|a-b\|^2$ 
yields 
$$
\|\bar g_f\|_2^2 \le 4C_2\delta_{n_2}^2, ~ \|\bar g_f\|_\infty^2 \le 4C_2.
$$
Applying Theorem~\ref{prop:approx} and Proposition~\ref{prop:sup-norm-approx} to $\bar g_f\in\cI$, and recalling $\xi_{n_1}\lesssim n_2^{-1}$, we can see that 
there exists $\tilde g_f \in\tilde\cI$ s.t.~
\begin{equation}\label{eq:den-gapprox-cond}
\|\tilde g_f\|_{\tilde\cI}^2 \lesssim \|\bar g_f\|_\cI^2 \lesssim n_2\delta_{n_2}^2, ~~
\|\tilde g_f\|_2^2 \le 2\|\bar g_f\|_2^2 + 2\|\tilde g_f-\bar g_f\|_2^2 
\lesssim \delta_{n_2}^2 \log n_1, ~~
\|\tilde g_f\|_\infty^2 \lesssim 1.
\end{equation}
Now, a few applications of 
Lemma~\ref{lem:bernstein-emp-norm} shows that, for some constant $C_3>0$, we have, for any $f\in\Theta_{d0}$,  
\begin{equation}\label{eq:qb-den-e0}
\PP_{\Ztest}(\max\{\|g_f\|_{n_2}^2, \|\tilde g_f\|_{n_2}^2, \|\tilde g_f-g_f\|_{n_2}^2\} \le C \delta_{n_2}^2\log n_2
 ) \ge 1-3e^{-C_3 n_2\delta_{n_2}^2} \to 1.
\end{equation}
Therefore,  for some constant $C_3'>0$ and 
$\Pi_{g,n_1}$ the standard GP prior defined by $\tilde\cI$, we have %
\begin{align*}
\Pi_{g,n_1}(\{g: \|g-g_f\|_{n_2}\le C_3' \delta_{n_2}\log n_2) 
&\ge 
\Pi_{g,n_1}(\{g: \|g-\tilde g_f\|_{n_2}\le C_3 \delta_{n_2}\log n_2) 
\\&
\overset{(i)}{\ge} 
e^{-\|\tilde g_f\|_{\tilde\cI}^2} 
\Pi_{g,n_1}(\{g: \|g\|_{n_2}\le C_3 \delta_{n_2}\log n_2)  
\\&
\overset{(ii)}{\ge} 
\exp(-
(\|\tilde g_f\|_{\tilde\cI}^2 + n_2 C_3 \delta_{n_2}^2\log^2 n_2)) \numberthis\label{eq:qb-den-e1}
\\&
\overset{\eqref{eq:den-gapprox-cond}}{\ge} 
\exp(-2 n_2 C_3 \delta_{n_2}^2\log^2 n_2),
\end{align*}
where (i) can be found in \citet[Lemma I.28]{ghosal2017fundamentals}, and (ii) is by Lemma~\ref{lem:small-ball-prob} and holds on a high-probability event determined by $\Ztest$.

The above display fulfills the %
condition in 
Corollary~\ref{corr:fixed-design-gpr-mean}, which now shows that, when $\nu\asymp 1$ is sufficiently large (determined by $B$ and $C_2$), 
for some $C_4>0$ and $\eta_{n_2}^{(1)}\to 0$, both independent of $f\in\Theta_{d0}$, the maximizer $\hat g^f$ of $\ell_{n_2}(f)$ above satisfies 
\begin{equation}\label{eq:qb-den-e2}
\PP_{\dataSII}\left(\|\hat g_f - g_f\|_{n_2} \le C_4 \delta_{n_2}\log n_2 \mid \Ztest\right) \ge 1-\eta^{(1)}_{n_2} \to 1.
\end{equation}
{ \eqref{eq:den-gapprox-cond}, \eqref{eq:qb-den-e0} and \eqref{eq:qb-den-e2} fulfills the conditions in Lemma~\ref{lem:krr-norm-bound}, which now implies that 
$$
\PP_{\dataSII}\left(\|\hat g_f\|_{\tilde\cI}^2 \le C_5 n_2^{\frac{1}{b+2p+1}}\log n_2\right) \ge 1-\eta_{n_2}^{(2)} \to 1
$$
for some $C_5>0$. 
Plugging the two displays above into \eqref{eq:ell-algebra} %
yields \eqref{eq:qb-den-s2-goal}. %
}

\underline{(Step 2)} By \citet[proof for Proposition 20, step 1]{wang2021quasibayesian} applied to \eqref{eq:qb-den-s2-goal}, we find 
\begin{align*}
\PP_{\dataSII}(\Pi\{f\in\Theta_{d0}: \ell_{n_2}(f) \le C_5\|E(f-f_0)\|_2^2 + C_5\delta_{n_2}^2\log^2 n_2\}) \ge 1-4\eta_{n_2}.
\end{align*}
Combining the definition of $\Theta_{d0}$ and its prior mass bound in Lemma~\ref{lem:replaced-den-fset}, we have, on the above event, 
\begin{align*}
\int e^{-n_2\ell_{n_2}(f)}\Pi(df) \ge \exp\left(-3 C_5 n_2\delta_{n_2}^2\log^2 n_2\right).
\end{align*}
This completes the proof.
\end{proof}

\subsubsection{Replaced Numerator Bound}\label{app:qb-proof-main}

\newcommand{\err}[1][]{\mrm{err}_{n_2#1}}

Following \citet{wang2021quasibayesian}, the proof proceeds in two steps. Only the second step contains essential changes.

\paragraph{Step 1.}
Recall the notations in \citet[proof for Theorem 3]{wang2021quasibayesian}: in particular, 
$
\epsilon_n := n^{-\frac{b/2}{b+2p+1}}, ~~
\delta_n := n^{-\frac{(b+2p)/2}{b+2p+1}}, 
$
and $\{\Theta_n: n\in\mb{N}\}$ as defined by their Corollary 13. 
By that corollary, there exists some $C_{gp}>0$ so that 
\begin{equation}\label{eq:sieve-prior-mass}
\Pi(\Theta_r^c) \le \exp\Big(-2 C_{den} n_2^{\frac{1}{b+2p+1}} \log^2 n_2\Big), ~~\text{where}~~
r := \Big[\big(C_{gp}^{-1}C_{den}n_2^{\frac{1}{b+2p+1}}\log^2 n_2\big)^{b+1}
\Big].
\end{equation}
    
By their Lemma 21,\footnote{
We invoke the lemma with $n \gets [r^{\frac{b+2p+1}{b+1}}] \asymp n_2\log^{2(b+2p+1)} n_2$, so the lemma actually provides a stronger statement; for any $\alpha>0$ our claim immediately follows.
} when $M>0$ is sufficiently large, for $\alpha := \frac{2b+1}{b}$, the two sets of functions 
\begin{align*}
\err[,1] &:= \{f\in\Theta_r: \|f-f_0\|_2 \ge M\epsilon_{n_2}\log^\alpha n_2\}, \\
\err[,2] &:= \{f\in\Theta_r: \|E(f-f_0)\|_2 \ge C M\delta_{n_2}\log^\alpha n_2\}
\end{align*}
are ``equivalent up to constants''; to be precise,
we can have $\err[,1]\subset\err[,2]$ or $\err[,2]\subset\err[,1]$ by choosing $C>0$ appropriately, independent of $M$ or $n_2$. Therefore, we do not distinguish between them below, and use $\err$ to refer to both of them.

The analysis of (quasi)-posterior contraction rates is based on certain decompositions of the average posterior mass. We follow the decomposition in 
\citet{wang2021quasibayesian}: let $E_{den}$ denote the event defined in \eqref{eq:qb-den}, and $A(f,\dataSII)$ be an event to be defined below satisfying 
\begin{equation}\label{eq:num-test-prob-req}
\sup_{f\in\Theta_r\cap\err}\PP(A^c(f,\dataSII)) \le \exp\Big(-2 C_{den} n_2^{\frac{1}{b+2p+1}} \log^2 n_2\Big).
\end{equation}
Then
\begin{align*}
&\phantom{{}={}}\EE_{\dataSII}(
    \Pi(\err\mid\dataSII)
) \\&= 
\EE_{\dataSII} \frac{\int_{\err}\Pi(df)\exp(-n_2\ell_{n_2}(f))}{\int\Pi(df)\exp(-n_2\ell_{n_2}(f))}
\\ &\le 
\PP_{\dataSII} E_{den}^c + \EE_{\dataSII}\left(\mbf{1}_{E_{den}} 
\frac{\int_{\err} \Pi(df) \exp(-n_2\ell_{n_2}(f))}{\int \Pi(df)\exp(-n_2\ell_{n_2}(f))}
\right) \\ 
&\overset{\eqref{eq:qb-den}}{\le} 
o_{n_2}(1) + 
\exp\Big(C_{den} n_2^{\frac{1}{b+2p+1}}\log^2 n_2\Big) \EE_{\dataSII}
\int_{\err} \Pi(df) \exp(-n_2\ell_{n_2}(f)) 
\\ 
&\le 
o_{n_2}(1) + 
\exp\Big(C_{den} n_2^{\frac{1}{b+2p+1}}\log^2 n_2\Big) \EE_{\dataSII}
\Big[\Pi(\Theta_r^c)+\int_{\err\cap\Theta_r} \Pi(df) \exp(-n_2\ell_{n_2}(f))\Big]
\\
&\overset{\eqref{eq:sieve-prior-mass}}{\le} 
o_{n_2}(1) + 
\exp\Big(C_{den} n_2^{\frac{1}{b+2p+1}}\log^2 n_2\Big) \EE_{\dataSII}
\int_{\err\cap \Theta_r} \Pi(df) \exp(-n_2\ell_{n_2}(f))  \\ 
&\overset{\eqref{eq:num-test-prob-req}}{\le} 
o_{n_2}(1) + 
\exp\Big(C_{den} n_2^{\frac{1}{b+2p+1}}\log^2 n_2\Big) \EE_{\dataSII}\Big[\mbf{1}_{A(f,\dataSII)}
\int_{\err\cap \Theta_r} \Pi(df) \exp(-n_2\ell_{n_2}(f))\Big].
\end{align*}
Therefore, it suffices to show that, for all $f\in\err\cap\Theta_r$, on the event $A(f,\data)$, 
\begin{equation}\label{eq:num-goal}
\ell_{n_2}(f) \ge 2C_{den} \delta_{n_2}^2 \log^2 n_2.
\end{equation}
Following \citet{wang2021quasibayesian}, we define 
$$ \begin{aligned}
\Psi_{n_2}(f,g) &:= \frac{1}{n_2}\sum_{i=1}^{n_2} (f(x_i)-y_i)g(z_i), ~
\Psi(f,g) := \EE_{\dataSII}\Psi_{n_2}(f,g), ~ \\
A(f,\dataSII) &:= \left\{\Psi_{n_2}(f,g)-\|\tilde g\|_{n_2}^2 \ge 
\Psi(f,g)-\frac{3\|\tilde g\|_2^2}{2}\right\},
\end{aligned}
$$
where $\tilde g\in\tilde\cI$ is a (replaced) approximation to $E(f-f_0) =: g$ to be defined below; we can still invoke Bernstein's inequality as in \citet[end of p.~34]{wang2021quasibayesian}, finding 
\begin{equation}\label{eq:num-test-prob-defn}
\PP(A^c(f,\dataSII)) \le \exp\left(
    -\frac{n_2\|\tilde g\|_2^2}{16(\|f-f_0\|_\infty+B+\|\tilde g\|_\infty)^2}
\right).
\end{equation}
We will bound the RHS below.

\paragraph{Step 2.} We restrict to $f\in\Theta_r\cap\err$ and the event above.  

\underline{We first define our approximation $\tilde g$.} 
Let $\Delta f = f-f_0,
\widetilde{\Delta f}\in\cH,
g=E\Delta f,
g_j = E(\Proj{j}{\widetilde{\Delta f}})$ be defined as in the proof of~\citet[Lemma 21]{wang2021quasibayesian}.\footnote{
As in Step 1, we invoke the lemma with $n\gets [r^{\frac{b+2p+1}{b+1}}]=n_2 (\log^2 n_2)^{b+2p+1}$, so the various $L_2$ error terms there are smaller ($\epsilon_n\ll \epsilon_{n_2},\delta_n\ll \delta_{n_2}$), while the RKHS norm terms in the scale of $n^{\frac{1/2}{b+2p+1}}$ now becomes $n_2^{\frac{1/2}{b+2p+1}}\log n_2$. 
}
By their Eq.~40 and Eq.~42 in the proof of Lemma 21, we have 
\begin{equation}\label{eq:eq40-lemma21}
    \begin{aligned}
\|\widetilde{\Delta f} - \Delta f\|_\infty &\lesssim 1, ~~
\|\widetilde{\Delta f} - \Delta f\|_2 \lesssim \epsilon_{n_2}, ~~
\|\widetilde{\Delta f}\|_\cH \lesssim n_2^{\frac{1/2}{b+2p+1}}\log n_2, \\ 
\|E(\widetilde{\Delta f}-\Delta f)\|_2 &\lesssim \delta_{n_2}, ~~
\|g-g_j\|_2\lesssim \delta_{n_2}. 
    \end{aligned}
\end{equation}
Combining the last two inequalities, we have
$$
\|g_j - E(\widetilde{\Delta f})\|_2 \le \|g-g_j\|_2 + \|E(\widetilde{\Delta f}-\Delta f)\|_2 \lesssim \delta_{n_2}.
$$

By Lemma~\ref{lem:optimal-I} we have $g_j,E(\widetilde{\Delta f}) \in \cI$, and 
$
\max\{\|E(\widetilde{\Delta f})\|_\cI^2, \|g_j\|_\cI^2\} %
 \lesssim 
n_2\delta_{n_2}^2\log^2 n_2.
$ Thus their difference has the same $\cI$-norm bound, and 
by \eqref{eq:interpolation} we have 
\begin{align*}
\|g_j - E(\widetilde{\Delta f})\|_\infty &\lesssim 
\|g_j-E(\widetilde{\Delta f})\|_2^{\frac{1}{b+2p+1}} \|g_j-E(\widetilde{\Delta f})\|_\cI^{\frac{b+2p}{b+2p+1}} \le \log n_2. \\ 
\|g_j - g\|_\infty &\le \|g_j-E(\widetilde{\Delta f})\|_\infty 
+ \|\cancel{E(}\widetilde{\Delta f}-\Delta f)\|_\infty \lesssim \log n_2. \numberthis\label{eq:gj-sup-norm}
\end{align*}
(We used the inequality $\|E(\cdot)\|_\infty\le\|\cdot\|_\infty$ as we are working with a version of conditional expectation that is bounded under the sup norm.)

Now, by %
Theorem~\ref{prop:approx} and Proposition~\ref{prop:sup-norm-approx}, there exists $\tilde g\in\tilde\cI$ s.t.
\begin{equation*}%
\|\tilde g\|_{\tilde\cI}^2\lesssim %
n_2^{\frac{1}{b+2p+1}}\log^2 n_2, ~~
\|\tilde g - g_j\|_2^2 \lesssim \SkipNOTE{\|g_j\|_\cI^2 (m^{-(b+2p+1)} + xi_{n_1}^2) log n_2 \lesssim \|g_j\|_\cI^2 n_2^{-1}\le}
n_2^{-\frac{b+2p}{b+2p+1}} \log^2 n_2, ~~ 
\|\tilde g - g_j\|_\infty^2 \lesssim 
\SkipNOTE{\|\Proj{>m}(g_j)\|_\infty^2 + m^{-1}\|\Proj{m}(g_j)\|_\cI^2 }
\SkipNOTE{for the first term, note Proj{<m}(g_j) \lesssim 1 by interpolation inequality on I}
\log^2 n_2.
\end{equation*}
Combining with \eqref{eq:eq40-lemma21}, \eqref{eq:gj-sup-norm}:
\begin{equation}\label{eq:g-approx-properties}
\|\tilde g\|_{\tilde\cI}\lesssim %
n_2^{\frac{1/2}{b+2p+1}}\log n_2, ~~
\|\tilde g-g\|_2\lesssim \delta_n\log n_2, ~~
\|\tilde g-g\|_\infty\lesssim \log n_2.
\end{equation}

\underline{Now we check the probability bound \eqref{eq:num-test-prob-defn} satisfies \eqref{eq:num-test-prob-req}.}
Define $
U := \frac{\|f-f_0\|_2}{\epsilon_{n_2}\log^\alpha n_2}, 
$ so that $U\ge \sqrt{M}$, and 
by \citet[Lemma 21]{wang2021quasibayesian},
\begin{align*}
U \delta_{n_2}\log^\alpha n_2 &\lesssim 
\|g\|_2\le \|\Delta f\|_2 \le U\epsilon_{n_2}\log^\alpha n_2. \numberthis\label{eq:num-g-norm-bound}\\ 
\|\widetilde{\Delta f}\|_2 &\le \|\Delta f-\widetilde{\Delta f}\|_2+\|\Delta f\|_2 
\overset{\eqref{eq:eq40-lemma21}}{\lesssim} U\epsilon_{n_2}\log^\alpha n_2.
\end{align*}
By the embedding property assumption: %
\begin{align*}
\|f-f_0\|_\infty &\le \|\widetilde{\Delta f}\|_\infty + \|\Delta f-\widetilde{\Delta f}\|_\infty \overset{\eqref{eq:eq40-lemma21}}{\lesssim} 
\|\widetilde{\Delta f}\|_\infty + 1 
\overset{\eqref{eq:interpolation-general}}{\lesssim} \|\widetilde{\Delta f}\|_\cH^{\frac{b}{b+1}} \|\widetilde{\Delta f}\|_2^{\frac{1}{b+1}} 
\lesssim  U^{\frac{1}{b+1}}(\log n_2)^{\frac{b+\alpha}{b+1}}.
\end{align*}
Using the triangle inequality and the inequality $\|g\|_\infty=\|E(f-f_0)\|_\infty\le\|f-f_0\|_\infty$, 
we bound \eqref{eq:num-test-prob-defn} as 
\begin{align*}
\PP(A^c(f,\data)) &\le \exp\left(
    -\frac{n_2(\|g\|_2-\|\tilde g-g\|_2)^2}{16(\|f-f_0\|_\infty+B+\|\tilde g-g\|_\infty+\|g\|_\infty)^2}
\right) 
\\ &
\le 
\exp\left(
    -\frac{n_2(\|g\|_2-\|\tilde g-g\|_2)^2}{16(2\|f-f_0\|_\infty+B+\|\tilde g-g\|_\infty)^2}
\right) 
\\ &
\le 
\exp\left(-\frac{
        n_2\delta_{n_2}^2(\log n_2)^{2\alpha} (U^2-1)
    }{
        C( U^{\frac{1}{b+1}}(\log n_2)^{\frac{b+\alpha}{b+1}} + B + \log n_2)^2
}\right).
\end{align*}
For $\alpha\ge \frac{2b+1}{b}$ and $U$ sufficiently large, the RHS is $\le \exp(-C U^{\frac{2b}{b+1}} n_2\delta_{n_2}^2\log^2 n_2)$ and thus satisfies \eqref{eq:num-test-prob-req}. 

\underline{It remains to check \eqref{eq:num-goal}, for $f\in\Theta_r\cap\err$ and on the event $A(f,\dataSII)$.} As $\tilde g\in\tilde\cI$, the manipulations in \citet[Eq.~50]{wang2021quasibayesian} continue to hold, with $g_j$ replaced by $\tilde g$; it leads to 
$$
\ell_{n_2}(f) \ge c_1\|g\|_2^2 - C_2\delta_n^2 -  n_2^{-1}\nu\|\tilde g\|_\cI^2,
$$
where $c_1,C_2>0$ are constants, and we recall $\nu\asymp 1$.  
Plugging in \eqref{eq:num-g-norm-bound} and \eqref{eq:g-approx-properties}, we have 
$$
\ell_{n_2}(f) \ge
c_1 U^2\delta_{n_2}^2\log^{2\alpha} n_2 - C_2\delta_n^2 - n_2^{-1}\nu\|\tilde g\|_\cI^2 
\ge 
c_1 U^2\delta_{n_2}^2\log^{2\alpha} n_2 - C_2\delta_n^2 - C_3\SkipNOTE{+\nu}\delta_n^2 \log^2 n_2.
$$
Since $\alpha>1$, \eqref{eq:num-goal} holds for sufficiently large $U$. 
This completes the proof for Theorem~\ref{prop:qb-regime}.\qed

\subsection{Proof for Corollary~\ref{lem:qb-mlh}}\label{app:proof-qb-mlh}

The idea here is not new (see e.g., \cite{rousseau_asymptotic_2017}). By Proposition~\ref{prop:qb-den}, the upper bound holds on an event $E_{den}$ s.t.~$\PP_{\dataSII} E_{den}\to 1$. For the lower bound of $-\Pi(\dataSII)$, observe 
\begin{align*}
\Pi(\dataSII) &= \int_{\err} \Pi(df) e^{-n_2 \ell_{n_2}(f)} + \int_{\err^c}\Pi(df)e^{-n_2\ell_{n_2}(f)} \\ & \le  \int_{\err} \Pi(df) e^{-n_2 \ell_{n_2}(f)} + \Pi(\err^c).
\end{align*}
where $\err$ is defined in Appendix~\ref{app:qb-proof-main}. 
For the first part, we have (see Step 1 in Appendix~\ref{app:qb-proof-main})
\begin{align*}
\EE_{\dataSII} \mbf{1}_{E_{den}} \int_{\err} \Pi(df) e^{-n_2 \ell_{n_2}(f)}  \le e^{-2 C_{den} n_2^{\frac{1}{b+2p+1}}\log^2 n_2}. 
\end{align*}
As $\PP(E_{den})\to 1$, it must hold that 
$$
\PP_{\dataSII}\Big(\int_{\err} \Pi(df) e^{-n_2 \ell_{n_2}(f)} \le e^{-C_{den} n_2^{\frac{1}{b+2p+1}}\log^2 n_2}\Big) \to 1.
$$
For the second part, recall that by definition we have $$
\err^c\subset \{f: \|f-f_0\|_2 \le M\epsilon_{n_2}\log^{\frac{2b+1}{b}} n_2\}
$$ 
for some constant $M>0$, and $\epsilon_{n_2}^2\asymp n_2^{-\frac{b}{b+2p+1}}$; for $j=[n_2^{\frac{b+1}{b+2p+1}}]$ and $f^\dagger_j$ defined in Assumption~\ref{ass:s2}~\ref{it:gp-scheme}, we have 
\begin{align*}
\Pi(\err^c) &\le \Pi(\{f: \|f-f_0\|_2 \le M\epsilon_{n_2}\log^{\frac{2b+1}{b}} n_2\}\}\\ &\overset{(i)}{\le} \Pi\left(\left\{f: \|f-f^\dagger_{j}\|_2 \le \frac{M}{2}\epsilon_{n_2}\log^2{\frac{2b+1}{b}}n_2\right\}\right) \\ 
&\overset{(ii)}{\le} e^{\|f^\dagger_{j}\|_\cH^2}  \Pi\left(\left\{f: \|f\|_2 \le \frac{M}{2}\epsilon_{n_2}\log^2{\frac{2b+1}{b}}n_2\right\}\right)  \\ 
&\overset{(i)}{\le} e^{n_2^{\frac{1}{b+2p+1}}} 
\Pi\left(\left\{f: \|f\|_2 \le \frac{M}{2}\epsilon_{n_2}\log^2{\frac{2b+1}{b}}n_2\right\}\right).
\end{align*}
In the above, (i) follows by the assumption on $f^\dagger_j$, and (ii) can be found in \citet[Lemma I.28]{ghosal2017fundamentals}. 
Recall the $L_2$ small-ball probability bound: for $\cH$ satisfying Assumption~\ref{ass:s2} and sufficiently small $\epsilon$ it holds that \citep[see e.g.,][Corollary 4.9]{steinwart_convergence_2019}
$$
-\log\Pi(\{\|f\|_2\le \epsilon\}) \asymp \epsilon^{-\frac{2}{b}}.
$$
Combining these results complete the proof.\qed
\section{Implementation Details}\label{app:algo}

This section describes the implementation of the proposed method. Further experiment details, such as network architectures and the range of hyperparameters, are in Appendix~\ref{app:simulations}. 

\paragraph{The Algorithm}
Our algorithm is described in Algorithm~\ref{alg:main}. In our implementation, we combine the $m$ scalar regression tasks to a single vector-valued regression task, for which we train a neural network model with square loss. We draw approximate GP prior samples using random Fourier features.\footnote{More sophisticated sampling schemes exist, e.g., exact Mat\'ern GP samples can be obtained using its state-space represetation \cite{sarkka2013state}. But this is unnecessary for an one-off operation, where we can simply set the number of random features to be sufficiently large.}

Our implementation makes a small modification to $\tilde k_z$: we extend it with a single linear feature $\hat h(\cdot)$, defined as the regression estimate of $\EE(\by\mid\bz)$, with output truncated by $C\log n_1$. Thus, the kernel becomes 
$
\tilde k_z(z,z') = \frac{1}{m}\sum_{i=1}^m \esti{j}(z)\esti{j}(z') + \hat h(z)\hat h(z'). 
$
Clearly, the extension does not affect any approximation or estimation guarantee,\footnote{for the latter, note the truncation of $\hat h$, which ensures that $\|g\|_\infty\lesssim \|g\|_{\tilde\cI}\log n_1$ continue to hold.} but it guard against potential misspecification of $\cH$: when Assumption~\ref{ass:s2} (iii) fails to hold, the unmodified $\tilde\cI$ can only estimate $E f$ for $f\in\cH$ (or $f\sim\mc{GP}(0,k_x)$), but not $\EE(f(\bx)-\by\mid\bz=\cdot)$; the modified $\tilde k_z$ fixes this. This issue does not arise when the assumption holds.

The kernelized IV quasi-posterior admits the following closed-form expression \cite{dikkala_minimax_2020,wang2021quasibayesian}, which we restate for the reader's convenience. The point estimator \eqref{eq:minimax-estimator} equals the posterior mean below, with $\lambda,\nu$ adjusted appropriately (see Appendix~\ref{app:dikkala-proof}, or \cite[Appendix~E]{dikkala_minimax_2020}). 
\begin{align*}\label{eq:kiv}
\Pi(f(x_*)\mid\dataSII) &= \cN(K_{*x} \Lambda Y, K_{**} - K_{*x}\Lambda K_{x*}) \\ 
\text{where}\quad L &= (K_{zz}+\nu I)^{-1} K_{zz}, ~ \Lambda = (\lambda I + L K_{xx})^{-1} L.
\end{align*}
In the above, $K_{(\cdot)}$ denotes the appropriate Gram matrix. As our $K_{zz}$ admits a known low-rank structure: $K_{zz} = \Phi\Phi^\top$, where $\Phi=(\esti{1}(Z),\ldots,\esti{m}(Z),\hat h(Z))\in\RR^{n_2\times (m+1)}$, we use Woodbury identity to obtain $L=\Phi (\Phi^\top\Phi+\nu I)^{-1}\Phi^\top$ which can be computed in $\cO(m n^2)$. Another application of Woodbury identity allows the computation of $\Lambda$ in $\cO(mn^2)$ time \citep[Appendix D.3]{wang2021quasibayesian}. 
Applying Nystr\"om approximation to $k_x$ would improve the complexity to $\cO(m^2 n)$ \citep{dikkala_minimax_2020}, but we find the above procedure sufficient for our purposes. 

\paragraph{Hyperparameter Selection for Instrument Learning} In principle, the regression oracle may conduct hyperparameter selection by further splitting its observed dataset, and perform cross validation. We can also compare different oracles on the said validation set. 
In practice we construct a heldout set using $\dataSII$ and $\gpi{1},\ldots,\gpi{m}$ for the NN-based oracle, and use it for early stopping. For selecting the best oracle (which includes the selection of NN activation functions, optimizers, etc., in our setting), however, 
we find it slightly better to 
draw additional GP samples $\gpi{1}_v,\ldots,\gpi{m_v}_v$ and evaluate the different oracles by using the resulted kernel $\tilde k_z$ on the dataset 
$\{(z_i, \gpi{j}_v): i\in [n_2], j\in [m_v]\}$. This can be viewed as a task generalization loss in the terminologies of multi-task learning (Sec.~\ref{sec:related-work}), and is more directly connected to the role of $\tilde\cI$ in IV regression. It also allows data-dependent selection for $m$, although we use fixed choices of $m$ for simplicity. 

Following the discussion in the last paragraph, 
we further guard against potential misspecification of $\cH$, by adding to the validation statistics the regression error on $\{(z_i, y_i): i\in [n_2]\}$. In summary, our first stage validation statistics is 
\begin{equation}\label{eq:s1-val-stats} \begin{aligned}
    &m_v^{-1}\sum_{j=1}^{m_v} \msf{KRRTestMSE}(\{(z_i, \gpi{j}_v): i\in [n_2]\}; \tilde k_z) + %
    \msf{KRRTestMSE}(\{(z_i, y_i): i\in [n_2]\}; \tilde k_z). \end{aligned}
\end{equation}

\paragraph{Second-Stage Model Selection} %
We provide the following expression for the log marginal quasi-likelihood:
\begin{align}\label{eq:s2-log-qlh}
\log \int \Pi(df) \exp(-\lambda^{-1} n_2\ell_{n_2}(f)) &=
\log \int \Pi(df) \exp\left(-\frac{1}{2\lambda} (Y-f(X))^\top L (Y-f(X))\right) \nonumber \\ &=
-\frac{1}{2}\left(
Y^\top \Lambda Y + \log | \lambda^{-1} (L^{1/2} K_{xx} L^{1/2} + \lambda I) |
\right) + \mrm{const} \\ 
\text{where}\quad
L &= (K_{zz}+\nu I)^{-1} K_{zz} \nonumber \\ 
\text{and}\quad
\Lambda &= (\lambda I + L K_{xx})^{-1} L. \nonumber
\end{align}
In the above, $K_{xx}, K_{zz}$ denote the Gram matrices using $\cH$, $\tilde\cI$, respectively, and 
the first equality above can be found in \cite{wang2021quasibayesian}. Similar to \cite{wang2021quasibayesian} we exploit low-rank structures in $\Lambda$ and $L$ to accelerate computation, but with Nystr\"om approximation replaced by the feature representation of $\tilde\cI$. 

Note that we cannot use the above marginal quasi-likelihood cannot to select $\lambda$; this is evident from the first expression $\int \Pi(df) e^{-\lambda^{-1}n_2 \ell_{n_2}(f)}$, as the log quasi-likelihood $\ell_n$ does not contain $\lambda$. This is a small price we pay for the simplified analysis: observe 
$
\ell_n(f) = \sup_{g\in\tilde\cI}\sum_{i=1}^{n_2} 2(f(x_i)-y_i)g(z_i)-g(z_i)^2 - \nu \|g\|_{\tilde\cI}^2
$
is equivalent to $
\inf_{g\in\tilde\cI}\sum_{i=1}^{n_2} (f(x_i)-y_i-g(z_i))^2 + \nu \|g\|_{\tilde\cI}^2
$
up to a Gaussian complexity term. The latter allows for a more complete quasi-Bayesian data generating process as in \citep{florens_nonparametric_2012}, to which we can add a hyperprior for $\lambda$ as in \cite{florens_nonparametric_2012}. 
However, bounding the extra term would involve additional technicalities. We consider it beyond the scope of this work. 

Once $\lambda\asymp 1$ is fixed, however, both versions of marginal likelihood should allow for principled selection of other hyperparameters, as both choices for $\ell_{n_2}$ can be viewed as approximating $\lambda^{-1} n_2 \|E(f-f_0)\|_2^2$; although in our setting, theoretical guarantees have only been established for the former choice. As for $\lambda$, we fix $\lambda = \frac{1}{n_2}\sum_{i=1}^{n_2}(y_i - \hat h(z_i))^2$, where $\hat h$ is an estimate for $\EE(\by\mid\bz=\cdot)$; this can be viewed as an {\em empirical Bayes} counterpart for the choice in \cite{florens_nonparametric_2012}. Across all settings we find the selected value to be fairly stable. This is different from other hyperparameters such as kernel bandwidth or variance, for which careful likelihood-based selection is more needed.

\section{Extension for High-Dimensional Exogenous Covariates}\label{app:exo-algo}

In many applications we have access to additional exogenous covariates $\bw$, which are independent with the unobservable confounder $\bu$ \cite{horowitz_applied_2011,hartford2017deep}. Denote the original instrument and treatment as $\bz_o,\bx_o$, the true outcome function then satisfies $\EE(f_0(\bx_o,\bw)-\by\mid \bz_o,\bw)=0$, meaning that such $\bw$ can then be appended to both $\bx$ and $\bz$ in the formulation \eqref{eq:npiv}. 

When both $\bx_o$ and $\bw$ have moderate dimensions, the prior knowledge about $f_0$ can often be characterized by a product kernel $k_x(x,x') := k_{x_o}(x_o,x_o')k_w(w,w')$, where $k_w$ can be predetermined. In this case, Algorithm~\ref{alg:main} will continue to apply. This connects to the classical tensor product basis model \cite[e.g.,][]{chen_optimal_2015}, as the Mercer basis of $k_x$ equals $\{\psi^x_{(i,j)} = \psi^{x_o}_i\otimes \psi^w_j\}$. 
The inclusion of $\bw$ breaks our assumption by making $E$ non-compact. However, 
it is common in theoretical works to ignore it for brevity \cite{chen_estimation_2012,horowitz_applied_2011,horowitz_specification_2012,kato_quasi-bayesian_2013}, also because 
$\bw$ does not suffer from the ill-posedness issue, and thus is intuitively easier to handle.

When $\bw$ is high-dimensional, it can be difficult to prescribe a correctly specified $k_w$ with low complexity. However, it is often realistic that there exist some low-dimensional informative features $\bar w=\Phi_w(w)$,\footnote{Otherwise it would be unclear if efficient estimation is still possible at all.} in which case it is natural to consider kernels of the form $k_{x_o}(x_o,x_o')k_w(\Phi_w(w),\Phi_w(w'))$, where $k_{x_o}$ is still available {\em a priori}, but $k_w$ and $\Phi_w$ needs to be learned. In this section we describe a simple algorithm, inspired by \cite{xu_learning_2020}, which implicitly models such a $k_w$ with flexible NN models. For brevity, we will focus on intuition, and will not provide any formal error analysis. The algorithm will be evaluated in Appendix~\ref{app:exp-exo}.

To motivate the algorithm, suppose {\em for the moment} that we have access to $k_w$ and its Mercer basis. Then any $f\in\cH$ can be expressed as 
$
f = \sum_{i=1}^\infty\sum_{j=1}^\infty a_{ij} \psi^{x_o}_i\otimes \psi^w_j,
$ and the coefficients $a_{ij}$ should satisfy 
a fast decay, as determined by the assumed eigendecay of $k_{x_o}$ and the unknown, true $k_w$. It is thus possible to truncate the outer sum at $i=m$, and the truncation error will be vanishing fast for some slowly increasing $m$. 

The proof of Theorem~\ref{prop:approx} relied on the fact that $2m$ random GP prior draws approximate the top $m$ Mercer basis well. Thus, we can draw $2m$ samples $f^{x_o}_i\sim\mc{GP}(0,k_{x_o})$, and perform an ``approximate change of basis'':
$$
f =  \sum_{i=1}^m \psi^{x_o}_i\otimes \Big(\sum_{j=1}^\infty a_{ij} \psi^w_j\Big) + \Delta_m(f) 
= \sum_{i=1}^m f^{x_o}_i\otimes f^w_i + \Delta'_m(f),
$$
In the above, $f^w_i\in L_2(P(dw))$ can be obtained by rotating the functions $\{\sum_{j=1}^\infty a_{ij} \psi^w_j\}$,\footnote{
A properly scaled version of $\{\psi_i^{x_o}\}$ is well approximated by a rotated version of $\{f^{x_o}_i\}$ (App.~\ref{app:proof-prop-approx}). Thus, $f^w_i$ can be obtained by inverting the rotation on  $\{\sum_{j=1}^\infty a_{ij} \psi^w_j\}$.
} and should have good regularity determined by $k_{x_o}$ and $k_w$. Now, observe that 
$$
E f = \EE\Big(\sum_{i=1}^m f^{x_o}_i\otimes f^w_i\mid\bz_o,\bw\Big) + E\Delta'_m(f) = \sum_{i=1}^m (Ef^{x_o}_i)\otimes f^w_i + E\Delta'_m(f).
$$
The approximation error $\|E\Delta'_m(f)\|_2\le \|\Delta'_m(f)\|_2$ should continue to be small. We can efficiently approximate $E f_i^{x_o}$ using the regression oracle, and {\em parameterize} $E f$ as 
$$
(E f)(z_o,w) \approx \sum_{i=1}^m \esti{i}(z_o) f^w_i(w;\theta),
$$
where $f^w_i(w;\theta)$ denotes the $i$-th output of a DNN with parameter $\theta$. 
Since the true $f_0$ should satisfy $(E f_0)(\bz_o,\bw)=\EE(\by\mid\bz_o,\bw)$, we can use the RHS of the above to regress $\by$, after which the estimate for $f_0$ can be read out by replacing $\esti{i}(z_o)$ with $f^{x_o}_i(x_o)$. 

In summary, we constructed an algorithm which {\em implicitly} models a tensor product kernel. The full algorithm is summarized below:

\begin{algorithm}[h]
\caption{IV regression with learned instruments and exogenous covariates.}\label{alg:exo}
\begin{algorithmic}[1]
\REQUIRE $\dataSI,\dataSII$; regression algorithm $\msf{Regress}$; treatment kernel $k_{x_o}$; $m\in\mb{N}$
\FOR{$j \gets 1$ to $m$}
\STATE Draw $f^{x_o}_j\sim \mc{GP}(0, k_{x_o})$
\STATE 
$\estiRaw{j}\gets \msf{Regress}(\{((\tilde z_{o,i},\tilde w_i), f^{x_o}_j(\tilde x_{o,i})): i\in [n_1]\})$
\STATE Define
 $\esti{j} := \min\{\estiRaw{j}(\cdot), C\log m\}$
\ENDFOR
\STATE Let $g(z_o,w;\theta) := \sum_{i=1}^m \esti{i}(z_o,w) f^w_i(w;\theta)$, where $(f^w_i)$ denote a multi-output DNN with parameter $\theta$. Optimize $\theta$ using (a possibly regularized variant of) the objective
$$
\ell'_{n_2}(\theta;\dataSII) := \sum_{i=1}^{n_2} (g(z_{o,i},w_i;\theta)-y_i)^2.
$$
\STATE {\bf return} $\hat f_{n_2}(x_o,w) := \sum_{i=1}^m f^{x_o}_j(x_o) f^w_i(w;\theta)$.
\end{algorithmic}
\end{algorithm}

While we will not present a full analysis for brevity, it should be easy to provide faster-rate guarantees on $\|E(\hat f_{n_2} -f_0)\|_2$, for the NN model in \cite{schmidt-hieber_nonparametric_2020}: it suffices to control the M-estimation error about $\ell'_{n_2}$ with local Rademacher analysis, and combine it with the regression error $\|\esti{j} - Ef^{x_o}_j\|_2$. 
In principle, uncertainty quantification can be conducted by viewing $\ell'_{n_2}$ as a log quasi-likelihood, and modeling $f$ with a Bayesian neural network (BNN); investigation of posterior consistency may then follow \cite{liang2018bayesian}. 

Our algorithm is inspired by \cite{xu_learning_2020} which also employs a tensor product model, but additionally learns the treatment representation. From the theoretical perspective, their added flexibility comes with a hefty price: as noted in \cite{xu_learning_2020}, it is unclear if their alternating optimization procedure minimizes the empirical risk, and only slow rate convergence has been established for the hypothetical empirical risk minimizer. Intuitively, our method avoids these issues by disentangling (the important parts of) the two stages, and carefully reducing them to regression-like problems. This is also the reason we are able to reduce -- in principle -- uncetainty quantification to a standard BNN inference problem, which is not possible with the formulations of \cite{xu_learning_2020,dikkala_minimax_2020}.
\section{Simulations}\label{app:simulations}

Code for all experiments can be found at \url{https://github.com/meta-inf/fil}.

\subsection{Evaluation of Hyperparameter Selection}\label{sec:exp-s1-randgp}

We first evaluate the instrument learning procedure under a correctly specified second stage. 

\paragraph{Setup}
We set $f_0\sim\mc{GP}(0,k_x)$ and generate $\bz$ using NNs, with 
$N\in\{500,2500,5000\},D\in\{40,100\}$. $k_x$ is set to a RBF kernel with bandwidth determined set to the median distance between inputs (the ``median trick''). 

For our method, we train DNNs using the square loss and the AdamW optimizer, and perform early stopping based on validation loss. We apply dropout with rate varying in $\{0, 0.05, 0.1, 0.2, 0.4\}$. We vary activation functions in relu, swish and tanh; hidden layers in [100], [100, 100], [100, 100, 100] and [100, 100, 100, 100]; and learning rate in $\{5\times 10^{-3},10^{-3},5\times 10^{-4},10^{-4}\}$. 

We also include two baselines to verify the need for flexibily learned first stage:
\begin{enumerate}[leftmargin=*]
    \item A kernelized first stage using the RBF kernel, where the bandwidth is set to $\{1,2,4,8\}$ times the median distance.\footnote{We find that on this dataset, a bandwidth smaller than the median distance is never optimal.}
    \item A linear first stage, with basis set to the output of randomly initialized NNs with the same architecture. %
\end{enumerate} 
Both baselines use the same correctly specified second stage. 

We evaluate each setup on 5 independently generated datasets. For our method, training takes a total of 3.6 hours on 8 RTX 3090 GPUs. On average, each single experiment takes 11 seconds, and (at least) 6 experiments can be run in parallel on a single GPU.

\paragraph{Results} We plot the validation statistics against test error $\|\hat f_n-f_0\|_2^2$ 
in Figure~\ref{fig:s1-scatter}. We can see that 
learnable NNs lead to the best performance across all sample size, and the first-stage validation statistics correlates well with test performance. 
\begin{figure}[h]
    \centering
    \includegraphics[width=0.49\linewidth]{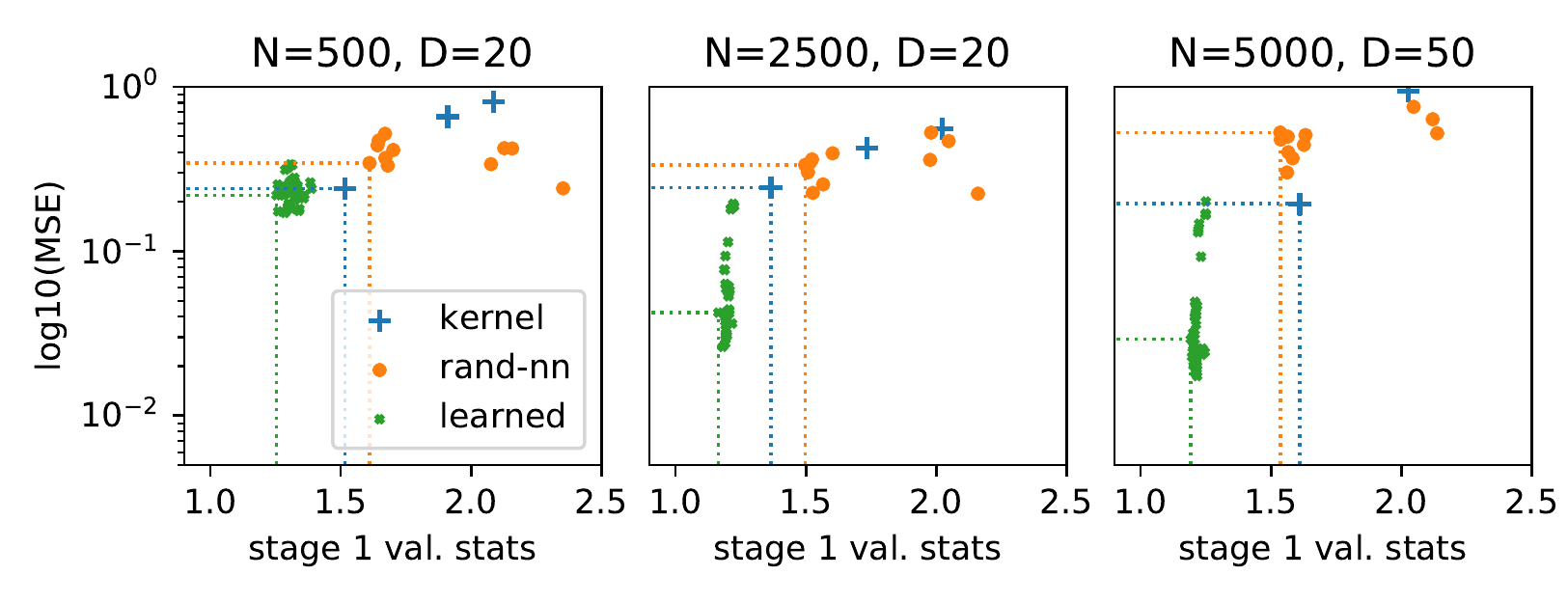}
    \includegraphics[width=0.49\linewidth]{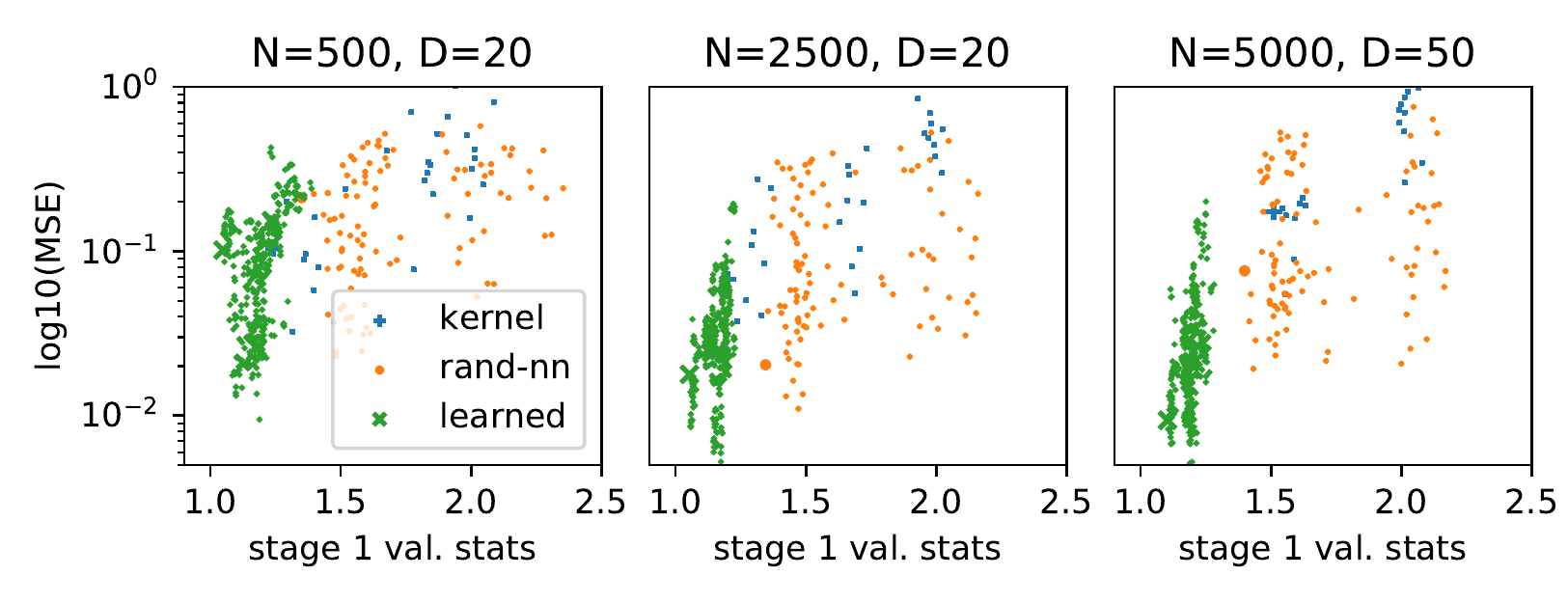}
    \caption{First-stage validation statistics vs.~counterfactual MSE for all methods and choices of hyperparameters. Left: visualization of a single run; within each method the model with the best validation statistics is highlighted. Right: aggregated plots for 5 independent runs.}\label{fig:s1-scatter}
\end{figure}
Figure~\ref{fig:s1-by-hps} provides further information on the influence of various NN hyperparameters, 
by varying one hyperparameters (network activation, architecture, learning rate or dropout rate), and plotting the test and validation statistics with the others chosen to optimize the validation statistics. We can see that network depth and dropout rate have a larger impact. 

\begin{figure}[ht]
\centering 
\subfigure[activation, test MSE]{\includegraphics[width=0.24\linewidth,clip,trim={0.4cm 0 0.25cm 0}]{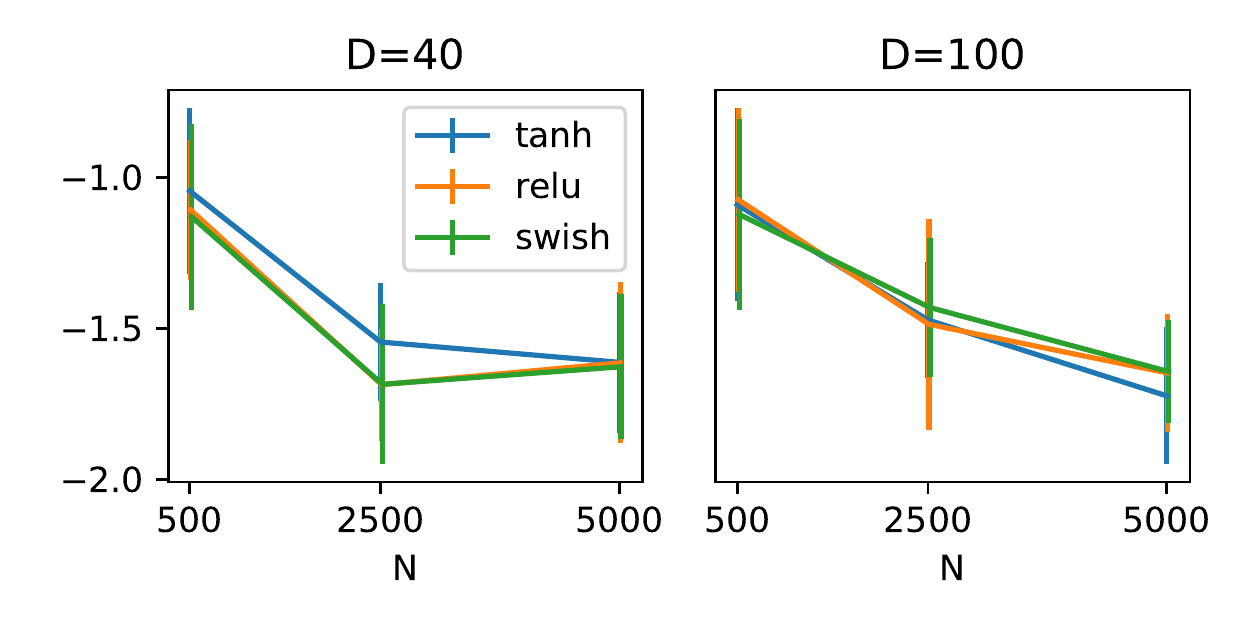}}
\subfigure[activation, val.~stats.]{\includegraphics[width=0.24\linewidth,clip,trim={0.4cm 0 0.25cm 0}]{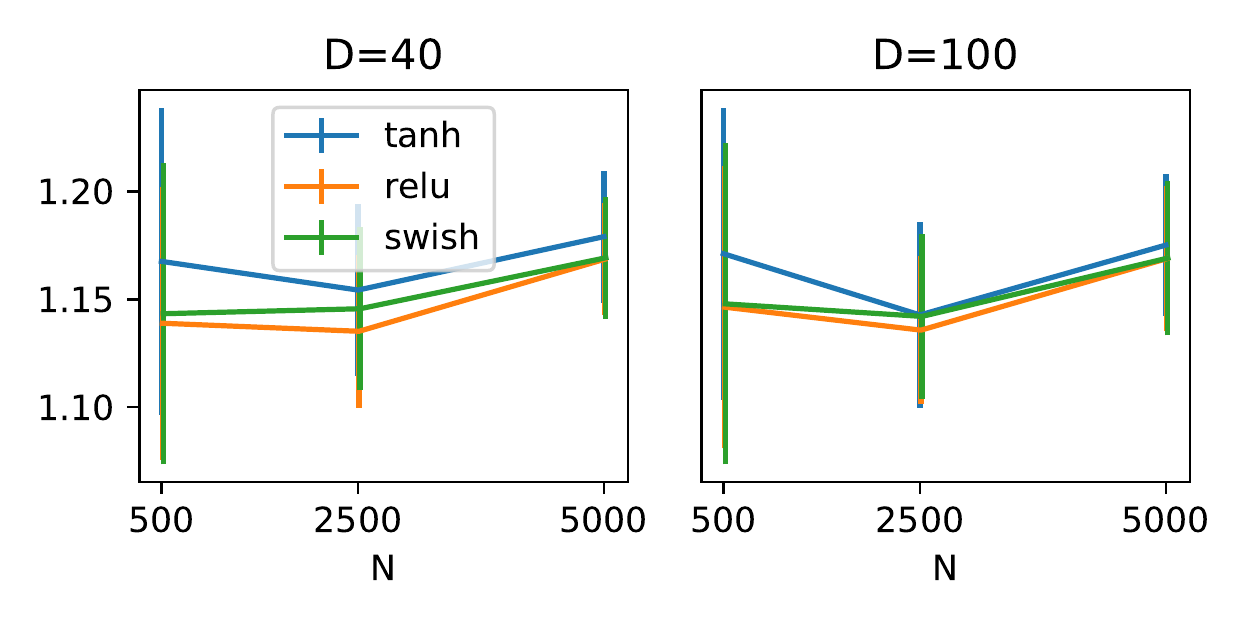}}
\subfigure[learning rate, test MSE]{\includegraphics[width=0.24\linewidth,clip,trim={0.4cm 0 0.25cm 0}]{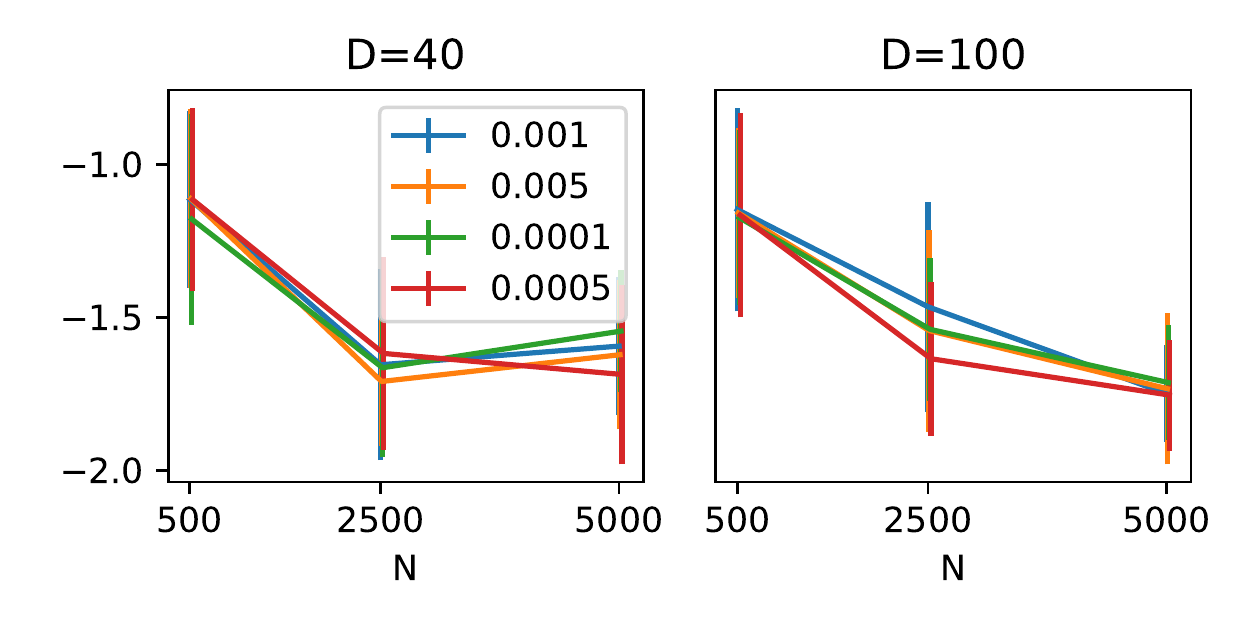}}
\subfigure[learning rate, val.~stats.]{\includegraphics[width=0.24\linewidth,clip,trim={0.4cm 0 0.25cm 0}]{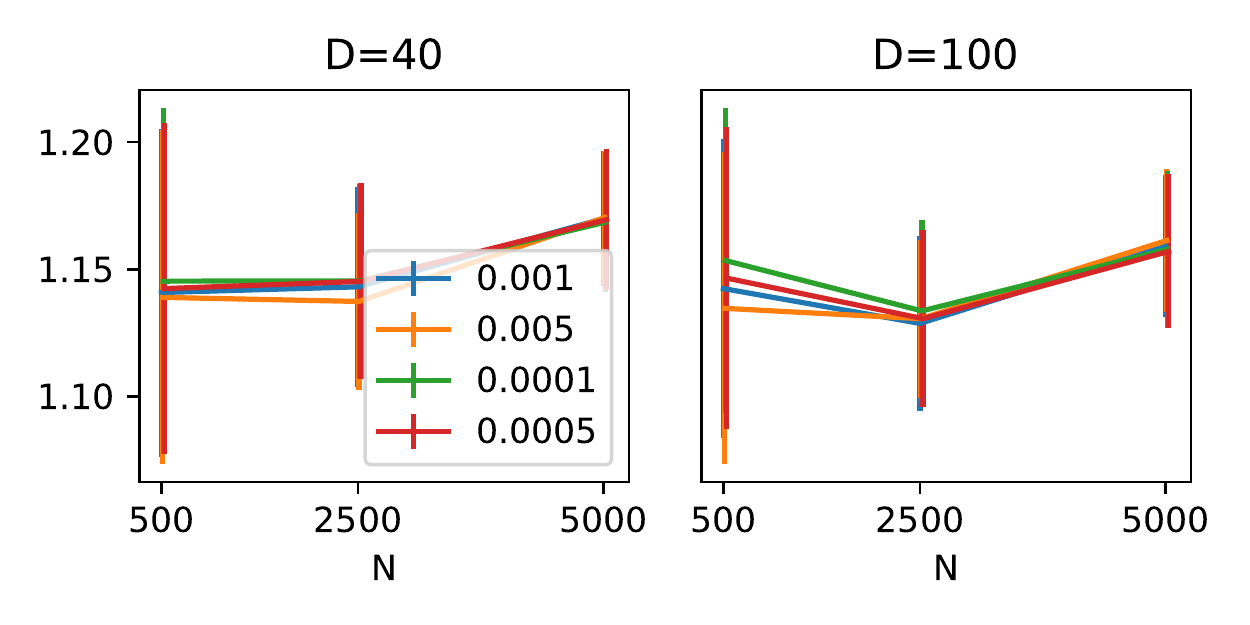}}
\subfigure[NN arch., test MSE]{\includegraphics[width=0.24\linewidth,clip,trim={0.4cm 0 0.25cm 0}]{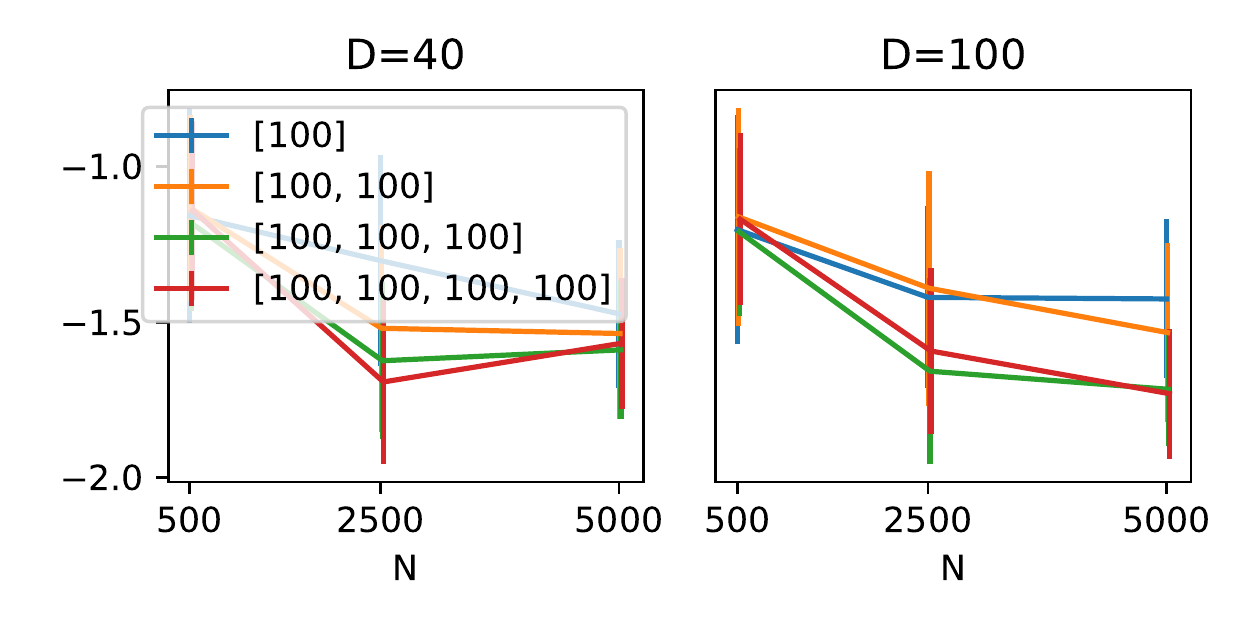}}
\subfigure[NN arch., val.~stats.]{\includegraphics[width=0.24\linewidth,clip,trim={0.4cm 0 0.25cm 0}]{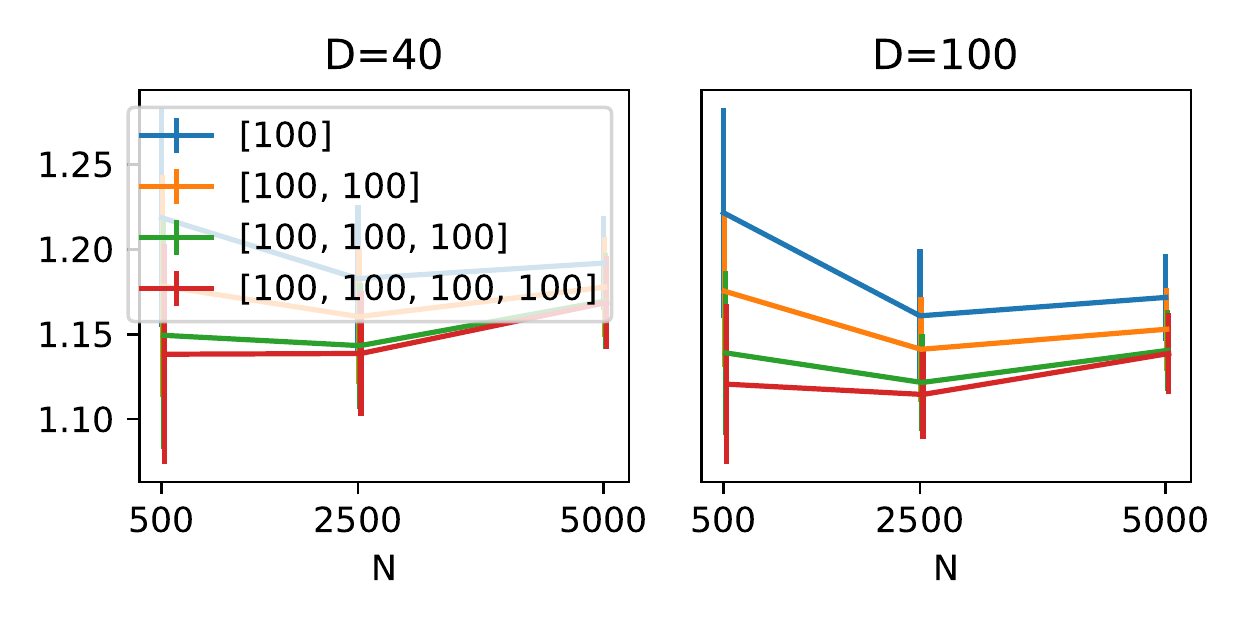}}
\subfigure[dropout rate, test MSE]{\includegraphics[width=0.24\linewidth,clip,trim={0.4cm 0 0.25cm 0}]{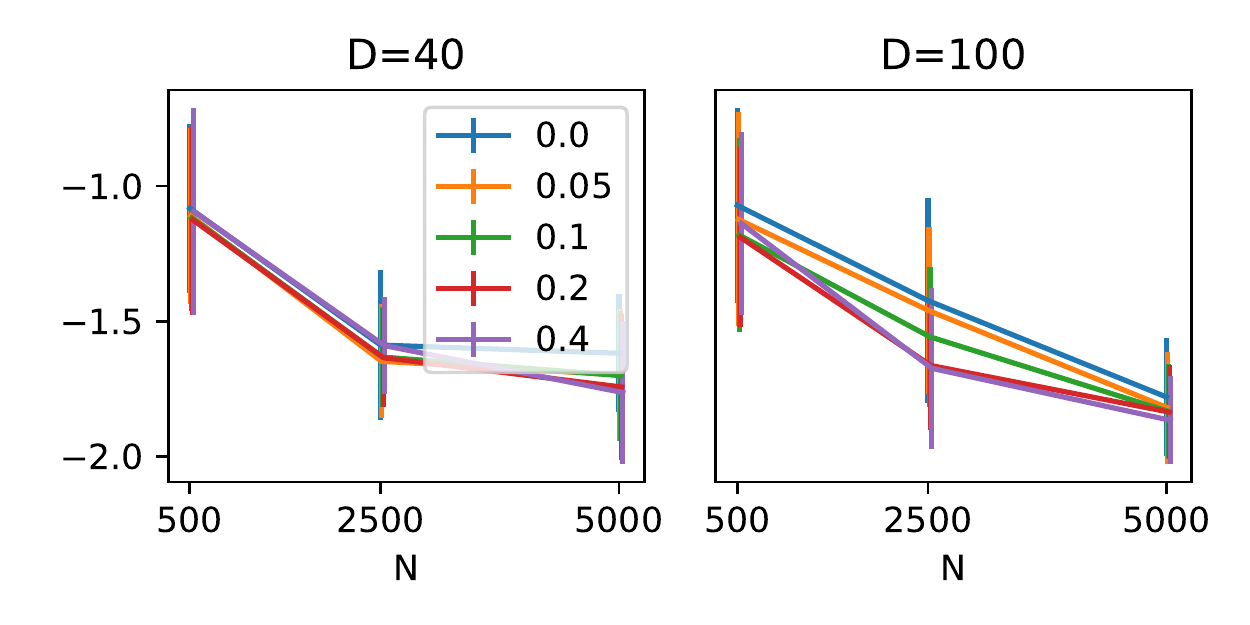}}
\subfigure[dropout rate, val.~stats.]{\includegraphics[width=0.24\linewidth,clip,trim={0.4cm 0 0.25cm 0}]{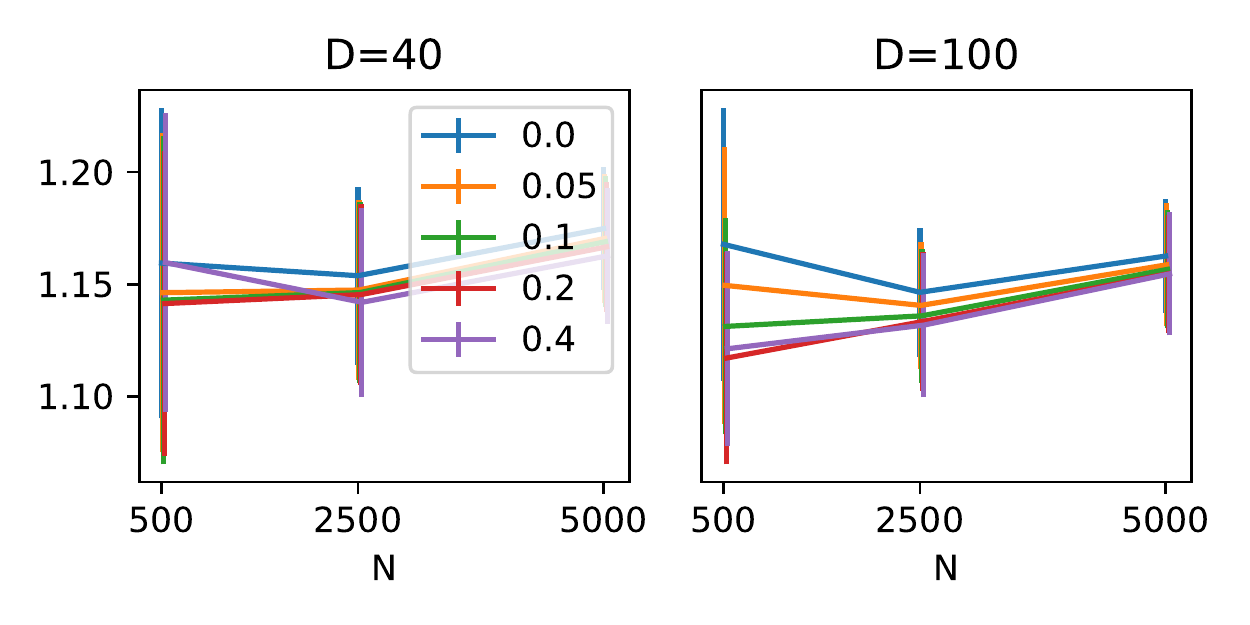}}
\caption{Test MSE and validation statistics grouped by different hyperparameters, for the experiment in Section~\ref{sec:exp-s1-randgp}. The test MSEs are plotted in logarithm scale.}\label{fig:s1-by-hps}
\end{figure}

\subsection{Predictive Performance}\label{sec:exp-main}

In this section we evaluate the predictive performance of the point estimator. 

\paragraph{Low-dimensional and NN-generated instruments} We first generate the observed $\bz$ from the latent feature $\bar\bz_1$ based on settings (i-ii) in the main text, 
and vary $f_0$ in the collection of fixed-form functions in \cite{bauer_deep_2019}. For baselines, we use adversarial GMM \citep[AGMM,][]{dikkala_minimax_2020} implemented with tree, DNN and RBF kernels. Note that AGMM-RBF is computationally similar to KernelIV \cite{singh_kernel_2020}, as can be seen from the closed-form expressions.  

For our method, we determine hyperparameters of the first-stage NN based on the validation results in last subsection's experiments: we use MLPs with hidden layers $\{100,100,100\}$ and swish activation, and set the learning rate to $10^{-3}$, and dropout rate to 0.2. 
We still use a RBF kernel for the second stage, but choose its bandwidth from $\{0.5,1,1.5\}$ times the median distance of input, based on the marginal quasi-likelihood. 

For baselines, the kernelized version of \cite{dikkala_minimax_2020} is implemented as in last subsection, with its first-stage bandwidth determined by the validation statistics \eqref{eq:s1-val-stats}, and second-stage bandwidth chosen from the same grid as our method. 
For AGMM-NN and AGMM-tree, we use the official implementation in \cite{dikkala_minimax_2020}; 
as no official instructions for hyperparameter selection are provided, for simplicity, we provide an optimistic estimate of their performance by enumerating hyperparameters in a reasonably defined grid (which always include the default setting in the official code), and reporting the configuration with the best test error. This amounts to
\begin{itemize}[leftmargin=*]
    \item AGMM-NN: we vary the learning rate hyperparameters in $\{1,5\}\times \{10^{-3},10^{-4}\}$, and the $L^2$ regularization hyperparameters in $\{1,5\}\times \{10^{-4},10^{-5}\}$. We evaluate all 5 variants of NN-based estimators in \cite[Fig.~21]{dikkala_minimax_2020}. 
    \item AGMM-Tree: we vary the depth of the tree in $\{2,4,6,8\}$, and the number of iterations in $\{100,200,400,800,1600\}$.
\end{itemize}

For each setup we evaluate on 20 independently generated datasets. For our method, training takes a total of 2.7 hours on 8 RTX 3090 GPUs. 

\begin{figure}[ht]
    \centering
\includegraphics[width=\linewidth,clip,trim={0.25cm 0.3cm 0.35cm 0.25cm}]{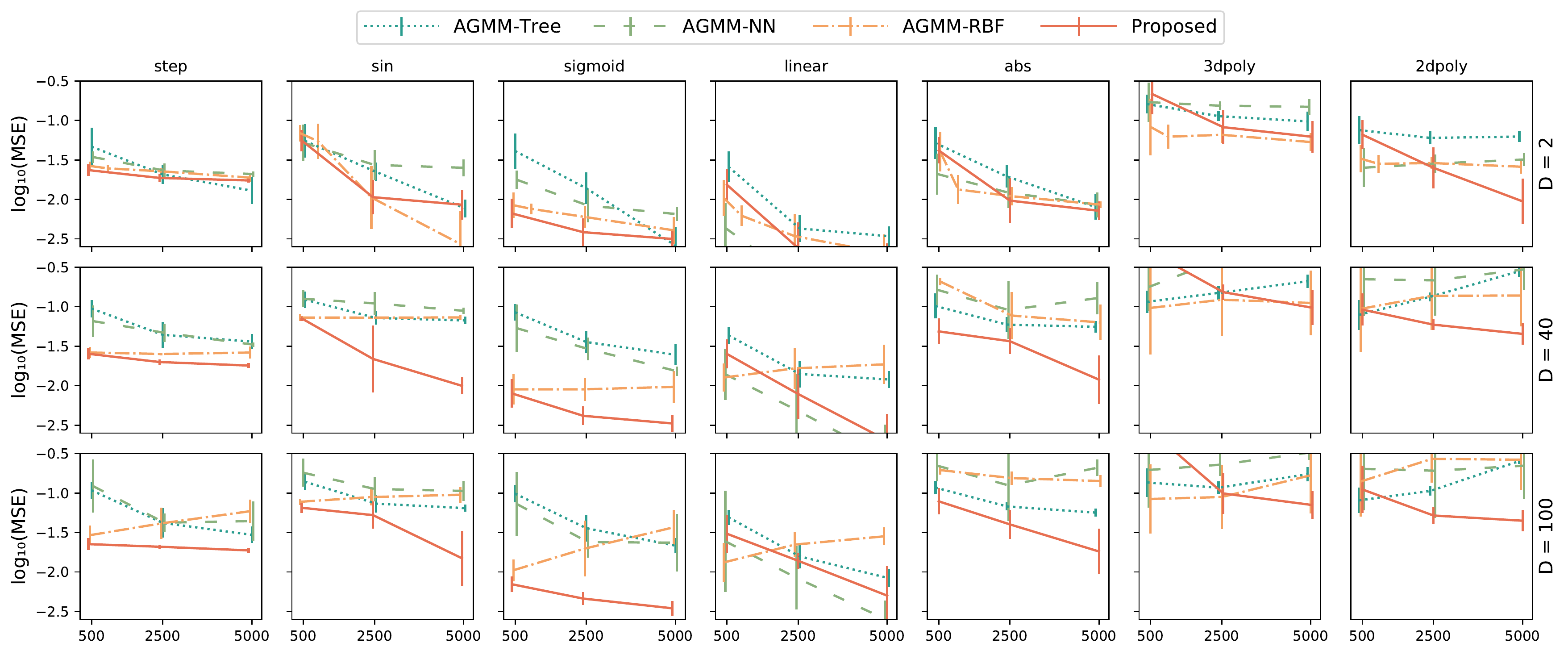}
\caption{Predictive performance: full results for the low-dimensional and NN-generated instruments. Error bars indicate standard deviation from 20 independent experiments.
}\label{fig:predictive-main-full}.
\end{figure}

Full results are plotted in Figure~\ref{fig:predictive-main-full}. All methods have competitive performance in the low-dimensional setting, which is consistent with the report in previous work. 
As $D$ grows, however, 
the performance of all baselines worsens, and only the proposed method is able to maintain a similar level of precision, which idicates very good adaptivity. 
The deterioration is to be expected for the kernelized baselines which do not adapt to any type of informative latent structure. While the tree models may perform variable selection, which can be viewed as adapting to a special type of linear latent structures, less is known about its adaptivity to nonlinear latent structures. 
The observed deterioration suggests tree-based models are less competitive in this regime, although it may also be attributed to challenges in minimax optimization, which clearly explains the deteriorated performance of the AGMM-NN baseline. 

\paragraph{Image Instruments}\label{sec:exp-image}
We now turn to image-based observations. Following previous work \citep{bauer_deep_2019,dikkala_minimax_2020,muandet_dual_2020}, we fix $f_0$ to be the abs function, and we map the latent feature $\bar \bz_1\sim\mrm{Unif}[-3,3]$ to $\lfloor\frac{3}{2}\bz_1+5\rfloor\sim\mrm{Unif}\{0,\ldots,9\}$, and use a random MNIST / CIFAR-10 image with the corresponding label as the observed instrument. We expect the MNIST setting to be less challenging than typical high-dimensional feature learning scenarios, as the image label explains a large proportion of variance. It is also known that kernel-based classifiers achieve very good accuracy on MNIST.\footnote{
\url{http://yann.lecun.com/exdb/mnist/} reports a test accuracy of $98.6\%$ for an SVM with RBF kernels. 
}

We compare with AGMM implemented with DNNs, and the kernelized version of MMR-IV \citep{zhang_maximum_2020}. We use the official implementation of the baselines. For our method, we use a convolutional neural network for instrument learning. Its architecture follows \cite{hartford2017deep}, with the exceptions that we double all hidden dimensions and remove the dropout layer. We find the increase in capacity necessary to approximate GP prior draws. We also experimented with a ResNet-18 model which led to similar results. 
We use the same RBF second stage as before. 

We generate $N_1+N_2=10000$ samples. For our method and AGMM-NN, we split the samples evenly. MMV-IV does not require a separate validation set, so we use all generated samples for training. 

All methods are evaluated on 10 independently generated datasets. For MMR-IV, its hyperparameter selection procedure is occasionally unstable, so for each randomly generated dataset, we repeat the procedure 20 times from random initial values. For our method, training takes a total of 1 hour on two TITAN X GPUs.

Results are presented in Table~\ref{tbl:image}. We can see that our method is still the most competitive, although the kernel-based MMR-IV also performs well, especially in the MNIST setting. 

\begin{table}[h]
    \centering\small
    \begin{tabular}{cccc}
        \toprule
        Test MSE & AGMM-NN & MMR-IV-Nystrom & Proposed \\ \midrule
        MNIST & $0.061$ {\footnotesize [$0.056$, $0.064$] } 
        & $0.011$ {\footnotesize [$0.008$, $0.018$] } 
        & $\mathbf{0.008}$ {\footnotesize [$0.007$, $0.009$] }
\\ 
        CIFAR-10 & $0.117$ {\footnotesize [$0.109$, $0.128$] } 
        & $0.024$ {\footnotesize [$0.013$, $0.045$] }
        & $\mathbf{0.012}$ {\footnotesize [$0.009$, $0.013$] }
        \\
        \bottomrule
    \end{tabular}
    \caption{Image experiment: median, $25\%$ and $75\%$ percentile of test MSE. {\bf Boldface} indicates the best result ($p<0.05$ in Mann-Whitney U test).
    }\label{tbl:image}
\end{table}

\begin{table}[h]\centering\small
    \begin{tabular}{ccc}
        \toprule 
        & MNIST & CIFAR-10 \\ \midrule 
    Test MSE & $0.008$ {\footnotesize [$0.007$, $0.009$] } & $0.012$ {\footnotesize [$0.009$, $0.013$] } \\ 
    CB.~Rad. %
    & $0.011$ {\footnotesize [$0.009$, $0.013$] } &
    $0.041$ {\footnotesize [$0.021$, $0.060$] } \\ \bottomrule
    \end{tabular}
    \caption{Image experiment: test MSE and radius of the $90\%$ $L_2$ credible ball, for the proposed method. We report median and $25\%$ and $75\%$ quantiles. 
    }\label{tbl:image-uq}
\end{table}

\subsection{Uncertainty Quantification}\label{app:exp-uq}
\paragraph{Setup}
We evaluate the quasi-Bayesian uncertainty estimates in two settings:
\begin{enumerate}[leftmargin=*]
    \item To rule out the influence from misspecification, we generate $f_0\sim\mc{GP}(0,k_x)$, and construct credible sets using the GP quasi-posterior based on $\mc{GP}(0,k_x)$. Note this is still a non-trivial setting, due to the need for instrument learning. 
    \item For the evaluation of predictive performance, we vary $f_0$ in the collection of functions in \cite{bennett2019deep}. Note the potential misspecification: the RBF kernel is only suited for approximating H\"older regular functions, i.e., functions with bounded (high-order) derivatives \citep{van_der_vaart_adaptive_2009}. \\
    In this setting we use (quasi-)Bayesian model averaging (BMA), constructed from a grid of GP priors based on RBF kernels $k_x(x,x')=\sigma^2\exp(-(x-x')^2/2h^2)$, with varying $\sigma$ and $h$. 
\end{enumerate}

The observed instrument is generated under the low-dimensional or NN-based setting. For image instruments, our method provides reliable coverage in the setting of Table~\ref{tbl:image}, as shown in Table~\ref{tbl:image-uq}. However, we did not experiment with other choices of $N,D$ or $f_0$ in the image setting, due to the increased computational cost.

Hyperparameters in the instrument learning algorithm are set as in Section~\ref{sec:exp-main}. 
The expression for the marginal likelihood is in \eqref{eq:s2-log-qlh}. 
For BMA, we consider $\sigma\in\{0.5,1,1.5,2,2.5,3\}$ and $h\in\{0.5,1,1.5\}$. We reweigh the hyperparameters on the two-dimensional grid, by imposing a (discretized) $\mrm{InvGamma}(2,2)$ prior for $\sigma$ and a $\mrm{Gamma}(2,1)$ prior for $h$. 

As a baseline, we replace the learned first-stage with a fixed-form RBF kernel, with bandwidth determined by the first-stage validation statistics. 

For each setup, evaluation is repeated on 300 independently generated datasets in the GP setting, and 20 generations in the setting of \cite{bennett2019deep}. 
For our method, training takes a total of 1.8 hours in the GP setting, and 2.7 hours in the setting of \cite{bennett2019deep}. The experiments are conducted on 8 RTX 3090 GPUs. 

\begin{table}[h]
    \centering\small
    \begin{tabular}{cccccc}\toprule
$D$ & $N_1=N_2$ & MSE & $90\%$ CB.~Rad. & $90\%$ CB.~Cvg. & Avg.~$90\%$ CI.~Cvg. \\ 
\midrule \multicolumn{5}{l}{ Proposed } \\ \midrule \multirow{3}{*}{$2$} 
& $500$ & $.069$ {\tiny $\pm .050$} & $.140$ {\tiny $\pm .019$} & $.930$ {\tiny $[.895, .954]$} & $.919$ {\tiny $\pm .125$}\\% 0.9300 300
& $2500$ & $.028$ {\tiny $\pm .020$} & $.056$ {\tiny $\pm .007$} & $.907$ {\tiny $[.868, .935]$} & $.914$ {\tiny $\pm .124$}\\% 0.9067 300
& $5000$ & $.020$ {\tiny $\pm .014$} & $.038$ {\tiny $\pm .004$} & $.910$ {\tiny $[.872, .937]$} & $.908$ {\tiny $\pm .124$}\\% 0.9100 300
\midrule \multirow{3}{*}{$40$} 
& $500$ & $.091$ {\tiny $\pm .064$} & $.179$ {\tiny $\pm .022$} & $.907$ {\tiny $[.868, .935]$} & $.908$ {\tiny $\pm .132$}\\% 0.9067 300
& $2500$ & $.033$ {\tiny $\pm .024$} & $.067$ {\tiny $\pm .007$} & $.910$ {\tiny $[.872, .937]$} & $.912$ {\tiny $\pm .137$}\\% 0.9100 300
& $5000$ & $.023$ {\tiny $\pm .014$} & $.046$ {\tiny $\pm .004$} & $.917$ {\tiny $[.880, .943]$} & $.908$ {\tiny $\pm .125$}\\% 0.9167 300
\midrule \multirow{3}{*}{$100$}
& $500$ & $.097$ {\tiny $\pm .065$} & $.201$ {\tiny $\pm .025$} & $.923$ {\tiny $[.888, .948]$} & $.915$ {\tiny $\pm .123$}\\% 0.9233 300
& $2500$ & $.035$ {\tiny $\pm .024$} & $.074$ {\tiny $\pm .008$} & $.917$ {\tiny $[.880, .943]$} & $.908$ {\tiny $\pm .127$}\\% 0.9167 300
& $5000$ & $.024$ {\tiny $\pm .016$} & $.049$ {\tiny $\pm .004$} & $.920$ {\tiny $[.884, .946]$} & $.905$ {\tiny $\pm .134$}\\% 0.9200 300
\midrule \multicolumn{5}{l}{ RBF first stage } \\ \midrule \multirow{3}{*}{$2$}
& $500$ & $.071$ {\tiny $\pm .052$} & $.144$ {\tiny $\pm .021$} & $.920$ {\tiny $[.884, .946]$} & $.920$ {\tiny $\pm .122$}\\% 0.9200 300
& $2500$ & $.028$ {\tiny $\pm .020$} & $.057$ {\tiny $\pm .007$} & $.916$ {\tiny $[.879, .943]$} & $.917$ {\tiny $\pm .123$}\\% 0.9164 299
& $5000$ & $.020$ {\tiny $\pm .013$} & $.039$ {\tiny $\pm .005$} & $.917$ {\tiny $[.880, .943]$} & $.913$ {\tiny $\pm .117$}\\% 0.9167 300
\midrule \multirow{3}{*}{ $40$ } 
& $500$ & $.124$ {\tiny $\pm .072$} & $.218$ {\tiny $\pm .034$} & $.870$ {\tiny $[.827, .903]$} & $.907$ {\tiny $\pm .107$}\\% 0.8700 300
& $2500$ & $.100$ {\tiny $\pm .069$} & $.132$ {\tiny $\pm .020$} & $.723$ {\tiny $[.670, .771]$} & $.837$ {\tiny $\pm .187$}\\% 0.7233 300
& $5000$ & $.094$ {\tiny $\pm .067$} & $.108$ {\tiny $\pm .017$} & $.660$ {\tiny $[.605, .711]$} & $.792$ {\tiny $\pm .211$}\\% 0.6600 300
\midrule \multirow{3}{*}{ $100$ } 
& $500$ & $.431$ {\tiny $\pm .192$} & $.240$ {\tiny $\pm .036$} & $.187$ {\tiny $[.147, .235]$} & $.640$ {\tiny $\pm .191$}\\% 0.1867 300
& $2500$ & $.176$ {\tiny $\pm .089$} & $.175$ {\tiny $\pm .023$} & $.517$ {\tiny $[.460, .573]$} & $.822$ {\tiny $\pm .136$}\\% 0.5167 300
& $5000$ & $.126$ {\tiny $\pm .072$} & $.156$ {\tiny $\pm .019$} & $.660$ {\tiny $[.605, .711]$} & $.855$ {\tiny $\pm .143$}\\% 0.6600 300
\bottomrule
\end{tabular}
\caption{Full results for single-model uncertainty quantification: test MSE, estimated coverage of $90\%$ $L_2$ credible ball and average coverage of pointwise $90\%$ credible interval. 
For the CB coverage rate estimate we report its $95\%$ Wilson score interval. For other statistics we report standard deviation. Results averaged over 300 independent runs. 
}\label{tbl:uq-gprand-full}
\end{table}

\paragraph{Results}
Full results in the correctly specified setting are reported in Table~\ref{tbl:uq-gprand-full}. As we can see, both methods have similar behaviors in the low-dimensional setting. However, in high dimensions, only the proposed method 
produces reliable uncertainty estimates with the correct nominal coverage, whereas using a fixed-form kernel as first stage leads to undercoverage and lareger credible sets. 
The latter observation can be understood from the correspondence between the quasi-posterior marginal variance, and a certain worst-case prediction error on a simplified data generating process \citep[Section 5]{wang2021quasibayesian}. The linear IV literature has also related the use of first stage models with insufficient predictive power with IV regression given weakly informative instruments \citep[see e.g.,][]{davies_many_2015}.  

\begin{figure}[ht]
    \centering
    \subfigure[$D=2,N_1=500$]{\includegraphics[width=0.95\linewidth]{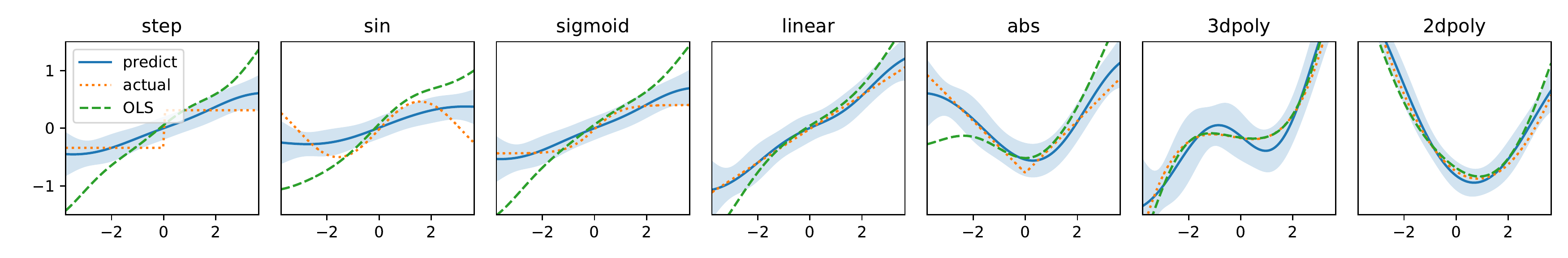}}
    \subfigure[$D=2,N_1=5000$]{\includegraphics[width=0.95\linewidth]{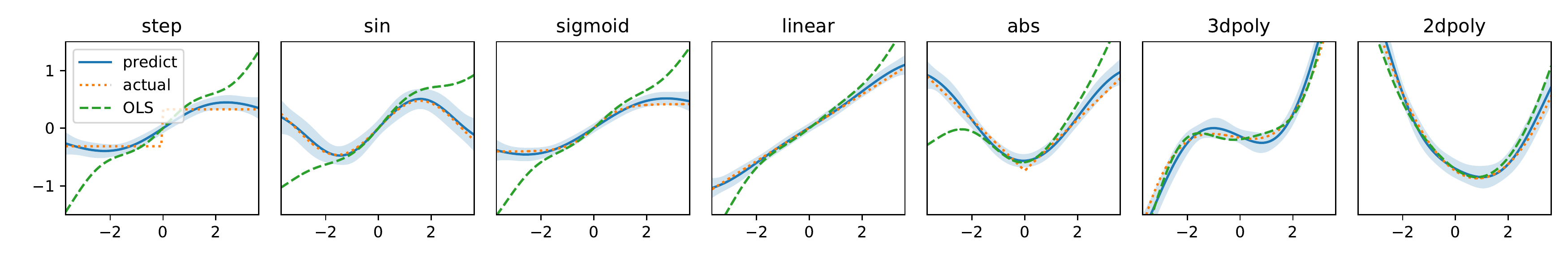}}
    \subfigure[$D=100,N_1=500$]{\includegraphics[width=0.95\linewidth]{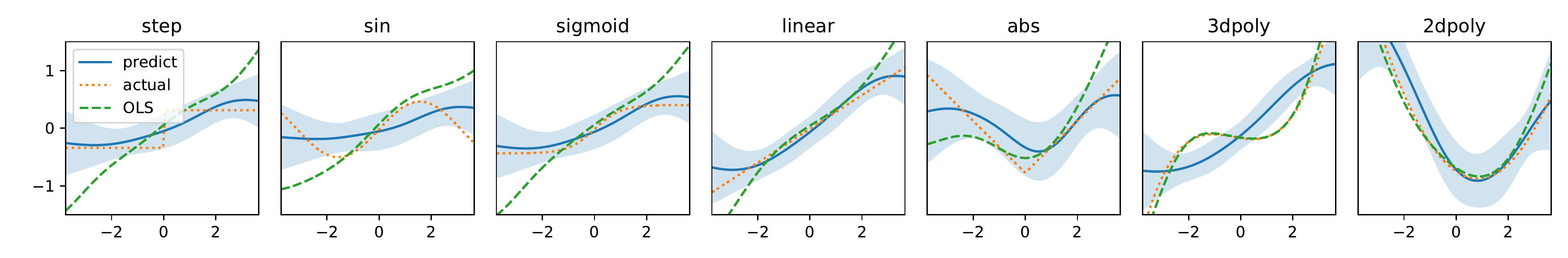}}
    \subfigure[$D=100,N_1=5000$]{\includegraphics[width=0.95\linewidth]{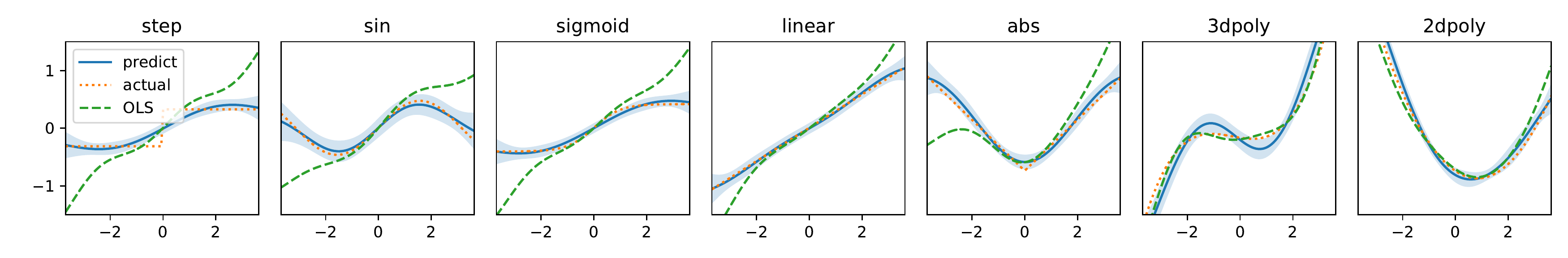}}
    \caption{Visualization of pointwise 90\% credible interval, using BMA and the learned instruments, for varying choices of $N_1=N_2$ and $D$. OLS denotes a biased regression estimate using KRR.}\label{fig:bma-full}
\end{figure}

\begin{figure}[hptb]
 \centering
 \subfigure[RBF first stage]{
 \includegraphics[width=0.92\linewidth]{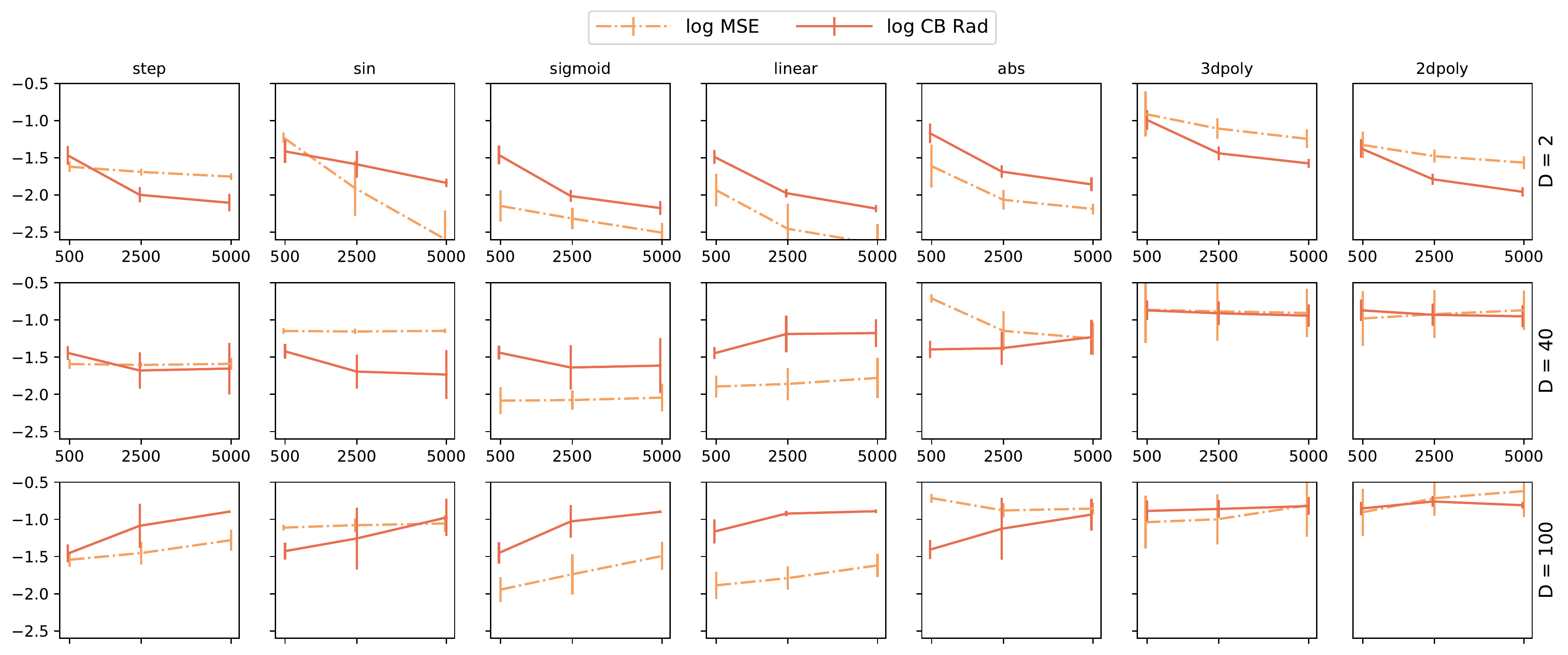}}
 \subfigure[Proposed]{
 \includegraphics[width=0.92\linewidth]{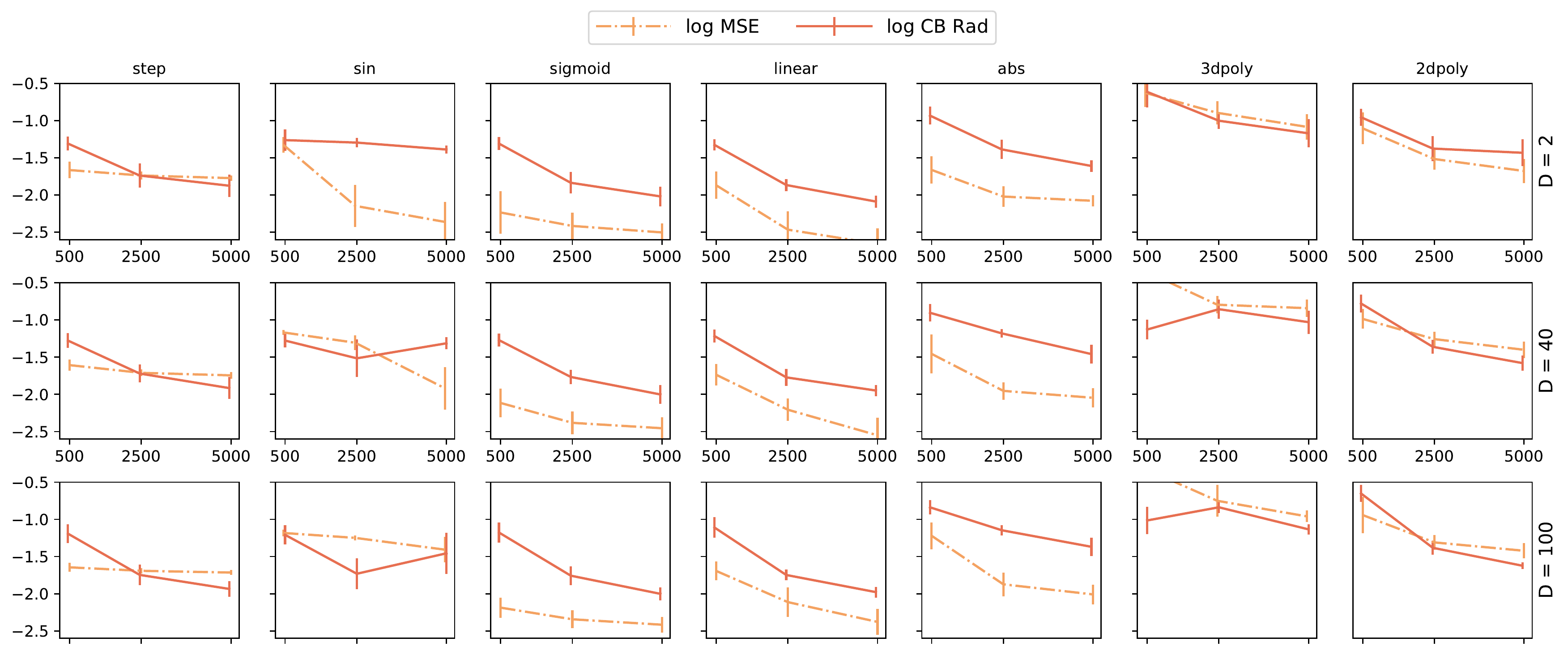}}
 \caption{Results for Bayesian model averaging: counterfactual MSE vs the radius of $90\%$ $L_2$ credible ball.}\label{fig:bma-cb-rad}
 \vspace{-1em}
\end{figure}

For BMA, we visualize the resulted quasi-posterior in Figure~\ref{fig:bma-full}, and report the radius of $90\%$ $L_2$ credible ball in Figure~\ref{fig:bma-cb-rad}. We can see that using learned instruments lead to sharper uncertainty estimates than using a fixed-form kernel for first stage. The resulted uncertainty estimates are still slightly conservative when the model is correctly specified (e.g., sin or linear), or the misspecification is mild (e.g., abs); but under-coverage can occur in the presence of more severe misspecification, such as the step design, or the two polynomial designs. For the polynomial designs, note that our data distribution for $\bx$ has a gaussian-like tail, so their unbounded growth of function values and derivatives can be problematic. Preliminary experiments show that polynomial kernels leads to significantly better marginal likelihood on the polynomial functions than the RBF kernels, and improved coverage and MSEs, without affecting the other designs. In aggregate, these results highlight the need for a correctly specified model for reliable uncertainty quantification.

\subsection{Exogenous Covariates}\label{app:exp-exo}

Finally, we evaluate our extended algorithm for exogenous covariates, developed in Appendix~\ref{app:exo-algo}. 

\paragraph{Setup} We use the setup in \cite{hartford2017deep}, which simulates the prediction of airline demand. The structural function is $$
f_0(p,t,s) = 100 + (10+p)s\psi(t) - 2p, ~~\text{where}~~\psi(t) = 2\Big[\frac{(t-5)^2}{600} + e^{-4(t-5)^2}+\frac{t}{10}-2p\Big].
$$
 The observational distribution is defined as 
$$
\begin{aligned}
\bs\sim\mrm{Unif}\{1,\ldots,7\},~ &\bt\sim\mrm{Unif}[0,10], ~(\bc,\bv) \sim\cN(0, I), ~\bp = 25+(\bc+3)\psi(\bt) + \bv,\\ 
\bu&\sim\cN(\rho\bv, 1-\rho^2), ~ \by = f_0(\bp,\bt,\bs)+\bu.
\end{aligned}
$$
In our notations, $\bx_o=\bp$ is the treatment, $\bz_o=\bc$ is the instrument, and $\bw=(\bt,\bs)$ are the additional exogenous covariates. Following all previous work, we consider two variants: 
\begin{enumerate}[leftmargin=*]
    \item In the {\em low-dimensional} setting, we directly observe $\bs$. Following \cite{singh_kernel_2020,muandet_dual_2020,xu_learning_2020} we use a univariate real-valued input as $\bs$.
    \item In the {\em image} setting, we only observe a high-dimensional surrogate of $\bs$, defined as a random MNIST image of the respective class.
\end{enumerate}

For our method, in the low-dimensional setting, we use an MLP with hidden layers $[128,64,32]$ and swish activation. (The architecture is changed to match \cite{hartford2017deep}.) In the high-dimensional setting, we first embed the image feature into a $64$-dimensional representation, using ConvNet architecture in Appendix~\ref{sec:exp-main}; then we concatenate it with the other inputs and feed into the aforementioned MLP. 
The other hyperparameters follow the image experiment in Appendix~\ref{sec:exp-main}. 
We conduct early stopping by evaluating the reduced-form prediction error $\ell'_{n_2}$ (see Algorithm~\ref{alg:exo}) on $\dataSI$. 

We compare our method with DeepIV \cite{hartford2017deep}, DeepGMM \cite{bennett2019deep}, AGMM \cite{dikkala_minimax_2020} instantiated with RBF kernel and DNN models, and DFIV \cite{xu_learning_2020}. For DeepIV, DeepGMM and DFIV we use the implementation in \cite{xu_learning_2020}. For AGMM, we use the implementation in \cite{wang2021quasibayesian}, as \cite{dikkala_minimax_2020} did not experiment on this dataset. Note that AGMM-RBF has a similar form to KernelIV \cite{singh_kernel_2020}, and our result for it is consistent with \cite{singh_kernel_2020}. 

All methods require two independent sets of observations, either directly used in the algorithm or for validation. We partition the training set evenly for this purpose. 

All methods are evaluated on 20 independently generated datasets. For our method, all experiments take a total of 6 minutes on 4 Tesla A40 GPUs. 

\paragraph{Results} The results are presented in Table~\ref{tbl:hllt}. \todo{.}We can see that our method has similar performance to \cite{xu_learning_2020}, and outperforms the other baselines by a large margin. As discussed before, %
our method is more appealing than \cite{xu_learning_2020} from a theoretical perspective. 

\begin{table}
    \centering\scriptsize\setlength{\tabcolsep}{5.5pt}
    \begin{tabular}{ccccccc}
        \toprule
     $n$ & DeepIV & DeepGMM & AGMM-RBF & AGMM-NN & DFIV & Proposed \\  \midrule 
     \multicolumn{6}{l}{Low-dimensional setting} \\  \midrule 
$1000$
    & $ 3.76 $ {\tiny $[ 3.74 , 3.77 ]$} &
    $ 3.97 $ {\tiny $[ 3.94 , 3.99 ]$} &
    $ 3.75 $ {\tiny $[ 3.71, 3.79 ]$} &
    $3.42$ {\tiny $[3.06, 3.99]$} &
    $3.00$ {\tiny $[2.94, 3.10]$} &
    $2.94$ {\tiny $[2.85, 3.06]$} \\
$5000$
    & $ 3.14 $ {\tiny $[ 3.10 , 3.21 ]$}
    & $ 3.94 $ {\tiny $[ 3.91 , 3.96 ]$}
    & $ 3.50 $ {\tiny $[ 3.46, 3.52 ]$}
    & $ 2.74 $ {\tiny $[ 2.66 , 2.76 ]$}
    & $2.38$ {\tiny $[2.31, 2.53]$}
    & $2.39$ {\tiny $[2.30, 2.47]$} \\ \midrule
     \multicolumn{6}{l}{Image setting} \\  \midrule 
$5000$
    & $3.96$ {\tiny $[3.93, 4.01]$} & 
    $ 4.41 $ {\tiny $[ 4.38 , 4.45 ]$} & 
    $ 4.03 $ {\tiny $[ 4.02, 4.05 ]$} & 
    $4.20$ {\tiny $ [ 4.10, 4.33] $} & 
    $3.83$ {\tiny $[3.78, 3.92]$} &
    $3.87$ {\tiny $[3.85, 3.92]$} \\
     \bottomrule
    \end{tabular}
    \caption{Demand design: log test MSE vs the total sample size ($n=n_1+n_2$). We report the median, $25\%$ and $75\%$ percentile over $20$ replications.}\label{tbl:hllt}
\end{table}
\end{document}